\documentclass[dvipsnames]{article} %
\usepackage{colm2024_conference}

\usepackage{booktabs}
\usepackage{enumitem}
\usepackage{wrapfig}
\usepackage{algorithm}
\usepackage{algpseudocode}
\usepackage{graphicx}
\usepackage[misc]{ifsym}
\usepackage{multicol} 
\usepackage{microtype}
\usepackage{colortbl}
\usepackage[utf8]{inputenc}
\usepackage[T1]{fontenc}
\definecolor{lightgray}{rgb}{0.9,0.9,0.9}
\usepackage{caption}
\usepackage{subcaption}
\usepackage{graphicx}
\usepackage{subcaption}
\usepackage{dsfont}
\usepackage{setspace}
\usepackage{url}
\usepackage{multirow}
\usepackage{tabularx}
\usepackage{blindtext}
\usepackage{pgfplots}
\pgfplotsset{compat=1.18} 
\usepackage{tikz}
\usetikzlibrary{er,positioning,bayesnet}
\usepackage{makecell}
\usepackage{tipa}
\usepackage{siunitx}
\usepackage{nicefrac}
\usepackage{listings}
\usepackage[raster,skins, most]{tcolorbox} %
\usepackage{xltabular}
\usepackage{adjustbox}
\usepackage{xurl}
\usepackage{rotating}
\usepackage[normalem]{ulem}

\usepackage{amsthm}

\theoremstyle{plain}
\newtheorem{theorem}{Theorem}[section]   
\newtheorem{lemma}[theorem]{Lemma}       
\newtheorem{proposition}[theorem]{Proposition}

\theoremstyle{definition}
\newtheorem{definition}[theorem]{Definition}

\theoremstyle{remark}
\newtheorem{remark}[theorem]{Remark}

\usepackage{placeins}

\usepackage{fontawesome}

\usepackage[table]{xcolor}

\usetikzlibrary{calc}

\useunder{\uline}{\ul}{}

\usepackage{amsmath,amsfonts,bm}

\def\eqref#1{equation~\ref{#1}}

\def\1{\bm{1}}

\DeclareMathAlphabet{\mathsfit}{\encodingdefault}{\sfdefault}{m}{sl}
\SetMathAlphabet{\mathsfit}{bold}{\encodingdefault}{\sfdefault}{bx}{n}

\newcommand{\KL}{D_{\mathrm{KL}}}

\newcommand*\justify{%
  \fontdimen2\font=0.4em
  \fontdimen3\font=0.2em
  \fontdimen4\font=0.1em
  \fontdimen7\font=0.1em
  \hyphenchar\font=`\-
}

\renewcommand{\texttt}[1]{%
  \begingroup
  \ttfamily
  \begingroup\lccode`~=`/\lowercase{\endgroup\def~}{/\discretionary{}{}{}}%
  \begingroup\lccode`~=`[\lowercase{\endgroup\def~}{[\discretionary{}{}{}}%
  \begingroup\lccode`~=`.\lowercase{\endgroup\def~}{.\discretionary{}{}{}}%
  \catcode`/=\active\catcode`[=\active\catcode`.=\active
  \justify\scantokens{#1\noexpand}%
  \endgroup
}

\usepackage{makecell}
\usetikzlibrary{tikzmark}
\makeatletter
\newcommand*\myfontsize{%
  \@setfontsize\myfontsize{7}{8}%
}
\makeatother

\definecolor{uclablue}{RGB}{159, 195, 224}

\definecolor{uclagold}{RGB}{255, 240, 180}

\definecolor{aliceblue}{RGB}{255, 238, 241}

\definecolor{cadmiumgreen}{rgb}{0.0, 0.42, 0.24}

\definecolor{myred}{rgb}{0.7, 0.3, 0.0}
\definecolor{myblue}{rgb}{0.2, 0.3, 0.6}
\definecolor{babygreen}{rgb}{0.85, 0.97, 0.85}

\definecolor{purple1}{RGB}{126, 107, 196}
\definecolor{purple2}{RGB}{199, 158, 207}
\definecolor{purple3}{RGB}{214, 200, 255}
\definecolor{purple4}{RGB}{254, 240, 255}

\definecolor{deepblue}{RGB}{48, 58, 82}

\definecolor{SoftLavender}{HTML}{F3EEFF}

\newcommand{\symboletongyi}{\raisebox{0pt}{~\includegraphics[scale=0.012]{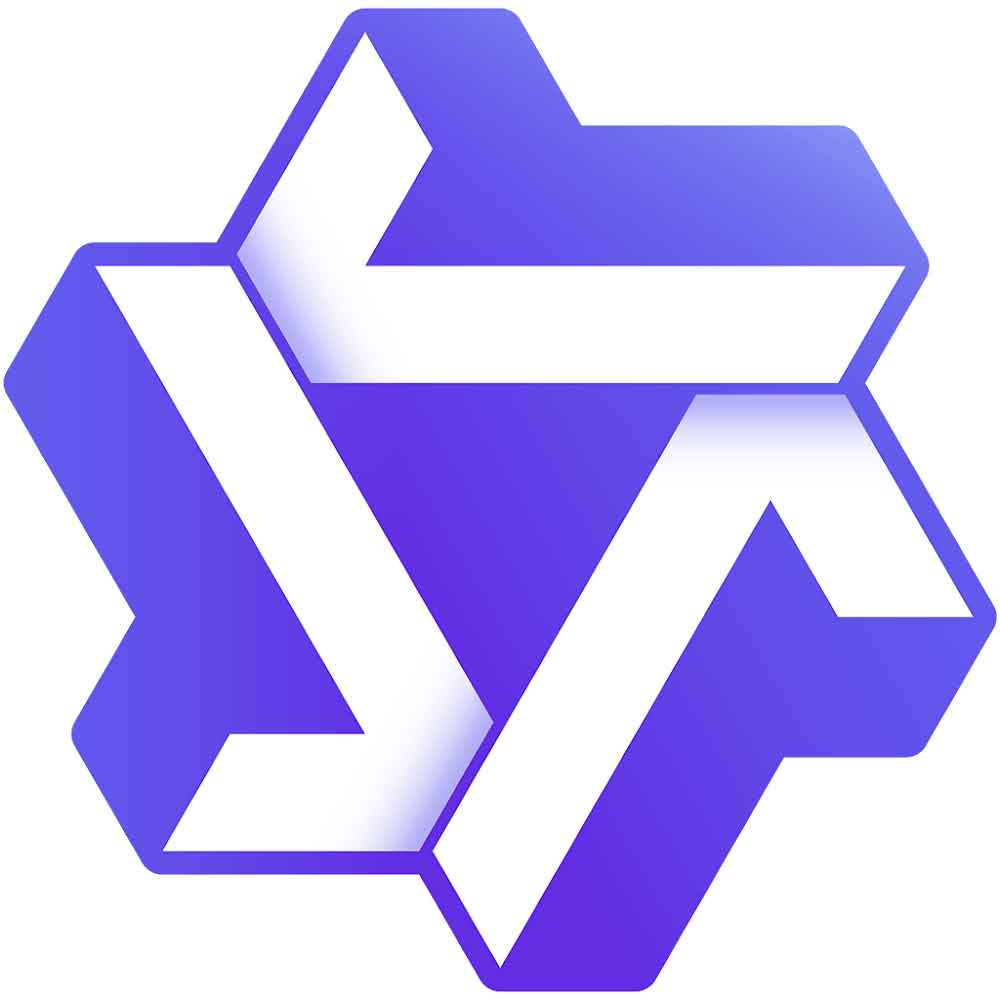}}~}

\definecolor{deepPurple}{HTML}{330066}

\definecolor{uclablue_old}{rgb}{0.15, 0.45, 0.68}
\hypersetup{
    breaklinks,
    citecolor=uclablue_old,
    colorlinks=true,
}
\usepackage[para]{footmisc}

\definecolor{basegray}{RGB}{160,165,175}
\definecolor{basegraydark}{RGB}{110,115,125}
\definecolor{rlblue}{RGB}{47,128,237}
\definecolor{rlbluefill}{RGB}{220,235,255}

\newtcolorbox{mybox}[2][]
  {colback = black!5!white, colframe = black!75!black, fonttitle = \bfseries,
    colbacktitle = black!100!black, enhanced, before upper={\fontsize{8}{11}\obeyspaces\obeylines\selectfont}, fontupper=\selectfont,
    attach boxed title to top left={yshift=-2.2mm,xshift=4mm},
    title=#2,#1}
\usetikzlibrary{arrows.meta,calc,positioning,backgrounds}
\newcommand{\js}{\mathrm{JS}}

\newcommand{\EE}{\mathbb{E}}
\newcommand{\JS}{D_{\mathrm{JS}}}

\newcounter{daggerfootnote}
\newcommand*{\daggerfootnote}[1]{%
    \setcounter{daggerfootnote}{\value{footnote}}%
    \renewcommand*{\thefootnote}{\fnsymbol{footnote}}%
    \footnote[2]{#1}%
    \setcounter{footnote}{\value{daggerfootnote}}%
    \renewcommand*{\thefootnote}{\arabic{footnote}}%
    }

\title{%
\begin{tabular}[t]{l} 
  \parbox[t]{0.8\textwidth}{\centering 
    Sparse but Critical: A Token-Level Analysis of Distributional Shifts in RLVR Fine-Tuning of LLMs
  }
\end{tabular}
}

\begingroup
\renewcommand{\thefootnote}{}
\footnotetext{Published at ICLR 2026.}
\endgroup
\author{%
\large \symboletongyi Qwen Pilot Team, Alibaba Group \thanks{Full author list available in the \hyperref[sec:authors]{Authors} section.}%
  \\[1em] 
}

\begin{document}

\maketitle

\begin{abstract}
    Reinforcement learning with verifiable rewards (RLVR) has significantly improved reasoning in large language models (LLMs), yet the token-level mechanisms underlying these improvements remain unclear. We present a systematic empirical study of RLVR’s distributional effects organized around three main analyses: (1) token-level characterization of distributional shifts between base and RL models, (2) the impact of token-level distributional shifts on sequence-level reasoning performance through cross-sampling interventions, and (3) fine-grained mechanics of these shifts at the token level. We find that RL fine-tuning induces highly sparse and targeted changes, with only a small fraction of token distributions exhibiting meaningful divergence between the base and RL policies. We further characterize the structure and evolution of these shifts through analyses of token entropy, positional concentration, and reallocation of probability mass. To assess the functional importance of these sparse changes, we conduct cross-sampling experiments that selectively swap token choices between the base and RL models with varying intervention budgets. We show that inserting only a small fraction of RL-sampled tokens into base generations progressively recovers RL performance gains, while injecting a similarly small number of base token choices into otherwise RL-generated sequences collapses performance to base levels, isolating a small set of token-level decisions directly responsible for RLVR's performance gains. Finally, we explore divergence-weighted variants of the advantage signal as a diagnostic intervention, finding that they can yield improvements over baselines. Together, our results shed light on the distributional changes induced by RLVR and provide a fine-grained, token-level lens for understanding RLVR fine-tuning as a targeted refinement process.
\end{abstract}

\begin{figure*}[t]
\centering
\begin{subfigure}[t]{0.48\textwidth}
    \centering
        \includegraphics[width=0.98\linewidth]{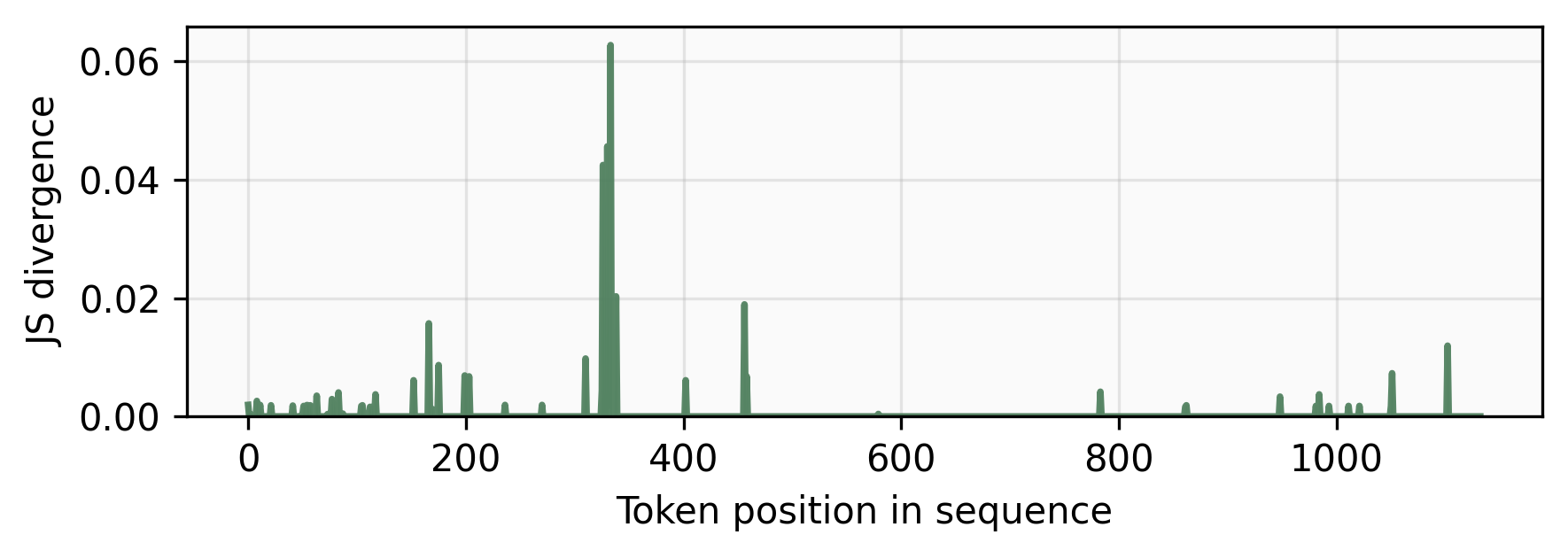}
    \caption{Divergence across a sequence (SimpleRL)}
    \label{fig:overview_divergence_simplerl}
\end{subfigure}
\hfill
\begin{subfigure}[t]{0.48\textwidth}
    \centering
        \includegraphics[width=0.98\linewidth]{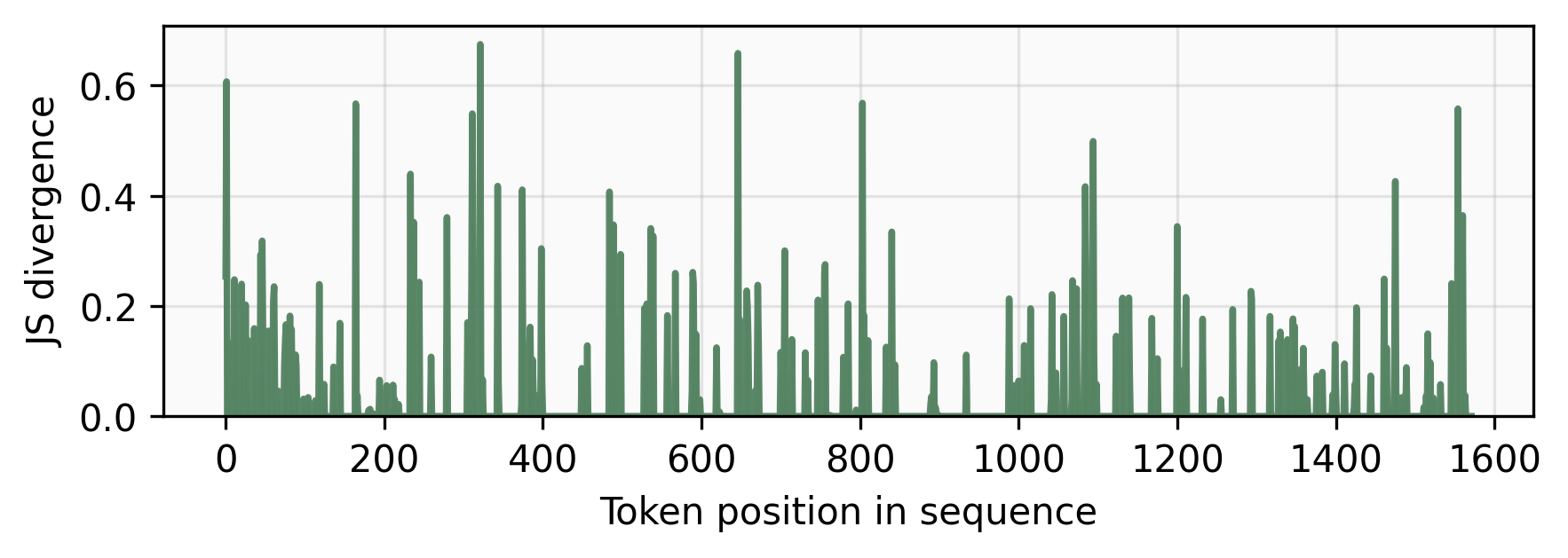}
    \caption{Divergence across a sequence (DAPO)}
    \label{fig:overview_divergence_dapo}
\end{subfigure}

\begin{subfigure}[t]{0.7\textwidth}
    \centering
    \vspace{1em}
        \resizebox{\linewidth}{!}{
    \begin{tikzpicture}[
    x=1cm,
    y=1cm,
    >={Latex[length=2.8mm]},
    line cap=round,
    line join=round,
    basepath/.style={draw=basegray, line width=1.25pt},
    rlpath/.style={draw=rlblue, line width=1.55pt},
    basenode/.style={
        circle,
        draw=basegray,
        line width=1pt,
        fill=white,
        minimum size=8.2mm,
        inner sep=0pt
    },
    rlnode/.style={
        circle,
        draw=rlblue,
        line width=1.35pt,
        fill=rlbluefill,
        minimum size=8.8mm,
        inner sep=0pt
    },
    every node/.style={font=\small}
]

\newcommand{\tinybars}[3]{%
    \begin{scope}[shift={({#1},{#2})}]
        \fill[#3] (-0.16,-0.16) rectangle (-0.10, 0.00);
        \fill[#3] (-0.07,-0.16) rectangle (-0.01, 0.10);
        \fill[#3] ( 0.02,-0.16) rectangle ( 0.08, 0.20);
        \fill[#3] ( 0.11,-0.16) rectangle ( 0.17, 0.05);
    \end{scope}
}

\coordinate (p0) at (0,0);
\coordinate (p1) at (2.0,0);
\coordinate (p2) at (4.0,0);
\coordinate (p3) at (6.0,0);
\coordinate (p4) at (8.0,0);
\coordinate (p5) at (10.0,0);
\coordinate (p6) at (12.0,0);

\coordinate (r1)  at (4.8,-1.7);
\coordinate (r1a) at (7.0,-1.7);
\coordinate (r1b) at (9.2,-1.7);

\coordinate (r2)  at (9.9,-3.4);
\coordinate (r2a) at (12.1,-3.4);
\coordinate (r2b) at (14.3,-3.4);

\node[align=center, anchor=east] at ($(p0)+(-0.55,0)$) {\textbf{Prompt}};
\node[align=center, anchor=west] at ($(p6)+(0.55,0)$) {\textbf{Answer}};
\node[align=center, anchor=west] at ($(r2b)+(0.55,0)$) {\textbf{Answer}};

\draw[basepath, ->] (p0) -- (p1);
\draw[basepath, ->] (p1) -- (p2);
\draw[basepath, ->] (p2) -- (p3);
\draw[basepath, ->] (p3) -- (p4);
\draw[basepath, ->] (p4) -- (p5);
\draw[basepath, ->] (p5) -- (p6);

\node[basenode] (n0) at (p0) {};
\node[basenode] (n1) at (p1) {};
\node[basenode] (n2) at (p2) {};
\node[basenode] (n3) at (p3) {};
\node[basenode] (n4) at (p4) {};
\node[basenode] (n5) at (p5) {};
\node[basenode] (n6) at (p6) {};

\tinybars{0}{0}{basegraydark}
\tinybars{2.0}{0}{basegraydark}
\tinybars{4.0}{0}{basegraydark}
\tinybars{6.0}{0}{basegraydark}
\tinybars{8.0}{0}{basegraydark}
\tinybars{10.0}{0}{basegraydark}
\tinybars{12.0}{0}{basegraydark}

\draw[rlpath, ->] (p2) to[out=-28,in=172] (r1);
\node[rlnode] (nr1) at (r1) {};
\tinybars{4.8}{-1.7}{rlblue}

\draw[basepath, ->] (r1) -- (r1a);
\draw[basepath, ->] (r1a) -- (r1b);

\node[basenode] (nr1a) at (r1a) {};
\node[basenode] (nr1b) at (r1b) {};
\tinybars{7.0}{-1.7}{basegraydark}
\tinybars{9.2}{-1.7}{basegraydark}

\draw[rlpath, ->] (r1b) to[out=-28,in=172] (r2);
\node[rlnode] (nr2) at (r2) {};
\tinybars{9.9}{-3.4}{rlblue}

\draw[basepath, ->] (r2) -- (r2a);
\draw[basepath, ->] (r2a) -- (r2b);

\node[basenode] (nr2a) at (r2a) {};
\node[basenode] (nr2b) at (r2b) {};
\tinybars{12.1}{-3.4}{basegraydark}
\tinybars{14.3}{-3.4}{basegraydark}

\begin{scope}[on background layer]
    \fill[rlblue, opacity=0.07] (r1) circle [radius=0.38];
    \fill[rlblue, opacity=0.07] (r2) circle [radius=0.38];
\end{scope}

\node[rlblue, font=\bfseries\footnotesize, align=center] 
    at ($(r1)+(0,-0.78)$) {RL edit};

\node[rlblue, font=\bfseries\footnotesize, align=center] 
    at ($(r2)+(0,-0.78)$) {RL edit};

\node[basegraydark, font=\footnotesize, align=center] 
    at ($(p4)+(0,0.82)$) {base trajectory};

\node[basegraydark, font=\footnotesize, align=center] 
    at ($(r1a)+(0,0.82)$) {continue with base};

\node[basegraydark, font=\footnotesize, align=center] 
    at ($(r2a)+(0,0.82)$) {continue with base};

\begin{scope}[shift={(0.2,1.9)}]
    \node[basenode, scale=0.82] at (2.0,0) {};
    \tinybars{2.0}{0}{basegraydark}
    \node[anchor=west, font=\footnotesize] at (2.38,0) {Base token distribution};

    \node[rlnode, scale=0.82] at (7.0,0) {};
    \tinybars{7.0}{0}{rlblue}
    \node[anchor=west, font=\footnotesize] at (7.38,0) {RL-modified distribution};
\end{scope}

\end{tikzpicture}
    }
    \caption{Visualization of RLVR as a sparse trajectory steering mechanism}
    \label{fig:intuitive}
\end{subfigure}

\caption{
\textbf{Overview: RLVR acts as sparse, high-impact token-level refinement.}
RL fine-tuning induces sparse distributional shifts: divergence between base and RL token distributions remains near zero at most positions, with only a small subset of tokens exhibiting substantial changes.
}
\label{fig:overview}
\end{figure*}

\section{Introduction}
\label{sec:intro}

Recent advances in reinforcement learning with verifiable rewards (RLVR) \citep{rlvr} for reasoning in large language models (LLMs), such as Group Relative Policy Optimization (GRPO) \citep{shao2024deepseekmath}, have enabled substantial performance improvements on challenging reasoning and mathematical benchmarks. Despite this empirical success, the mechanisms through which RLVR modifies model behavior remain unclear. Most evaluations of RL fine-tuning focus on aggregate response-level metrics such as accuracy, reward, and response length. While informative, these provide only a high-level view of improvement and offer limited insight into the mechanisms by which model behavior changes. In particular, a central unresolved question is: \textit{how does RLVR reshape the token-level predictive distributions of a base model, and which of these changes actually drive downstream reasoning gains?}

Recent work has begun to analyze RL fine-tuning through token-level entropy and uncertainty perspectives \citep{wang2025high_entropy_minority_tokens, cheng2025entropy_exploration_rlvr, cui2025entropymechanismreinforcementlearning}, highlighting the role of high-entropy tokens and exploration dynamics. Complementary analyses study RL-induced changes through token-level KL divergence and rank-shift statistics \citep{huan2025doesmathreasoningimprove} as well as through the perspective of reasoning patterns \citep{chen2026reshaping}. However, a more detailed distributional view of change remains missing: how such shifts are structured across positions and contexts, how probability mass is reallocated across candidate tokens, how they evolve over training, and to what extent they are responsible for RLVR's performance gains.

In this paper, we develop a fine-grained, token-level perspective on RLVR through the lens of distributional change. We perform a systematic empirical study of how RLVR alters next-token distributions relative to the base model, and connect these distributional shifts directly to  sequence-level reasoning performance. Our analyses reveal that RLVR acts primarily as a sparse and targeted refinement process: most token distributions remain nearly unchanged, while a small subset of high-divergence positions carries disproportionate functional importance, guiding generation toward more effective reasoning trajectories otherwise accessible under the base model.

Our contributions are organized as follows:

\begin{itemize}

\item \textbf{Structure of Token-Level Distributional Shifts.}
We show that RLVR induces sparse token-level distribution shifts relative to the base model. We characterize the structure of these shifts through divergence, entropy, and positional analyses, and compare across multiple RLVR methods, revealing differences in exploration and refinement behavior.

\item \textbf{Cross-Sampling Interventions.}
We use forward and reverse cross-sampling interventions to measure the role of divergent token decisions. We show that modifying only a small fraction of token choices is sufficient to recover (in base-model generations) or erase (in RL-model generations) RLVR performance gains, linking the sparse distributional shifts directly to sequence-level reasoning outcomes. These results demonstrate that RL and base policies are behaviorally similar across most tokens but differ critically at a sparse set of high-impact decisions that steer reasoning trajectories.

\item \textbf{Fine-Grained Distribution Mechanics.}
We analyze how RLVR modifies token distributions at high-divergence positions and show that it primarily reallocates probability mass within an existing candidate set rather than introducing new tokens. We support this with top-$k$ overlap, rank, tail-probability, and training-evolution analyses.

\item \textbf{Divergence-Weighted Advantage.}
Motivated by these findings, we study divergence-weighted variants of the RLVR advantage signal as a diagnostic objective modification and show that they can improve over baselines.

\end{itemize}

Taken together, our results provide a unified token-level picture of RLVR fine-tuning: rather than globally rewriting model behavior, RLVR predominantly performs sparse, structured probability reallocation in a small set of critical token positions that steer downstream reasoning trajectories. This distributional and functional perspective helps clarify the mechanisms in which RLVR improves reasoning in LLMs.

\section{Token Distribution Analysis between Base and Fine-tuned Models}
\label{sec:sparse_shifts}

We begin by analyzing the general structure of distributional shifts induced by RLVR, with the goal of characterizing how token-level predictions differ between the base model and its RL-finetuned counterpart. Our analysis compares next-token distributions under identical sequence contexts: we take sequences generated by the RL policy and evaluate both models’ conditional distributions at each token position. This framing treats the RL-generated trajectory as a reference path and allows us to quantify how the base model would need to adapt in order to emulate it.

\subsection{Preliminaries}
For each token position $t$ and prefix $x_{<t}$, let $\pi_{\text{base}}(\cdot \mid x_{<t})$ and $\pi_{\text{RL}}(\cdot \mid x_{<t})$ denote the conditional next-token distributions of the base and RL models, respectively, defined on a vocabulary space $\mathcal{V}$. To quantify distributional differences, we use the Jensen--Shannon (JS) divergence, defined as
\[
\mathrm{JS}(x_{<t}) = D_{\mathrm{JS}}( \pi_{\text{base}}(\cdot \mid x_{<t})\parallel \pi_{\text{RL}}(\cdot \mid x_{<t}) )
= \tfrac{1}{2} D_{\mathrm{KL}}\!\left( \pi_{\text{base}}(\cdot \mid x_{<t}) \parallel  M_t \right) + \tfrac{1}{2} D_{\mathrm{KL}}\!\left( \pi_{\text{RL}}(\cdot \mid x_{<t}) \parallel  M_t \right),
\]
where $M_t = \tfrac{1}{2}\big( \pi_{\text{base}}(\cdot \mid x_{<t}) + \pi_{\text{RL}}(\cdot \mid x_{<t}) \big)$.

One could use any notion of divergence or distance between probability measures, but we use JS divergence over something like KL divergence because: (i) it is symmetric, avoiding directional considerations; (ii) it is bounded in $[0, \log 2]$, preventing extreme values from dominating aggregate statistics; and (iii) it remains well-defined even when the measures lack absolute continuity with respect to each other. The latter is particularly important in practice, as memory constraints often limit the retrieval of the full distribution over the entire vocabulary, and also when comparing top-$p$ truncated distributions, for which KL divergence may be undefined.

Unless otherwise stated, divergences are computed on top-$p$ truncated distributions using the same sampling configuration employed during generation, while entropy and probabilities are from the full estimated distributions. This is so that the comparisons reflect the models’ effective differences under the actual sampling regime, while still grounding the entropy and probabilities in their complete output distributions. Robustness checks across different top-$p$ values and against estimates of the full distributions are provided in Appendix~\ref{subsec:additional_distributions} (Figures~\ref{fig:js_topp_comparison} and~\ref{fig:js_topp1_comparison}).

\paragraph{Models and Datasets.} Our primary analysis focuses on Qwen2.5-32B \citep{qwen2025qwen25technicalreport} as the base model, with RLVR variants trained using DAPO \citep{yu2025dapo_system} and GRPO, the latter paired with the corresponding SimpleRL model \citep{zeng2025simplerl_zoo}. For evaluation on AIME 2024 and AIME 2025, we sample 32 responses per problem for robustness. We further extend the analysis to additional models (Qwen2.5-Math-7B \citep{yang2024qwen25mathtechnicalreportmathematical} with two variants corresponding to different upper clip settings, Qwen3-8B-Base \citep{yang2025qwen3technicalreport} with DAPO,  and Mistral-Small-24B \citep{mistral2025small3}) with SimpleRL, datasets (AIME25, GPQA \citep{rein2023gpqagraduatelevelgoogleproofqa}, and the models' respective fine-tuning datasets), and to comparisons with supervised fine-tuning (SFT). These extensions, reported in Appendix~\ref{subsec:rlvr_vs_sft} and Appendix~\ref{subsec:additional_distributions}, confirm that our findings generalize across models, datasets, and training paradigms. 

\subsection{Distribution Shifts Are Highly Targeted and Sparse}
\label{subsec:sparse_js}

A natural starting question is: \emph{how broadly are distributional shifts distributed across token positions?} To answer this, we examine the token-level JS divergence between the base and RL-finetuned models. Figure~\ref{fig:js_combined} presents log-scaled histograms and percentile curves of JS divergence for DAPO and SimpleRL on their respective generated responses for AIME 2024.

\begin{figure}[!htbp]
    \centering
    \begin{subfigure}{0.46\textwidth}
        \includegraphics[width=\linewidth]{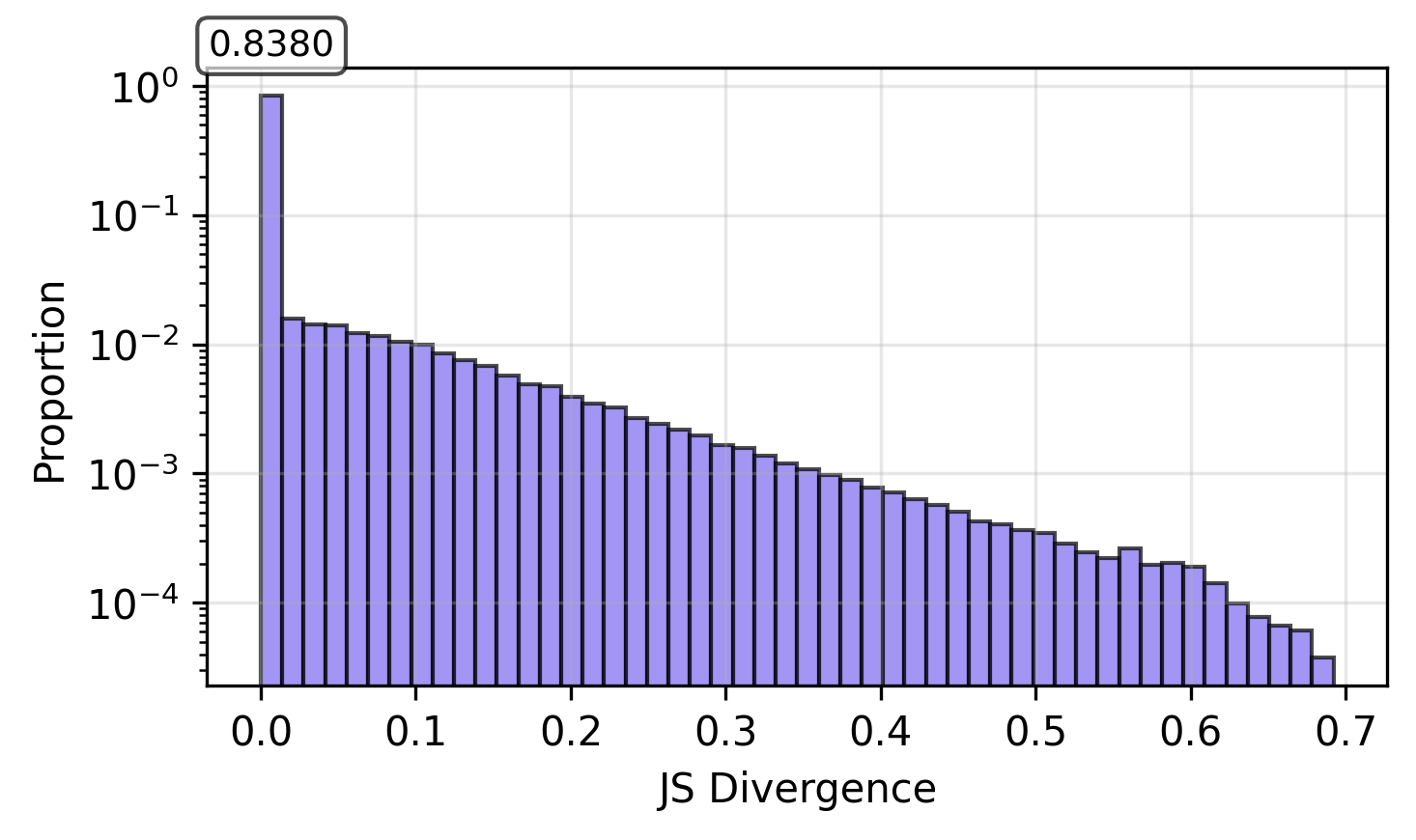}
        \caption{DAPO: Histogram (log y-axis)}
    \end{subfigure}
    \hfill
    \begin{subfigure}{0.46\textwidth}
        \includegraphics[width=\linewidth]{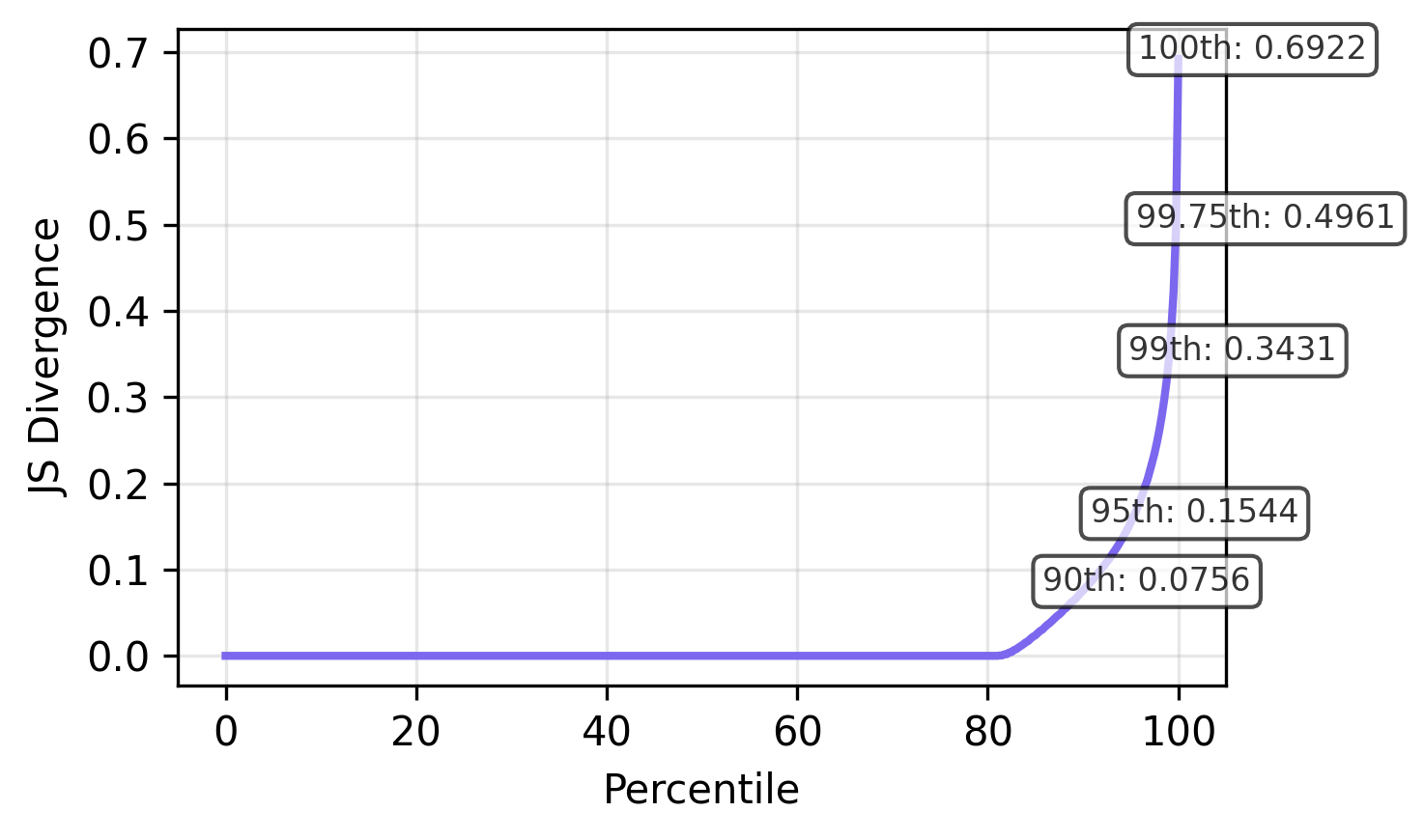}
        \caption{DAPO: Percentile curve}
    \end{subfigure}
    
    \vspace{1em}
    \begin{subfigure}{0.46\textwidth}
        \includegraphics[width=\linewidth]{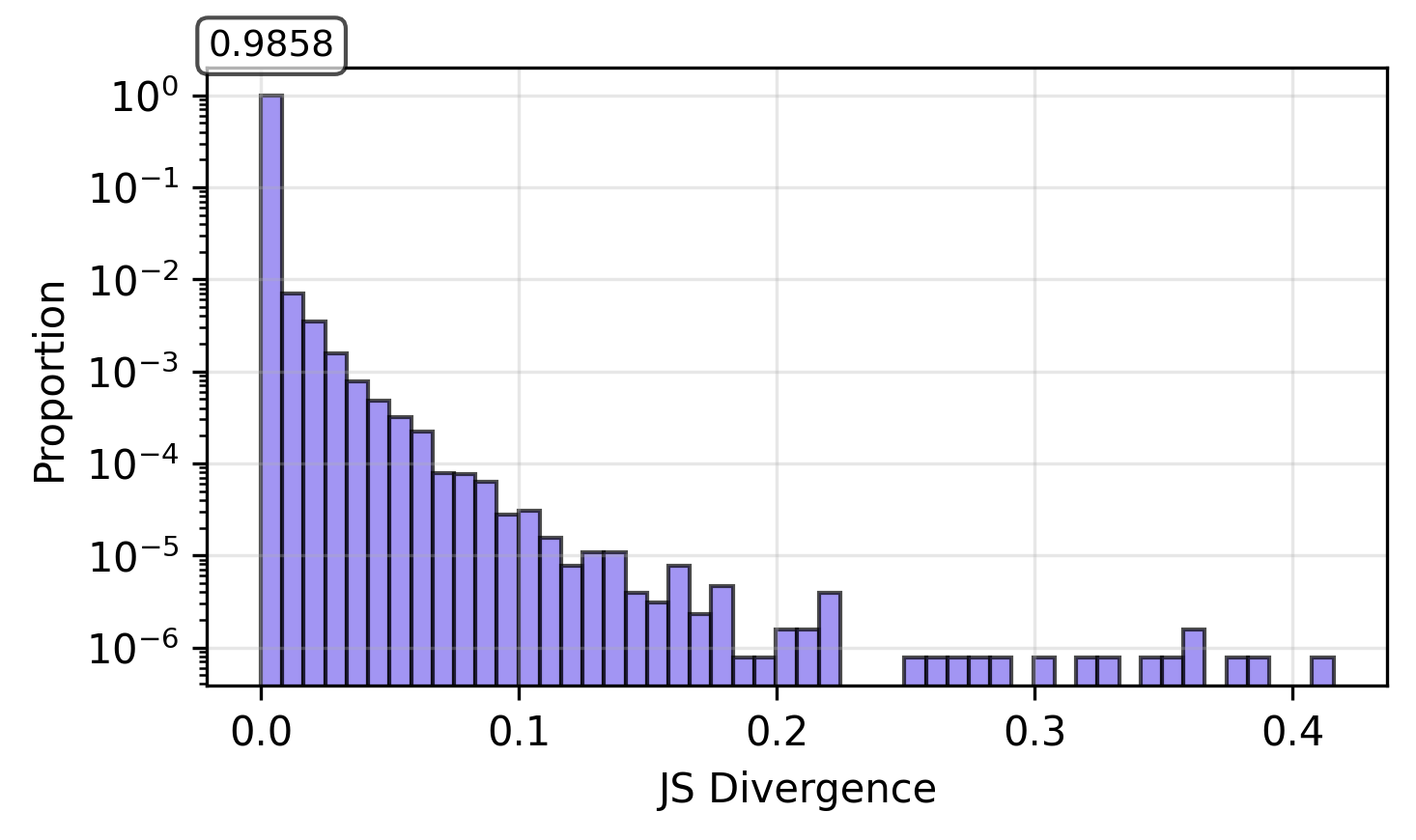}
        \caption{SimpleRL: Histogram (log y-axis)}
    \end{subfigure}
    \hfill
    \begin{subfigure}{0.46\textwidth}
        \includegraphics[width=\linewidth]{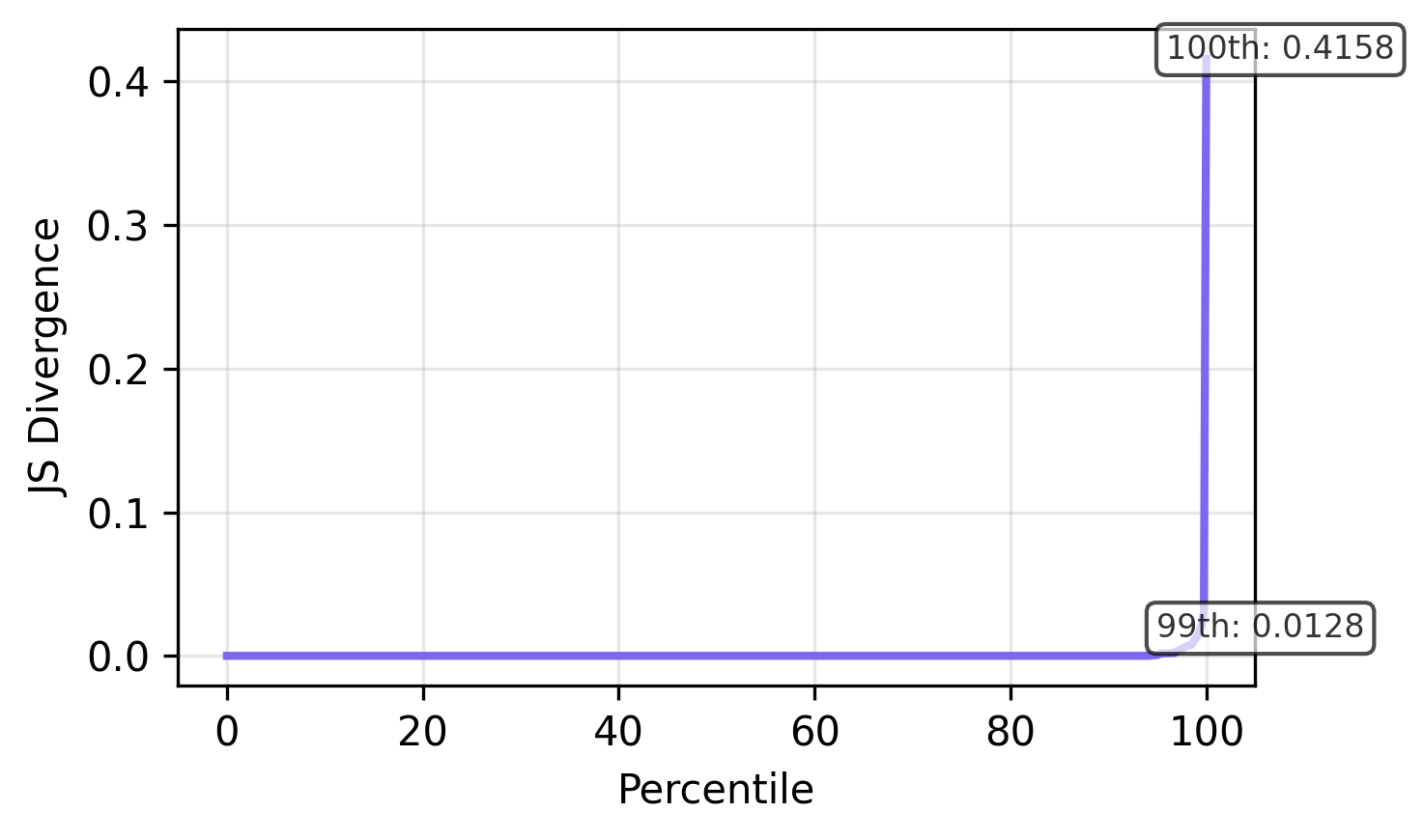}
        \caption{SimpleRL: Percentile curve}
    \end{subfigure}

    \caption{
        JS divergence distributions for Qwen2.5 32B DAPO and SimpleRL on AIME 2024.
    }
    \label{fig:js_combined}
\end{figure}

The results reveal that \textbf{RLVR refinement is \emph{highly sparse} at the token distribution level.} Under DAPO, more than 83\% of token positions exhibit near-zero divergence, while this proportion exceeds 98\% under SimpleRL. The clear spike at zero on the histograms and the steep rise of the percentile curves indicate that only a small subset of token positions undergo substantial distributional change as a result of RLVR. Comparing the two, DAPO exhibits a broader divergence distribution and a more gradual percentile curve, consistent with its clip-higher mechanism and lack of KL regularization, permitting broader exploratory updates. In contrast, SimpleRL imposes stricter constraints, resulting in more tightly concentrated changes. Importantly, even in the absence of KL regularization, the DAPO policy maintains near-zero divergence at most token distributions.

For a more controlled comparison for models fine-tuned on the same dataset, Appendix~\ref{subsec:dapo_clip_comparison} presents the results for Qwen2.5-Math-7B trained with DAPO, comparing upper clip settings of 0.28 and 0.2. We see that, analogous to the results of the 32B models, the more restrictive 0.2 upper clip setting results in sparser distributional shifts, as shown by the percentiles corresponding to near-zero divergence (Figure~\ref{fig:js_dapo_variants}). However, on its high-divergence set, the JS values are higher for the $0.2$ clip, as indicated by the higher upper percentiles. This indicates that \textbf{clip-higher admits a wider set of high-divergence token distributions but with reduced divergence magnitude at the extremes.} We observe consistent behavior on AIME 2025 (Figure~\ref{fig:js_32b_aime25}) and GPQA-Diamond (Figure~\ref{fig:gpqa_js_percentiles}), and the observed sparsity remains stable under different top-$p$ settings and when using estimated full distributions instead of truncated ones (Appendix~\ref{subsec:additional_distributions}, Figures~\ref{fig:js_topp_comparison} and~\ref{fig:js_topp1_comparison}). 

\subsection{Positional Concentration}
\label{subsec:positional_concentration}

Beyond how broadly changes are distributed across token positions, we next ask: \emph{where within a generated sequence do distributional shifts tend to occur?} Figure~\ref{fig:mean_js_by_pos} plots the mean and median JS divergence as a function of normalized token position (token index divided by sequence length), with percentile bands, for DAPO and SimpleRL on AIME 2024.

\begin{figure}[!htbp]
    \centering
    \begin{subfigure}{0.48\linewidth}
        \includegraphics[width=\linewidth]{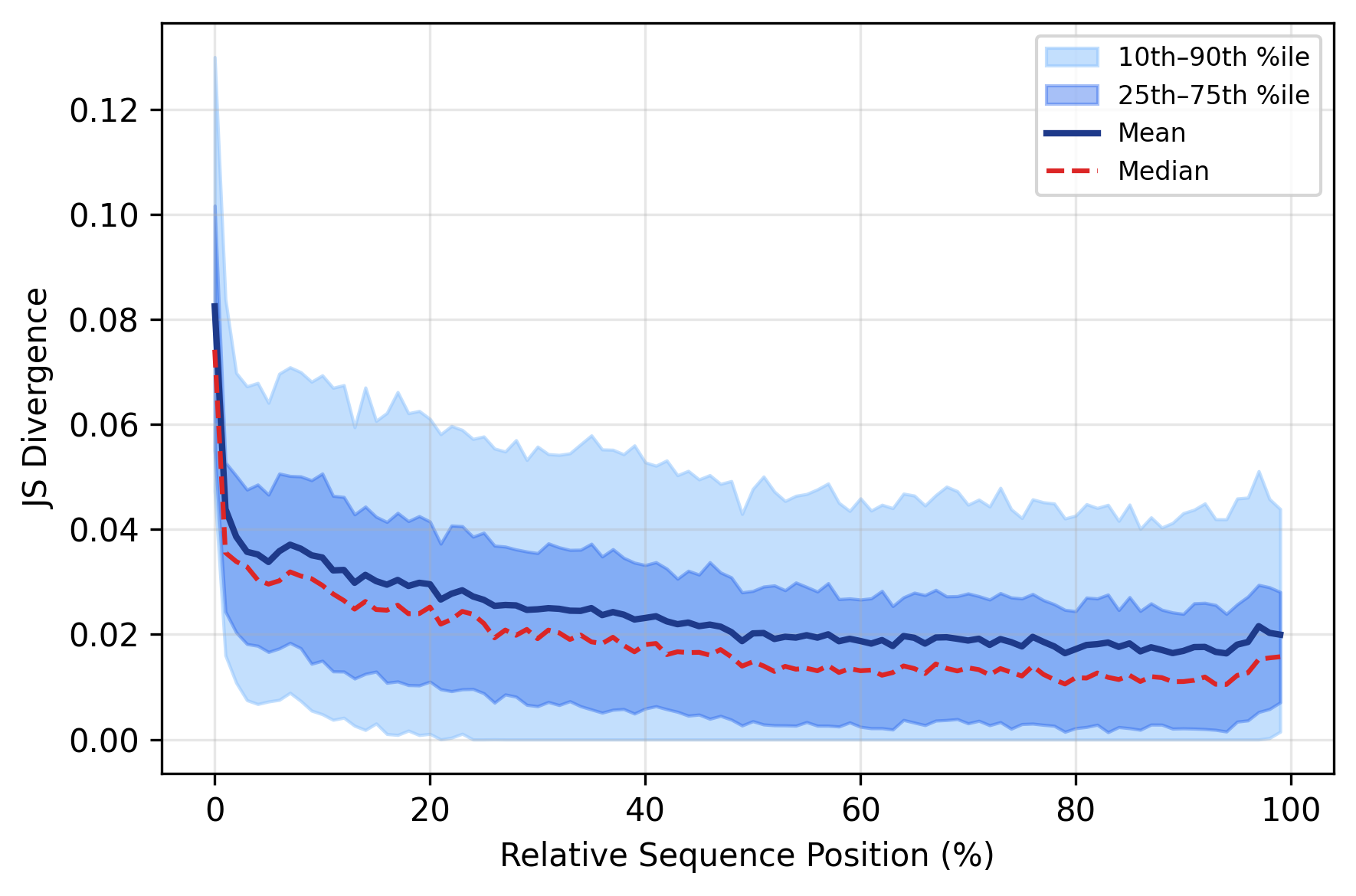}
        \caption{DAPO}
    \end{subfigure}
    \hfill
    \begin{subfigure}{0.48\linewidth}
        \includegraphics[width=\linewidth]{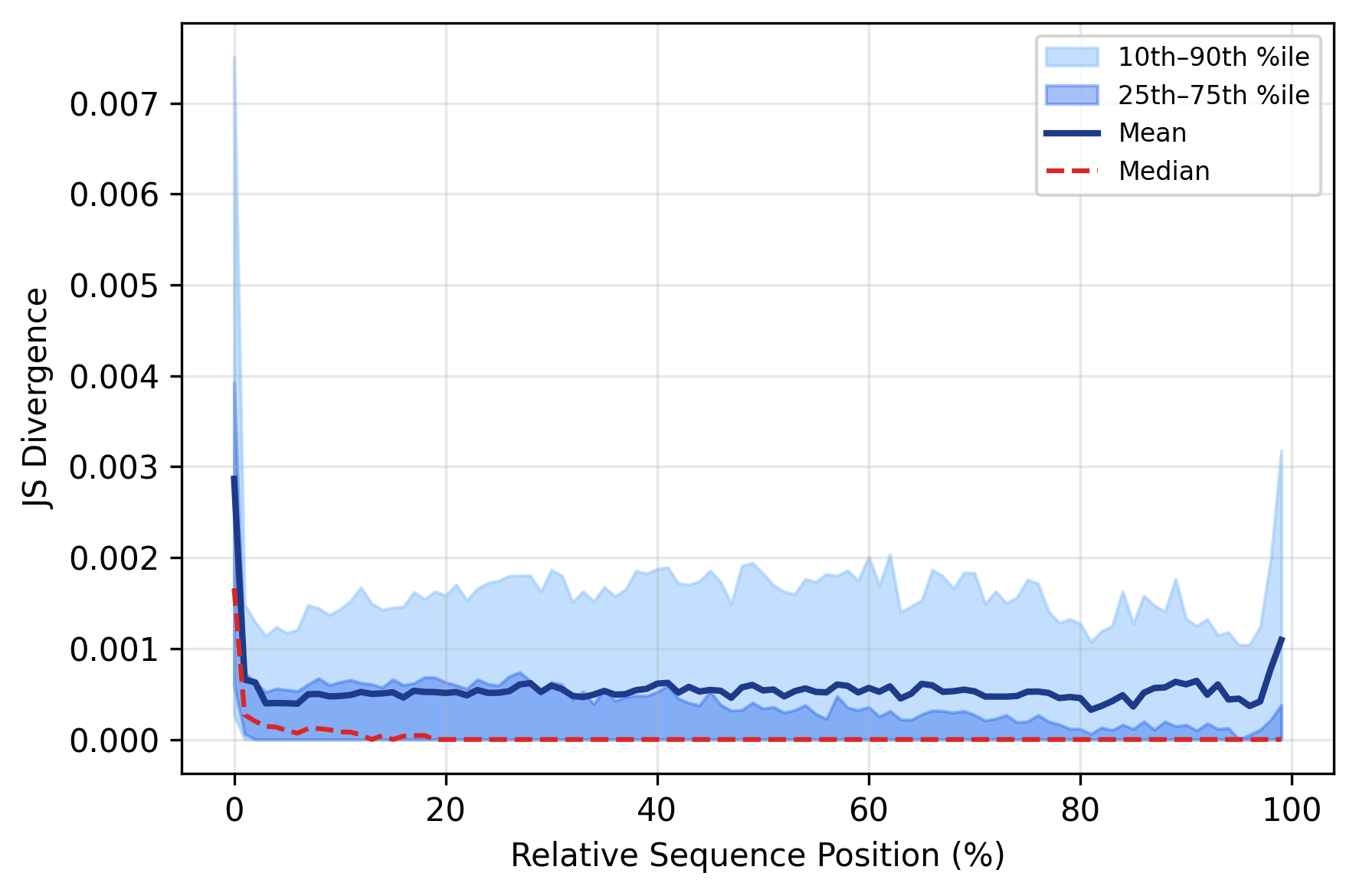}
        \caption{SimpleRL}
    \end{subfigure}
    \caption{
        Mean and median JS divergence by normalized token position, with percentile bands. 
        Both methods concentrate updates at the start and, to a lesser degree, at the end of responses.
    }
    \label{fig:mean_js_by_pos}
\end{figure}

Both models exhibit a clear positional structure: \textbf{average divergence across sequences is consistently higher near the beginning of the response, decreases through the middle, and increases modestly again toward the end.} The early concentration aligns with the modification of initial high-level branching decisions, while the late increase aligns with adjustments to answer formatting and termination behavior. However, this aggregate trend masks substantial variability at the level of individual sequences; as reflected in Figures~\ref{fig:overview_divergence_simplerl} and \ref{fig:overview_divergence_dapo} and the wide percentile spread in Figure~\ref{fig:mean_js_by_pos}, \textbf{high divergence occurs sporadically throughout the sequence.}

Comparing the two upper clip variants of Qwen2.5-Math-7B DAPO (Figure~\ref{fig:positional_dapo_variants}), both clip settings exhibit larger average divergences at the beginning of the sequence, with a smaller increase near the end, consistent with the behavior seen in the 32B models. Notably, the 0.2 clip setting shows higher average divergence at the beginning of the sequence compared to the 0.28 setting. 

\subsection{Divergence--Entropy Relationship}
\label{subsec:entropy_vs_divergence}
To further understand the general structure underlying these sparse distributional shifts, we ask: \emph{How are such shifts related to the model’s token-level entropy?}
We thus examine the relationship between distributional divergence and predictive entropy on the token level. At each token position $t$, we compute the token-level entropy
\[
H_{\pi}(x_{<t}) = -\sum_{v\in\mathcal{V}} \pi(v \mid x_{<t}) \log \pi(v \mid x_{<t}),
\]
and analyze how entropy relates to the distributional shifts from the base to RL model. Prior work suggests that RLVR updates may primarily affect high-entropy predictions while leaving low-entropy predictions largely unchanged \citep{wang2025high_entropy_minority_tokens}. We explore this perspective by comparing entropy distributions across low- and high-divergence token positions.

Specifically, token positions are grouped into low- and high-divergence bins, and we compare the entropy distributions of both the base and RL models within each bin. Figure~\ref{fig:entropy_dapo} shows these results for DAPO, with corresponding SimpleRL results provided in Appendix~\ref{subsec:additional_distributions} (Figure~\ref{fig:entropy_simplerl}).

\begin{figure}[!htbp]
    \centering
    \begin{subfigure}{0.48\linewidth}
        \includegraphics[width=\linewidth]{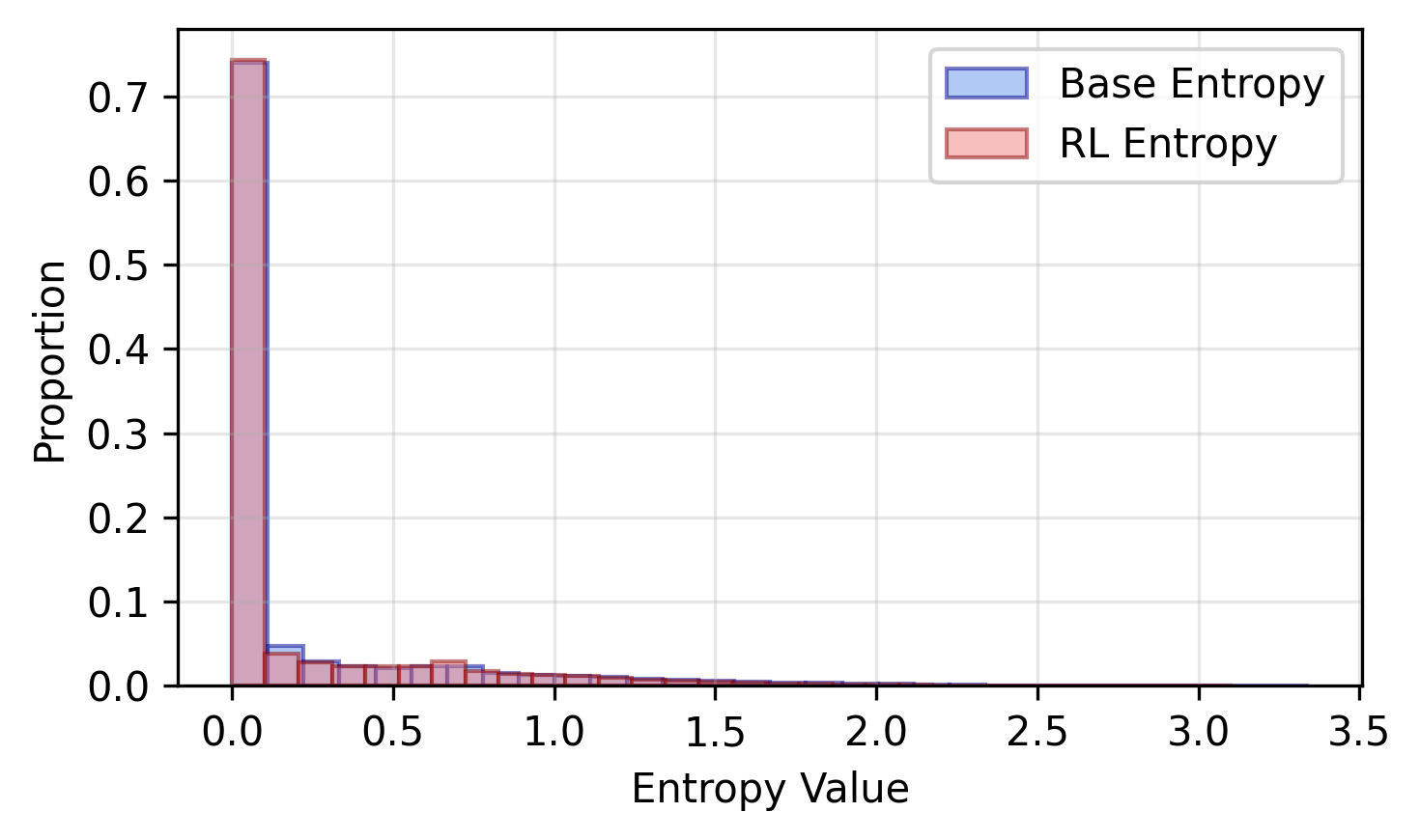}
        \caption{Low JS divergence distributions ($<0.1$).}
    \end{subfigure}
    \hfill
    \begin{subfigure}{0.48\linewidth}
        \includegraphics[width=\linewidth]{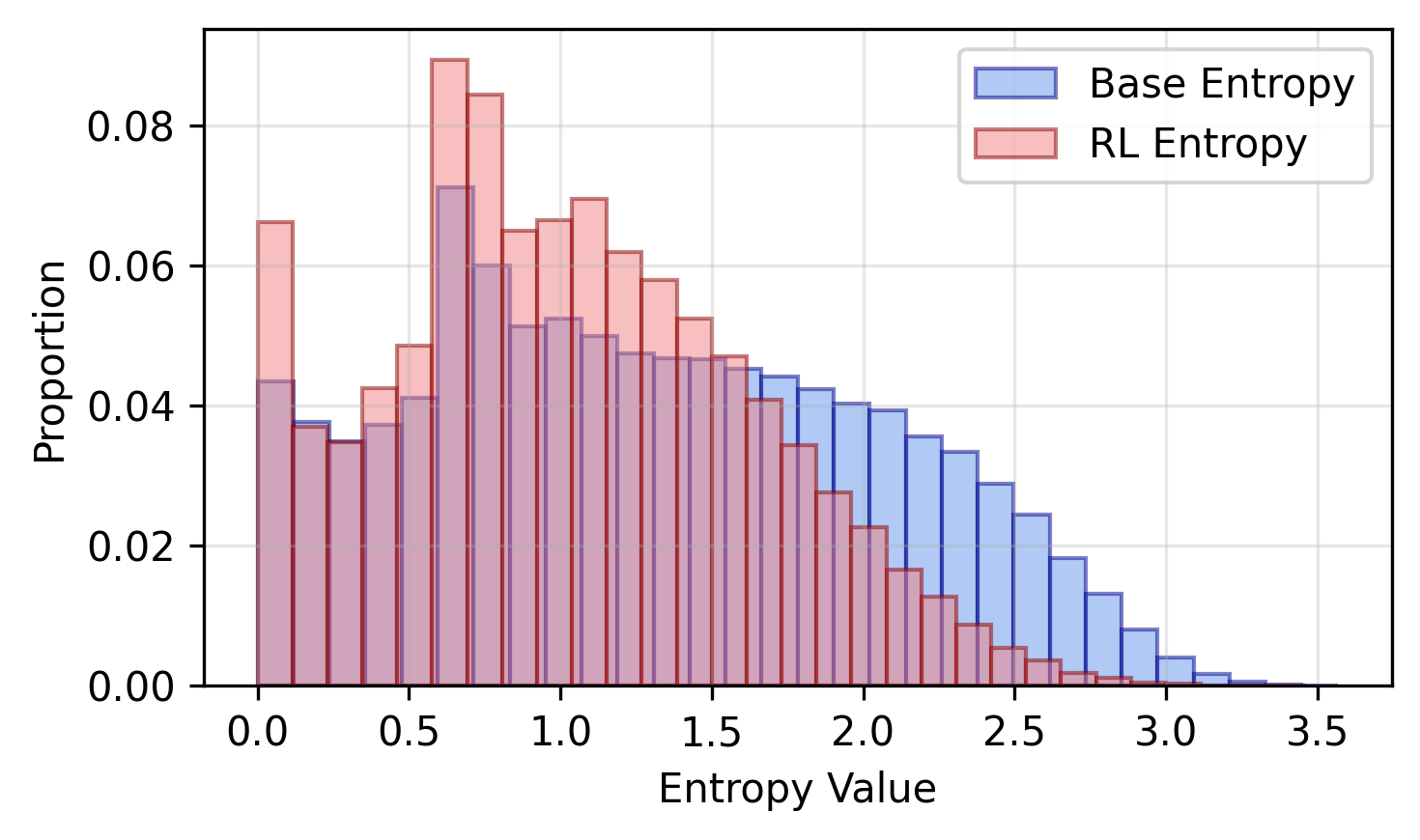}
        \caption{High JS divergence distributions ($>0.1$).}
    \end{subfigure}
    \caption{
        Entropy distributions for low and high divergence distributions for \textbf{DAPO}. 
        Low-divergence tokens are generally low-entropy, while high-divergence tokens span both high- and low-entropy regions, indicating that DAPO can modify even initially confident predictions.
    }
    \label{fig:entropy_dapo}
\end{figure}

The results show that low-divergence token distributions are largely low-entropy, indicating that distributions that are preserved are mostly initially low-entropy, though with a non-negligible portion of them that lie in the high-entropy regime. High-divergence contexts, however, can span a broad entropy range. In particular, DAPO \textbf{modifies both initially high- and low-entropy predictions, demonstrating its ability to override even confident base-model outputs.} By contrast, SimpleRL concentrates divergence more strongly in higher base entropy regions, reflecting a more conservative update regime. Isolating the effect of clip-higher, Figure~\ref{fig:entropy_7b_dapo} illustrates this contrast more clearly. At high-divergence positions, the higher $0.28$ upper clip produces a greater proportion of distributions with low base entropy, whereas the $0.2$ clip concentrates its high-divergence distributions in the higher base entropy regime. Additionally, the resulting RL entropy is higher under clip-higher, while for the $0.2$ clip it is concentrated at lower values, consistent with the overall entropy collapse observed under standard clipping \citep{yu2025dapo_system} and the steadily increasing entropy induced by clip-higher. Per-sequence scatter plots (Appendix~\ref{subsec:additional_distributions}, Figure~\ref{fig:entropy_vs_js_scatter}) show some variability across sequences, but with DAPO exhibiting an overall broader entropy spread among divergent positions and SimpleRL showing a tighter concentration, consistent with our aggregate analysis.

\subsection{Semantic Identity of Divergent Tokens}
\label{subsec:semantic_identity}

Given the sparsity and general structure of these shifts, a natural next question is: \emph{Which types of tokens are actually being targeted by RL fine-tuning?}

\begin{figure}[!htbp]
    \centering
    \begin{subfigure}{0.43\linewidth}
        \includegraphics[width=\linewidth]{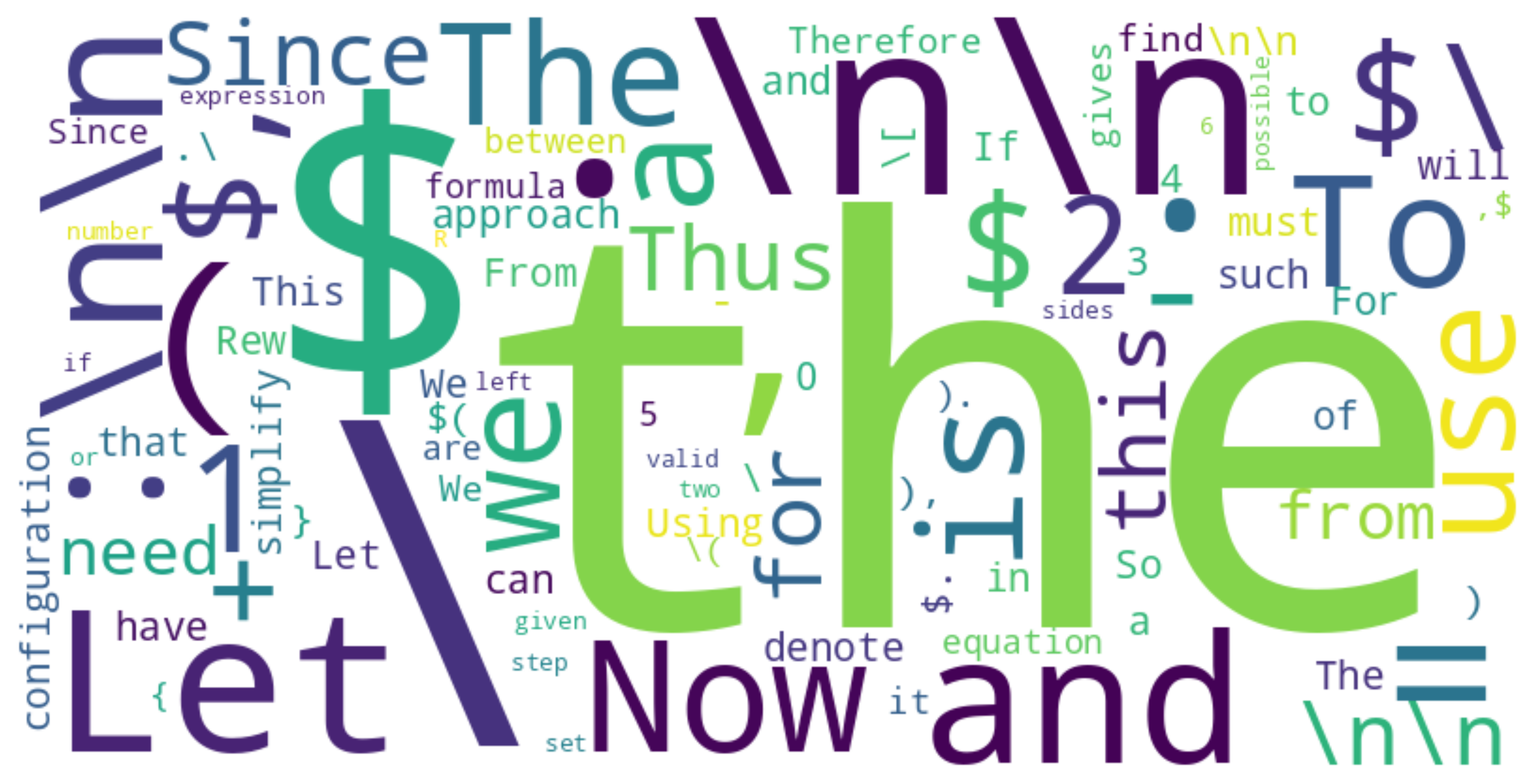}
        \caption{Tokens with high JS divergence ($\js>0.1$).}
    \end{subfigure}
    \hspace{1cm}
    \begin{subfigure}{0.43\linewidth}
        \includegraphics[width=\linewidth]{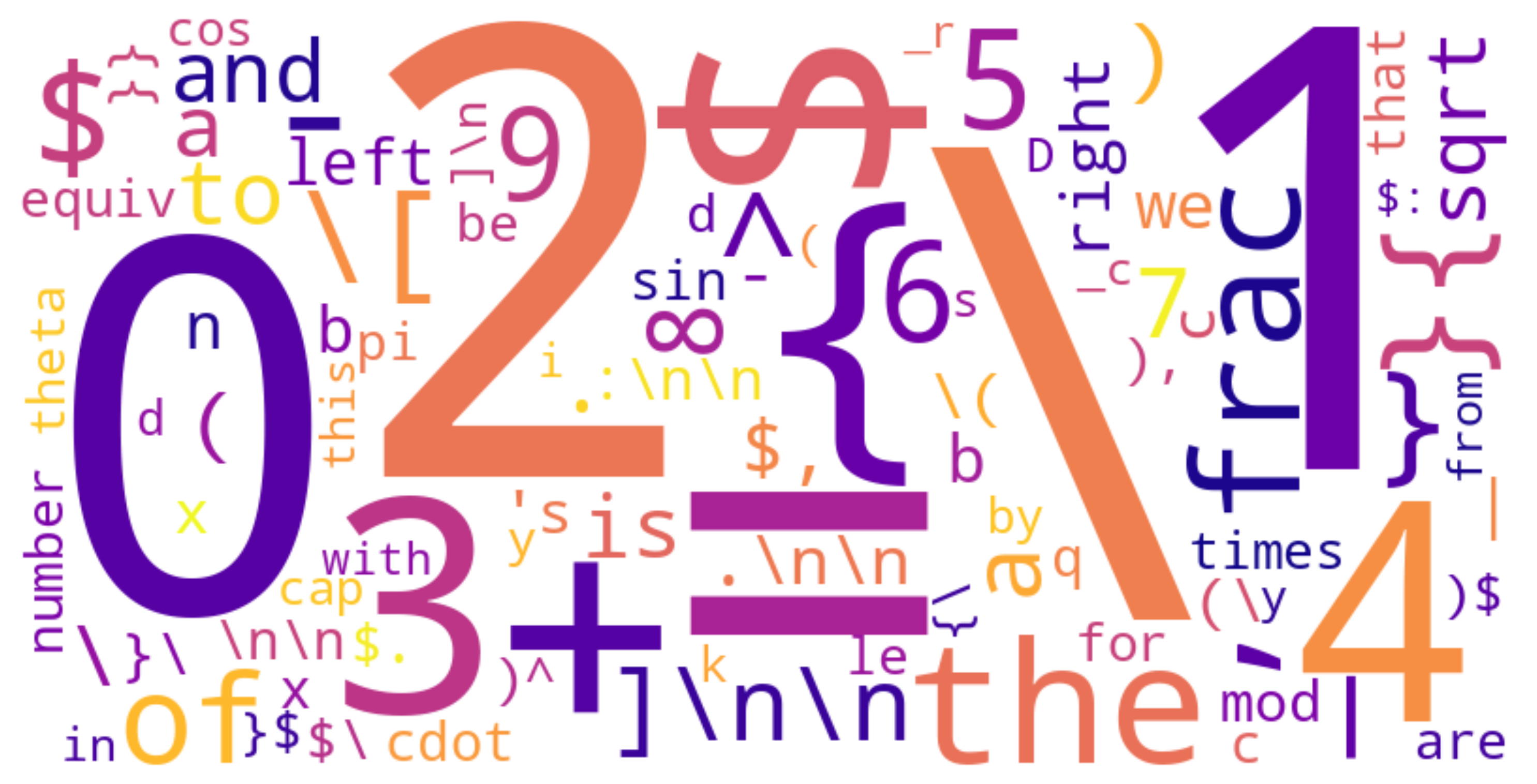}
        \caption{Tokens with low JS divergence ($\js<0.01$).}
    \end{subfigure}
    \caption{
        Word clouds of high and low divergence tokens under DAPO.
    }
    \label{fig:top_bottom_tokens}
\end{figure}
To investigate this, we examine which types of tokens tend to be sampled from high versus low divergence distributions. 
Figure~\ref{fig:top_bottom_tokens} visualizes representative examples using word clouds, where the size of each token is proportional to its frequency. Upon an initial examination, tokens appearing in high-divergence distributions include common function words, reasoning-related terms, and certain equation fragments, whereas those in low-divergence distributions are dominated by numerals, operators, and structural components of mathematical expressions.

\textbf{However, token identity alone does not determine divergence behavior}. Figure~\ref{fig:high_low_js_tokens} shows the full JS divergence distributions for the tokens sampled most frequently from high- and low-divergence distributions, revealing substantial context dependence. For example, the word ``the'' appears among the most frequent high-divergence tokens, yet its full divergence distribution across all sampled occurrences is overwhelmingly concentrated in the lower regime. This suggests that token identity alone is insufficient to characterize divergence, and that a contextual perspective is essential, rather than solely by token semantics. Instead, what is likely more important is the role the token plays within the reasoning trajectory and in the (base) model’s predictive distribution (as we'll see in the cross-sampling experiments in Section~\ref{sec:resampling}).

\subsection{Comparison with Supervised Fine-Tuning (SFT)}
\label{subsec:sft_comparison_main}

While the above analyses reveal that RLVR induces sparse distributional shifts, it remains unclear whether this behavior is unique to RL fine-tuning. This raises the question: \emph{Is such sparsity a distinctive property of RLVR, or a more general feature of fine-tuning?}
A natural point of comparison is supervised fine-tuning (SFT), which optimizes models to imitate target tokens rather than optimizing verifiable rewards on self-generated trajectories. Appendix~\ref{subsec:rlvr_vs_sft} presents a controlled comparison between SFT and RLVR (DAPO) on Qwen2.5-32B. Under the same JS divergence measurements (Section~\ref{subsec:sparse_js}), SFT exhibits a substantially larger high-divergence set and a broader divergence distribution than RLVR (Figure~\ref{fig:js_rlvr_vs_sft}). This demonstrates that the \textbf{sparsity of distributional shifts observed under RLVR is not a generic consequence of fine-tuning.}

\begin{figure}[!htbp]
    \centering
    \begin{subfigure}{0.46\textwidth}
        \includegraphics[width=\linewidth]{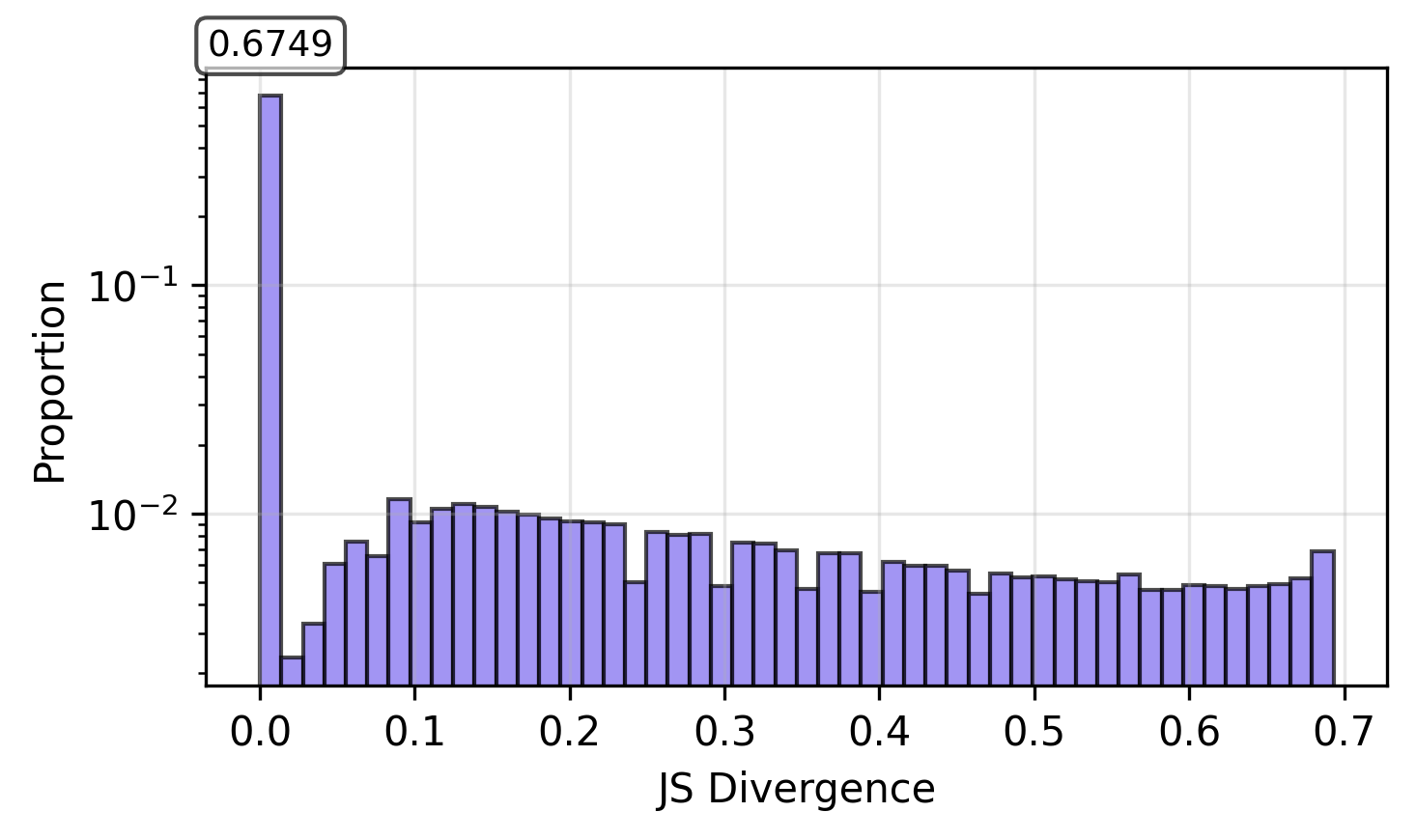}
        \caption{SFT: Histogram}
    \end{subfigure}
    \hspace{1cm}
    \begin{subfigure}{0.46\textwidth}
        \includegraphics[width=\linewidth]{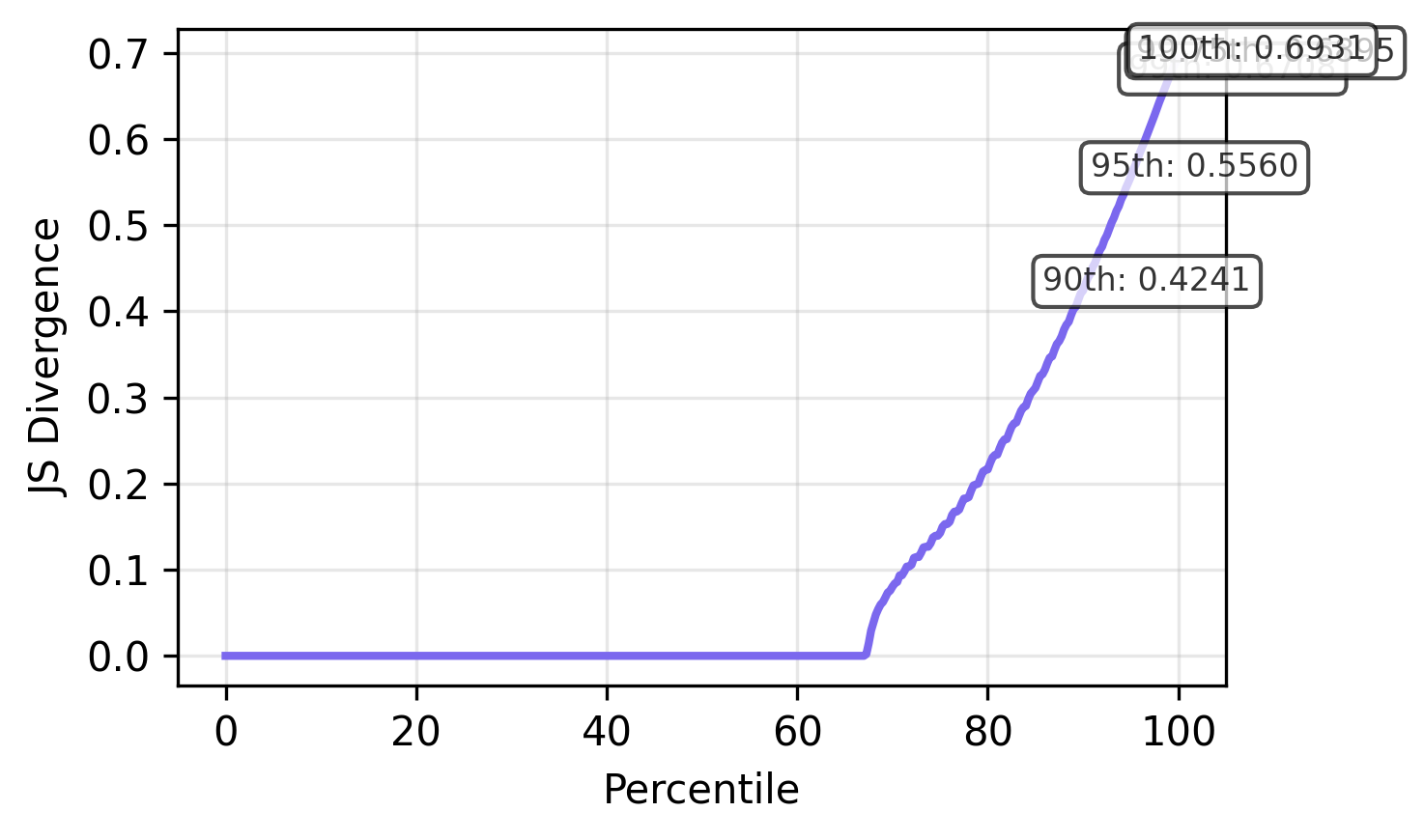}
        \caption{SFT: Percentiles}
    \end{subfigure}
    
    \caption{
        JS divergence distributions for Qwen2.5 32B fine-tuned with SFT on AIME 2024.
    }
    \label{fig:js_rlvr_vs_sft_main}
\end{figure}

Positional analysis further shows that SFT induces elevated divergence across the entire response, while still exhibiting increased divergence near the start of the sequence (Figure~\ref{fig:positional_distill_vs_rlvr}), mirroring, the early-position effects seen in RLVR. Finally, under the divergence–entropy analysis (Section~\ref{subsec:entropy_vs_divergence}), SFT’s divergent tokens concentrate more strongly in regions of high base-model entropy (compared with DAPO). While this concentration may be partially influenced by SFT outputs appearing more uncertain when evaluated under the base model, the resulting fine-tuned entropy values are nevertheless substantially lower than those of the base model (Figure~\ref{fig:entropy_distill}). These results are consistent with SFT’s objective of directly learning target outputs, leading to globally broader and sharper distributional updates.

\begin{tcolorbox}[breakable, title=Takeaways: General Structure of RLVR Distribution Shifts, colback=purple1!5!white,
  colframe=purple1]
\textbf{RLVR induces sparse and structured token-level distribution shifts.}
Across models and datasets, we observe:

\begin{itemize}
    \item \textbf{Token-level sparsity of shifts:} The vast majority of token positions exhibit near-zero JS divergence between base and RL models (often $>80\%$ and up to $>98\%$), indicating that RLVR modifies only a small subset of token distributions (even without KL regularization), despite significant performance differences between the base and fine-tuned model.
    
    \item \textbf{Method-dependent spread:} Less constrained methods (e.g., clip-higher DAPO) produce a broader divergence distribution, while more constrained methods (e.g., SimpleRL or lower clip settings) concentrate updates on fewer token distributions. 
    
    \item \textbf{Positional concentration:} Across sequences, divergence is consistently high near the beginning of responses and increases again near the end, however individual sequences exhibit varying divergence throughout.
    
    \item \textbf{Entropy interaction:} Low-divergence positions are largely low-entropy (confident) predictions, though with some of high-entropy. High-divergence token distributions span a wide entropy range under DAPO, showing that RLVR can override even low-entropy base-model predictions, while more conservative methods focus on higher-entropy regions.
    
    \item \textbf{Context dependence:} High-divergence token distributions are not determined solely by token identity; the same token can be sampled from both low and high divergence distributions depending on context.
    
    \item \textbf{Contrast with SFT:} Supervised fine-tuning produces substantially denser and more globally distributed shifts, indicating that the observed sparsity is not a generic feature of fine-tuning.
\end{itemize}
\end{tcolorbox}

\FloatBarrier

\section{Cross-Sampling: Functional Importance of Divergent Distributions}
\label{sec:resampling}

In the previous section, we showed that only a small fraction of token distributions exhibit substantial shifts between the base and RL models. This observation motivates a fundamental question: 

\emph{Are these divergent token distributions directly responsible for the performance gains induced by RLVR? More generally, to what extent are the base and RL policies functionally different on their entire sequence distributions?}

More concretely, can the accuracy improvements of the RL model be recovered by generating primarily under $\pi_{\text{base}}$ while selectively substituting a small number of tokens sampled from $\pi_{\text{RL}}$? On the other hand, does the RL model’s performance degrade when a small number of its token choices are replaced with those sampled from $\pi_{\text{base}}$? If RLVR’s gains are indeed concentrated in these sparse locations, then selectively intervening at such positions should have a disproportionate impact on performance. Furthermore, what happens if we intervene up to a certain number of interventions and then continue generation under the primary policy? Does the performance progressively improve/degrade as we increase the number of interventions, or does performance only change once most or all of the intervention-induced modifications are applied?

To answer these questions, we conduct controlled \textit{cross-sampling} experiments that selectively swap token choices between the base model $\pi_{\text{base}}$ and the RL-trained model $\pi_{\text{RL}}$. We consider two complementary interventions: (i) \textit{forward cross-sampling}, which injects RL-sampled tokens into base-model generations, and (ii) \textit{reverse cross-sampling}, which replaces RL-sampled tokens with base-model tokens during RL generation. Together, these interventions probe the contribution of RL-induced token-level changes by evaluating how introducing them into base-model trajectories or reverting them in RL trajectories influences reasoning performance. The general implementation procedure is summarized in Algorithm~\ref{alg:cross_sampling} of Appendix~\ref{subsec:additional_cross_sampling}.

\subsection{Cross-Sampling Framework}

Let $(X_t)_{t \ge 1}$ denote the sequence of random variables on $\mathcal{V}$ generated during decoding, and define the stopping time
\(
\tau := \inf\{t \ge 1 : X_t = \text{EOS}\}\wedge T_{\max},
\)
where EOS is the end-of-sequence token and $T_{\max}$ is the maximum number of tokens to generate.  
The generated response is then the finite sequence $X_{1:\tau}$.

Let $\pi_{\mathrm{prim}}$ denote the \emph{primary policy}, which governs generation by default, and let $\pi_{\mathrm{int}}$ denote the \emph{intervention policy}, which is used only at selected positions. These policies induce sequence-level distributions
$P_{\mathrm{prim}}$ and $P_{\mathrm{int}}$ over finite sequences.

To model cross-sampling, we introduce a \emph{switching rule} $\mathcal{S} : \mathcal{V}^{<\mathbb{N}} \to \{0,1\}$, where $\mathcal{V}^{<\mathbb{N}}$ is the set of finite sequences over the vocabulary $\mathcal{V}$, which determines, at each generation step, whether the next token is sampled from the intervention policy ($S_t=1$) or the primary policy ($S_t=0$).

Given a partial sequence $X_{<t}$, we define the switching variable
\(
S_t := \mathcal{S}(X_{<t}) \in \{0,1\},
\) and the resulting mixed policy governing the law of the next token:
\[
X_t\sim\pi_{\mathrm{mix}}^{(\mathrm{prim},\mathrm{int})}(\cdot \mid X_{<t})
=
(1-S_t)\, \pi_{\mathrm{prim}}(\cdot \mid X_{<t})
+
S_t \, \pi_{\mathrm{int}}(\cdot \mid X_{<t}).
\]
The corresponding sequence-level distribution is then denoted by
$P_{\mathrm{mix}}^{(\mathrm{prim},\mathrm{int})}$.

In our experiments, to align with the analysis in Section~\ref{sec:sparse_shifts}, the switching rule $\mathcal{S}$ is defined in terms of the token-level Jensen--Shannon divergence
$D_{\mathrm{JS}}$ between $\pi_{\mathrm{prim}}(\cdot \mid X_{<t})$ and
$\pi_{\mathrm{int}}(\cdot \mid X_{<t})$, but one could use any notion of divergence or distance between probability measures.  
Given a fixed threshold $\varepsilon \geq 0$, we set
\[
\mathcal{S}(X_{<t}) = \mathds{1}\{D_{\mathrm{JS}}(\pi_{\mathrm{prim}}(\cdot \mid X_{<t})\parallel \pi_{\mathrm{int}}(\cdot \mid X_{<t})) > \varepsilon\},
\]
so that cross-sampling intervenes only at high-divergence positions.

In Appendix~\ref{subsec:seq_div_bounds}, we provide simple bounds on divergences between the sequential distributions $P_{\text{mix}}$ and $P_\text{int}$ under different cross-sampling settings.

\paragraph{Forward Cross-Sampling.}
In forward cross-sampling, the response is generated primarily under the base policy,
which serves as the \emph{primary policy}, i.e., $\pi_{\mathrm{prim}} = \pi_{\text{base}}$.
The \emph{intervention policy} is the RL policy, $\pi_{\mathrm{int}} = \pi_{\text{RL}}$.
At positions where $S_t = 1$, the next token is sampled from $\pi_{\text{RL}}(\cdot \mid X_{<t})$, after which generation continues under the base policy until the next intervention point or termination.
This procedure tests whether selectively injecting RL token choices into trajectories that are otherwise generated by the base model is sufficient to recover RL-level performance.

\paragraph{Reverse Cross-Sampling.}
In reverse cross-sampling, the roles of the base and RL policies are reversed. The response is generated primarily under the RL policy,
which serves as the \emph{primary policy}, i.e., $\pi_{\mathrm{prim}} = \pi_{\text{RL}}$, while the \emph{intervention policy} is the base policy, $\pi_{\mathrm{int}} = \pi_{\text{base}}$.
At positions where $S_t = 1$, the next token is sampled from $\pi_{\text{base}}(\cdot \mid X_{<t})$, while all other positions follow the RL policy.
This intervention selectively replaces RL-sampled tokens at high-divergence
positions with base-model choices, allowing us to quantify how rapidly RL performance
degrades when those decisions are removed.

\paragraph{Evaluation with Intervention Budgets.} We further evaluate the effect of cross-sampling by limiting the number of interventions, generating responses under the mixed policy $P_{\text{mix}}$ with a fixed number of cross-sampling interventions, after which generation proceeds under the primary policy. This can be viewed as enforcing an intervention budget by setting $S_t = 0$ for all $t$ such that $\sum_{s=1}^{t-1} S_s \ge k$, where $k$ denotes the maximum number of cross-sampling interventions. We then measure accuracy for a fixed divergence threshold $\varepsilon$ as intervention count $k$ increases. This setup probes whether early, limited interventions are sufficient to induce downstream performance differences when generation is subsequently completed by the primary policy. In forward cross-sampling, this tests whether a small number of RL-induced interventions can steer trajectories such that the base model can complete them with improved accuracy, even if high-divergence positions remain later in the sequence. In reverse cross-sampling, it evaluates whether the RL model can still produce correct solutions after introducing a small number of base-sampled tokens, or whether such perturbations instead lead to a corresponding degradation in performance.

\paragraph{Connection to Speculative Decoding and BiLD.}
Our cross-sampling framework is conceptually related to speculative decoding \citep{leviathan2023fastinferencetransformersspeculative, kim2023speculativedecodingbiglittle} in that generation depends on \emph{two} next-token distributions.
It is closer to \emph{Big Little Decoder} (BiLD) \citep{kim2023speculativedecodingbiglittle}, which defines a routing policy (rather than an exact sampling scheme that preserves a designated target distribution).
BiLD trades off latency and quality via fallback and rollback rules, while our approach defines a mixed policy $\pi_{\mathrm{mix}}^{(\mathrm{prim},\mathrm{int})}$ to investigate the role of high-divergence decisions between base and RL models for reasoning performance.
BiLD triggers fallback based on small-model confidence (e.g., a max-probability threshold) and uses rollback based on a discrepancy between small/large predictive distributions (with a cross-entropy–based quantity), whereas we intervene when $D_{\mathrm{JS}}(\pi_{\mathrm{prim}}(\cdot\mid X_{<t})\parallel \pi_{\mathrm{int}}(\cdot\mid X_{<t}))$ exceeds a threshold, and investigate the impact on performance as we increase the number of interventions for a fixed threshold.
BiLD further proposes prediction alignment by fine-tuning the small model on large-model outputs to reduce avoidable disagreements, while our setting typically uses models that already agree at most positions (base vs.\ RL) to isolate fine-tuning effects.

\subsection{Results and Findings}

We now evaluate the functional impact of cross-sampling interventions on downstream task performance. Figures~\ref{fig:resample} and \ref{fig:reverse_resample} present the accuracy curves for forward and reverse cross-sampling on Qwen2.5-32B fine-tuned with SimpleRL, evaluated on AIME 2024. Each point along the curve corresponds to the Mean@32 accuracy obtained by generating responses under the mixed policy $P_{\text{mix}}$ with a fixed number of cross-sampling interventions, after which generation is completed under the primary policy.

\begin{figure}[!htbp]
    \centering
    \begin{subfigure}{0.48\textwidth}
        \includegraphics[width=\linewidth]{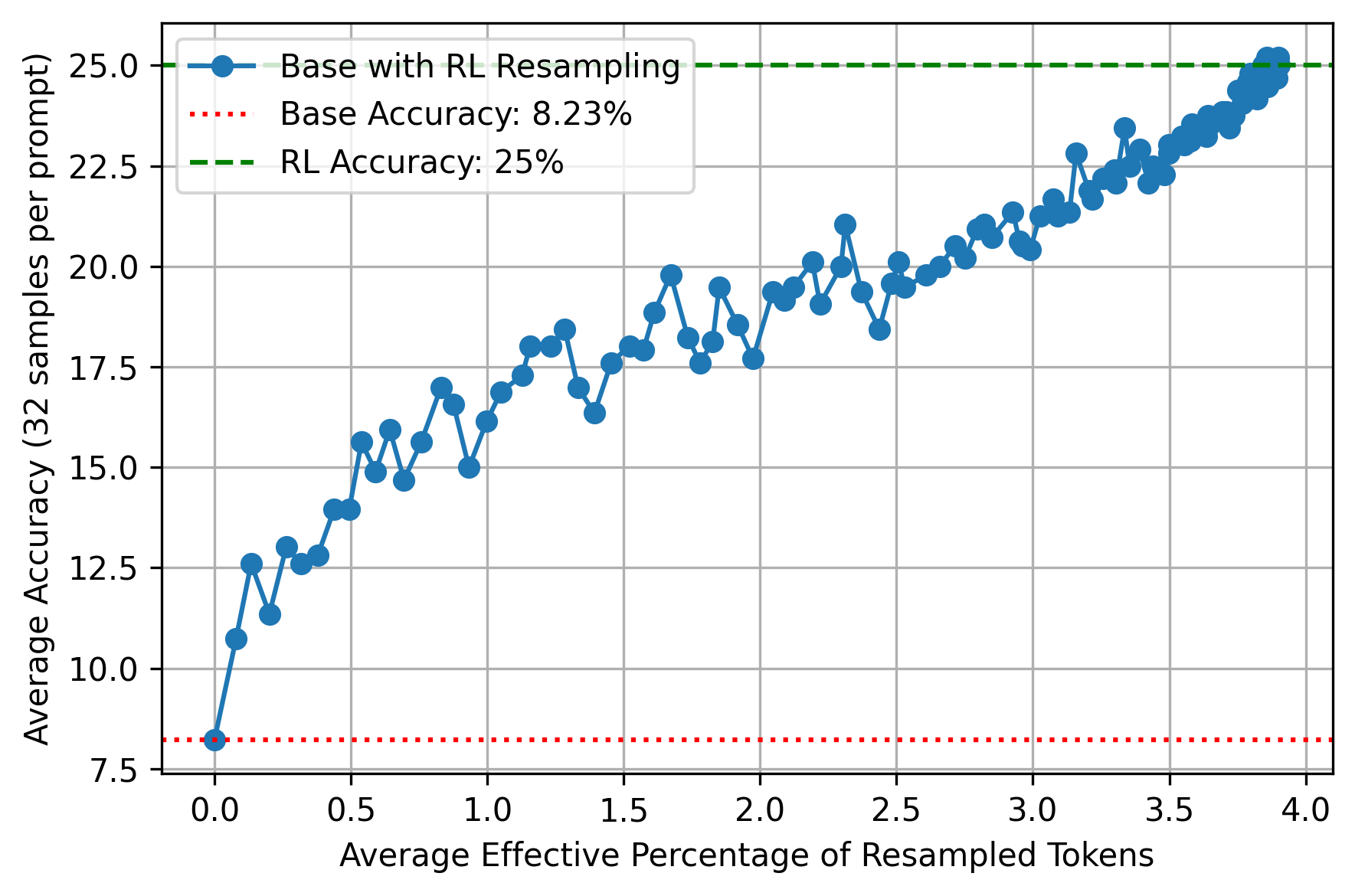}
        \caption{AIME 2024}
        \label{fig:forward_resample_aime24}
    \end{subfigure}
    \hfill
    \begin{subfigure}{0.48\textwidth}
        \includegraphics[width=\linewidth]{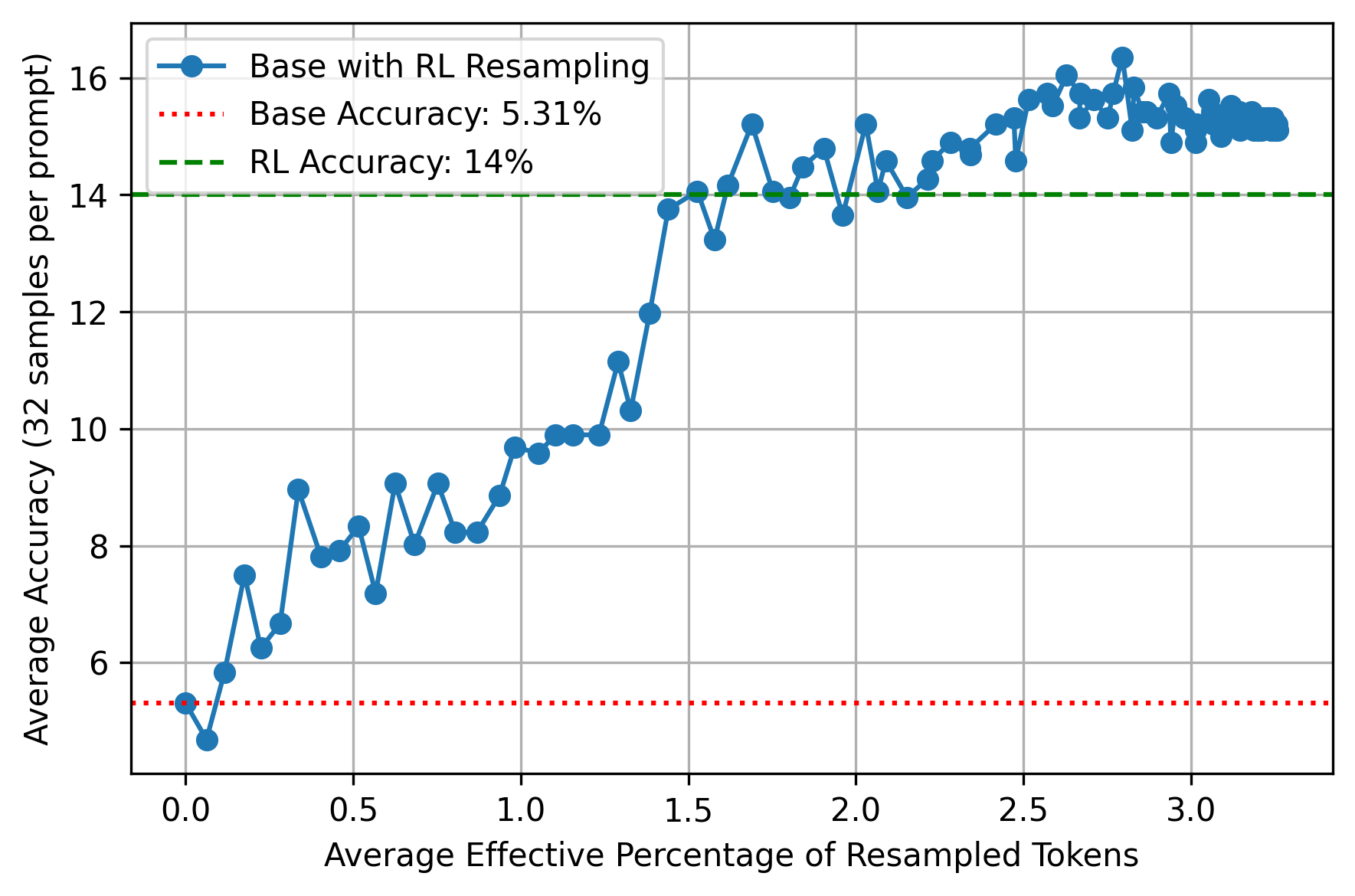}
        \caption{AIME 2025}
        \label{fig:forward_resample_aime25}
    \end{subfigure}
    \caption{Forward cross-sampling results (Qwen2.5 32B SimpleRL): injecting RL tokens into base generations progressively recovers RL accuracy.}
    \label{fig:resample}
\end{figure}

Table~\ref{tab:token_stats} summarizes the number and proportion of cross-sampled tokens required to approximately recover RL-level performance (forward cross-sampling) or collapse to base-level performance (reverse cross-sampling) for Qwen2.5 32B SimpleRL DAPO and AIME 2024 and AIME 2025. Additional cross-sampling results, including experiments on additional model configurations and datsets, are provided in Appendix~\ref{subsec:additional_cross_sampling}.  


\begin{tcolorbox}[colback=purple1!5!white,
  colframe=purple1]
\textbf{Forward Cross-Sampling:} A small fraction of RL-sampled tokens suffices to recover or even exceed RL-level performance when generation otherwise follows the base model.
\end{tcolorbox}

\paragraph{Forward Cross-Sampling:}
From Figure~\ref{fig:resample} and Table~\ref{tab:token_stats}, we observe that for Qwen2.5-32B with SimpleRL, forward cross-sampling under the mixed policy
\( \pi_{\mathrm{mix}}^{(\mathrm{base},\mathrm{RL})} \)
recovers RL-level accuracy with remarkably few interventions. Injecting fewer than \(4\%\) RL-sampled tokens per sequence on average, corresponding to fewer than \(40\) effective token substitutions per response, is sufficient to close the performance gap from the base model (approximately \(8\%\)) to the RL model (approximately \(25\%\)) on AIME 2024. On AIME 2025, the effect is even more pronounced: using only \(1.53\%\) effective cross-sampling, or roughly \(13\) average token substitutions per response, raises accuracy from about \(5\%\) to over \(14\%\). Interestingly, this level of performance exceeds that of the RL policy $\pi_{\text{RL}}$ itself, indicating that \textbf{the mixed policy
\( \pi_{\mathrm{mix}}^{(\mathrm{base},\mathrm{RL})} \)
can, in some cases, outperform the standalone RL policy.} This potentially occurs because the mixed policy is close but not identical to the RL policy (for $\varepsilon>0$), which may sometimes avoid failures induced by the RL model.

Recovering RL-level performance for DAPO requires a larger number of interventions, reflecting its substantially stronger fine-tuned performance. On AIME 2024, approximately \(7.8\%\) effective tokens are are enough to boost accuracy from roughly \(8\%\) to over \(44\%\), while on AIME 2025, fewer than \(6.5\%\) effective interventions suffice to increase performance from around \(5\%\) to over \(33\%\). Importantly, even though the performance gains are substantially larger, \textbf{the number of critical token-level decisions remains small relative to sequence length.}

\begin{figure}[!htbp]
    \centering
    \begin{subfigure}{0.48\textwidth}
        \includegraphics[width=\linewidth]{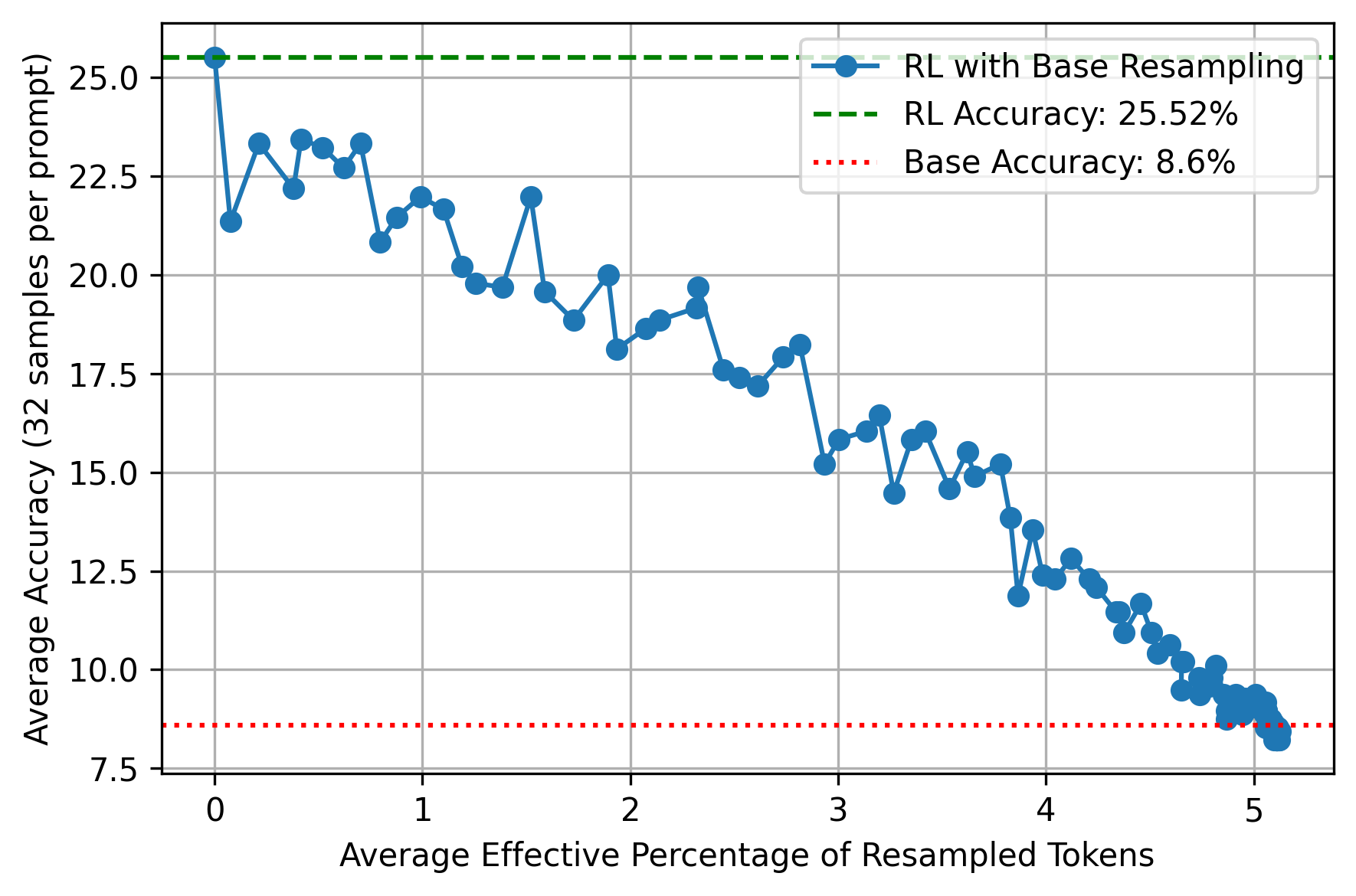}
        \caption{AIME 2024}
        \label{fig:reverse_resample_aime24}
    \end{subfigure}
    \hfill
    \begin{subfigure}{0.48\textwidth}
        \includegraphics[width=\linewidth]{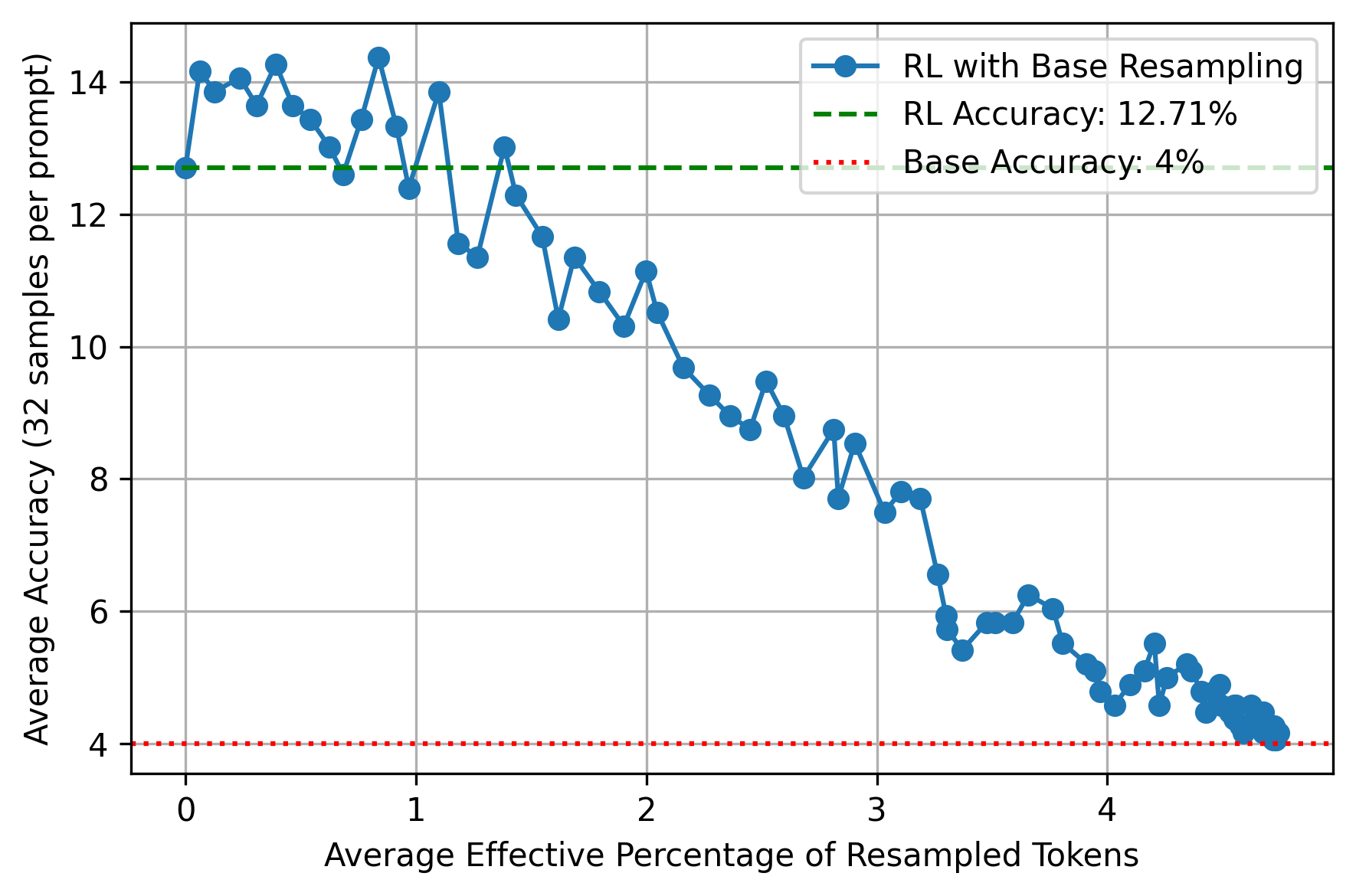}
        \caption{AIME 2025}
        \label{fig:reverse_resample_aime25}
    \end{subfigure}
    \caption{Reverse cross-sampling results (Qwen2.5 32B SimpleRL): swapping RL tokens with base tokens in RL generations causes near-monotonic degradation toward base performance.}
    \label{fig:reverse_resample}
\end{figure}

\begin{tcolorbox}[colback=purple1!5!white,
  colframe=purple1]
\textbf{Reverse Cross-Sampling:} Reverting a small fraction of RL tokens causes performance to collapse to, or even below, base-model levels when generation otherwise follows the RL model.
\end{tcolorbox}

\paragraph{Reverse Cross-Sampling:}
Reverse cross-sampling results show that the RL policy is highly sensitive to a small number of its token-level decisions, operating similarly as the base policy apart from the small number of token positions with high-divergence. From Figure~\ref{fig:reverse_resample} and Table~\ref{tab:token_stats}, we observe that for Qwen2.5-32B with SimpleRL, generating under the mixed policy
\( \pi_{\mathrm{mix}}^{(\mathrm{RL},\mathrm{base})} \), that is generating primarily with the RL policy except at high-divergence positions, only a small fraction of base-sampled tokens is enough to rapidly degrade performance. On AIME 2024, replacing approximately \(5\%\) of high-divergence distributions, corresponding to less than $30$ effective base-sampled tokens per response, is sufficient to collapse accuracy from RL levels (around \(25\%\)) back to base-level performance (around \(8\%\)). This phenomenon also holds on AIME 2025: roughly \(4.7\%\) of effective base-sampled tokens (around $30$ effective tokens per response) reduces accuracy from approximately \(12.7\%\) to below \(4\%\), which is below base-level performance.

For DAPO, a larger number of substitutions is required to erase its gains, consistent with its substantially stronger performance. On AIME 2024, reverting roughly \(10\%\) of effective tokens suffices to reduce accuracy from over \(44\%\) to near base levels (around \(8\%\)), while on AIME 2025, fewer than \(10\%\) effective reversions collapse performance from over \(33\%\) to below \(4.5\%\). Importantly, even in these cases, \textbf{the required reversions constitute a small fraction of the total generated tokens}, reinforcing that RL-level performance, across both SimpleRL and DAPO, depends critically on a sparse set of token-level decisions.

\begin{tcolorbox}[colback=purple1!5!white,
  colframe=purple1]
Notably, \textbf{the substituted base tokens are mostly plausible and semantically reasonable (Figure~\ref{fig:resample_token_pairs}), yet they nonetheless progressively derail the reasoning process.} Even when two token choices are locally equivalent or interchangeable to a human reader, they can induce different downstream conditional distributions and lead to diverging reasoning responses, revealing substantial trajectory sensitivity of the model.
\end{tcolorbox}

\paragraph{Progressive Steering of Reasoning Trajectories.}
Across both forward and reverse cross-sampling, reasoning performance varies smoothly as the cross-sampling intervention budget increases. In the forward direction, accuracy improves steadily as additional RL-sampled tokens are introduced, with no sharp threshold, indicating that each intervention contributes positively, on average, to performance and that \textbf{RL’s gains are sparsely distributed across multiple decision points rather than requiring all RL-induced changes.} In reverse cross-sampling, performance degrades in a similarly smooth and near-monotonic manner as RL token choices are reverted, demonstrating that RL-level performance depends on preserving a sparse set of token-level shifts throughout the generation, even when the remainder of the response is produced under the RL policy.

An important aspect underlying both settings is that cross-sampling interventions are applied sequentially along the generation trajectory, while decoding otherwise proceeds under a single primary policy. Intuitively, one might expect that modifying only a small number of token choices, particularly early ones, would have limited impact once generation continues under the primary policy. However, our results show that this is not the case: \textbf{injecting even just the first few RL-sampled tokens can already yield measurable performance gains in forward cross-sampling, while swapping the earliest few divergent tokens can noticeably degrade performance in the reverse setting.} These effects arise not necessarily because early tokens are inherently dominant, but because small, local edits can redirect the generation process toward different reasoning trajectories, which are then continued by subsequent decoding under the primary policy.

\begin{tcolorbox}[colback=purple1!5!white,
  colframe=purple1]
Rather than introducing entirely new reasoning behaviors, RLVR refines a sparse set of local token choices that reliably steer generation toward more effective reasoning trajectories that remain accessible to the base model, but are unlocked through these targeted edits. Taken together, the forward and reverse cross-sampling results show that the RL fine-tuned model operates similarly to the base model, modulo a small number of token-level decisions.
\end{tcolorbox}

\paragraph{Summary.}
Overall, these results establish that RLVR refinement operates in a highly targeted manner. Across datasets and training settings, forward and reverse cross-sampling show that RLVR’s performance gains are functionally concentrated in a sparse set of high-divergence token positions. Forward interventions show that introducing only a small number of RL-sampled tokens into base-model generations is sufficient to recover, and in some cases exceed, RL-level accuracy, while reverse interventions demonstrate that reverting a similarly small number of RL token choices with base-sampled tokens can rapidly erase these gains, and in some cases degrade performance below base-model levels. Importantly, these effects emerge progressively as the intervention budget increases, indicating that the reasoning trajectories are steadily shaped by sequential, local token-level modifications along the sequence.

\begin{table}[htbp]
\centering
\small
\caption{Summary of cross-sampled tokens required to reach approximate RL-level performance (forward) or base-level performance (reverse) for Qwen2.5-32B on AIME 2024 and AIME 2025 with a token generation budget of 8000.
Effective token counts/percentages exclude identity swaps during cross-sampling. Token percentages are computed at the sequence level.}
\label{tab:token_stats}
\begin{tabular}{l@{\hspace{0.5em}}l@{\hspace{0.2em}}c@{\hspace{0.2em}}c@{\hspace{0.2em}}|c@{\hspace{0.2em}}|c}
\toprule
\textbf{Dataset} & \textbf{Method} & \textbf{Eff. \%} & \textbf{Eff. \#} & \textbf{Initial} \textbf{(}$\pi_{\text{prim}}$\textbf{) } & \textbf{Final} \textbf{(}$\pi_{\text{mix}}$\textbf{)} \\
 &  & \textbf{Tokens } & \textbf{Tokens } & \textbf{Acc. (\%)} & \textbf{Acc. (\%)} \\
\midrule
\multirow{4}{*}{AIME24}
 & SimpleRL   & $3.86$\% & $38$ & $8.23$ & $>25$ \\
 & SimpleRL Reverse  & $5$\%  & $29$ & $25.52$ & $<8.3$ \\
 & DAPO       & $7.8$\% & $280$ & $8.23$ & $>44$ \\
 & DAPO Reverse       & $10.1$\% & $173$ & $44.8$ & $<8.5$ \\
\midrule
\multirow{4}{*}{AIME25}
 & SimpleRL   & $1.53$\% & $13$ & $5.3$ & $>14$ \\
 & SimpleRL Reverse  & $4.73$\% & $31$ & $12.71$ & $<4$ \\
 & DAPO       & $6.47$\%  & $230$ & $5$ & $>33$ \\
 & DAPO Reverse       & $9.89$\%  & $181$ & $32$ & $<4.5$ \\
\bottomrule
\end{tabular}
\end{table}

\begin{tcolorbox}[breakable, 
  title=Takeaways: Functional Role of Divergent Token Distributions via Cross-Sampling,
  colback=purple1!5!white,
  colframe=purple1,
  fonttitle=\bfseries
]

\textbf{Cross-sampling experiments show that RLVR performance gains are concentrated at a small set of high-divergence token positions, revealing that RL and base models are largely similar overall but differ critically at sparse, high-impact token decisions where RLVR guides generation toward more effective reasoning trajectories that are otherwise accessible to the base model.}

\begin{itemize}
    \item \textbf{Forward cross-sampling:} Injecting a small fraction of RL-sampled tokens into base-model generations is sufficient to recover RL-level accuracy. In multiple settings, modifying only $\sim$1–10\% of tokens per response closes most or all of the performance gap, and can sometimes exceed standalone RL decoding performance.

    \item \textbf{Reverse cross-sampling:} Reverting a similarly small fraction of RL token choices back to base-sampled tokens in RL-model generations progressively collapses RL performance to base levels, and in some cases below base accuracy. RL gains are therefore highly sensitive to these sparse token-level decisions.

    \item \textbf{Model improvement vs. number of required interventions:} Stronger RLVR models (e.g., DAPO) in general require more interventions to recover or erase their gains, but the required substitutions still form a small fraction of tokens overall.

   \item \textbf{Progressive trajectory shaping:} Performance changes progressively and near-monotonically with the number of interventions, indicating that gains accumulate across multiple divergence points rather than requiring most or all RL- or base-induced changes to impact reasoning performance. Even a small number of early interventions can propagate forward to produce globally different reasoning trajectories with sustained performance impact, even when generation is continued under the primary policy.

    \item \textbf{Sensitivity to locally interchangeable tokens:} Cross-sampled substitutions generally correspond to tokens that are locally plausible and often semantically equivalent from a human perspective, yet still produce significant downstream performance differences. This indicates that reasoning trajectories can be sensitive to distributionally distinct but interchangeable token choices. This sensitivity highlights the limited invariance of the generation process in such LLMs to locally equivalent token choices.

    \item \textbf{Insight into RLVR vs.\ base policies:} The base and RL models behave similarly on most token decisions, but differ at a sparse set of high-divergence positions that have disproportionate impact on reasoning outcomes. RLVR thus acts as a targeted modification mechanism on the base model rather than a global policy shift.

\end{itemize}

\end{tcolorbox}

\FloatBarrier

\section{Fine-Grained Mechanics of Distribution Shifts}
\label{sec:fine_grained}

Having established the general aspects of RLVR-induced distribution shifts, namely their sparsity, positional concentration, relationship to entropy, as well as their functional importance to reasoning performance, a natural next question is:

\textit{At token positions where substantial changes occur, how novel are these updates? Do they introduce (effectively) new candidate tokens, or primarily redistribute probability mass among existing ones?}

This question targets the \emph{mechanism} underlying the sparse shifts observed earlier. While previous analyses reveal \emph{where} and \emph{how much} change occurs, and their importance to reasoning, they do not resolve \emph{what kind} of change is taking place at the level of individual token distributions. To address this, we conduct a fine-grained analysis of high-divergence positions, examining how probability mass is reallocated within the next-token distributions. Concretely, we move beyond general changes and directly study the structural changes in candidate tokens and their rankings. We examine this through multiple lenses: (i) overlap in top-$k$ candidate sets and token rank reordering, (ii) low-probability token behavior, and (iii) the evolution over the course of training.

\begin{tcolorbox}[colback=purple1!5!white,
  colframe=purple1]
This fine-grained analysis reveals that, even at positions with substantial divergence, current RLVR methods mostly do not fundamentally change the candidate space of predictions. Instead, they primarily reorder and selectively amplify tokens that are already plausible under the base model, with limited promotion of low-probability tokens. Nevertheless, these comparatively rare cases of substantial re-ranking or promotion of low-probability tokens may still play an important role in enabling improved reasoning.
\end{tcolorbox}

We further validate these findings across additional datasets, models, and RLVR hyperparameter settings in Appendix~\ref{subsec:additional_distributions}.

\subsection{Top-\textit{k} Overlap and Rank Reordering}
\label{subsec:topk_overlap}

We first investigate whether RLVR changes \emph{which} tokens are considered plausible, or mainly changes \emph{how} they are prioritized. Concretely, we examine (1) the overlap between the base and RL models’ top-$k$ candidate sets, and (2) how the relative ranking of shared candidates shifts.

Figure~\ref{fig:topk_overlap} reports the fraction of shared tokens between the top-$k$ sets of the base and RL fine-tuned models, restricted to high-divergence token distributions. Despite only considering high-divergence positions, \textbf{overlap between top-$k$ token sets remains high once $k \geq 2$.} SimpleRL exhibits over 80\% average overlap (often exceeding 85\%), while DAPO shows slightly lower but still substantial overlap. Both methods display a sharp increase in overlap from $k=1$ to $k=2$, suggesting that while the top-1 token often
changes at high-divergence positions, \textbf{the replacement was typically already among the base model’s top-3.}

\begin{figure}[!htbp]
    \centering
    \begin{subfigure}{0.48\linewidth}
        \includegraphics[width=\linewidth]{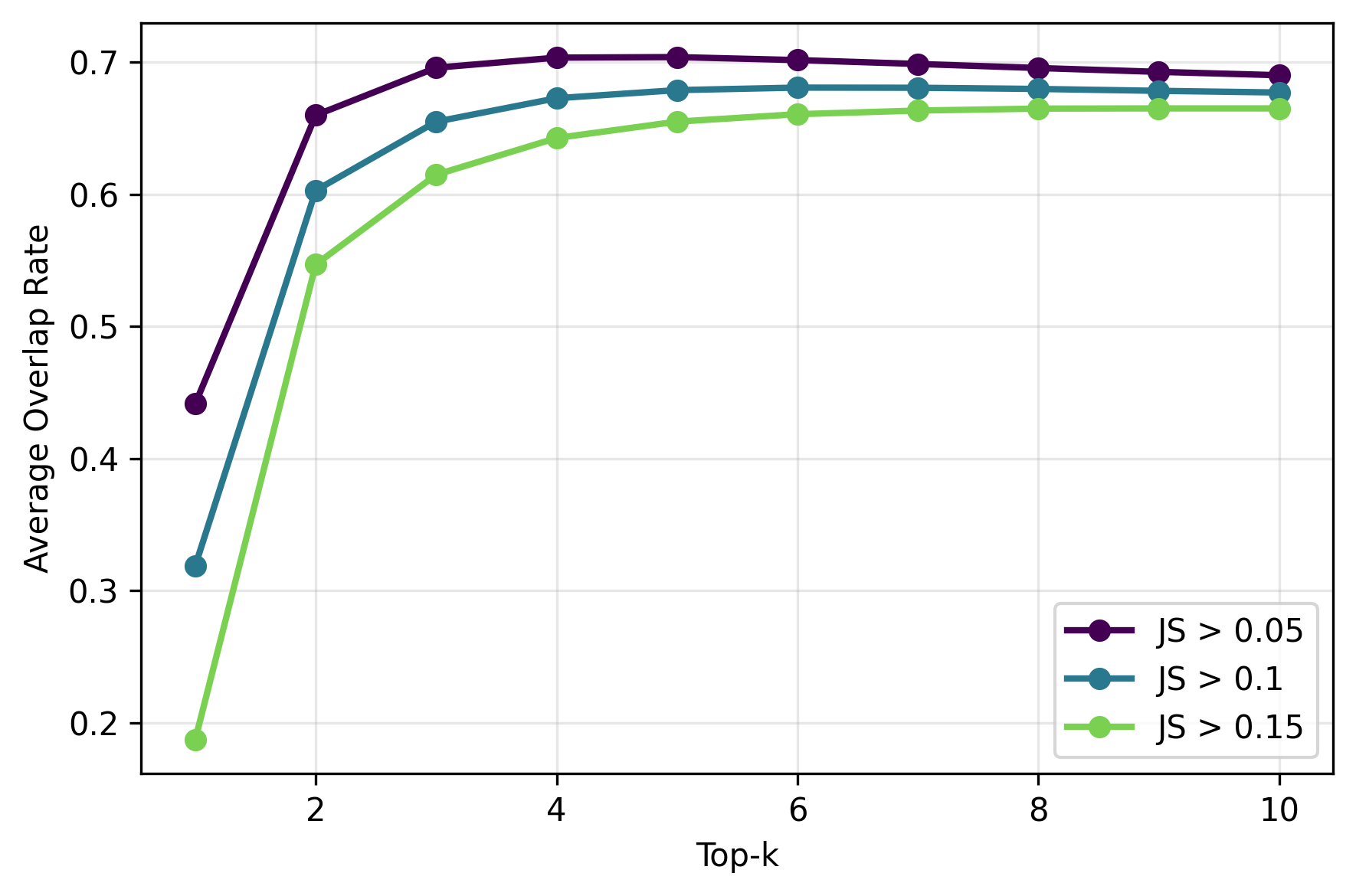}
        \caption{DAPO: Top-$k$ overlap across thresholds.}
        \label{fig:topk_dapo}
    \end{subfigure}
    \hfill
    \begin{subfigure}{0.48\linewidth}
        \includegraphics[width=\linewidth]{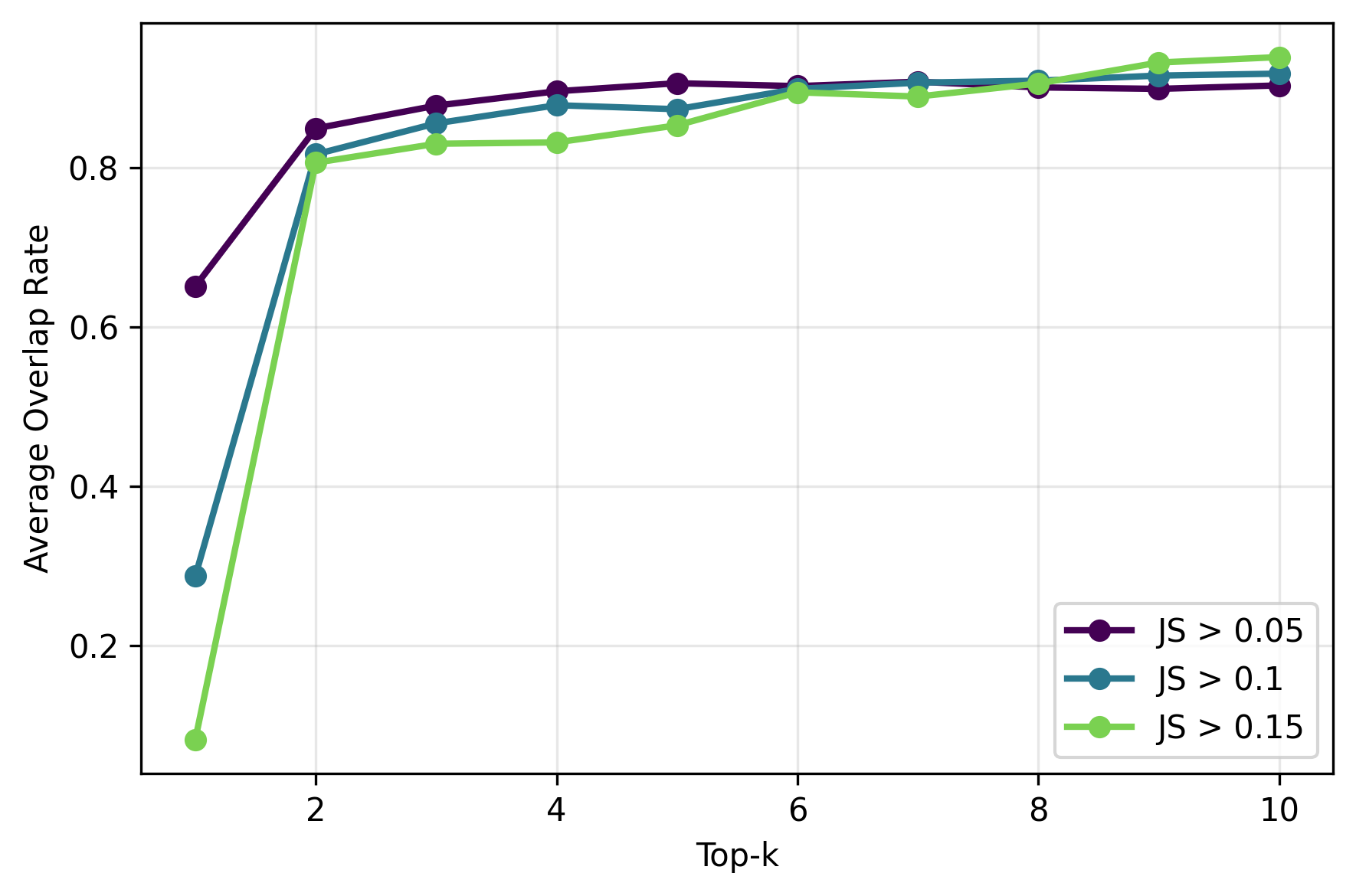}
        \caption{SimpleRL: Top-$k$ overlap across thresholds.}
        \label{fig:topk_simplerl}
    \end{subfigure}
    \caption{
        Top-$k$ token overlap between base and RL models at divergent positions (JS > 0.1). Computed as the size of the intersection divided by $k$.
        High overlap for $k \geq 2$ shows that distributional shifts occur mostly within shared candidate sets.
    }
    \label{fig:topk_overlap}
\end{figure}

This observation is further clarified in Figure~\ref{fig:rl_vs_base_ranks}, which shows where the RL model’s top-3 tokens appear in the base model’s ranking, among high divergence positions. Around 30\% of RL top-1 tokens are already ranked first under the base model, and over 80\% (DAPO) and 90\% (SimpleRL) fall within the base top-3. RL top-2 tokens typically lie within the base top-3–4, with SimpleRL exhibiting consistently stronger alignment.

Comparing the upper-clip mechanism with Qwen2.5-Math-7B, we observe a similar but more nuanced behavior. For small $k$, DAPO with $0.28$ upper clip yields lower average overlap with the base model’s top-$k$ set than the variant with 0.2 clip (Figure~\ref{fig:topk_dapo_variants}), indicating more frequent changes among the highest-ranked tokens when using clip-higher. Interestingly, this trend reverses for larger $k$, where the model trained with $0.2$ upper clip exhibits smaller overlap, suggesting that agreement with the base model deteriorates in the tail of the candidate set. A consistent picture emerges from the rank-shift analysis in Figure~\ref{fig:ranks_dapo_variants}. For the 0.2 clip setting, the base-model ranks of the RL model’s top-3 tokens are more often preserved, but a non-negligible fraction of the RL model's 3rd ranked token originate from much lower base ranks compared to the 0.28 clip setting. 

Taken together, these results suggest that clip-higher primarily redistributes probability mass within an already plausible candidate set, leading to more frequent reordering among the highest-ranked tokens while largely preserving the broader candidate space. In contrast, removing clip-higher tends to concentrate probability more strongly on a small number of dominant tokens, resulting in higher agreement with the base model among the very top ranks but poorer agreement deeper in the candidate set. Consistent with this, under the 0.2 clip setting, some tokens promoted to the RL model’s 3rd ranked token originate from much lower base ranks. However, given the substantially lower RL entropy observed in this setting, such promoted tokens may still carry relatively little probability mass in practice, with most of the probability concentrated on the top one or two tokens. As a result, these apparent rank promotions in the 0.2 clip setting may often reflect secondary effects in the low-probability tail rather than substantial changes to the primary token choice driving generation.

\begin{figure}[!htbp]
    \centering
    \begin{subfigure}{0.45\linewidth}
    \includegraphics[width=\linewidth]{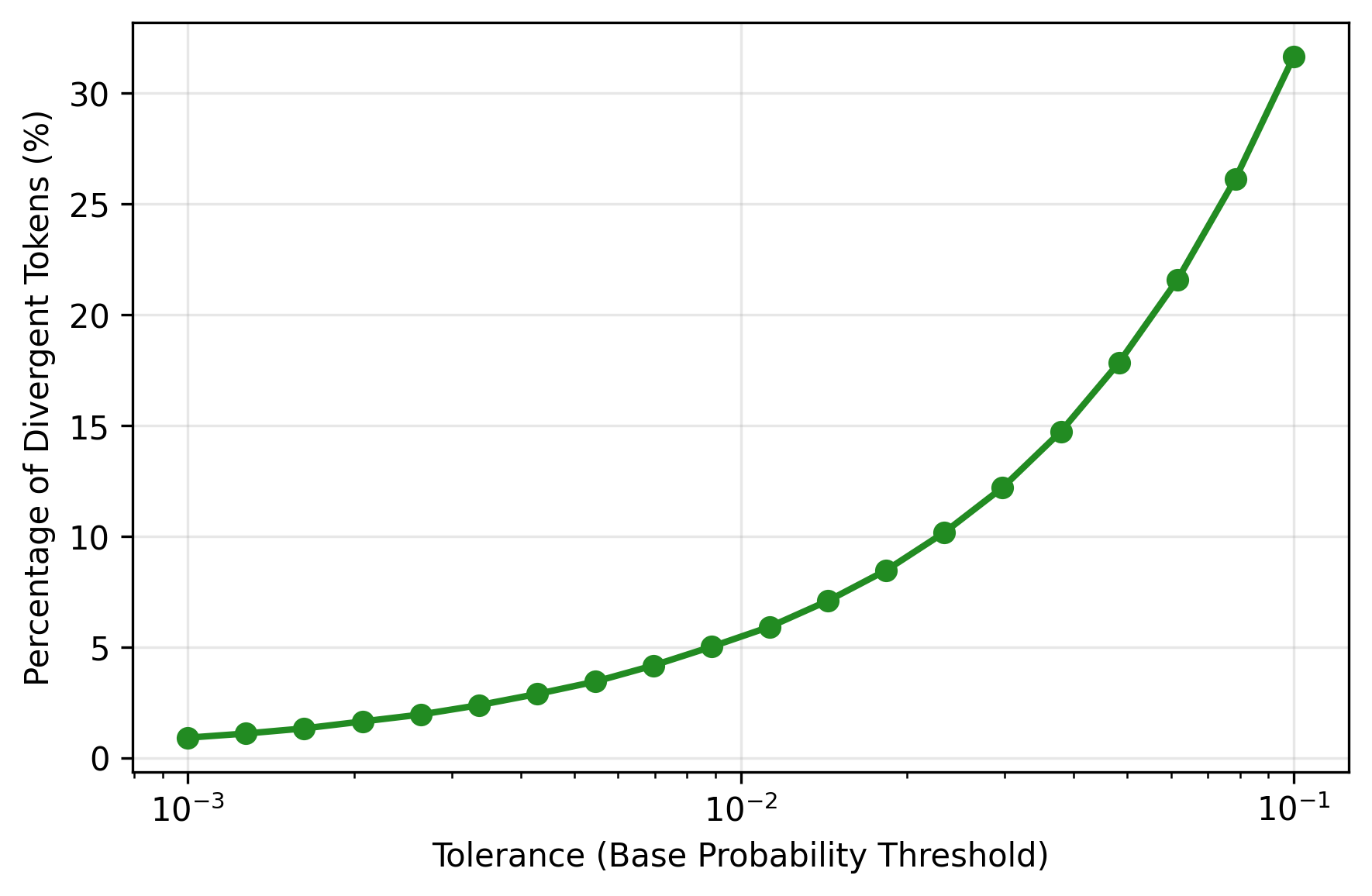}
    \caption{Percentage of divergent tokens with low base probability.}
    \end{subfigure}
    \hfill
    \begin{subfigure}{0.45\linewidth}
    \includegraphics[width=\linewidth]{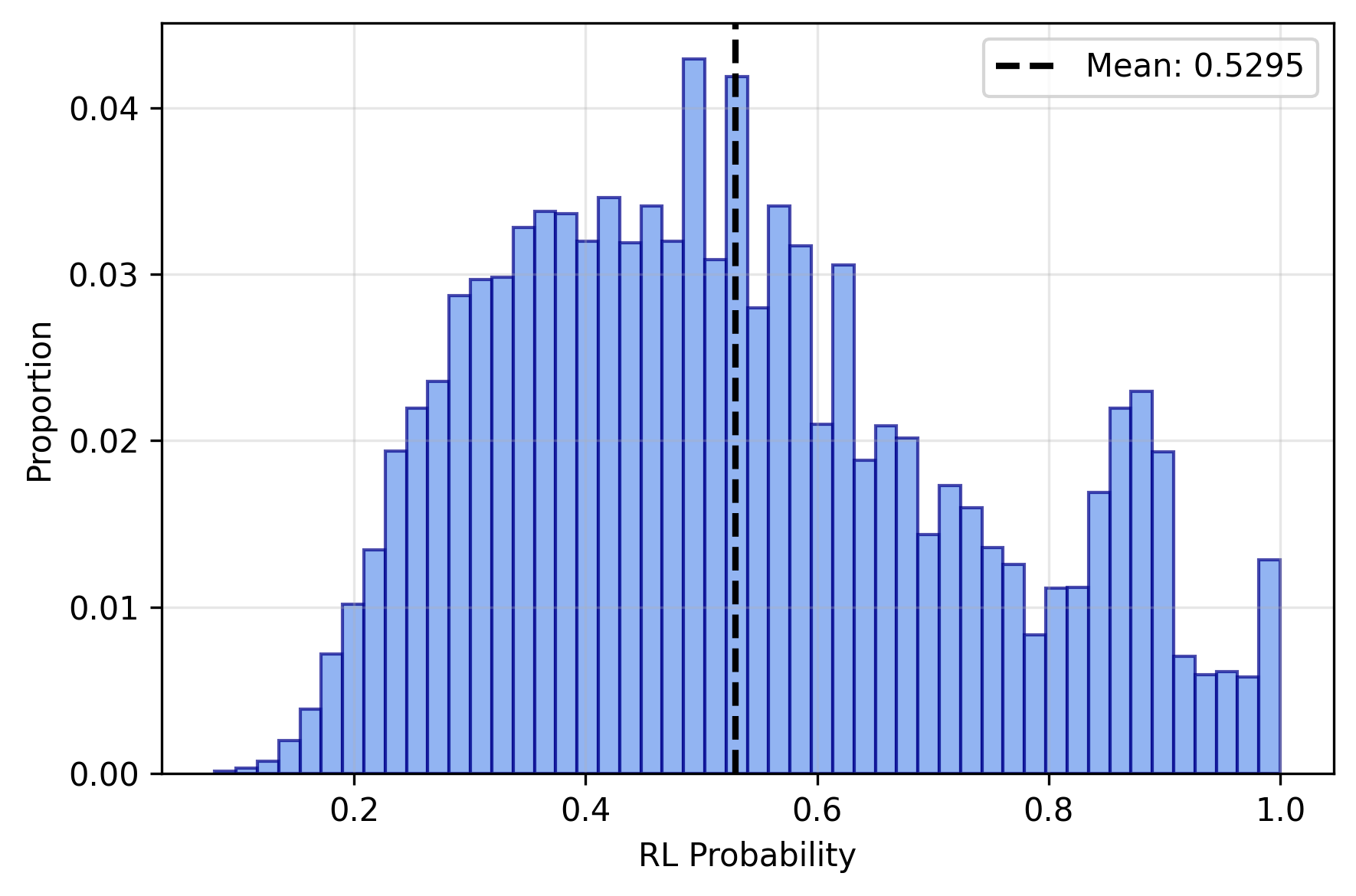}
    \caption{Histogram of RL probabilities for low base-probability tokens.}
    \end{subfigure}
    \caption{Analysis of tail behavior under DAPO for divergent token distributions ($\js>0.1$). \textbf{(a)} shows the percentage of divergent tokens whose RL top-1 choice had base probability below a given threshold. \textbf{(b)} shows the distribution of RL probabilities for the subset with base probability $<0.01$.}
    \label{fig:tolerance_vs_divergence}
    \end{figure}

\subsection{Low-Probability Behavior: Does RL Invent or Select?}
\label{subsec:tail_behavior}

We next examine whether RLVR promotes tokens that were highly unlikely under the base model, or instead amplifies alternatives that were already plausible but underweighted. For each high-divergence position, we take the RL model’s top-1 token and record its probability under the base distribution. We then compute the fraction of such tokens whose base probability falls below a given threshold among high-divergence positions.

Figure~\ref{fig:tolerance_vs_divergence} shows that under DAPO, only about 5\% (among high-divergence positions) of RL top-1 tokens have base probability below $0.01$, while under SimpleRL this fraction is nearly zero (Figure~\ref{fig:simplerl_tolerance}). Thus, even for DAPO, which encourages broader exploration through its clip-higher mechanism and lack of KL regularization, \textbf{RLVR rarely elevates tokens that were highly unlikely in the base model.} Comparing DAPO variants (Figure~\ref{fig:tolerance_7b_dapo}) with different upper clip settings, we observe that the clip-higher mechanism substantially increases the fraction of RL top-1 tokens (among high-divergence positions) whose base-model probability was initially very small compared to the variant without clip-higher. This aligns with earlier observations and supports the interpretation that clip-higher enables greater exploration, allowing tokens that were low probability in the base model distribution to be promoted more frequently. Importantly, although such low-probability promotions remain rare overall, they may still be consequential and important for improved reasoning performance.


\subsection{Evolution Across Training}
\label{sec:evolution}

Finally, we analyze how the distributional shifts develop over training. Using intermediate checkpoints from DAPO training on Qwen2.5-Math-7B, we track token-level distributions while conditioning on the final model’s (checkpoint 200) outputs, allowing us to follow the evolution of divergence for a fixed set of token sequences.

\begin{figure}[!htbp]
    \centering
    \begin{subfigure}{0.48\linewidth}
        \includegraphics[width=\linewidth]{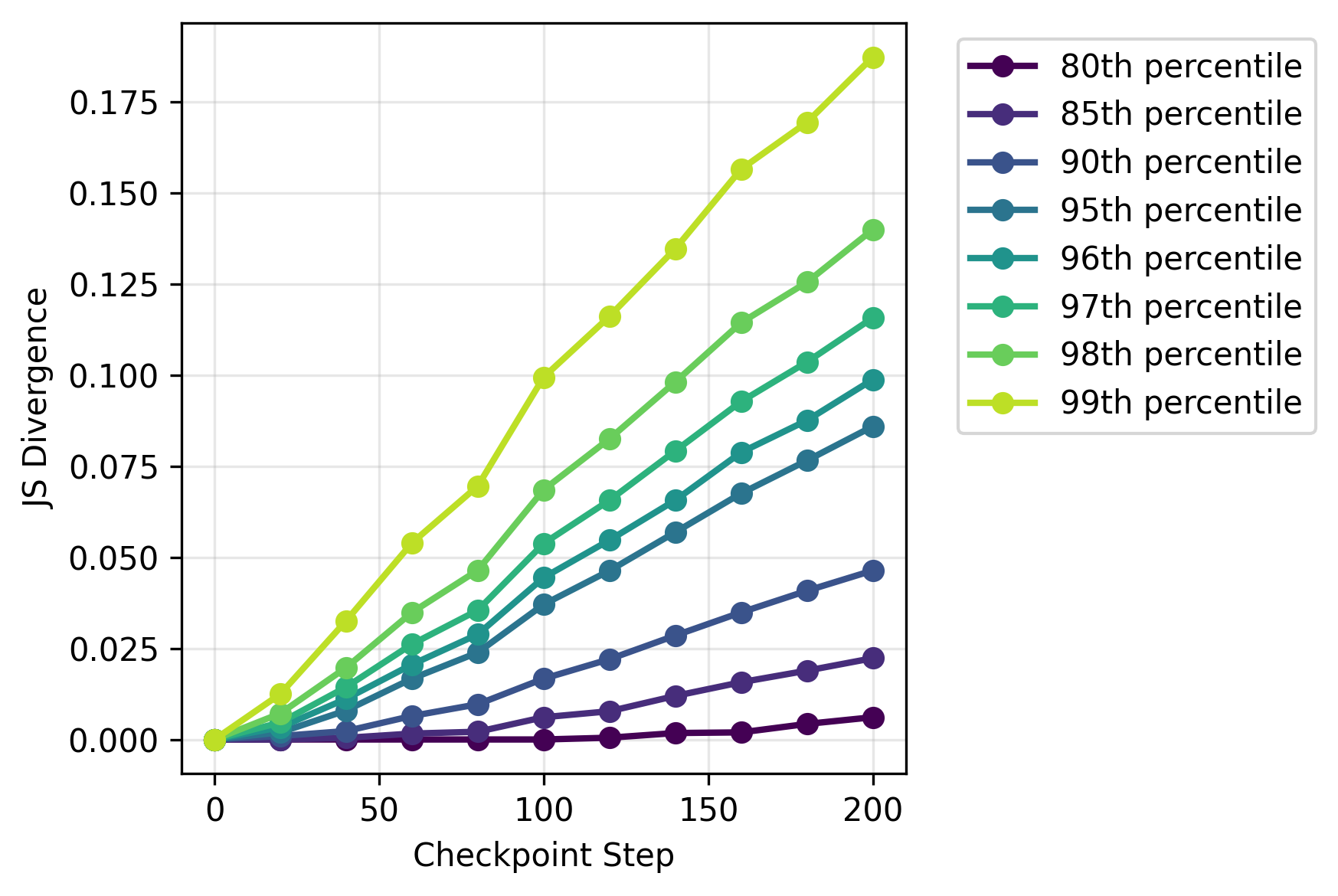}
        \caption{JS divergence percentiles. }
        \label{fig:js_percentiles}
    \end{subfigure}
    \hfill
    \begin{subfigure}{0.48\linewidth}
        \includegraphics[width=\linewidth]{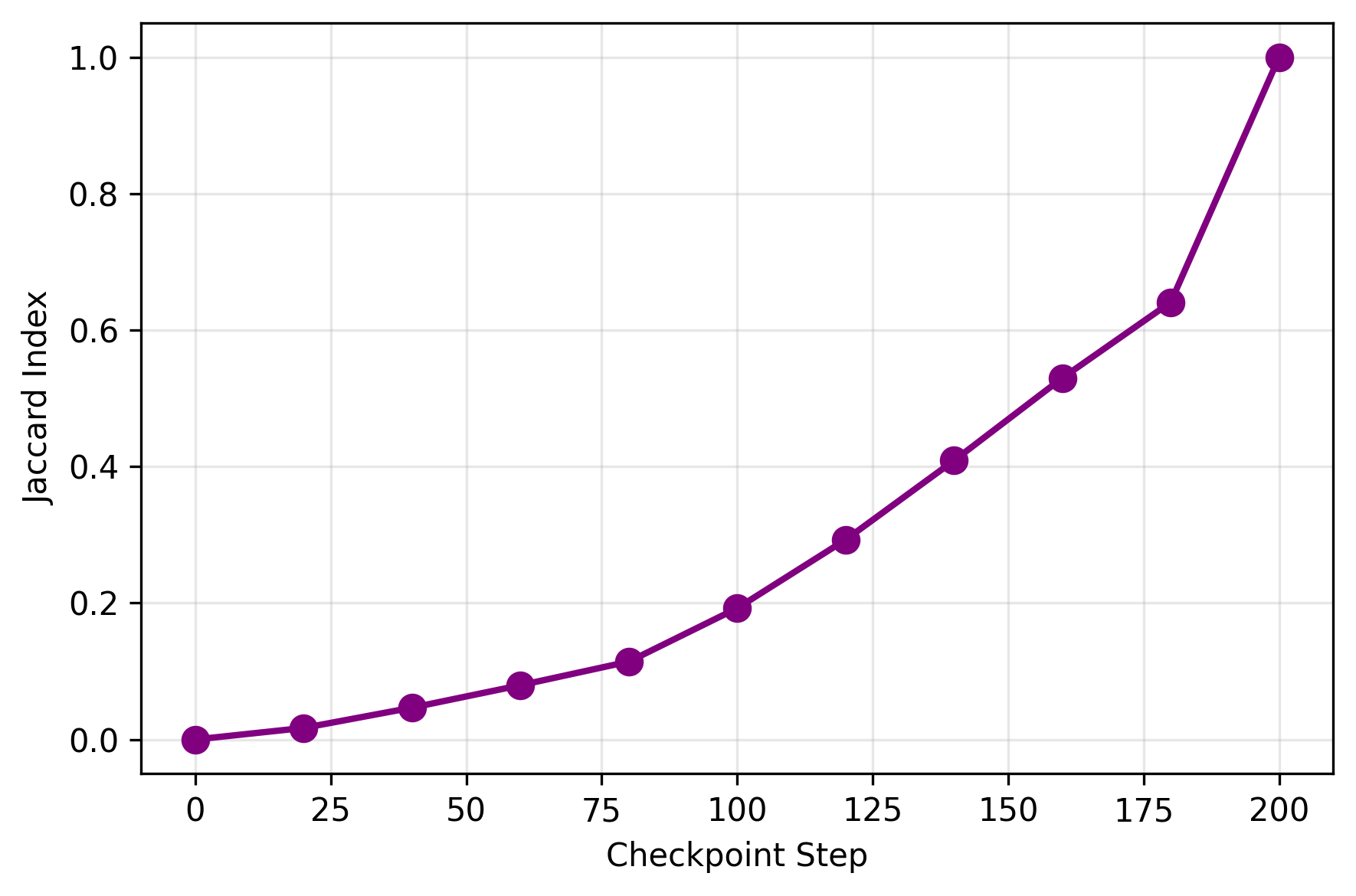}
        \caption{Jaccard index with final divergent set ($\mathrm{JS}_t > 0.1$).}
        \label{fig:jaccard}
    \end{subfigure}
    \caption{
        Distributional shifts grow increasingly focused and stable.
        Most tokens remain unchanged; updates concentrate in a sparse set late in training.
    }
    \label{fig:train_evo}
\end{figure}

Figure~\ref{fig:train_evo} shows that JS divergence increases monotonically throughout training, with higher percentiles (e.g., 95th and 99th) growing faster than lower ones. This widening gap indicates that distributional change becomes increasingly concentrated in a small subset of tokens, while the majority remain relatively stable. Consistent with this perspective, the Jaccard overlap between each checkpoint’s divergent-token set and the final set increases gradually before rising sharply near the end of training (Figure~\ref{fig:jaccard}).

\begin{tcolorbox}[
  title=Takeaways: Fine-Grained Mechanics of RLVR Distribution Shifts,
  colback=purple1!5!white,
  colframe=purple1,
  fonttitle=\bfseries
]

\textbf{At high-divergence positions, RLVR primarily reshapes probabilities within an existing candidate set rather than introducing new tokens, and this refinement becomes increasingly focused over training.}

\begin{itemize}
    \item \textbf{Shared candidate sets:} Even at high-divergence positions, base and RL models retain strong overlap in their top-$k$ token candidate sets, especially for $k \ge 2$. Distributional shifts therefore typically occur within a shared candidate set, rather than through expansion of the candidate space.
    
    \item \textbf{Re-ranking over replacement:} Within this largely shared support, RLVR mainly changes which already-plausible tokens are prioritized, frequently swapping the top-ranked choice with another token that was already among the base model’s top candidates.

    \item \textbf{Selection rather than invention:} RLVR rarely significantly promotes tokens that were highly unlikely under the base model. Most RL top-1 tokens at divergent positions already had nontrivial base probability, showing that RLVR predominantly amplifies underweighted but plausible alternatives rather than effectively inventing new ones.
    
    \item \textbf{Method differences:} More exploratory methods (e.g., clip-higher DAPO) permit larger rank movements and more frequent promotion of lower-probability tokens than more constrained methods (e.g., SimpleRL or lower clip-high settings), consistent with their broader update behavior and stronger performance gains. Notably, clip-higher produces larger reshuffling within an already plausible candidate set, whereas removing clip-higher tends to concentrate probability mass on a few dominant tokens, occasionally promoting lower-ranked base tokens but without substantially broadening the effective candidate space.
    
    \item \textbf{Overall picture:} Taken together, RLVR acts primarily as a targeted probability reallocation mechanism: sharpening and reordering choices within an existing plausibility set, rather than globally altering the (effective) token support or prediction structure. Although substantial reshuffling or promotion of low-probability tokens occurs only rarely, these events may still be important and contribute meaningfully to improved reasoning performance, a view consistent with the observation that stronger-performing models exhibit such behaviors more frequently.
\end{itemize}

\end{tcolorbox}

\FloatBarrier
\section{Exploratory Investigation: Divergence-Weighted Advantages}
\label{sec:method_design}

Our earlier analyses reveal that RL refinements are \emph{sparse and targeted}, with only a small subset of tokens exhibiting meaningful distributional change. Moreover, cross-sampling experiments demonstrate that these high-divergence tokens are functionally critical, with performance gains hinging on precisely these positions. This raises a natural question: \emph{if only a small fraction of tokens drive improvements, can training be more effectively guided by modulating token-level learning signals according to these divergences?} To investigate this possibility, we conduct a preliminary exploration of \textit{divergence-weighted advantages} as a diagnostic intervention, where advantages are reweighted by token-level distributional divergence. We explore two different approaches: \textit{high-KL boost}, which concentrates updates towards token distributions that are already changing substantially, and \textit{low-KL boost}, which focuses updates on distributions that have changed less, potentially encouraging updates in previously stable regions.

\subsection{GRPO-based Methods}
\label{subsec:grpo_dapo_objective}

\paragraph{GRPO in brief.}
GRPO \citep{shao2024deepseekmath} samples $G$ responses $\{o_i\}_{i=1}^G$ from a policy
$\pi_{\theta_{\text{old}}}(\cdot \mid q)$ for a prompt $q$ with ground-truth answer $a$, assigns sequence-level rewards
$\{R_i\}_{i=1}^G$, and computes a group-normalized advantage for each sample.
GRPO then applies a PPO-style~\citep{schulman2017ppo} clipped surrogate objective at the token level, typically with an explicit KL penalty to a reference model.

\paragraph{DAPO.}
DAPO~\citep{yu2025dapo_system} modifies GRPO with an asymmetric clip-higher mechanism, dynamic sampling of correct/incorrect completions, token-level averaging, and removal of the explicit KL penalty term. Its training objective is then given by
\begin{align*}
J_{\text{DAPO}}(\theta)
&= \; \mathbb{E}_{(q,a)\sim\mathcal{D}, \; \{o_i\}_{i=1}^G \sim \pi_{\theta_{\text{old}}}(\cdot\mid q)}
\\&\Bigg[
\frac{1}{\sum_{i=1}^{G} |o_i|}
\sum_{i=1}^{G} \sum_{t=1}^{|o_i|}
\min \Big(
r_{i,t}(\theta)\,\hat A_{i,t},\;
\mathrm{clip}\!\big(r_{i,t}(\theta),\, 1-\epsilon_{\text{low}},\, 1+\epsilon_{\text{high}}\big)\,\hat A_{i,t}
\Big)
\Bigg]
\\
&\text{s.t.}\quad 0 \;<\; \big|\{\,o_i \;\big|\; \mathtt{is\_equivalent}(a, o_i)\,\}\big| \;<\; G,
\end{align*}
with
\begin{equation*}
\label{eq:ratio_and_advantage}
r_{i,t}(\theta)
= \frac{\pi_{\theta}(o_{i,t} \mid q, o_{i,<t})}{\pi_{\theta_{\text{old}}}(o_{i,t} \mid q, o_{i,<t})},\quad \hat A_{i,t}
= \frac{R_i - \mathrm{mean}\!\big(\{R_j\}_{j=1}^G\big)}
       {\mathrm{std}\!\big(\{R_j\}_{j=1}^G\big)}.
\end{equation*}

\subsection{Divergence-Weighted Advantage}
\label{subsec:div_weighted_adv}

Standard RLVR objectives treat all tokens within a sequence uniformly in terms of their advantages (though the importance sampling ratios are defined on the token-level). Motivated by our observation that distributional shifts are sparse and concentrated, we investigate whether modulating token-level advantages according to divergence magnitude can help improve or control aspects of training. We explore modifications where advantages are rescaled depending on the per-token divergences.

\paragraph{General formulation.}
We define a divergence-weighted advantage: $\tilde{A}_t = w_t \cdot \hat{A}_t$, where $\hat{A}_t$ denotes the standard group-normalized advantage and $w_t$ is a per-token weight based on divergence. To ensure that the introduced divergence weight influences only the weighting, divergences are detached from the computation graph.

\paragraph{Choice of divergence.}
For ease of compatibility with standard frameworks, we employ KL divergence with respect to the old policy as our primary divergence quantity:
\[
\mathrm{KL}_t^{\text{old}} = D_{\mathrm{KL}}\!\left( \pi_{\theta_\text{old}}(\cdot \mid x_{<t}) \parallel  \pi_{\theta}(\cdot \mid x_{<t}) \right),
\]
where $\pi_{\theta_\text{old}}$ denotes the policy from the previous update iteration, as in PPO/GRPO. This old-policy KL quantifies the magnitude of recent policy updates at each token position, serving as a proxy for the extent of local distributional change. For computational efficiency and compatibility with existing training frameworks such as verl \citep{sheng2024hybridflow}, we estimate these quantities using KL estimators computed over sampled tokens only, which may not capture the full distributional structure. In particular, we use the $k_3$ estimator \citep{schulman2020approximatingkl} defined by 
\[
\mathrm{KL}_{\mathrm{est}}(\pi_{\theta_{\text{old}}} \parallel  \pi_{\theta})
\approx
k_3\!\left(
\frac{\pi_{\theta}(\cdot \mid x_{<t})}
     {\pi_{\theta_{\text{old}}}(\cdot \mid x_{<t})}
\right)
=
\frac{\pi_{\theta}(\cdot \mid x_{<t})}
     {\pi_{\theta_{
     \text{old}
     }}(\cdot \mid x_{<t})}
-
\log
\frac{\pi_{\theta}(\cdot \mid x_{<t})}
     {\pi_{\theta_{\text{old}}}(\cdot \mid x_{<t})}
- 1.
\]
\paragraph{Weighting scheme.}
We adopt a simple sigmoid weighting scheme (to ensure bounded weights), which transforms divergence into weights through:
\[
w_t = 1 + s \left( \sigma(\alpha \cdot \mathrm{KL}_t) - \tfrac12 \right), \quad \sigma(x) = \tfrac{1}{1+e^{-x}}.
\]
The parameter $\alpha$ controls the direction and magnitude of emphasis: $\alpha>0$ amplifies high-divergence tokens, whereas $\alpha<0$ emphasizes low-divergence ones. The sigmoid function provides a smooth, bounded nonlinear transformation that enables selective focus on either high- or low-divergence regions depending on the sign of $\alpha$. This formulation allows us to investigate whether concentrating the learning signal on regions that have already changed or those that remain unchanged yields more effective training dynamics. 

\paragraph{Evaluation.}
We train with divergence-weighted advantages using the DAPO training recipe and data on Qwen2.5-Math-7B, evaluating on AIME 2024, AIME 2025, and AMC. Results are presented in Table~\ref{tab:dw_results}. Detailed training hyperparameters and implementation details are documented in Appendix~\ref{subsubsec:train_details}.

\begin{table}[htbp]
\centering
\caption{Accuracy (\%) under divergence-weighted configurations on Qwen2.5-Math-7B. Results shown for KL divergence with $\pi_{\theta_{\text{old}}}$ and sigmoid weighting scheme across AIME 2024, AIME 2025, and AMC datasets. The results displayed are the Mean@32 scores (or, equivalently, the pass@1 scores computed using 32 samples). Results are each averaged over 3 runs.}
\label{tab:dw_results}
\begin{tabular}{lcccc}
\toprule
\textbf{Configuration} & \textbf{AIME24} & \textbf{AIME25} & \textbf{AMC} & \textbf{Overall Avg} \\
\midrule
Baseline DAPO & 33.61 & 18.75 & 75.08 & 42.48 $\pm$ 1.35 \\
Low-KL boost & 35.90 & 19.90 & 78.97 & 44.92 $\pm$ 0.05 \\
High-KL boost & 36.74 & 20.00 & 78.40 & 45.05 $\pm$ 0.79 \\
\bottomrule
\end{tabular}
\end{table}
These results demonstrate that weighting token-level updates by divergence can amplify performance gains, providing empirical support for the hypothesis that targeted tokens disproportionately drive improvements. Both low-KL and high-KL boost configurations can yield improvements over the baseline, suggesting that different divergence weighting strategies may be effective. However, the optimal choice between these approaches, and indeed whether divergence weighting provides benefits at all, may depend on the specific models and training methods used. Effective divergence weighting across training configurations may require model-specific paradigms or adaptive scheduling mechanisms to stabilize learning dynamics. We present this approach as a complementary diagnostic tool that may inform future refinements of token-level training strategies. These include approaches that aggregate information from token-level divergences into better signals, as well as those that more effectively promote the rare actions discussed in Section~\ref{sec:fine_grained}.

\section{Related Work}
\label{sec:related}

\paragraph{RL fine-tuning in LLMs.}
Reinforcement learning has become an important component of LLM fine-tuning, largely stemming from reinforcement learning with human feedback (RLHF) used to align LLM behavior to human preferences \citep{christiano2023deepreinforcementlearninghuman, ouyang2022traininglanguagemodelsfollow}.
Recently, reinforcement learning with verifiable rewards (RLVR) has emerged as a central paradigm for improving \emph{reasoning} by optimizing with verifiable reward signals of generated responses \citep{rlvr}.
A number of RLVR methods build on policy-gradient variants, including Group Relative Policy Optimization (GRPO) \citep{shao2024deepseekmath}, as well as extensions such as Dr.GRPO \citep{liu2025understandingr1zeroliketrainingcritical}, Decoupled
Clip and Dynamic sAmpling Policy Optimization (DAPO) \citep{yu2025dapo_system}, and sequence-level variants such as Group Sequence Policy Optimization (GSPO) \citep{zheng2025groupsequencepolicyoptimization}.
Beyond these core methods, several works propose complementary improvements to better target impactful updates and stabilize training, including entropy-based exploration or minority-token perspectives \citep{wang2025high_entropy_minority_tokens, cheng2025entropy_exploration_rlvr}, clipping/KL regularization strategies \citep{cui2025entropymechanismreinforcementlearning}, reweighting based on token probability, perplexity, or position \citep{yang2025letlowprobabilitytokensoverdominate, deng2025trial_to_improvement_rlvr}, as well as analyses of different reward designs \citep{shao2025spuriousrewardsrethinkingtraining}.
Additional works have also studied the reasoning boundaries of RLVR and explore ways to expand it \citep{yue2025doesreinforcementlearningreally, wen2025rlvr_incentivizes_correct_reasoning, liu2025prorlprolongedreinforcementlearning}.

\paragraph{Understanding RLVR and its differences with SFT.}
A growing body of work argues that RL fine-tuning often acts as a \emph{scalpel rather than a hammer}, amplifying existing capabilities through localized changes. This perspective is supported mainly through evaluations on different domains/tasks, catastrophic forgetting, parameter changes, and overall KL divergence \citep{rajani2025scalpel_vs_hammer, chu2025sft_memorizes_rl_generalizes, shenfeld2025rls_razor, huan2025doesmathreasoningimprove}. Recent work also analyze locality from the \emph{parameter} perspective: for example, \citet{mukherjee2025reinforcementlearningfinetunessmall} report that RL fine-tuning can concentrate effective updates into relatively small subnetworks, while \citet{zhu2025pathtakenrlvrprovably} provide theoretical insight into RLVR learning dynamics and the structure of these parameter-sparse updates. Our paper complements these perspectives by focusing on quantifying the changes induced by RLVR at the level of \emph{token distributions}. We show that RL not only induces smaller aggregate divergence than SFT, but that its changes are substantially sparser at the token level. Moreover, we connect these sparse distributional changes to sequence-level reasoning outcomes and to fine-grained reallocations of probability mass.

\paragraph{Token-Level analyses of RLVR.}
Several works seek to understand RLVR through a token-level lens.
\citet{wang2025high_entropy_minority_tokens} attribute a substantial portion of RL gains to high-entropy minority tokens, while \citet{cheng2025entropy_exploration_rlvr} connect such tokens to exploratory reasoning steps; related work also highlights entropy collapse risks and token-level regularization mechanisms \citep{cui2025entropymechanismreinforcementlearning}.
Other studies emphasize the disproportionate role of specific tokens or sampling decisions \citep{vassoyan2025ignoreklpenaltyboosting, lin2025criticaltokensmattertokenlevel, karan2025reasoningsamplingbasemodel}, and \citet{huan2025doesmathreasoningimprove} analyzes RL-induced changes using token-level KL divergence and token rank shifts.
Recent work by \citet{chen2026reshaping} shows that RL training modifies a sparse subset of tokens when viewed through rank-shift statistics, and develops a theoretical analysis based on reasoning patterns. Our contributions are complementary but distinct: we conduct a systematic empirical study of RLVR-induced token-level changes through the lens of quantities such as divergence, entropy, and probability mass redistribution, and connect these shifts to sequence-level reasoning via forward and reverse cross-sampling interventions that establish their impact on reasoning performance.

\section{Conclusion}
\label{sec:conclusion}

Our study reveals that reinforcement learning with verifiable rewards (RLVR) reshapes LLMs in a manner that is sparse, targeted, and structured rather than uniformly diffused across
tokens. By analyzing token-level distributional shifts, we show that only a small subset of tokens
undergo meaningful divergence, and that these divergences carry disproportionate functional
importance: cross-sampling interventions confirm that performance gains hinge on precisely these positions. Moreover, our fine-grained analyses suggest that, even at high-divergence positions, RLVR typically refines behavior by reallocating probability mass within an existing candidate set rather than introducing fundamentally new tokens. However, the comparatively rare cases of substantial re-ranking and promotion of initially low-probability tokens may still be important for the observed improvements in reasoning performance. To complement these analyses, we explored divergence-weighted advantage, a simple modification that scales token-level advantages by per-token divergence. Our results suggest that such weighting strategies can influence learning dynamics, though stabilizing performance may require
model-specific choices and further investigation.

Together, these findings advance a token-level understanding of RL fine-tuning. They highlight that
the essence of RLVR’s success lies not in widespread distributional changes, but in selective refinements aligned with varying entropy levels. Taken together, our findings suggest that RLVR operates not as a global policy shift, but as a sparse intervention on a small set of high-impact decision points that steer generation trajectories. Beyond clarifying the mechanics of existing methods,
our work offers a perspective for designing future RL objectives and diagnostics that further incorporate distributional structure, opening avenues for more effective, interpretable, and controllable LLM post-training.

\section{Authors}
\label{sec:authors}
Haoming Meng\footnote{University of Toronto} \quad Kexin Huang \quad Shaohang Wei\footnote{Peking University} \quad Chiyu Ma\footnote{Dartmouth College} \quad Shuo Yang\footnotemark[2] \quad Xue Wang \quad \\Guoyin Wang\footnote{Alibaba Group}\textsuperscript{,}\daggerfootnote{Qwen Pilot Team Lead} \quad Bolin Ding\footnotemark[4] \quad Jingren Zhou\footnotemark[4]
\clearpage
\bibliography{biblio}
\bibliographystyle{colm2024_conference}

\appendix
\section{Appendix}


\subsection{Sequence-level Divergence Bounds for Cross-Sampling}
\label{subsec:seq_div_bounds}

\paragraph{Setup.}
Let $(X_t)_{t\ge 1}$ be the decoded tokens (each in $\mathcal{V}$), with stopping time
\(
\tau := \inf\{t\ge 1: X_t=\text{EOS}\}\wedge T_{\max}.
\)
We may work on the fixed horizon $T_{\max}$ by absorbing the EOS token:
once EOS is generated, the process deterministically outputs EOS thereafter. This yields an equivalent
distribution over $X_{1:T_{\max}}$ and ensures all sequences are of length $T_{\max}$.
All results below therefore sum over $t=1,\dots,T_{\max}$; terms after $\tau$ contribute zero since both
policies become point masses on EOS.

Let $\pi_{\mathrm{prim}}$ be the primary policy and $\pi_{\mathrm{int}}$ the intervention policy.
Given a switching rule $\mathcal{S}:\mathcal{V}^{<\mathbb{N}}\to\{0,1\}$, define
$S_t := \mathcal{S}(X_{<t})$ and the mixed policy
\[
\pi_{\mathrm{mix}}(\cdot\mid X_{<t})
=
(1-S_t)\,\pi_{\mathrm{prim}}(\cdot\mid X_{<t})
+
S_t\,\pi_{\mathrm{int}}(\cdot\mid X_{<t}).
\]
Let $P_{\mathrm{mix}}$ and $P_{\mathrm{int}}$ denote the induced sequence-level distributions on
$X_{1:T_{\max}}$.

\subsubsection{KL Case}
We first provide a bound on the sequence-level divergence between the cross-sampled policy and the target intervention policy, in the simpler case of KL divergence with a token-level KL switching rule.

\begin{lemma}[KL decomposition]
\label{lem:kl_chain_rule}
Let $P,Q$ be probability distributions on $X_{1:T_{\max}}$ admitting factorizations
$P(x_{1:T_{\max}})=\prod_{t=1}^{T_{\max}} p_t(x_t\mid x_{<t})$ and
$Q(x_{1:T_{\max}})=\prod_{t=1}^{T_{\max}} q_t(x_t\mid x_{<t})$. For each $t$, let $P_{<t}$ and $Q_{<t}$ denote the marginals of $X_{<t}$ under $P$ and $Q$, respectively.
Then
\[
\KL(P\parallel Q)
=
\sum_{t=1}^{T_{\max}}
\EE_{X_{<t}\sim P_{<t}}\!\left[
\KL\!\big(p_t(\cdot\mid X_{<t}) \parallel \ q_t(\cdot\mid X_{<t})\big)
\right].
\]
\end{lemma}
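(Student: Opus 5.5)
This is the chain rule for KL divergence, and I will prove it by taking the logarithm of the likelihood ratio and using the tower property. Since $\mathcal{V}$ is finite and the horizon is $T_{\max}$, everything is a finite sum. The only measure-theoretic care needed is absolute continuity.

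First I dispose of the degenerate case. Suppose there is a sequence $x_{1:T_{\max}}$ with $P(x_{1:T_{\max}})>0$ but $Q(x_{1:T_{\max}})=0$. Then some first index $t$ has $q_t(x_t\mid x_{<t})=0$ while $p_t(x_t\mid x_{<t})>0$. The prefix $x_{<t}$ has positive $P_{<t}$-mass, so the $t$-th term on the right is $+\infty$, and both sides equal $+\infty$. Conversely, if every right-hand term is finite, then $P\ll Q$. So I may assume $P\ll Q$ and work only on sequences with $P>0$, where every factor $p_t$ and $q_t$ is positive.

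Next I write
\[
\log\frac{P(x_{1:T_{\max}})}{Q(x_{1:T_{\max}})}=\sum_{t=1}^{T_{\max}}\log\frac{p_t(x_t\mid x_{<t})}{q_t(x_t\mid x_{<t})}.
\]
Taking expectation under $P$ and using linearity gives $\KL(P\parallel Q)$ as a sum over $t$ of $\EE_P[\log(p_t/q_t)(X_t\mid X_{<t})]$. Exchanging sum and expectation is legitimate because the sums are finite. I will handle any infinite terms by the convention $0\log 0=0$, or else the nonnegativity argument below.

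For each $t$, the summand depends only on $X_{1:t}$. I condition on $X_{<t}$ and use two facts. The marginal of $X_{<t}$ under $P$ is $P_{<t}$. The conditional law of $X_t$ given $X_{<t}$ is $p_t(\cdot\mid X_{<t})$, obtained by summing the product factorization over $x_{t+1:T_{\max}}$, since each later factor sums to one. The inner expectation is then exactly $\KL(p_t(\cdot\mid X_{<t})\parallel q_t(\cdot\mid X_{<t}))$, and this yields the claim.

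The main subtlety is the bookkeeping for infinite or undefined values. In particular, individual log-ratio terms can be negative while others are $+\infty$. I plan to avoid this by establishing the finite case first and then handling the infinite case directly as described above.
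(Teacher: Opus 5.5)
Your proposal is correct and follows essentially the same route as the paper. Both arguments expand $\log(P/Q)$ into a sum of conditional log-ratios, exchange the finite sum with the expectation under $P$, and apply the tower property, identifying the inner conditional expectation with $\KL\big(p_t(\cdot\mid X_{<t})\parallel q_t(\cdot\mid X_{<t})\big)$. Two things you make explicit are left implicit in the paper: your handling of the case where $P$ is not absolutely continuous with respect to $Q$ (both sides $+\infty$, using nonnegativity of the remaining terms), and your check that the conditional law of $X_t$ given $X_{<t}$ is $p_t$, obtained by summing out the later factors.
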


\begin{proof}
By definition,
\(
\KL(P\parallel Q)=\EE_{X\sim P}\!\left[\log\frac{P(X)}{Q(X)}\right].
\)
Using the factorizations,
\[
\log\frac{P(X_{1:T_{\max}})}{Q(X_{1:T_{\max}})}
=
\sum_{t=1}^{T_{\max}}
\log\frac{p_t(X_t\mid X_{<t})}{q_t(X_t\mid X_{<t})}.
\]
Taking expectation under $P$ and exchanging sum and expectation gives
\begin{align*}
\KL(P\parallel Q)
&=
\sum_{t=1}^{T_{\max}}
\EE_{X\sim P}\!\left[\log\frac{p_t(X_t\mid X_{<t})}{q_t(X_t\mid X_{<t})}\right] \\
&= \sum_{t=1}^{T_{\max}}
\EE_{X_{<t}\sim P_{<t}}\!\left[
\EE_{P}\!\left[
\left.
\log\frac{p_t(X_t\mid X_{<t})}{q_t(X_t\mid X_{<t})}
\,\right|\, X_{<t}
\right]
\right] \\
&=
\sum_{t=1}^{T_{\max}}
\EE_{X_{<t}\sim P_{<t}}\!\left[
\KL\!\big(p_t(\cdot\mid X_{<t})\parallel q_t(\cdot\mid X_{<t})\big)
\right],
\end{align*}

where the second last equality follows by the definition of conditional expectation (or the law of total expectation) applied to the conditional distribution of $X_t$ given $X_{<t}$.

\end{proof}

\begin{proposition}[Token-level KL threshold $\Rightarrow$ sequence-level KL bound]
\label{prop:kl_eps_bound}
Assume the switching rule is defined by a KL threshold:
\[
\mathcal{S}(x_{<t})
=
\mathds{1}\!\left\{
\KL\!\big(\pi_{\mathrm{prim}}(\cdot\mid x_{<t}) \parallel \pi_{\mathrm{int}}(\cdot\mid x_{<t})\big)>\varepsilon
\right\}.
\]
Define the number of \emph{non-intervention} steps on a trajectory
\[
N_0
:=
\sum_{t=1}^{\tau}
\mathds{1}\{\mathcal{S}(X_{<t})=0\}.
\]
Then
\[
\KL(P_{\mathrm{mix}}\parallel P_{\mathrm{int}})
\le
\varepsilon\ \EE_{X\sim P_{\mathrm{mix}}}[N_0].
\]
\end{proposition}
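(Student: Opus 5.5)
Apply Lemma~\ref{lem:kl_chain_rule} with $P=P_{\mathrm{mix}}$ and $Q=P_{\mathrm{int}}$, taking the factorizations $p_t=\pi_{\mathrm{mix}}(\cdot\mid x_{<t})$ and $q_t=\pi_{\mathrm{int}}(\cdot\mid x_{<t})$ on the fixed horizon $T_{\max}$ with EOS absorbed, as in the setup. This gives
\[
\KL(P_{\mathrm{mix}}\parallel P_{\mathrm{int}})=\sum_{t=1}^{T_{\max}}\EE_{X_{<t}\sim P_{\mathrm{mix},<t}}\!\left[\KL\big(\pi_{\mathrm{mix}}(\cdot\mid X_{<t})\parallel\pi_{\mathrm{int}}(\cdot\mid X_{<t})\big)\right].
\]
The key observation is that the switching variable $S_t=\mathcal{S}(X_{<t})$ is a deterministic function of the prefix. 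So, pointwise in $X_{<t}$, the mixed conditional equals either $\pi_{\mathrm{int}}$ (when $S_t=1$) or $\pi_{\mathrm{prim}}$ (when $S_t=0$). It is never a genuine convex combination, so no convexity argument is needed.

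Next, split each summand according to the value of $S_t$. On $\{S_t=1\}$ the token-level KL is $\KL(\pi_{\mathrm{int}}\parallel\pi_{\mathrm{int}})=0$. On $\{S_t=0\}$ it equals $\KL(\pi_{\mathrm{prim}}\parallel\pi_{\mathrm{int}})$, which by the definition of the switching rule is at most $\varepsilon$. Hence each summand is bounded by $\varepsilon\,\mathds{1}\{S_t=0\}$ pointwise.

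For steps $t>\tau$, both the mixed and intervention conditionals are the point mass on EOS under the absorbing convention. Their KL is therefore $0$, and those terms drop out regardless of $S_t$. Summing and exchanging sum and expectation gives
\[
\KL(P_{\mathrm{mix}}\parallel P_{\mathrm{int}})\le\varepsilon\,\EE_{P_{\mathrm{mix}}}\!\Big[\sum_{t=1}^{\tau}\mathds{1}\{S_t=0\}\Big]=\varepsilon\,\EE_{P_{\mathrm{mix}}}[N_0].
\]

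The only delicate point is bookkeeping after $\tau$. The switching rule could in principle be evaluated on post-EOS prefixes, so I will define both policies to be the EOS point mass there. This forces those terms to vanish even if $\mathcal{S}=0$, which is why $N_0$ counts only up to $\tau$. A secondary check is that the KL is well defined, i.e., finite on the relevant supports. This holds because every nonzero term is bounded by $\varepsilon<\infty$ on $\{S_t=0\}$ and is exactly zero otherwise.
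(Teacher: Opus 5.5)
Your proposal is correct and follows essentially the same route as the paper. You apply Lemma~\ref{lem:kl_chain_rule} with $P=P_{\mathrm{mix}}$ and $Q=P_{\mathrm{int}}$, use that $\pi_{\mathrm{mix}}$ coincides pointwise with $\pi_{\mathrm{int}}$ or $\pi_{\mathrm{prim}}$ according to $\mathcal{S}$ to bound each token-level term by $\varepsilon\,\mathds{1}\{S_t=0\}$, remove the post-$\tau$ terms via absorbing EOS, and exchange sum and expectation. The paper additionally spells out that $\mathds{1}\{t\le\tau\}\mathds{1}\{S_t=0\}$ depends only on $X_{<t}$, which justifies replacing the expectation under the marginal $(P_{\mathrm{mix}})_{<t}$ by one over the full trajectory; you use this step without stating it.
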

\begin{proof}
We apply Lemma~\ref{lem:kl_chain_rule} with $P=P_{\mathrm{mix}}$ and $Q=P_{\mathrm{int}}$.
For any history $h=x_{<t}$, since $\mathcal{S}(h)\in\{0,1\}$,
\[
\pi_{\mathrm{mix}}(\cdot\mid h)=
\begin{cases}
\pi_{\mathrm{int}}(\cdot\mid h), & \mathcal{S}(h)=1,\\
\pi_{\mathrm{prim}}(\cdot\mid h), & \mathcal{S}(h)=0.
\end{cases}
\]
Hence
\[
\KL\!\big(\pi_{\mathrm{mix}}(\cdot\mid h) \parallel \pi_{\mathrm{int}}(\cdot\mid h)\big)
=
\mathds{1}\{\mathcal{S}(h)=0\}\,
\KL\!\big(\pi_{\mathrm{prim}}(\cdot\mid h) \parallel \pi_{\mathrm{int}}(\cdot\mid h)\big).
\]
By the definition of $\mathcal{S}$, whenever $\mathcal{S}(h)=0$ the token-level KL $\KL\!\big(\pi_{\mathrm{prim}}(\cdot\mid h) \parallel \pi_{\mathrm{int}}(\cdot\mid h)\big)$ is at most $\varepsilon$.
So, for all $h$,
\[
\KL\!\big(\pi_{\mathrm{mix}}(\cdot\mid h) \parallel \pi_{\mathrm{int}}(\cdot\mid h)\big)
\le
\varepsilon\,\mathds{1}\{\mathcal{S}(h)=0\}.
\]
Under absorbing EOS, 
\(
\KL\!\big(\pi_{\text{mix}}(\cdot\mid X_{<t}) \,\parallel\, \pi_{\text{int}}(\cdot\mid X_{<t})\big)=0
\)
for $t>\tau$, so we may multiply by $\mathds{1}\{t\le \tau\}$.
Substituting this into Lemma~\ref{lem:kl_chain_rule} yields
\[
\KL(P_{\mathrm{mix}}\parallel P_{\mathrm{int}})
\le
\sum_{t=1}^{T_{\max}}
\EE_{X_{<t}\sim (P_{\mathrm{mix}})_{<t}}\!\left[
\mathds{1}\{t\le \tau\}\,\varepsilon\,\mathds{1}\{S_t=0\}
\right]
=
\varepsilon
\sum_{t=1}^{T_{\max}}
\EE_{X_{<t}\sim (P_{\mathrm{mix}})_{<t}}\!\left[
\mathds{1}\{t\le \tau\}\,\mathds{1}\{S_t=0\}
\right].
\]
Note that $\{t\le \tau\}=\{\text{EOS}\notin X_{<t}\}$ depends only on $X_{<t}$, so $\mathds{1}\{t\le \tau\}\mathds{1}\{S_t=0\}$ is $\sigma(X_{<t})$-measurable. Thus, we may equivalently write
the expectation under the full trajectory $X\sim P_{\mathrm{mix}}$:
\[
\EE_{X_{<t}\sim (P_{\mathrm{mix}})_{<t}}\!\left[
\mathds{1}\{t\le \tau\}\,\mathds{1}\{S_t=0\}
\right]
=
\EE_{X\sim P_{\mathrm{mix}}}\!\left[
\mathds{1}\{t\le \tau\}\,\mathds{1}\{S_t=0\}
\right].
\]
Then,
\begin{align*}
\KL(P_{\mathrm{mix}}\parallel P_{\mathrm{int}})
&\le\varepsilon
\sum_{t=1}^{T_{\max}}
\EE_{X_{<t}\sim (P_{\mathrm{mix}})_{<t}}\!\left[
\mathds{1}\{t\le \tau\}\,\mathds{1}\{S_t=0\}
\right]\\
&=
\varepsilon
\sum_{t=1}^{T_{\max}}
\EE_{X\sim P_{\mathrm{mix}}}\!\left[
\mathds{1}\{t\le \tau\}\,\mathds{1}\{S_t=0\}
\right] \\
&=\varepsilon\,
\EE_{X\sim P_{\mathrm{mix}}}\!\left[
\sum_{t=1}^{T_{\max}}
\mathds{1}\{t\le \tau\}\,\mathds{1}\{S_t=0\}
\right] \\
&=\varepsilon\,
\EE_{X\sim P_{\mathrm{mix}}}\!\left[
\sum_{t=1}^{\tau}\mathds{1}\{S_t=0\}
\right] \\
&= \varepsilon\,\EE_{X\sim P_{\mathrm{mix}}}[N_0].
\end{align*}
\end{proof}

\begin{remark}[Effective KL on non-intervention steps]
Define the \emph{effective} token-level KL on non-intervention steps
\[
\bar{\kappa}
:=
\frac{
\EE_{X\sim P_{\mathrm{mix}}}\!\left[
\sum_{t=1}^{\tau}\mathds{1}\{S_t=0\}\,
\KL\!\big(\pi_{\mathrm{prim}}(\cdot\mid X_{<t})\parallel \pi_{\mathrm{int}}(\cdot\mid X_{<t})\big)
\right]
}{
\EE_{X\sim P_{\mathrm{mix}}}[N_0]
},
\qquad (\bar{\kappa}:=0 \text{ if } \EE[N_0]=0).
\]
Then
\[
\KL(P_{\mathrm{mix}}\parallel P_{\mathrm{int}})
=
\bar{\kappa}\,\EE_{X\sim P_{\mathrm{mix}}}[N_0]
\le
\varepsilon\,\EE_{X\sim P_{\mathrm{mix}}}[N_0],
\]
and typically $\bar{\kappa}\ll \varepsilon$ in regimes where the models are already close (eg. in the setting of a base model and its RL fine-tuned counterpart).
\end{remark}

\subsubsection{JS Case}

We now give the corresponding sequence-level result for Jensen--Shannon divergence.
Unlike KL divergence, the exact decomposition involves a history-dependent skew
Jensen--Shannon divergence rather than the ordinary JS used in our experiments.

\begin{definition}[Skew Jensen--Shannon divergence]
For probability measures $\mu,\nu$ on a common measurable space and $\alpha\in[0,1]$, define
\[
\JS^{\alpha}(\mu\parallel \nu)
:=
\alpha\,
\KL\!\big(\mu \parallel \alpha \mu + (1-\alpha)\nu\big)
+
(1-\alpha)\,
\KL\!\big(\nu \parallel \alpha \mu + (1-\alpha)\nu\big).
\]
When $\alpha=\tfrac12$, this reduces to the usual Jensen--Shannon divergence:
\[
\JS(\mu\parallel \nu)=\JS^{1/2}(\mu\parallel \nu).
\]
\end{definition}

\begin{lemma}[JS decomposition via skew JS]
\label{lem:js_chain_rule}
Let $P,Q$ be probability distributions on $X_{1:T_{\max}}$ admitting factorizations \(
P(x_{1:T_{\max}})=\prod_{t=1}^{T_{\max}} p_t(x_t\mid x_{<t})\) and \(Q(x_{1:T_{\max}})=\prod_{t=1}^{T_{\max}} q_t(x_t\mid x_{<t})
\) respectively.
Let \(
M:=\tfrac12(P+Q)
\) and for each $t$ let $P_{<t},Q_{<t},M_{<t}$ denote the marginals of $X_{<t}$
under $P,Q,M$, respectively.

Define, for $M_{<t}$-almost every history $h=x_{<t}$,
\[
\alpha_t(h)
:=
\frac12\frac{dP_{<t}}{dM_{<t}}(h)
=
\frac{P_{<t}(h)}{P_{<t}(h)+Q_{<t}(h)}.
\]
Then
\[
\JS(P\parallel Q)
=
\sum_{t=1}^{T_{\max}}
\EE_{H\sim M_{<t}}\!\left[
\JS^{\alpha_t(H)}\!\big(p_t(\cdot\mid H)\parallel q_t(\cdot\mid H)\big)
\right].
\]
\end{lemma}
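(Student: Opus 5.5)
The plan is to write $\JS(P\parallel Q)=\tfrac12\KL(P\parallel M)+\tfrac12\KL(Q\parallel M)$ and apply the chain rule of Lemma~\ref{lem:kl_chain_rule} to each term. The key observation is that $M$ also admits a sequential factorization, but with history-dependent mixture conditionals. For any history $h=x_{<t}$ with $M_{<t}(h)>0$, the conditional of $X_t$ under $M$ is
\[
m_t(\cdot\mid h)=\frac{P_{<t}(h)p_t(\cdot\mid h)+Q_{<t}(h)q_t(\cdot\mid h)}{P_{<t}(h)+Q_{<t}(h)}=\alpha_t(h)\,p_t(\cdot\mid h)+(1-\alpha_t(h))\,q_t(\cdot\mid h).
\]
To check this, note that $M_{\le t}(h,x)=\tfrac12\big(P_{<t}(h)p_t(x\mid h)+Q_{<t}(h)q_t(x\mid h)\big)$, since the marginals of $P$ and $Q$ on $X_{1:t}$ are $P_{<t}\,p_t$ and $Q_{<t}\,q_t$. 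Dividing by $M_{<t}(h)=\tfrac12(P_{<t}(h)+Q_{<t}(h))$ gives the formula. Hence $M=\prod_t m_t$, and Lemma~\ref{lem:kl_chain_rule} applies to the pairs $(P,M)$ and $(Q,M)$.

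Applying the lemma gives
\[
\KL(P\parallel M)=\sum_t\EE_{H\sim P_{<t}}\big[\KL(p_t(\cdot\mid H)\parallel m_t(\cdot\mid H))\big],
\]
and the analogous identity for $Q$. Absolute continuity is automatic, because $P\le 2M$ and $Q\le 2M$, so every KL term is finite. Next I change the measure of each expectation to $M_{<t}$ using
\[
\frac{dP_{<t}}{dM_{<t}}=2\alpha_t,\qquad \frac{dQ_{<t}}{dM_{<t}}=2(1-\alpha_t),
\]
which hold on the support of $M_{<t}$. The identity for $Q$ follows from $P_{<t}+Q_{<t}=2M_{<t}$. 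Histories with $M_{<t}(h)=0$ are null under both $P_{<t}$ and $Q_{<t}$, so $\alpha_t$ need only be defined $M_{<t}$-a.e., as the statement allows.

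Combining the two terms with their factors of $\tfrac12$, the summand at step $t$ becomes
\[
\EE_{H\sim M_{<t}}\!\Big[\alpha_t(H)\KL\big(p_t(\cdot\mid H)\parallel m_t(\cdot\mid H)\big)+(1-\alpha_t(H))\KL\big(q_t(\cdot\mid H)\parallel m_t(\cdot\mid H)\big)\Big].
\]
Since $m_t=\alpha_t p_t+(1-\alpha_t)q_t$, the bracket is exactly $\JS^{\alpha_t(H)}(p_t\parallel q_t)$ by definition. Summing over $t$ gives the claim.

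The main obstacle is the bookkeeping rather than any single estimate. One must correctly identify the mixture's conditional as a \emph{history-weighted} mixture, not the uniform $\tfrac12(p_t+q_t)$, and handle the change of measure on null sets. Everything else is routine because the vocabulary and horizon are finite, so all sums are finite and the exchanges of sum and expectation are justified.
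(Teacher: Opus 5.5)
Your proposal is correct and follows essentially the same route as the paper. Both identify the conditional of $M$ as the history-weighted mixture $m_t=\alpha_t p_t+(1-\alpha_t)q_t$, apply the KL chain rule to $(P,M)$ and $(Q,M)$, and change measure to $M_{<t}$ via $dP_{<t}/dM_{<t}=2\alpha_t$ and $dQ_{<t}/dM_{<t}=2(1-\alpha_t)$ to assemble the skew-JS summands.
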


\begin{proof}
By definition, we have
\[
\JS(P\parallel Q)
=
\frac12\KL(P\parallel M)+\frac12\KL(Q\parallel M),
\qquad
M=\tfrac12(P+Q).
\]
For each $t$, since \(
M_{<t}=\tfrac12(P_{<t}+Q_{<t})
\), we have $P_{<t},Q_{<t}\ll M_{<t}$, so the Radon--Nikodym derivatives
$\frac{dP_{<t}}{dM_{<t}}$ and $\frac{dQ_{<t}}{dM_{<t}}$ exist. Since the prefix space is discrete, these Radon--Nikodym derivatives reduce to ratios of probability masses wherever $M_{<t}(h)>0$.

For each $t$ and each history $h=x_{<t}$ with $M_{<t}(h)>0$, define
\[
m_t(\cdot\mid h)
:=
M(X_t\in\cdot \mid X_{<t}=h).
\]
We first identify this conditional law. For any $x_t\in\mathcal V$,
\[
m_t(x_t\mid h)
=
\frac{M(X_{1:t}=(h,x_t))}{M(X_{<t}=h)}
=
\frac{\frac12 P_{<t}(h)p_t(x_t\mid h)+\frac12 Q_{<t}(h)q_t(x_t\mid h)}
{\frac12 P_{<t}(h)+\frac12 Q_{<t}(h)},
\]
hence
\[
m_t(\cdot\mid h)
=
\alpha_t(h)\,p_t(\cdot\mid h)
+
\bigl(1-\alpha_t(h)\bigr)\,q_t(\cdot\mid h).
\]
Since 
\(
M(x_{1:T_{\max}})
=
\prod_{t=1}^{T_{\max}} m_t(x_t\mid x_{<t}),
\)
Lemma~\ref{lem:kl_chain_rule}, applied to $(P,M)$ and $(Q,M)$ yields
\begin{align*}
\KL(P\parallel M)
&=
\sum_{t=1}^{T_{\max}}
\EE_{H\sim P_{<t}}\!\left[
\KL\!\big(p_t(\cdot\mid H)\parallel m_t(\cdot\mid H)\big)
\right], \\
\KL(Q\parallel M)
&=
\sum_{t=1}^{T_{\max}}
\EE_{H\sim Q_{<t}}\!\left[
\KL\!\big(q_t(\cdot\mid H)\parallel m_t(\cdot\mid H)\big)
\right].
\end{align*}
Thus,
\begin{align*}
\JS(P\parallel Q)
&=
\frac12\KL(P\parallel M)+\frac12\KL(Q\parallel M) \\
&=
\frac12\sum_{t=1}^{T_{\max}}
\EE_{H\sim P_{<t}}\!\left[
\KL\!\big(p_t(\cdot\mid H)\parallel m_t(\cdot\mid H)\big)
\right]
+
\frac12\sum_{t=1}^{T_{\max}}
\EE_{H\sim Q_{<t}}\!\left[
\KL\!\big(q_t(\cdot\mid H)\parallel m_t(\cdot\mid H)\big)
\right].
\end{align*}
By definition of $\alpha_t$, we have
\(
\frac12\frac{dP_{<t}}{dM_{<t}}(H)=\alpha_t(H)\) and thus \(\frac12\frac{dQ_{<t}}{dM_{<t}}(H)=1-\alpha_t(H)\) 
\(M_{<t}\text{-a.s.}
\) (since $M_{<t}=\tfrac12 P_{<t}+\tfrac12 Q_{<t}$).

Then by the Radon--Nikodym Theorem,
\begin{align*}
&\frac12
\EE_{H\sim P_{<t}}\!\left[
\KL\!\big(p_t(\cdot\mid H)\parallel m_t(\cdot\mid H)\big)
\right]
+
\frac12
\EE_{H\sim Q_{<t}}\!\left[
\KL\!\big(q_t(\cdot\mid H)\parallel m_t(\cdot\mid H)\big)
\right] \\
&\qquad=
\EE_{H\sim M_{<t}}\!\Big[
\frac12\frac{dP_{<t}}{dM_{<t}}(H)\,
\KL\!\big(p_t(\cdot\mid H)\parallel m_t(\cdot\mid H)\big)
+
\frac12\frac{dQ_{<t}}{dM_{<t}}(H)\,
\KL\!\big(q_t(\cdot\mid H)\parallel m_t(\cdot\mid H)\big)
\Big] \\
&\qquad=
\EE_{H\sim M_{<t}}\!\Big[
\alpha_t(H)\,
\KL\!\big(p_t(\cdot\mid H)\parallel m_t(\cdot\mid H)\big)
+
\bigl(1-\alpha_t(H)\bigr)\,
\KL\!\big(q_t(\cdot\mid H)\parallel m_t(\cdot\mid H)\big)
\Big] \\
&\qquad=
\EE_{H\sim M_{<t}}\!\left[
\JS^{\alpha_t(H)}\!\big(p_t(\cdot\mid H)\parallel q_t(\cdot\mid H)\big)
\right].
\end{align*}
Summing over $t=1,\dots,T_{\max}$ gives the claim.

\end{proof}

\begin{proposition}[Token-level skew-JS control $\Rightarrow$ sequence-level JS bound]
\label{prop:js_eps_bound}
Let \(
P_{\mathrm{mix}},\; P_{\mathrm{int}}
\) be the sequence-level laws induced by $\pi_{\mathrm{mix}}$ and $\pi_{\mathrm{int}}$ on
$X_{1:T_{\max}}$, and let
\(
M:=\tfrac12\bigl(P_{\mathrm{mix}}+P_{\mathrm{int}}\bigr).
\)
For each $t$, let $(P_{\mathrm{mix}})_{<t}$, $(P_{\mathrm{int}})_{<t}$, and $M_{<t}$
denote the corresponding marginals of $X_{<t}$, and define
\[
\alpha_t(h)
:=
\frac12\frac{d(P_{\mathrm{mix}})_{<t}}{dM_{<t}}(h)
=
\frac{(P_{\mathrm{mix}})_{<t}(h)}
{(P_{\mathrm{mix}})_{<t}(h)+(P_{\mathrm{int}})_{<t}(h)},
\qquad M_{<t}\text{-a.s.}
\]

Assume that the switching rule satisfies, for every $t$ and history $h=x_{<t}$ with $M_{<t}(h)>0$,
\[
\mathcal{S}(h)=0
\quad\Longrightarrow\quad
\JS^{\alpha_t(h)}\!\big(
\pi_{\mathrm{prim}}(\cdot\mid h)\parallel \pi_{\mathrm{int}}(\cdot\mid h)
\big)
\le \varepsilon.
\]
Define
\[
N_0
:=
\sum_{t=1}^{\tau}\mathds{1}\{\mathcal{S}(X_{<t})=0\}.
\]
Then
\[
\JS(P_{\mathrm{mix}}\parallel P_{\mathrm{int}})
\le
\varepsilon\ \EE_{X\sim M}[N_0].
\]
Equivalently,
\[
\JS(P_{\mathrm{mix}}\parallel P_{\mathrm{int}})
\le
\frac{\varepsilon}{2}
\left(
\EE_{X\sim P_{\mathrm{mix}}}[N_0]
+
\EE_{X\sim P_{\mathrm{int}}}[N_0]
\right).
\]
\end{proposition}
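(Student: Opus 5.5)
We will apply Lemma~\ref{lem:js_chain_rule} with $P=P_{\mathrm{mix}}$ and $Q=P_{\mathrm{int}}$. Both laws factorize on the fixed horizon $X_{1:T_{\max}}$ (with absorbing EOS), with conditionals $p_t=\pi_{\mathrm{mix}}(\cdot\mid x_{<t})$ and $q_t=\pi_{\mathrm{int}}(\cdot\mid x_{<t})$. This yields the exact identity
\[
\JS(P_{\mathrm{mix}}\parallel P_{\mathrm{int}})=\sum_{t=1}^{T_{\max}}\EE_{H\sim M_{<t}}\!\left[\JS^{\alpha_t(H)}\!\big(\pi_{\mathrm{mix}}(\cdot\mid H)\parallel \pi_{\mathrm{int}}(\cdot\mid H)\big)\right],
\]
with exactly the weights $\alpha_t$ appearing in the hypothesis.

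Next we bound each summand pointwise in the history $h$, as in the proof of Proposition~\ref{prop:kl_eps_bound}.

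\textbf{Case $\mathcal S(h)=1$.} Here $\pi_{\mathrm{mix}}(\cdot\mid h)=\pi_{\mathrm{int}}(\cdot\mid h)$. The skew JS of identical measures is zero for every $\alpha$, since both KL terms are against the measure itself.

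\textbf{Case $\mathcal S(h)=0$.} Here $\pi_{\mathrm{mix}}(\cdot\mid h)=\pi_{\mathrm{prim}}(\cdot\mid h)$, so the hypothesis gives the bound $\varepsilon$.

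\textbf{After EOS.} For $t>\tau$, i.e.\ when EOS is already in $h$, both conditionals are the point mass on EOS, so the term vanishes.

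Combining the three cases gives, for $M_{<t}$-a.e.\ $h$,
\[
\JS^{\alpha_t(h)}\big(\cdots\big)\le \varepsilon\,\mathds 1\{t\le\tau\}\,\mathds 1\{\mathcal S(h)=0\}.
\]
The hypothesis is only asserted where $M_{<t}(h)>0$, which is all the expectation sees.

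We then sum over $t$. Since $\mathds 1\{t\le\tau\}\mathds 1\{S_t=0\}$ is $\sigma(X_{<t})$-measurable, each expectation under $M_{<t}$ equals the expectation under the full sequence law $M$. Exchanging the finite sum and the expectation gives $\varepsilon\,\EE_{X\sim M}\big[\sum_{t\le\tau}\mathds 1\{S_t=0\}\big]=\varepsilon\,\EE_M[N_0]$.

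The equivalent form follows from linearity, since $M=\tfrac12(P_{\mathrm{mix}}+P_{\mathrm{int}})$ and $N_0$ is a nonnegative function of the trajectory.

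The only delicate point is measure-theoretic bookkeeping. We must check that the marginal of $M$ on prefixes is $M_{<t}$, that $\alpha_t$ is well defined $M_{<t}$-a.s., and that the null sets where it is undefined do not affect the bound. Everything else is inherited directly from Lemma~\ref{lem:js_chain_rule}.
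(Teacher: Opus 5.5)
Your proposal is correct and follows essentially the same route as the paper. Apply Lemma~\ref{lem:js_chain_rule} with $P=P_{\mathrm{mix}}$ and $Q=P_{\mathrm{int}}$, bound each skew-JS summand pointwise by $\varepsilon\,\mathds{1}\{t\le\tau\}\,\mathds{1}\{S_t=0\}$ (it vanishes when $\mathcal S(h)=1$ or after EOS), then use $\sigma(X_{<t})$-measurability to pass to the full law $M$, exchange sum and expectation to get $\varepsilon\,\EE_{X\sim M}[N_0]$, and obtain the equivalent form by linearity of $M=\tfrac12(P_{\mathrm{mix}}+P_{\mathrm{int}})$.
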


\begin{proof}
We apply Lemma~\ref{lem:js_chain_rule} with
\(
P=P_{\mathrm{mix}}\) and \(Q=P_{\mathrm{int}}\).

For any history $h=x_{<t}$,
\[
\pi_{\mathrm{mix}}(\cdot\mid h)
=
\begin{cases}
\pi_{\mathrm{int}}(\cdot\mid h), & \mathcal{S}(h)=1,\\
\pi_{\mathrm{prim}}(\cdot\mid h), & \mathcal{S}(h)=0.
\end{cases}
\]
Hence
\[
\JS^{\alpha_t(h)}\!\big(
\pi_{\mathrm{mix}}(\cdot\mid h)\parallel \pi_{\mathrm{int}}(\cdot\mid h)
\big)
=
\mathds{1}\{\mathcal{S}(h)=0\}\,
\JS^{\alpha_t(h)}\!\big(
\pi_{\mathrm{prim}}(\cdot\mid h)\parallel \pi_{\mathrm{int}}(\cdot\mid h)
\big).
\]
By the assumed token-level control, for every such history $h$,
\[
\JS^{\alpha_t(h)}\!\big(
\pi_{\mathrm{mix}}(\cdot\mid h)\parallel \pi_{\mathrm{int}}(\cdot\mid h)
\big)
\le
\varepsilon\,\mathds{1}\{\mathcal{S}(h)=0\}.
\]

Under absorbing EOS, \(
\JS^{\alpha_t(X_{<t})}\!\big(
\pi_{\mathrm{mix}}(\cdot\mid X_{<t})\parallel \pi_{\mathrm{int}}(\cdot\mid X_{<t})
\big)=0\) for $t>\tau$, so we may multiply by $\mathds{1}\{t\le \tau\}$. Substituting this into
Lemma~\ref{lem:js_chain_rule} gives
\[
\JS(P_{\mathrm{mix}}\parallel P_{\mathrm{int}})
\le
\sum_{t=1}^{T_{\max}}
\EE_{X_{<t}\sim M_{<t}}\!\left[
\mathds{1}\{t\le \tau\}\,\varepsilon\,\mathds{1}\{S_t=0\}
\right].
\]
Since $\mathds{1}\{t\le \tau\}\mathds{1}\{S_t=0\}$ is $\sigma(X_{<t})$-measurable, we may equivalently view the expectation under the full trajectory
$X\sim M$:
\[
\EE_{X_{<t}\sim M_{<t}}\!\left[
\mathds{1}\{t\le \tau\}\,\mathds{1}\{S_t=0\}
\right]
=
\EE_{X\sim M}\!\left[
\mathds{1}\{t\le \tau\}\,\mathds{1}\{S_t=0\}
\right].
\]
Thus,
\begin{align*}
\JS(P_{\mathrm{mix}}\parallel P_{\mathrm{int}})
&\le
\varepsilon
\sum_{t=1}^{T_{\max}}
\EE_{X\sim M}\!\left[
\mathds{1}\{t\le \tau\}\,\mathds{1}\{S_t=0\}
\right] \\
&=
\varepsilon\,
\EE_{X\sim M}\!\left[
\sum_{t=1}^{T_{\max}}
\mathds{1}\{t\le \tau\}\,\mathds{1}\{S_t=0\}
\right] \\
&=
\varepsilon\,
\EE_{X\sim M}\!\left[
\sum_{t=1}^{\tau}\mathds{1}\{S_t=0\}
\right] \\
&=
\varepsilon\,\EE_{X\sim M}[N_0].
\end{align*}
Finally, since $M=\tfrac12(P_{\mathrm{mix}}+P_{\mathrm{int}})$,
\[
\EE_{X\sim M}[N_0]
=
\frac12\EE_{X\sim P_{\mathrm{mix}}}[N_0]
+
\frac12\EE_{X\sim P_{\mathrm{int}}}[N_0],
\]
giving the equivalent form.

\end{proof}

\begin{remark}[Effective skew-JS on non-intervention steps]
Let
\[
M:=\tfrac12(P_{\mathrm{mix}}+P_{\mathrm{int}}),
\]
and define the \emph{effective} token-level skew Jensen--Shannon divergence on
non-intervention steps by
\[
\bar{\jmath}
:=
\frac{
\EE_{X\sim M}\!\left[
\sum_{t=1}^{\tau}
\mathds{1}\{S_t=0\}\,
\JS^{\alpha_t(X_{<t})}\!\big(
\pi_{\mathrm{prim}}(\cdot\mid X_{<t})
\parallel
\pi_{\mathrm{int}}(\cdot\mid X_{<t})
\big)
\right]
}{
\EE_{X\sim M}[N_0]
},
\qquad
(\bar{\jmath}:=0 \text{ if } \EE_{X\sim M}[N_0]=0).
\]
Then
\[
\JS(P_{\mathrm{mix}}\parallel P_{\mathrm{int}})
=
\bar{\jmath}\,\EE_{X\sim M}[N_0].
\]
Moreover, under the assumption of Proposition~\ref{prop:js_eps_bound}, \(
\bar{\jmath}\le \varepsilon,
\) and hence
\[
\JS(P_{\mathrm{mix}}\parallel P_{\mathrm{int}})
=
\bar{\jmath}\,\EE_{X\sim M}[N_0]
\le
\varepsilon\,\EE_{X\sim M}[N_0].
\]
\end{remark}

Here $\bar{\jmath}$ averages the skew Jensen--Shannon terms appearing in the exact
decomposition of Lemma~\ref{lem:js_chain_rule}; this is the natural JS analogue of
the effective KL in the preceding remark.

\subsection{Experimental Details}

\subsubsection{Token Distribution Analyses}
We run model inference using \texttt{vllm} \citep{kwon2023efficient}. On AIME 2024 and 2025, we apply nucleus sampling \citep{holtzman2020curiouscaseneuraltext} with top-$p=0.7$ and $\text{temperature}=1$. For divergence calculations on AIME, we use the top-$p$ truncated distribution to reflect the effective sampling distribution, to provide a more accurate estimate for our cross-sampling experiments. We also examine the distribution of JS divergence values without truncation (and on other top-$p$ values) to ensure the main results are not impacted by the truncation. For experiments on the fine-training data, we use top-$p=1$ to reflect the training sampling distribution.

\subsubsection{Cross-Sampling}
For cross-sampling experiments, we use the same inference setup as token analysis. Cross-sampling experiments selectively swap tokens between base and RL models at positions where JS divergence exceeds a threshold, allowing us to measure the functional importance of divergent token distributions.

We perform forward and reverse cross-sampling experiments on the following model-dataset combinations. The divergence thresholds used for each configuration are as follows:

\begin{itemize}
    \item \textbf{Qwen2.5-32B with SimpleRL:}
    \begin{itemize}
        \item AIME 2024: Forward threshold $\varepsilon=0.03$, Reverse threshold $\varepsilon=0.05$
        \item AIME 2025: Forward threshold $\varepsilon=0.05$, Reverse threshold $\varepsilon=0.05$
    \end{itemize}
    \item \textbf{Qwen2.5-32B with DAPO:}
    \begin{itemize}
        \item AIME 2024: Forward threshold $\varepsilon=0.08$, Reverse threshold $\varepsilon=0.06$
        \item AIME 2025: Forward threshold $\varepsilon=0.1$, Reverse threshold $\varepsilon=0.08$
    \end{itemize}
    \item \textbf{Mistral-Small-24B with SimpleRL:}
    \begin{itemize}
        \item AIME 2024: Forward threshold $\varepsilon=0.002$, Reverse threshold $\varepsilon=0.02$
    \end{itemize}
\end{itemize}

\subsubsection{Additional Training Details}
\label{subsubsec:train_details}

We implement RLVR training experiments using \texttt{verl} \citep{sheng2024hybridflow} with the standard DAPO recipe \citep{yu2025dapo_system}.

\paragraph{Qwen2.5-Math-7B DAPO Training.}
We follow the public DAPO recipe, namely with clip ratios $\epsilon_{\text{low}}=0.2$ and $\epsilon_{\text{high}}=0.28$. However, for token analysis, we also train a variant with $\epsilon_{\text{high}}=0.2$ for comparison. We optimize with learning rate $1\times10^{-6}$, a $10$-step warmup using AdamW, and no explicit reference-KL penalty. Each RLVR step processes $512$ prompts with $16$ sampled responses per prompt; these are split into mini-batches of $32$ prompts, yielding $16$ gradient updates per RLVR step. Maximum generation length and the overlong-penalty threshold are set to $8\text{k}$ and $4\text{k}$ tokens.

\paragraph{Supervised Fine-Tuning (SFT) Training.} For the SFT model based on on Qwen2.5 32B, we sampled 42k instances from the \texttt{AM-DeepSeek-R1-Distilled-1.4M} dataset. The model underwent full parameter fine-tuning for 5 epochs, employing DeepSpeed ZeRO-3 optimization.


\paragraph{Divergence-weighted Training} For the divergence-weighted advantage experiments on Qwen2.5-Math-7B, under the \textbf{high-KL} setting we use $s=0.3$ and set $\alpha$ to increase linearly from $0$ to $50$ starting at step $100$. In the \textbf{low-KL} setting, we use $s=0.3$ and set $\alpha$ to decrease linearly from $0$ to $-50$ beginning at step $150$.

\subsection{Weight-Level Analysis of Changes}
Orthogonal to the analysis done in the main text, we also investigate the degree of modifications induced by RLVR at the parameter level. More specifically, we employ the relative gap ratio~\citep{wu2025shadowfttuninginstructbase}, denoted as $\sigma$, to quantify the magnitude of weight divergence pre- and post-fine-tuning. This ratio is formulated as:

\begin{equation*}
    \sigma = \frac{\sum|W_{\text{original}} - W_{\text{tuned}}|}{\sum|W_{\text{original}}| + \sum|W_{\text{tuned}}|}
\end{equation*}

where $W_{\text{original}}$ and $W_{\text{tuned}}$ represent the model parameters before and after fine-tuning, respectively. A lower $\sigma$ value signifies greater similarity between the parameter sets, indicating a smaller overall modification from the fine-tuning process.

In our experiment, we utilized the Qwen2.5-32B and Qwen2.5-Math-7B models as foundations. Each model was independently fine-tuned via two distinct methodologies: RL and Supervised Fine-Tuning (SFT). To ensure a controlled and equitable comparison, the training regimen for both methods was standardized, employing an identical dataset size and the same number of training steps. Subsequently, the $\sigma$ was computed between each original model and its corresponding tuned counterparts. The results are presented in the following table.

\begin{table}[h!]
\centering
\caption{Relative gap ratio ($\sigma$) after RL and SFT fine-tuning.}
\label{tab:sigma_appendix}
\begin{tabular}{@{}lcc@{}}
\toprule
\textbf{Model} & \textbf{Qwen2.5-32B} & \textbf{Qwen2.5-Math-7B} \\ \midrule
$\sigma$ after RL & 0.00143 & 0.00136 \\
$\sigma$ after SFT & 0.00347 & 0.00944 \\ \bottomrule
\end{tabular}
\end{table}

The results presented in the table demonstrate a consistent trend across both models: the $\sigma$ values corresponding to RL fine-tuning are substantially lower than those from SFT. This quantitative analysis at the parameter level suggests that the cumulative weight modifications induced by RL are significantly less extensive than those resulting from SFT. This finding provides empirical support for the hypothesis that RL achieves performance gains through sparse and targeted parameter adjustments, contrasting with the more distributed updates characteristic of SFT.

\subsection{Additional RLVR vs. Supervised Fine-Tuning Results}
\label{subsec:rlvr_vs_sft}

This section provides additional results to supplement the discussion in Section~\ref{subsec:sft_comparison_main}. A natural question is whether the sparse, targeted distributional shifts we observe are specific to RLVR, or if they also characterize other fine-tuning approaches. To address this, we compare RLVR-trained models with models refined through supervised fine-tuning (SFT). We analyze Qwen2.5-32B trained with SFT alongside Qwen2.5-32B DAPO.

Figure~\ref{fig:js_rlvr_vs_sft} shows JS divergence distributions for both approaches. SFT produces a noticeably larger high-divergence set with larger divergence values, whereas RLVR concentrates almost all token distributions below very small JS values. This directly reflects RLVR’s extreme selectivity and the broader edits introduced by SFT. The top-$k$ overlap analysis (Figure~\ref{fig:topk_distill_vs_rlvr}) highlights that SFT consistently achieves lower overlap with the base model, indicating more aggressive re-ranking, while RLVR largely stays within the base model’s existing candidate set. The rank reordering analysis (Figure~\ref{fig:ranks_distill_vs_rlvr}) further shows that SFT promotes more tokens outside the base model’s top-3, whereas RLVR mainly promotes candidates that were already high-ranked.

\begin{figure}[!htbp]
    \centering
    \begin{subfigure}{0.40\textwidth}
        \includegraphics[width=\linewidth]{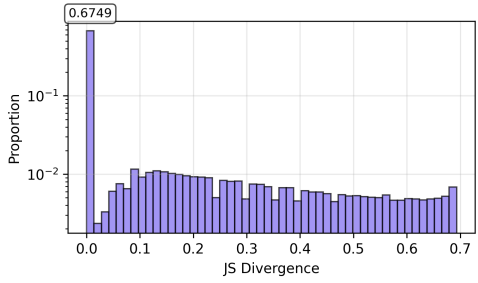}
        \caption{SFT: Histogram}
    \end{subfigure}
    \hspace{1cm}
    \begin{subfigure}{0.40\textwidth}
        \includegraphics[width=\linewidth]{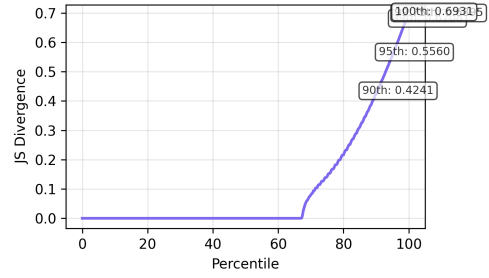}
        \caption{SFT: Percentiles}
    \end{subfigure}
    
    \vspace{1em}
    \begin{subfigure}{0.40\textwidth}
        \includegraphics[width=\linewidth]{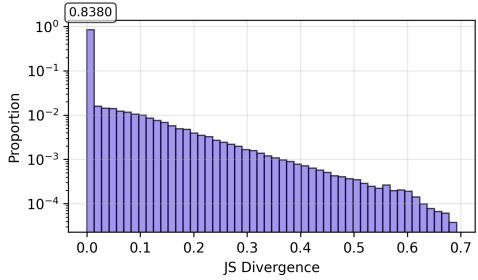}
        \caption{RLVR (DAPO): Histogram}
    \end{subfigure}
    \hspace{1cm}
    \begin{subfigure}{0.40\textwidth}
        \includegraphics[width=\linewidth]{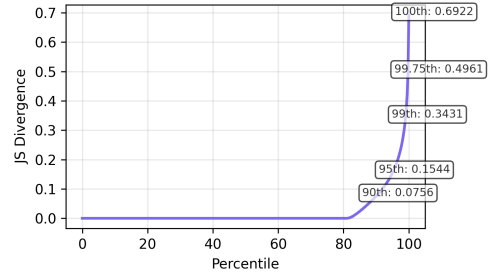}
        \caption{RLVR (DAPO): Percentiles}
    \end{subfigure}
    
    \caption{
        JS divergence distributions comparing supervised fine-tuning and RLVR on AIME 2024.
        RLVR exhibits sparser distributional shifts than SFT, suggesting more targeted refinement.
    }
    \label{fig:js_rlvr_vs_sft}
\end{figure}

\begin{figure}[!htbp]
    \centering
    \begin{subfigure}{0.43\textwidth}
        \includegraphics[width=\linewidth]{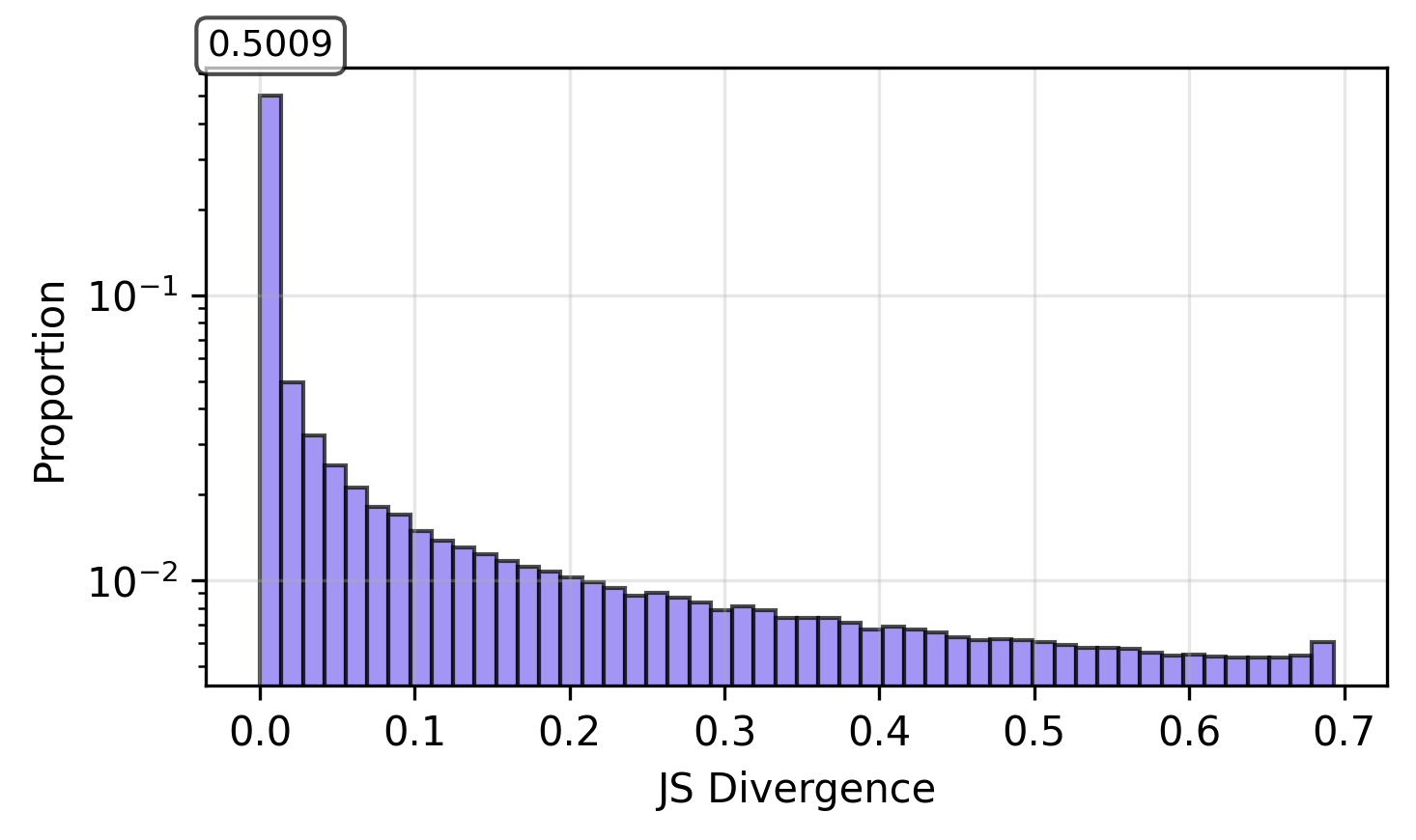}
        \caption{SFT: Histogram (topp1)}
    \end{subfigure}
    \hspace{1cm}
    \begin{subfigure}{0.43\textwidth}
        \includegraphics[width=\linewidth]{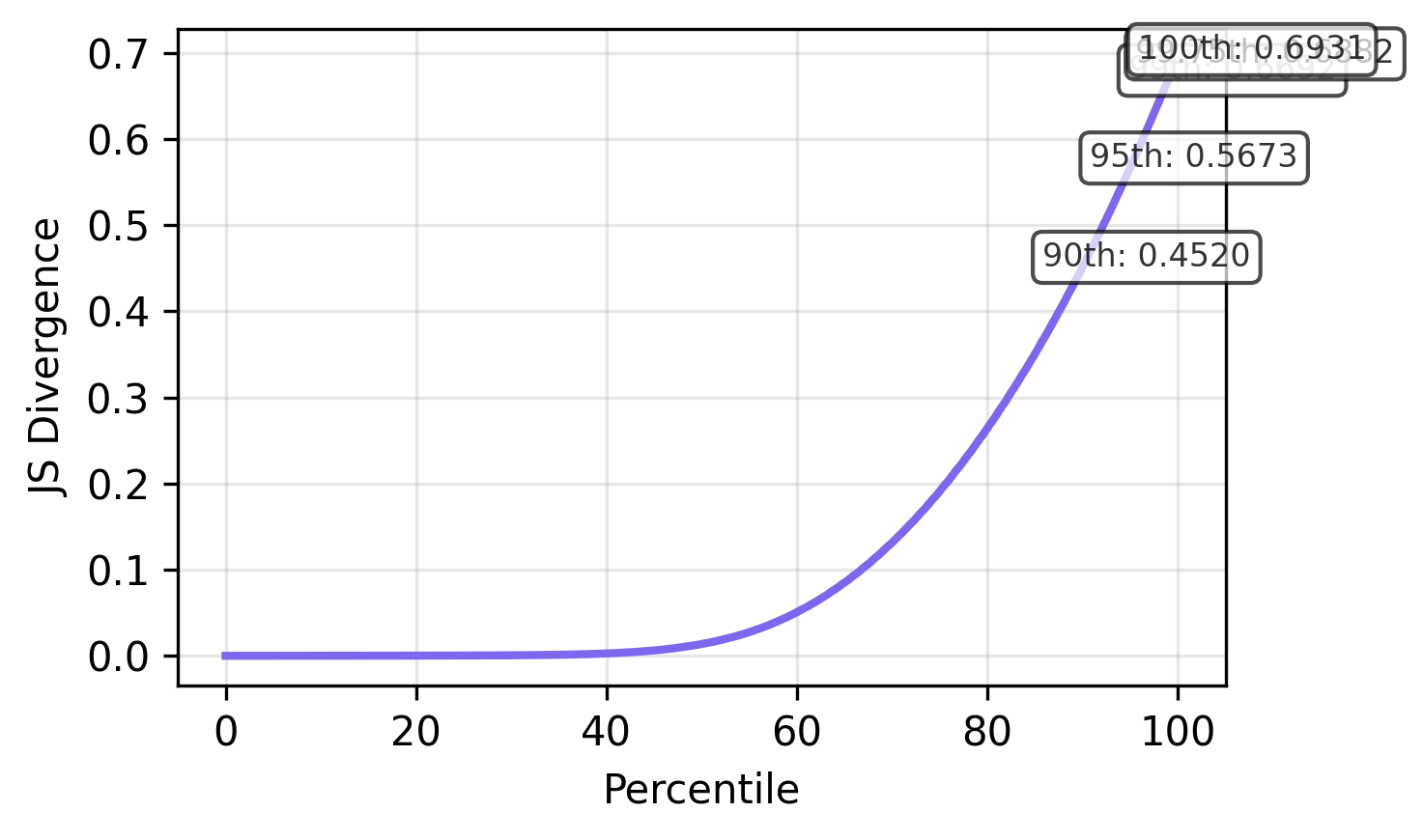}
        \caption{SFT: Percentile curve (topp1)}
    \end{subfigure}
    
    \caption{
        JS divergence distributions computed using top-$p=1$ distribution (while sampling is still with top-$p=0.7$) for Qwen2.5-32B SFT on AIME 2024.
    }
    \label{fig:js_topp1_comparison_sft_aime24}
\end{figure}

\begin{figure}[!htbp]
    \centering
    \begin{subfigure}{0.45\linewidth}
        \includegraphics[width=\linewidth]{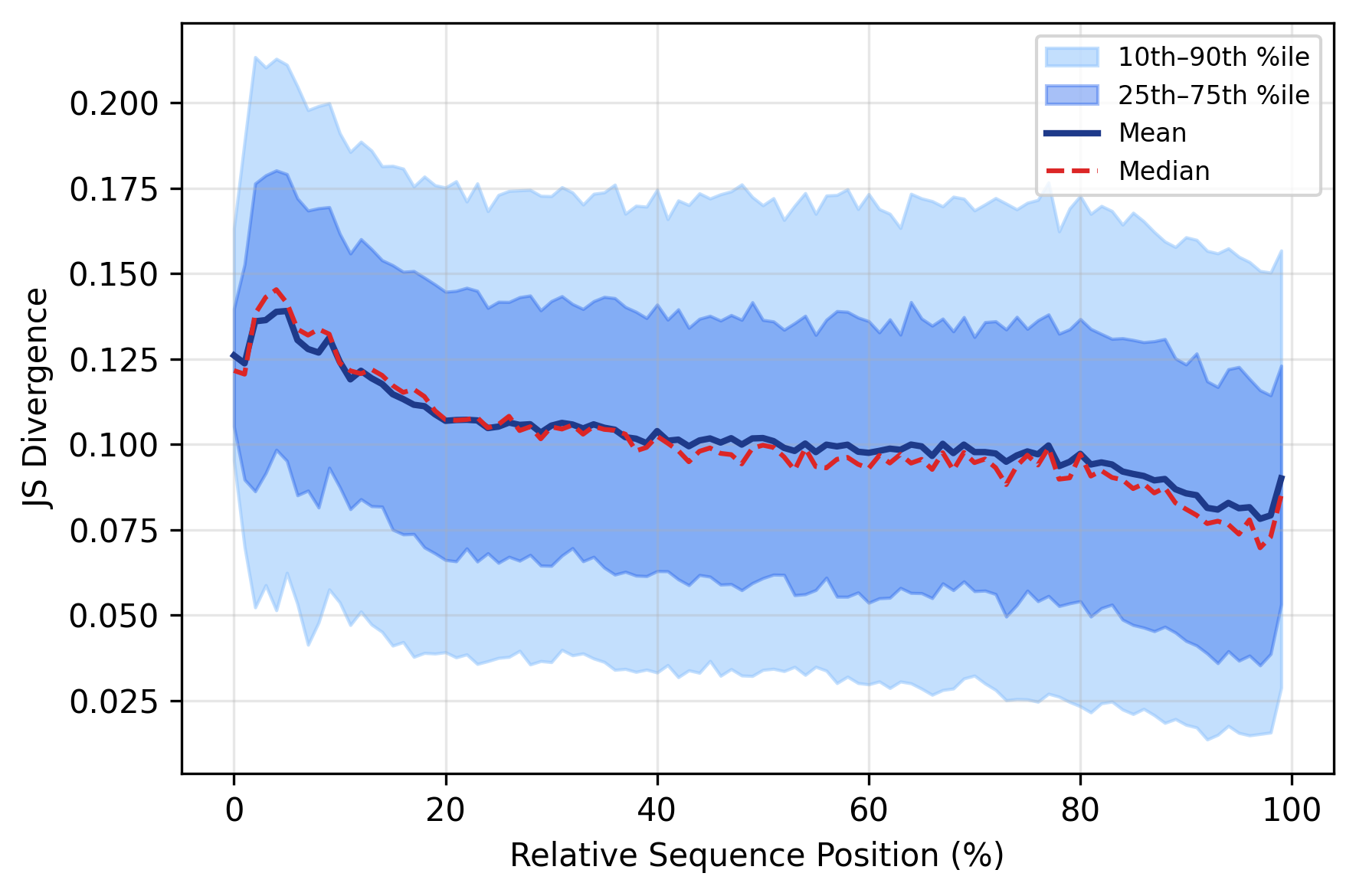}
        \caption{Qwen2.5-32B SFT}
    \end{subfigure}
    \hspace{1cm}
    \begin{subfigure}{0.45\linewidth}
        \includegraphics[width=\linewidth]{plots/Qwen2.5-32B_+_DAPO/AIME_2024/average_js_by_position.png}
        \caption{Qwen2.5-32B DAPO}
    \end{subfigure}
    
    \caption{
        Mean JS divergence by normalized token position comparing SFT and RLVR-trained models on AIME 2024.
        The positional patterns reveal differences in how SFT and RLVR concentrate their updates.
    }
    \label{fig:positional_distill_vs_rlvr}
\end{figure}

Taken together, the metrics highlight that SFT diverges from RLVR along several axes. The SFT model exhibits higher overall JS divergence as well as a larger mass of high-divergence tokens (Figure~\ref{fig:js_rlvr_vs_sft}), and attains lower top-$k$ overlap with the base model (Figure~\ref{fig:topk_distill_vs_rlvr}) alongside larger rank shifts (Figure~\ref{fig:ranks_distill_vs_rlvr}). Moreover, SFT’s divergent tokens more frequently elevate low base-probability choices compared to RLVR (Figure~\ref{fig:tolerance_distill_vs_rlvr}). These differences reinforce that RLVR acts as a targeted editor, while SFT drives broader, less selective reshaping of the distribution.

These findings align with recent work suggesting that RL fine-tuning acts as a \emph{scalpel} rather than a hammer, amplifying existing capabilities through localized changes compared to the broader modifications induced by supervised fine-tuning \citep{rajani2025scalpel_vs_hammer, chu2025sft_memorizes_rl_generalizes}. Our results align with this behavior but from the perspective of \emph{token-level distributional changes}: RLVR modifies far fewer token positions (as measured by JS divergence), and at those positions, the changes are more likely to be re-ranking within the base model's top candidates rather than introducing entirely new tokens. In contrast, SFT exhibits more widespread token-level distributional shifts across a larger fraction of positions, as it learns to mimic provided outputs while adjusting token probabilities more broadly across the vocabulary space.

\begin{figure}[!htbp]
    \centering
    \begin{subfigure}{0.45\linewidth}
        \includegraphics[width=\linewidth]{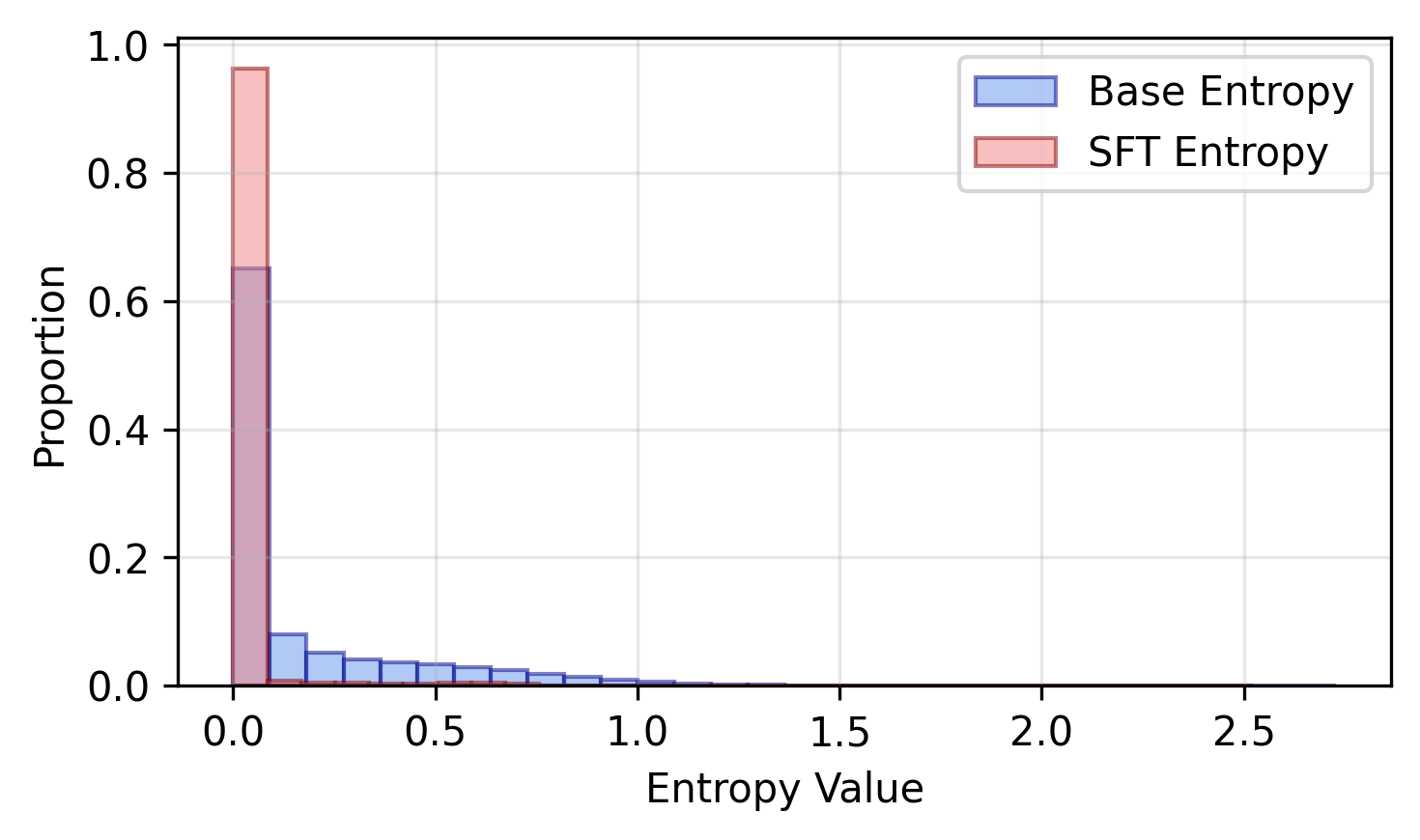}
        \caption{Qwen2.5-32B SFT Low JS bin ($<0.1$)}
    \end{subfigure}
    \hfill
    \begin{subfigure}{0.45\linewidth}
        \includegraphics[width=\linewidth]{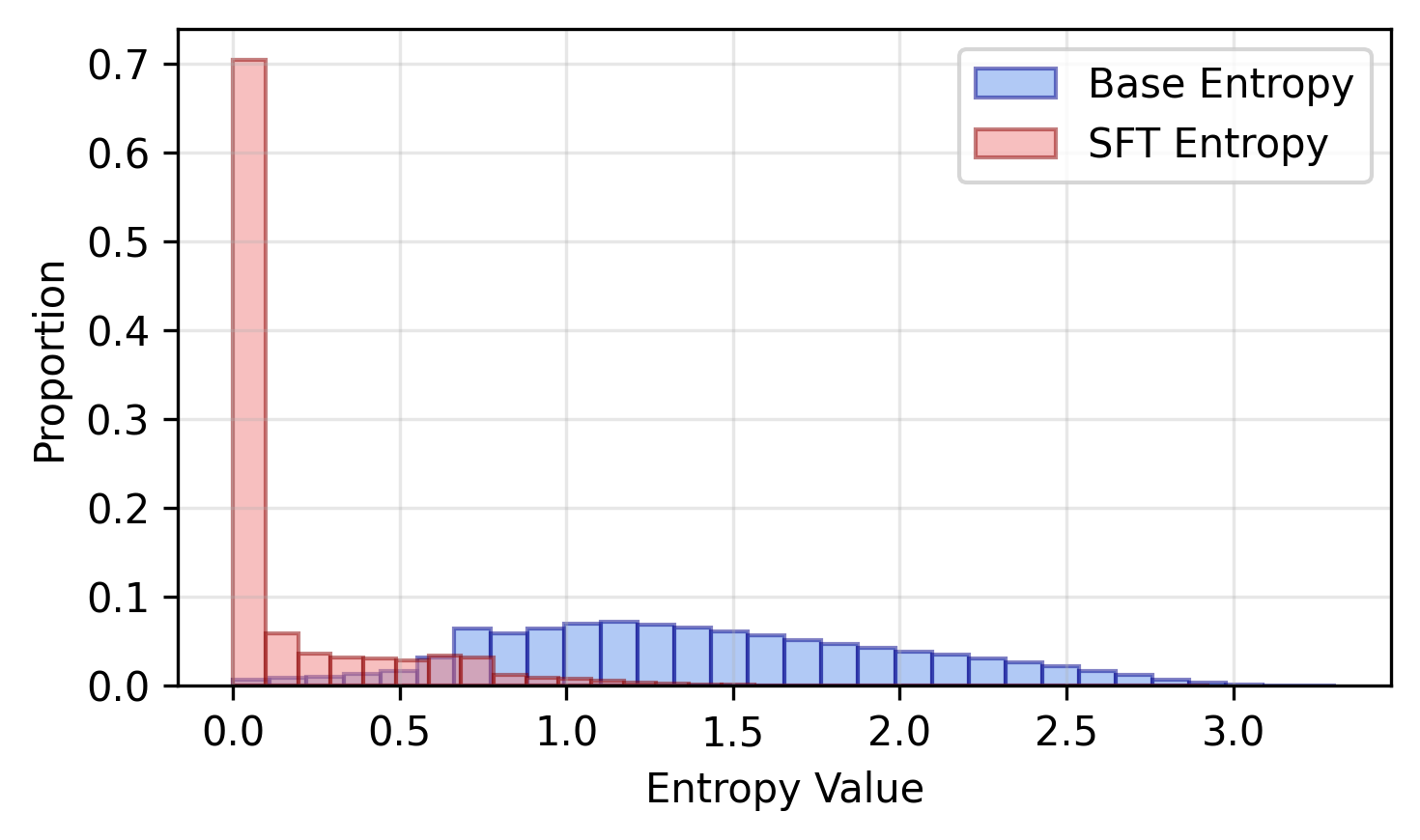}
        \caption{Qwen2.5-32B SFT High JS bin ($>0.1$)}
    \end{subfigure}
    
    \caption{
        Entropy distributions across divergence bins using full vocabulary for Qwen2.5-32B with SFT on AIME 2025.
    }
    \label{fig:entropy_distill}
\end{figure}


\begin{figure}[!htbp]
    \centering
    \begin{subfigure}{0.45\linewidth}
        \includegraphics[width=\linewidth]{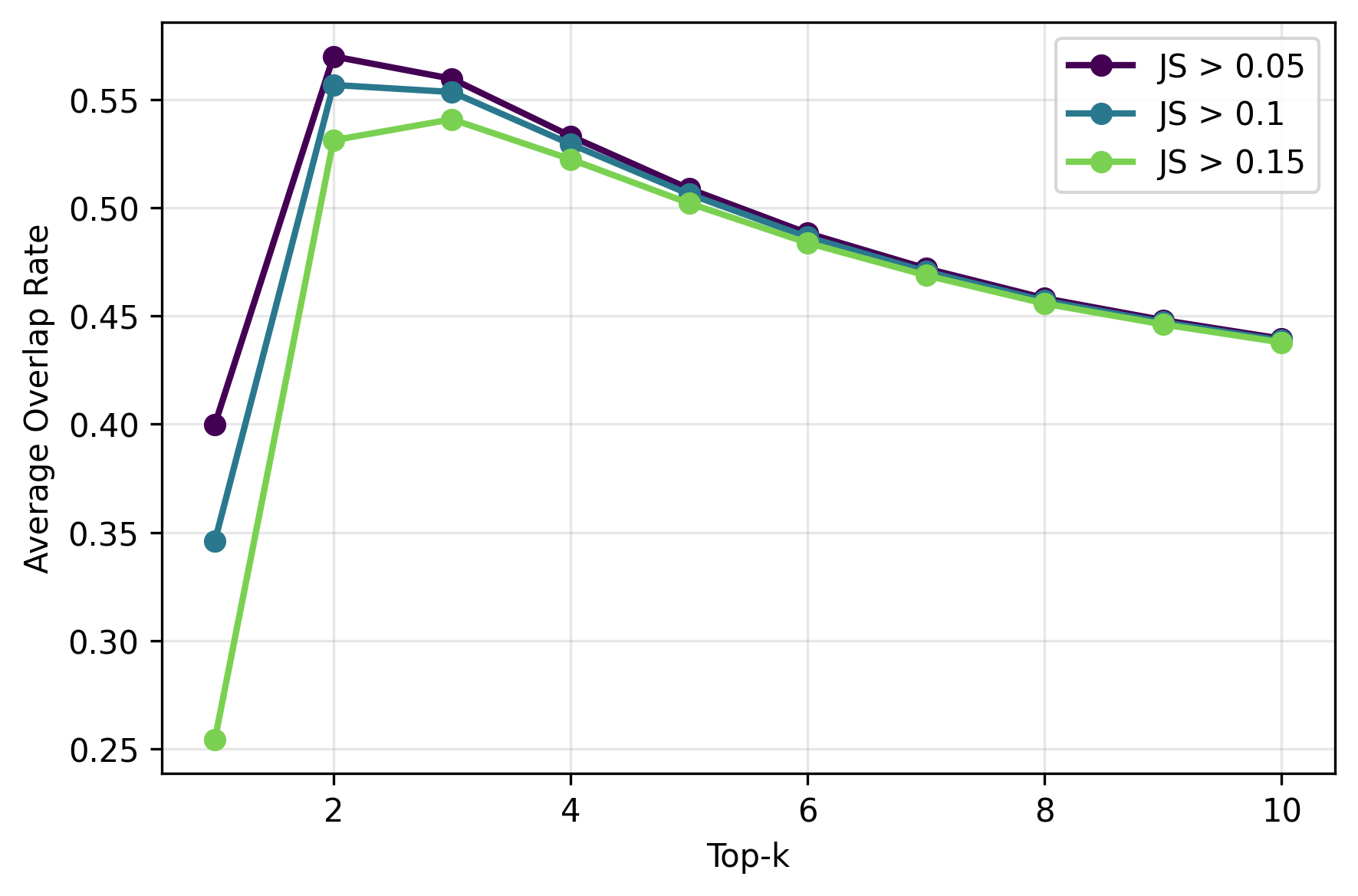}
        \caption{Qwen2.5-32B SFT}
    \end{subfigure}
    \hfill
    \begin{subfigure}{0.45\linewidth}
        \includegraphics[width=\linewidth]{plots/Qwen2.5-32B_+_DAPO/AIME_2024/topk_overlap_curves_by_threshold.png}
        \caption{Qwen2.5-32B DAPO}
    \end{subfigure}
    
    \caption{
        Top-$k$ token overlap between base and refined models at divergent positions ($\mathrm{JS}_t>0.1$) comparing SFT and RLVR-trained models on AIME 2024.
    }
    \label{fig:topk_distill_vs_rlvr}
\end{figure}

\begin{figure}[!htbp]
    \centering
    \begin{subfigure}{0.48\linewidth}
        \includegraphics[width=\linewidth]{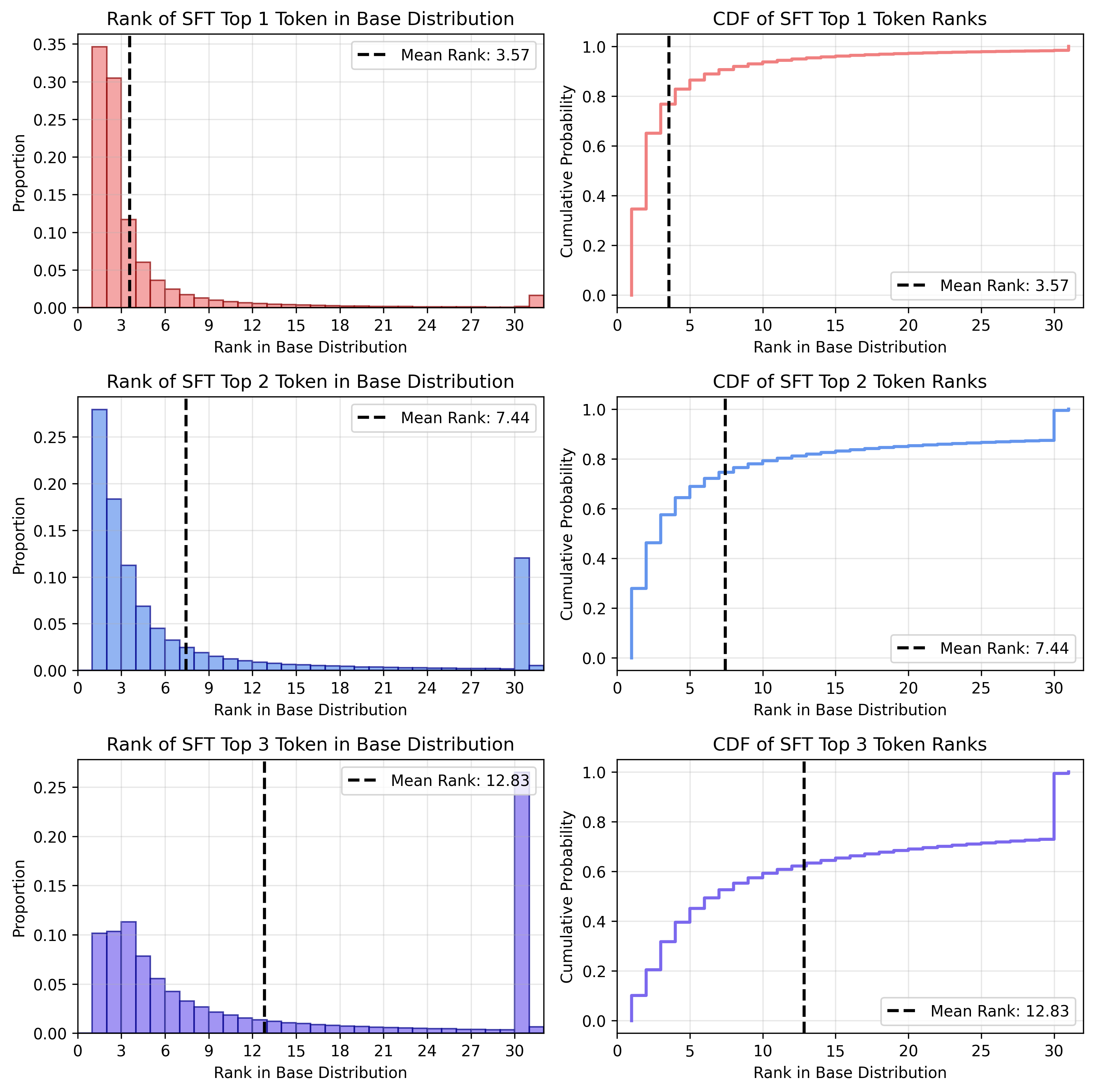}
        \caption{Qwen2.5-32B SFT}
    \end{subfigure}
    \hfill
    \begin{subfigure}{0.48\linewidth}
        \includegraphics[width=\linewidth]{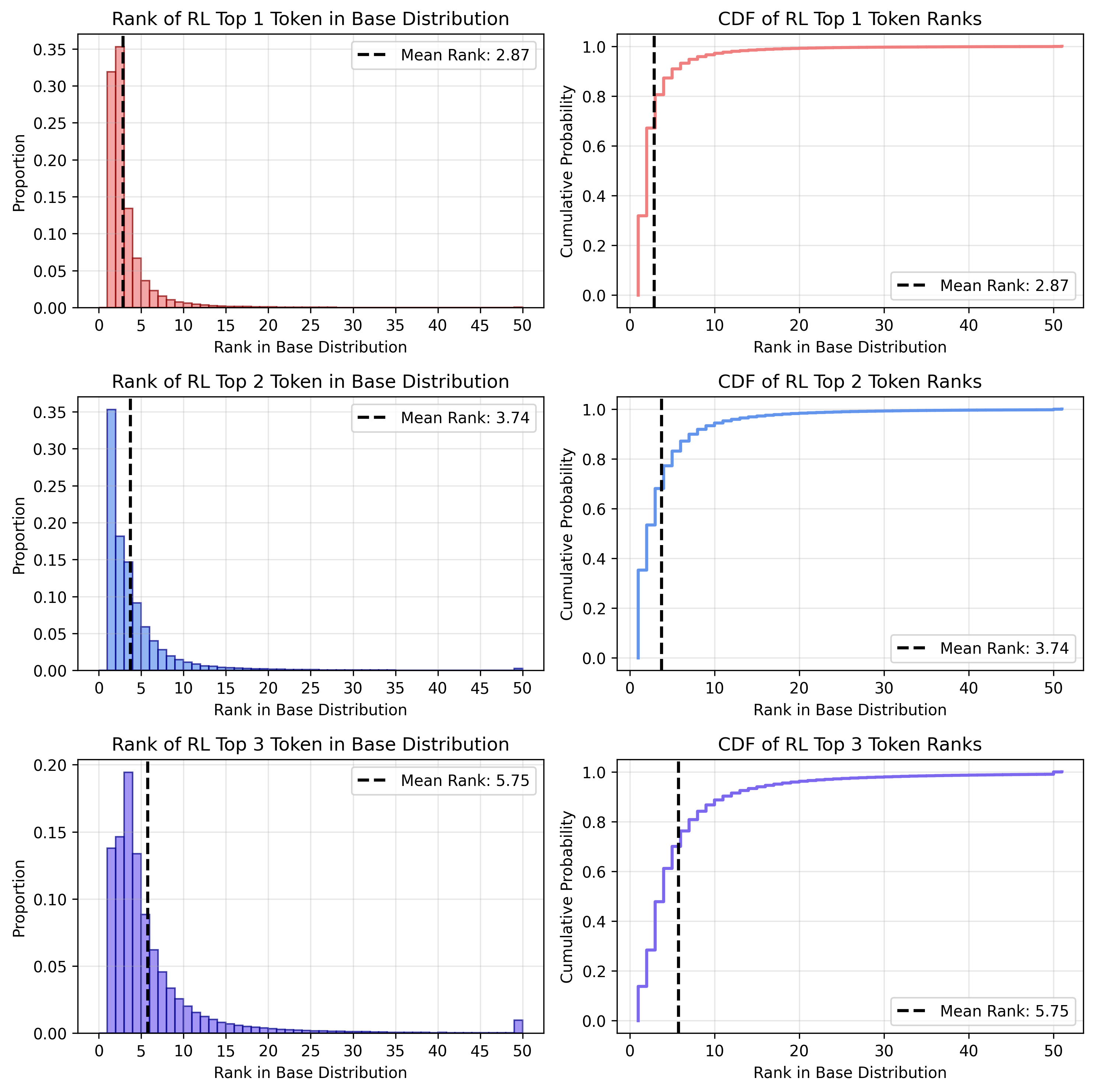}
        \caption{Qwen2.5-32B DAPO}
    \end{subfigure}
    
    \caption{
        Distribution of base-model ranks for fine-tuned models' top-3 tokens at high-divergence positions ($\js > 0.1$) comparing SFT and RLVR-trained models on AIME 2024.
    }
    \label{fig:ranks_distill_vs_rlvr}
\end{figure}

\begin{figure}[!htbp]
    \centering
    \begin{subfigure}{0.42\linewidth}
        \includegraphics[width=\linewidth]{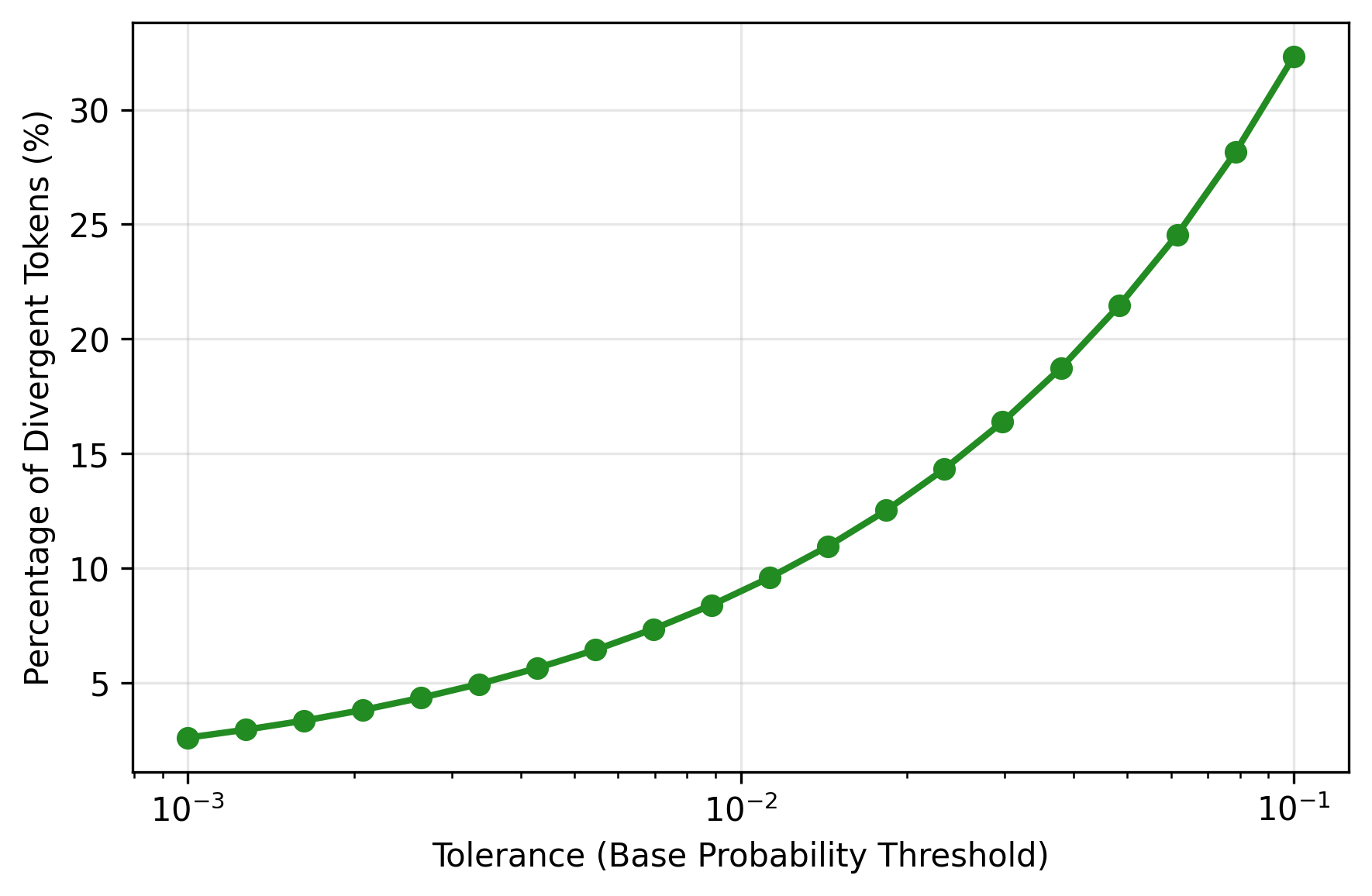}
        \caption{Qwen2.5-32B SFT}
    \end{subfigure}
    \hspace{1cm}
    \begin{subfigure}{0.42\linewidth}
        \includegraphics[width=\linewidth]{plots/Qwen2.5-32B_+_DAPO/AIME_2024/percentages_divergent_tokens_vs_tolerance_js0.100.png}
        \caption{Qwen2.5-32B DAPO}
    \end{subfigure}
    
    \caption{
        Percentage of divergent tokens whose RL top-1 choice had base probability below a given threshold comparing SFT and RLVR-trained models on AIME 2024.
    }
    \label{fig:tolerance_distill_vs_rlvr}
\end{figure}

\begin{figure}[!htbp]
    \centering
    \begin{subfigure}{0.43\textwidth}
        \includegraphics[width=\linewidth]{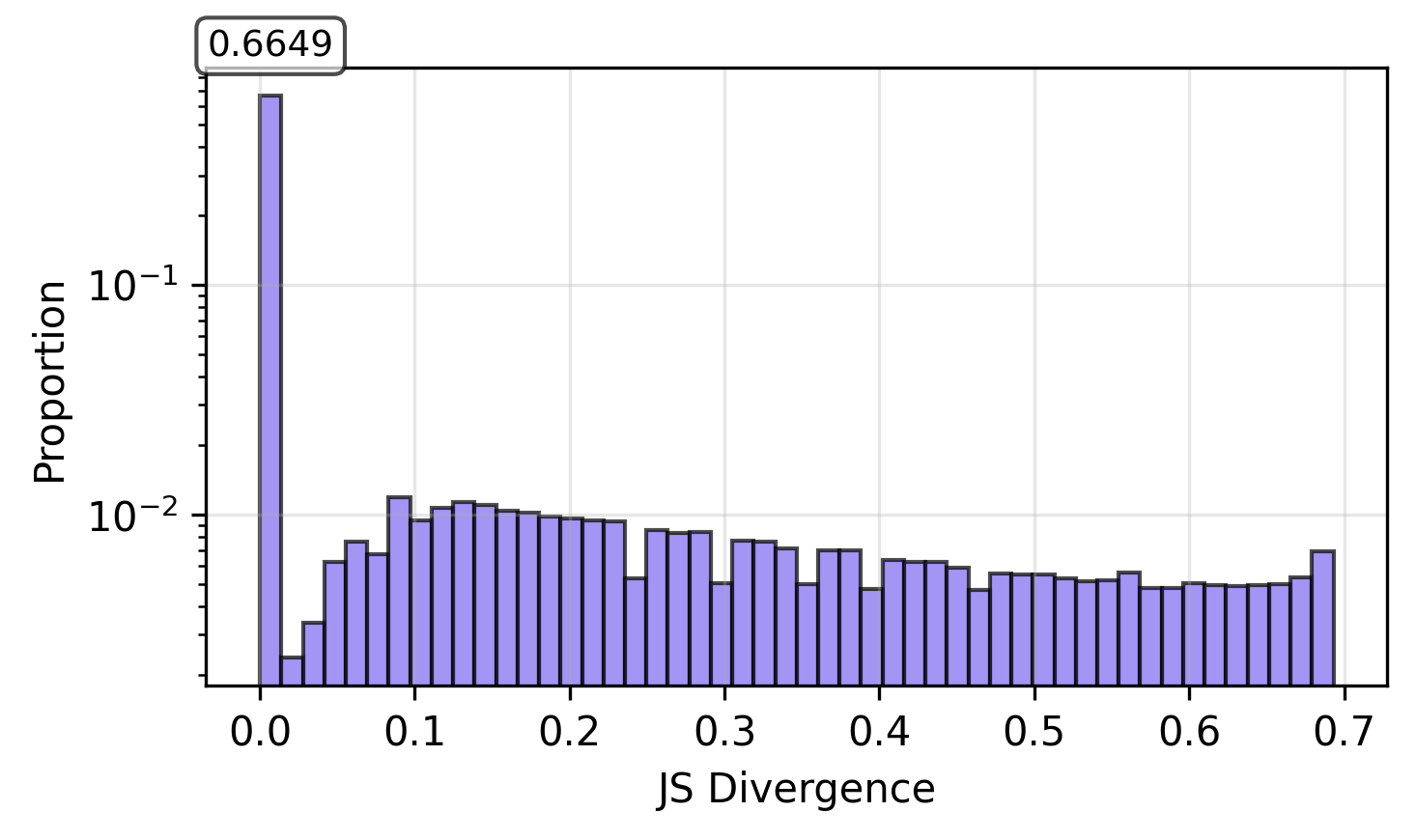}
        \caption{Qwen2.5-32B SFT: Histogram}
    \end{subfigure}
    \hspace{1cm}
    \begin{subfigure}{0.43\textwidth}
        \includegraphics[width=\linewidth]{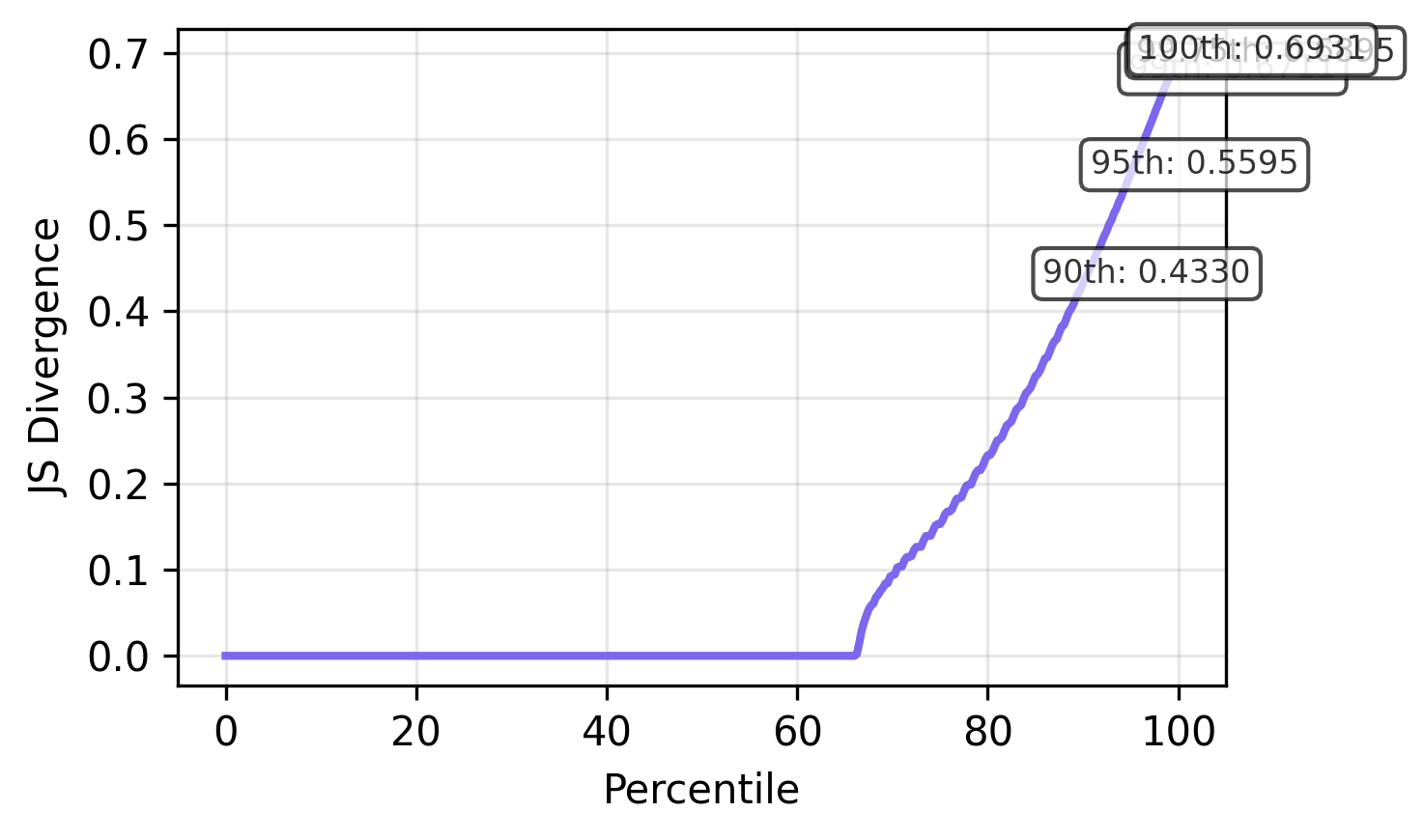}
        \caption{Qwen2.5-32B SFT: Percentiles}
    \end{subfigure}
    
    \caption{
        JS divergence distributions for Qwen2.5-32B SFT on AIME 2025.
    }
    \label{fig:js_distill_vs_rlvr_aime25}
\end{figure}

\begin{figure}[!htbp]
    \centering
    \begin{subfigure}{0.43\textwidth}
        \includegraphics[width=\linewidth]{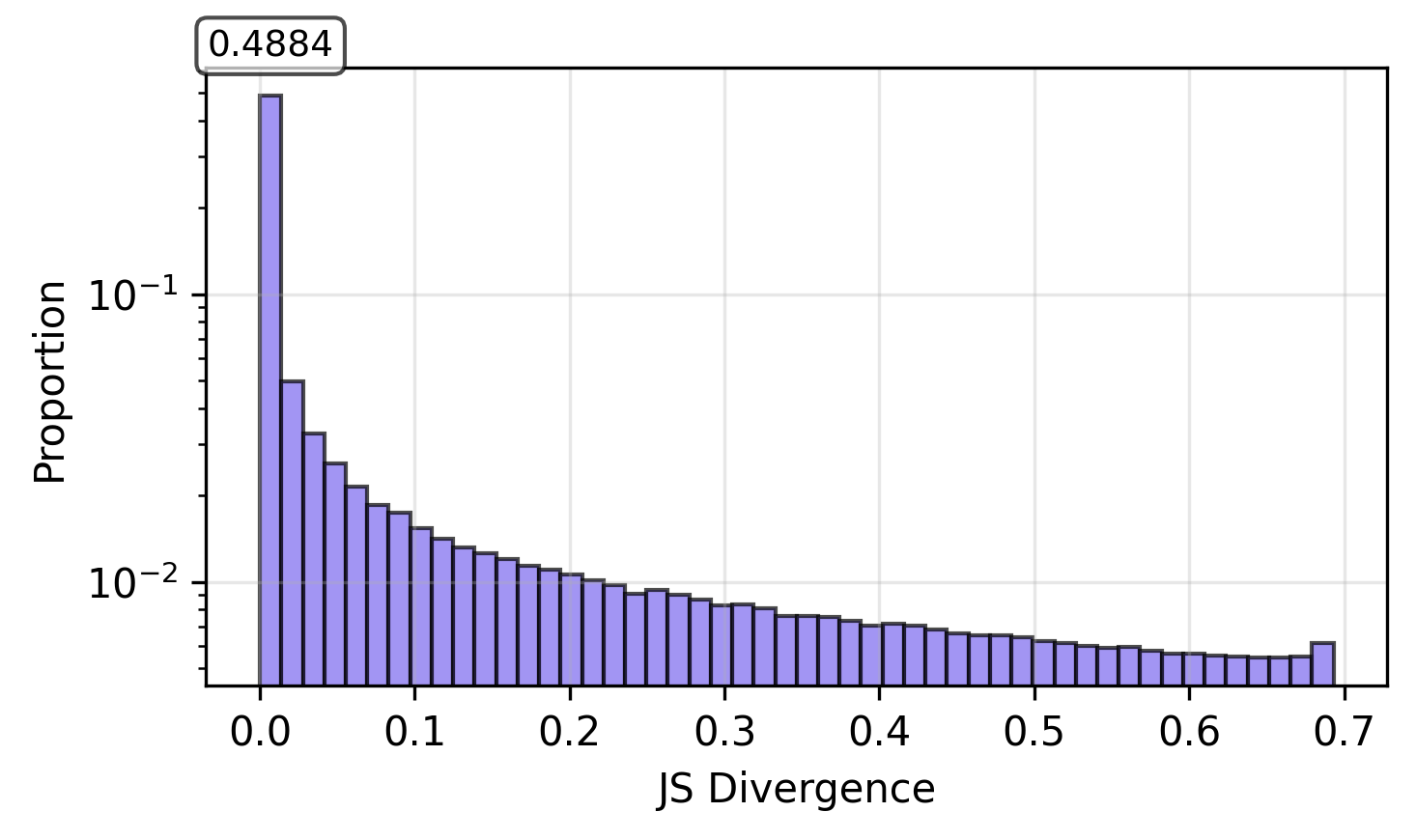}
        \caption{SFT: Histogram (topp1)}
    \end{subfigure}
    \hspace{1cm}
    \begin{subfigure}{0.43\textwidth}
        \includegraphics[width=\linewidth]{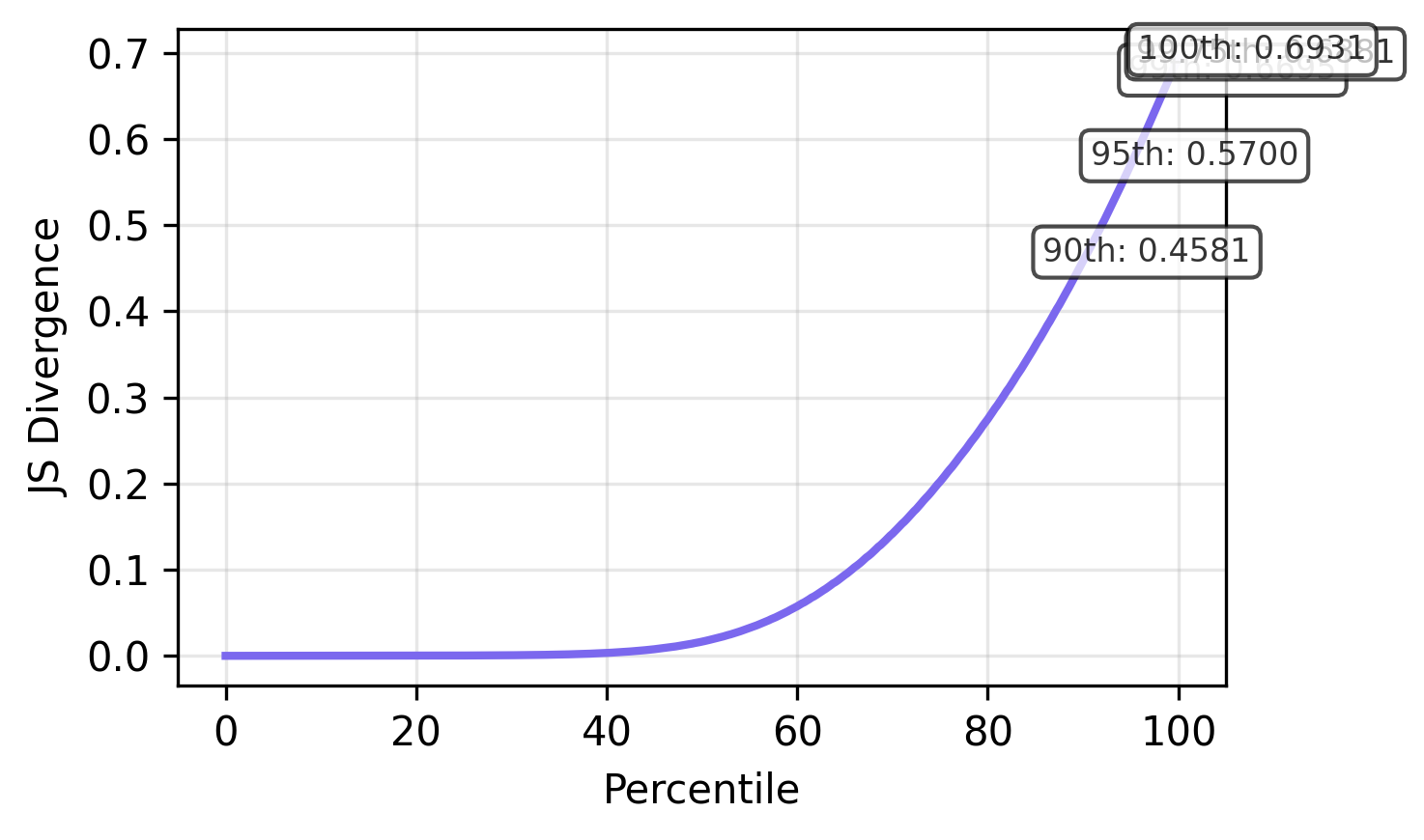}
        \caption{SFT: Percentile curve (topp1)}
    \end{subfigure}
    
    \caption{
        JS divergence distributions computed using top-$p=1$ for Qwen2.5-32B SFT on AIME 2025.
    }
    \label{fig:js_topp1_comparison_sft_aime25}
\end{figure}

\FloatBarrier
\subsection{Additional Token Distribution Analyses}
\label{subsec:additional_distributions}

This section provides supplementary and extended token distribution analyses. We first present supplementary figures for the main models (Qwen2.5-32B with DAPO and SimpleRL on AIME 2024), then extend the analysis to additional models and datasets to demonstrate the generalizability of our findings.

\subsubsection{Supplementary Figures for Main Models}

We provide additional figures for Qwen2.5-32B with DAPO and SimpleRL that complement the analyses in the main text.

\begin{figure}[!htbp]
    \centering
    \begin{subfigure}{0.40\linewidth}
        \includegraphics[width=\linewidth]{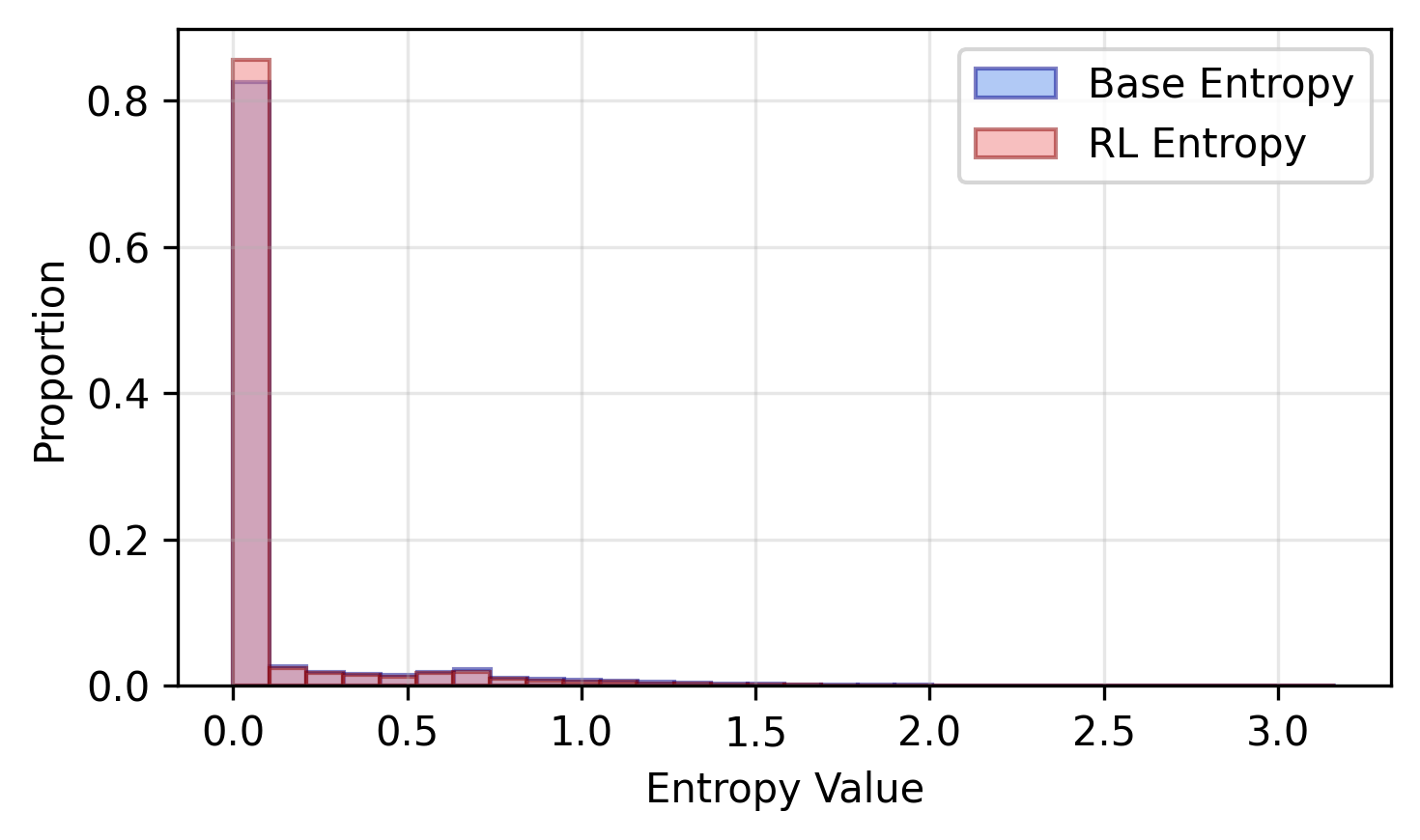}
        \caption{Low JS bin ($<0.1$).}
    \end{subfigure}
    \hspace{1cm}
    \begin{subfigure}{0.40\linewidth}
        \includegraphics[width=\linewidth]{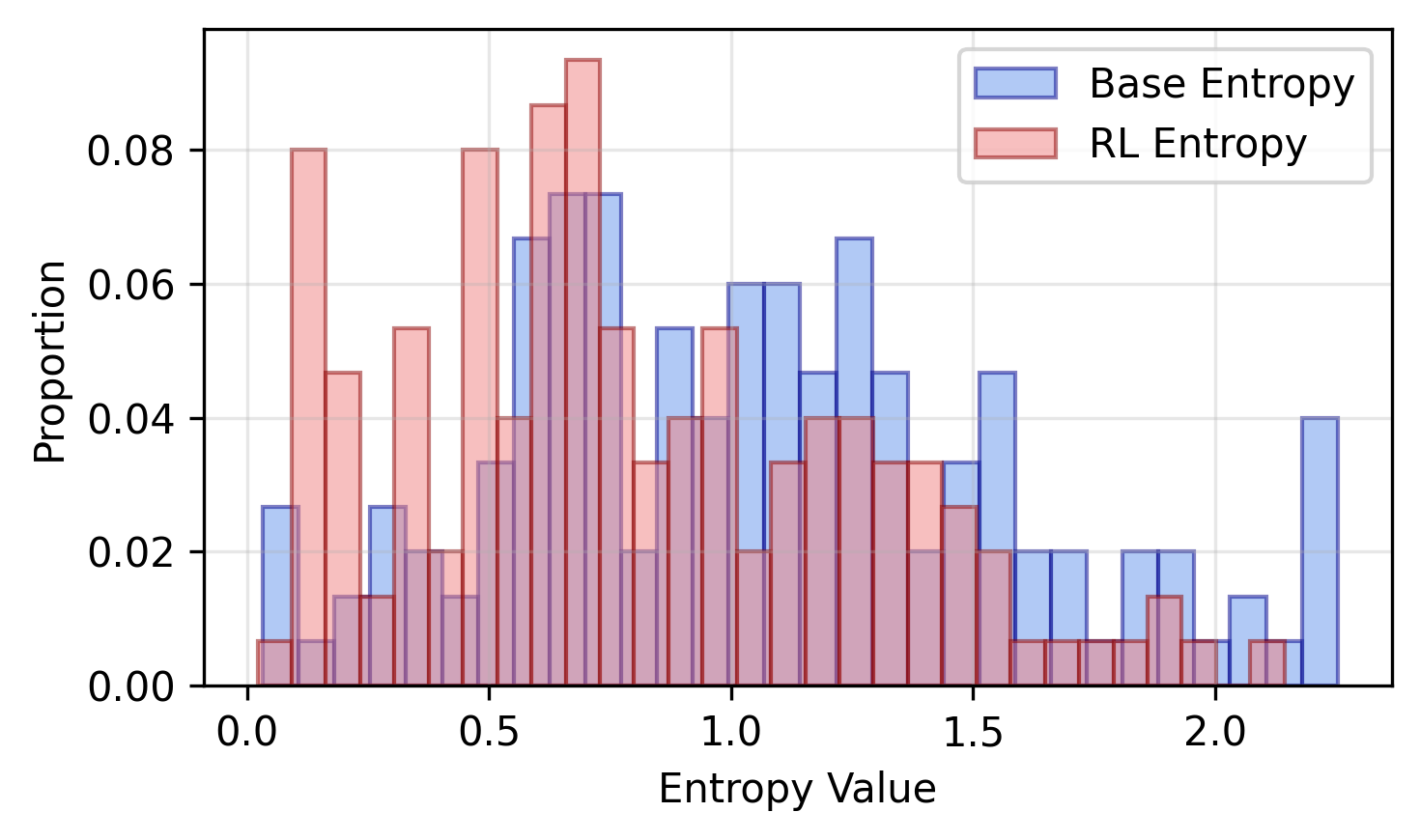}
        \caption{High JS bin ($>0.1$).}
    \end{subfigure}
    \caption{
        Entropy distributions across divergence bins for \textbf{SimpleRL}. 
        Low-divergence tokens are mostly low-entropy, while high-divergence tokens are concentrated in higher-entropy regions, reflecting a more conservative update strategy.
    }
    \label{fig:entropy_simplerl}
\end{figure}


\begin{figure}[!htbp]
    \centering
    \begin{subfigure}{0.48\linewidth}
        \includegraphics[width=\linewidth]{plots/Qwen2.5-32B_+_DAPO/AIME_2024/rl_vs_base_ranks_rl_top3_js0.100.png}
        \caption{Qwen2.5 32B DAPO.}
        \label{fig:ranks_dapo}
    \end{subfigure}
    \hfill
    \begin{subfigure}{0.48\linewidth}
        \includegraphics[width=\linewidth]{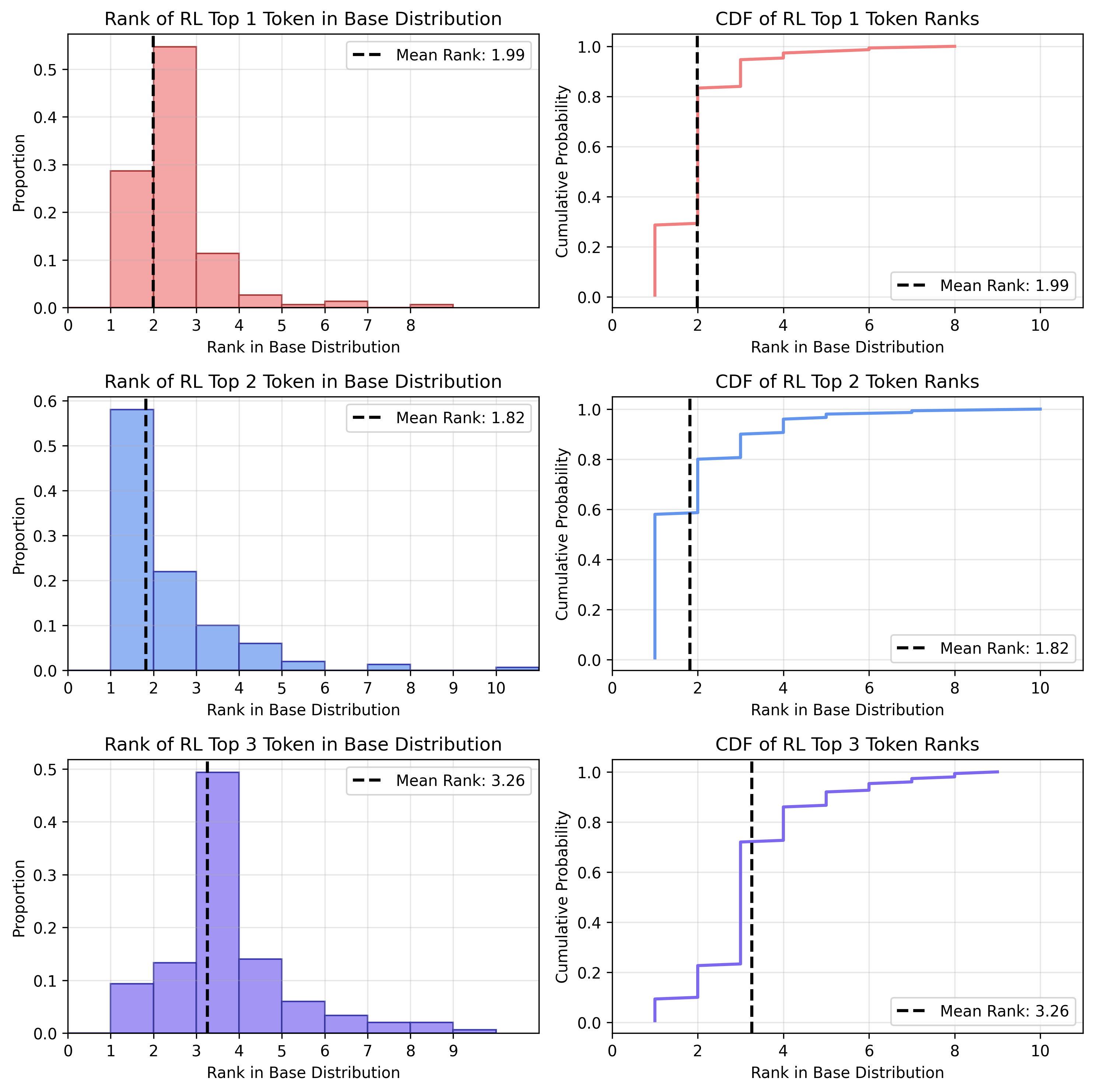}
        \caption{Qwen2.5 32B SimpleRL.}
        \label{fig:ranks_simplerl}
    \end{subfigure}
    \caption{
        Distribution of base-model ranks for RL's top-3 tokens at high-divergence positions ($\js > 0.1$).
        Most RL-selected tokens were already highly ranked in the base model, especially under SimpleRL.
    }
    \label{fig:rl_vs_base_ranks}
\end{figure}

\begin{figure}[!htbp]
    \centering
    \begin{subfigure}{0.48\linewidth}
        \includegraphics[width=\linewidth]{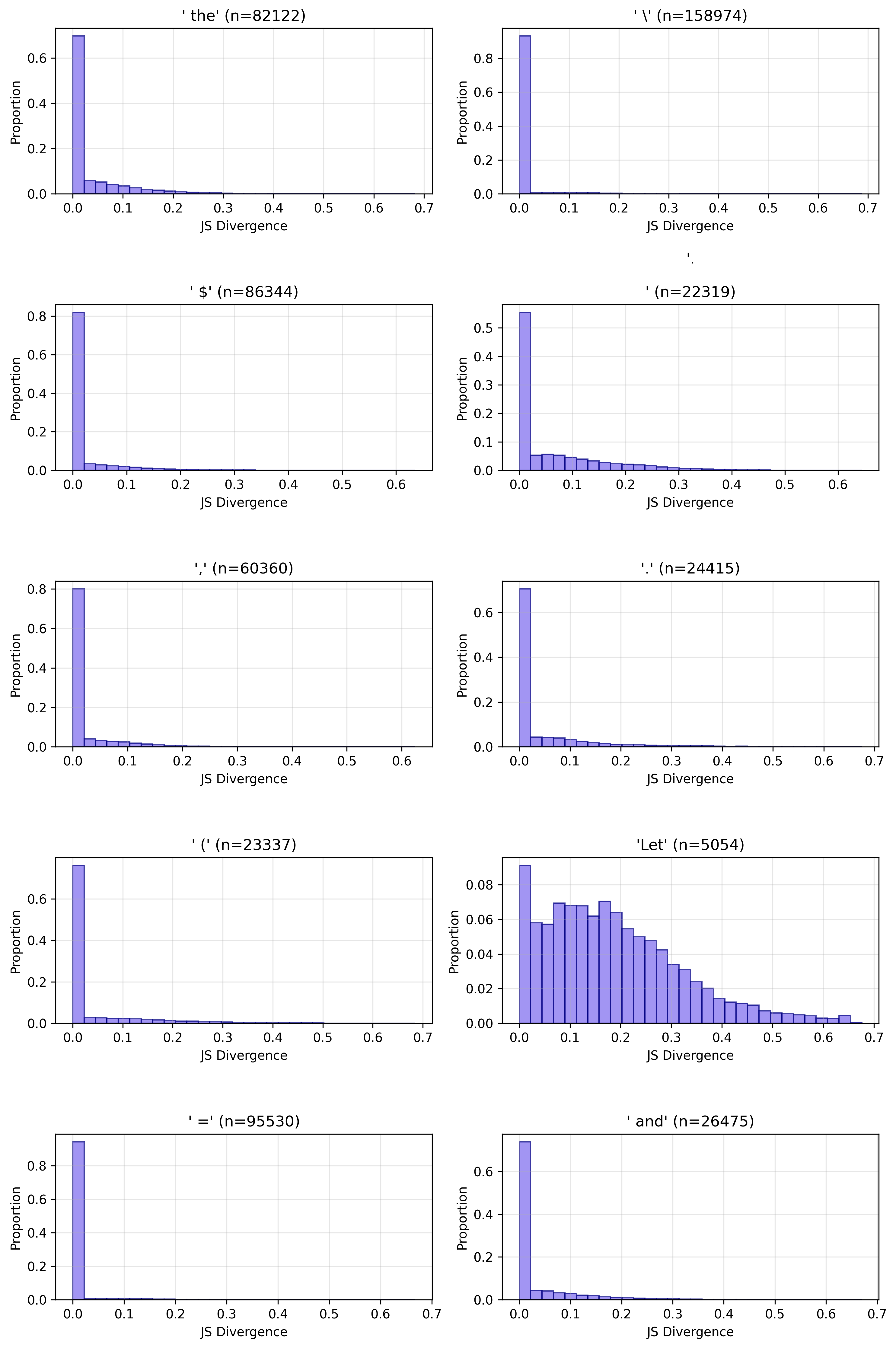}
        \caption{Frequent high JS tokens.}
        \label{fig:div_high_prob}
    \end{subfigure}
    \hfill
    \begin{subfigure}{0.48\linewidth}
        \includegraphics[width=\linewidth]{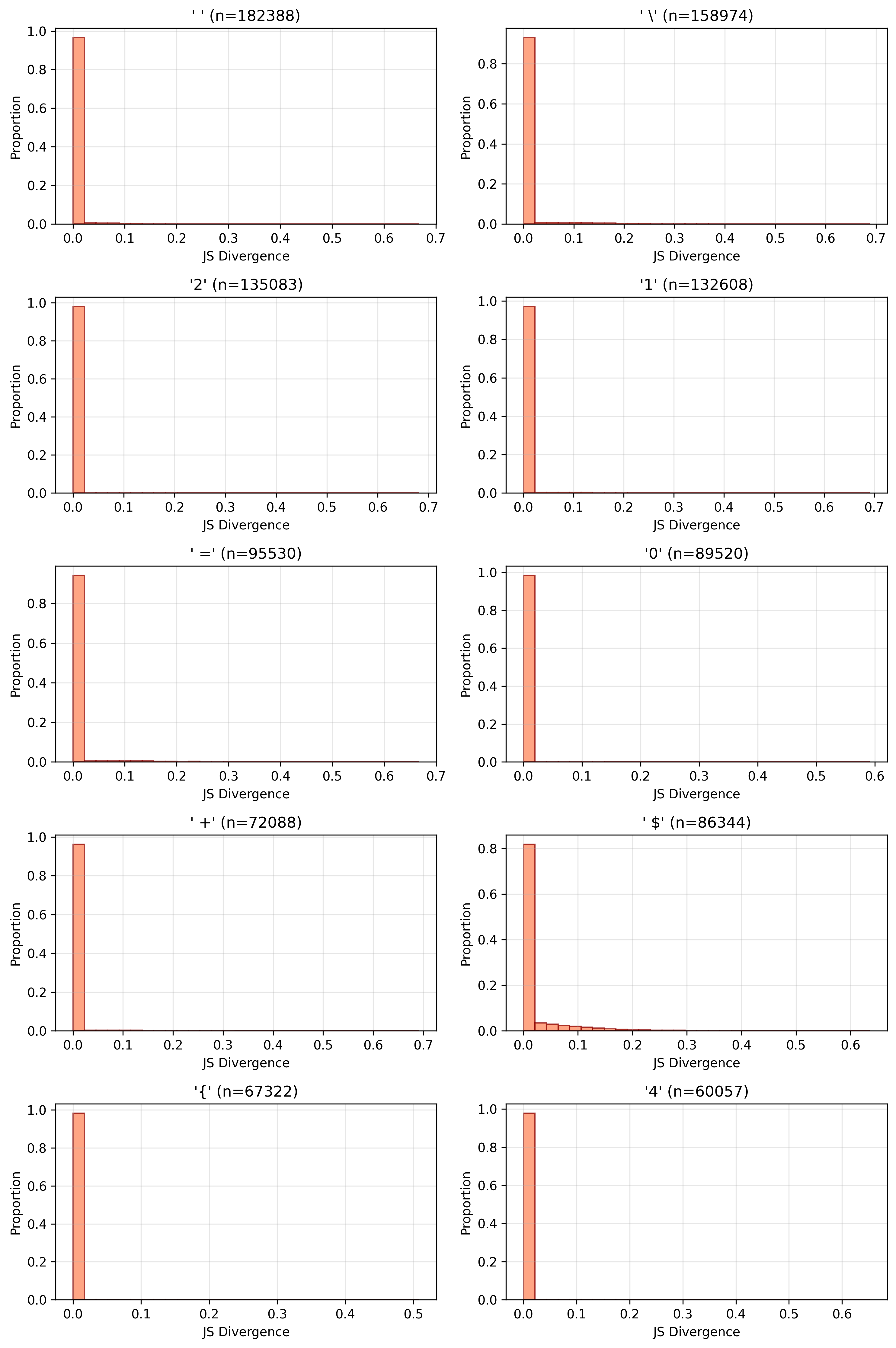}
        \caption{Frequent low JS tokens.}
        \label{fig:div_low_prob}
    \end{subfigure}
    \caption{Histogram of divergences for frequent high JS tokens and frequent low JS tokens (Qwen2.5 32B with DAPO). }
    \label{fig:high_low_js_tokens}
\end{figure}

\begin{figure}[!htbp]
\centering
\begin{subfigure}{0.52\linewidth}
\includegraphics[width=\linewidth]{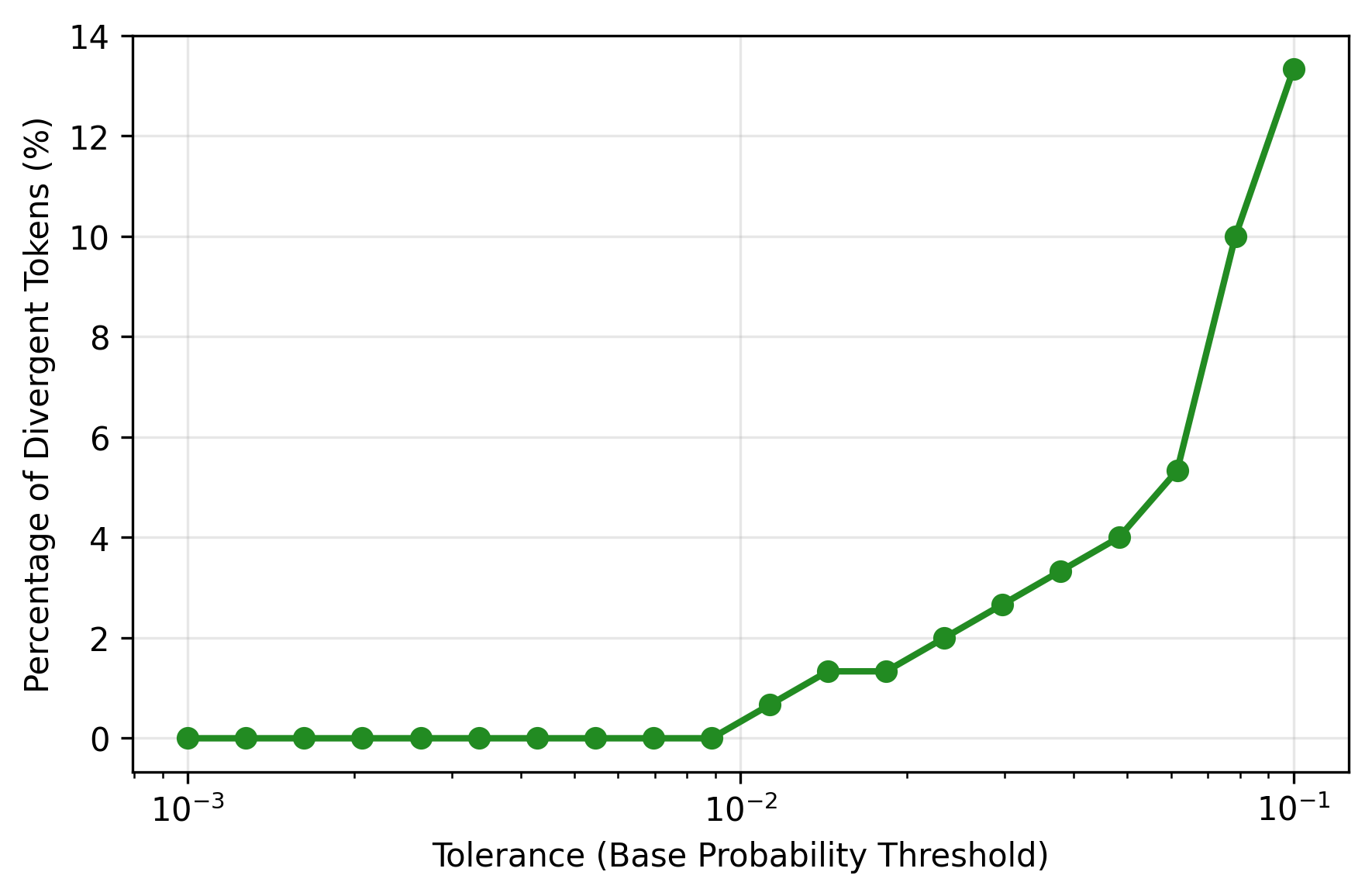}
\caption{SimpleRL}
\end{subfigure}
\caption{Percentage of divergent tokens whose RL top-1 choice had base probability below a given
threshold.}
\label{fig:simplerl_tolerance}
\end{figure}

\begin{figure}[!htbp]
\centering
\begin{subfigure}{0.47\linewidth}
\includegraphics[width=\linewidth]{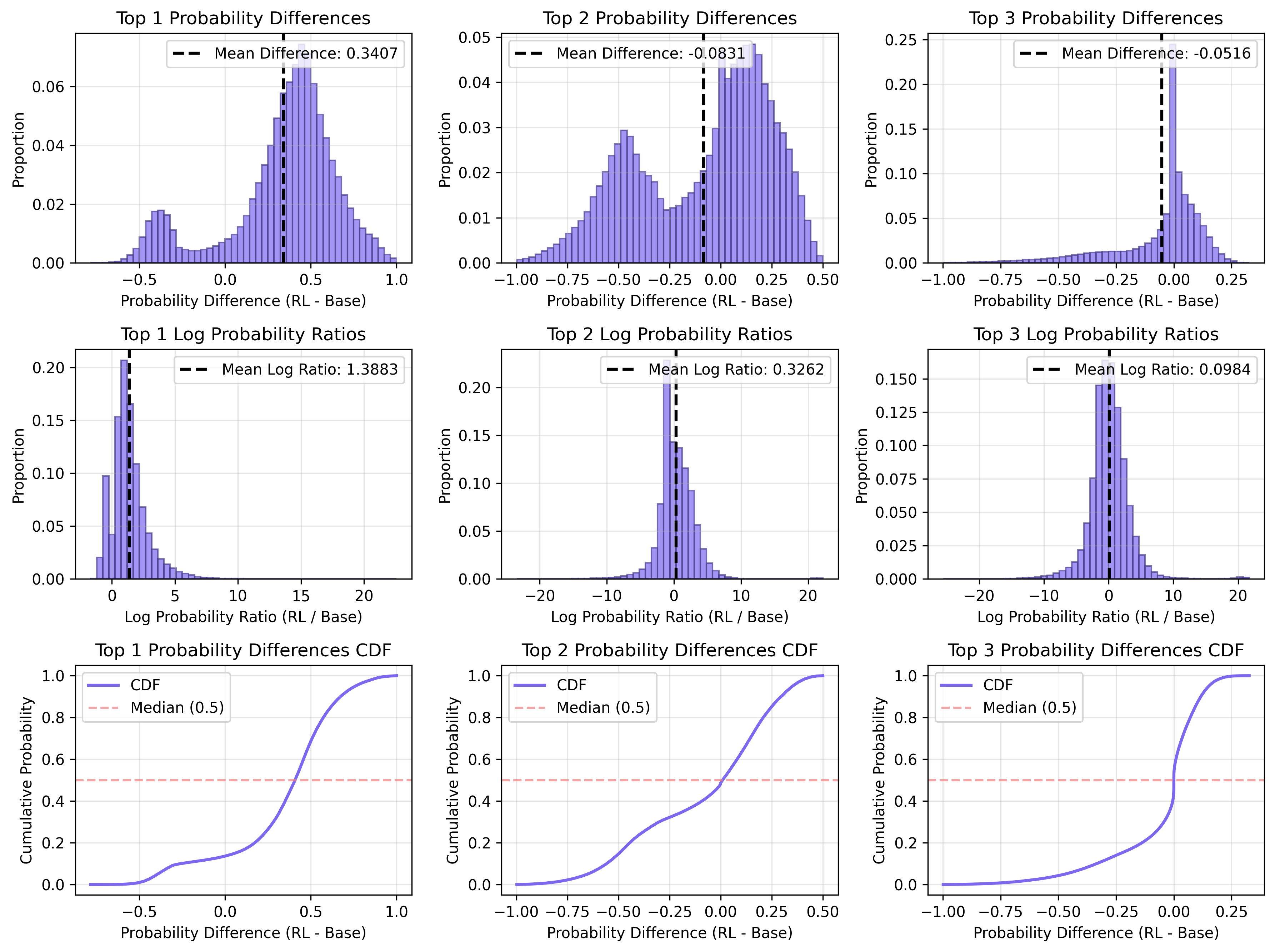}
\caption{DAPO.}
\end{subfigure}
\hfill
\begin{subfigure}{0.47\linewidth}
\includegraphics[width=\linewidth]{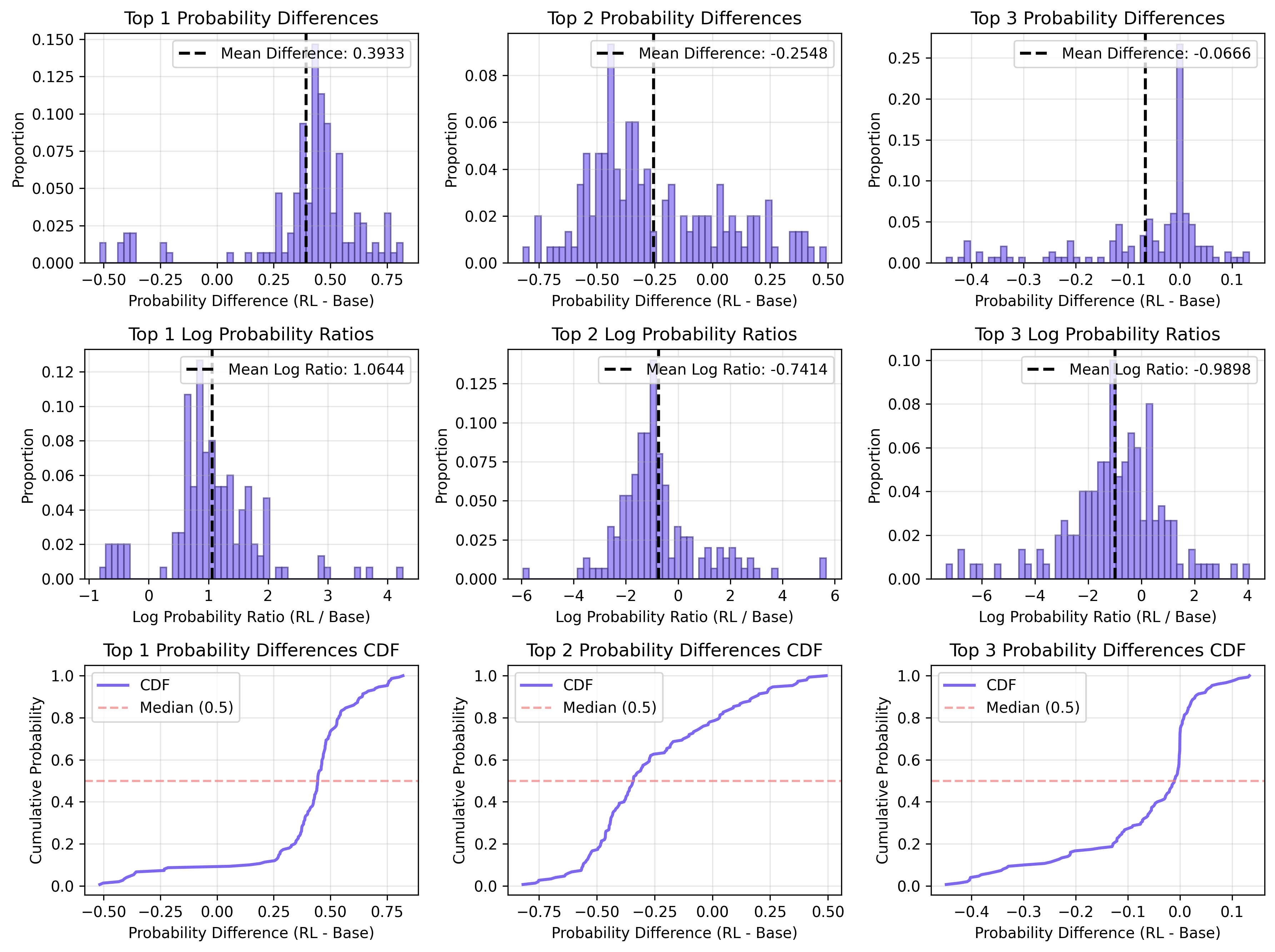}
\caption{SimpleRL.}
\end{subfigure}
    \caption{Probability differences and ratios for top-3 tokens under DAPO and SimpleRL among divergent distributions ($\js>0.1$).}
    \label{fig:probability_shifts}
\end{figure}

\begin{figure}[!htbp]
    \centering
    \begin{subfigure}{0.45\linewidth}
        \includegraphics[width=\linewidth]{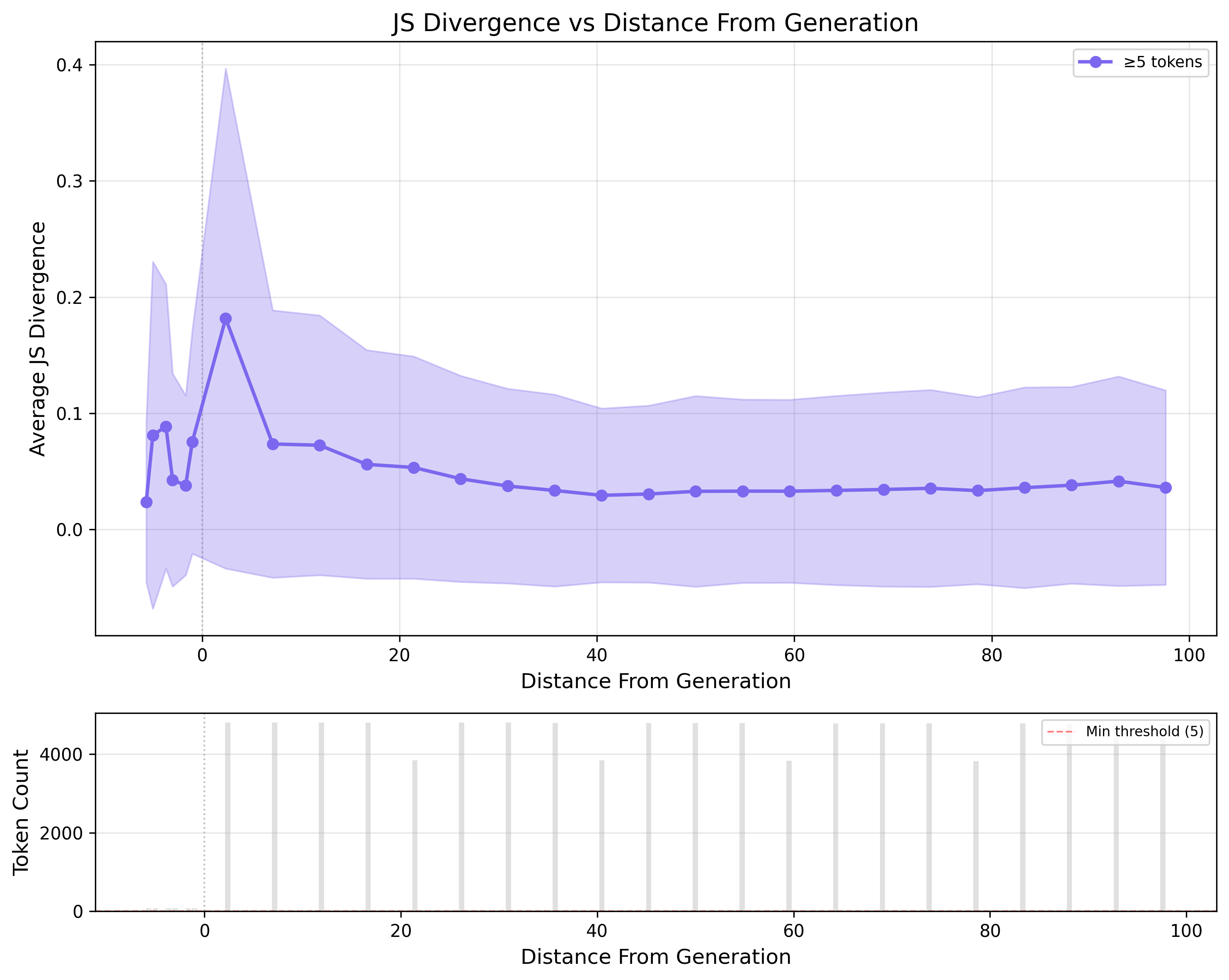}
        \caption{Near the start of generation}
    \end{subfigure}
    \hfill
    \begin{subfigure}{0.45\linewidth}
        \includegraphics[width=\linewidth]{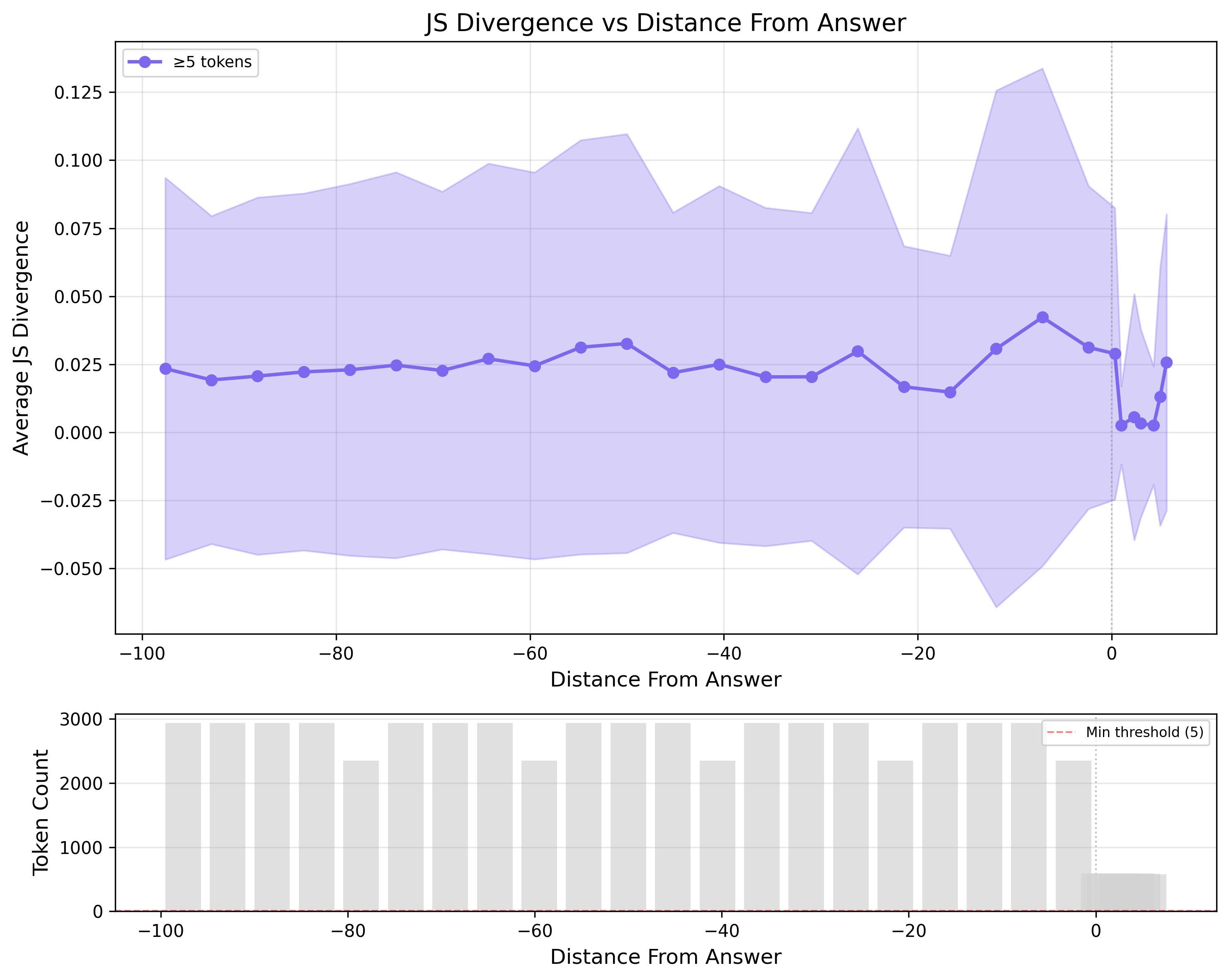}
        \caption{Near the answer span}
    \end{subfigure}
    \caption{
        Local averages of JS divergence as a function of distance from key regions (prompt beginning and answer) for Qwen2.5-32B models on AIME 2024.
        Average divergence peaks occur in the same early and late windows highlighted by the positional analysis.
    }
    \label{fig:local_js_windows}
\end{figure}

\begin{figure}[!htbp]
    \centering
    \begin{subfigure}{0.43\linewidth}
        \includegraphics[width=\linewidth]{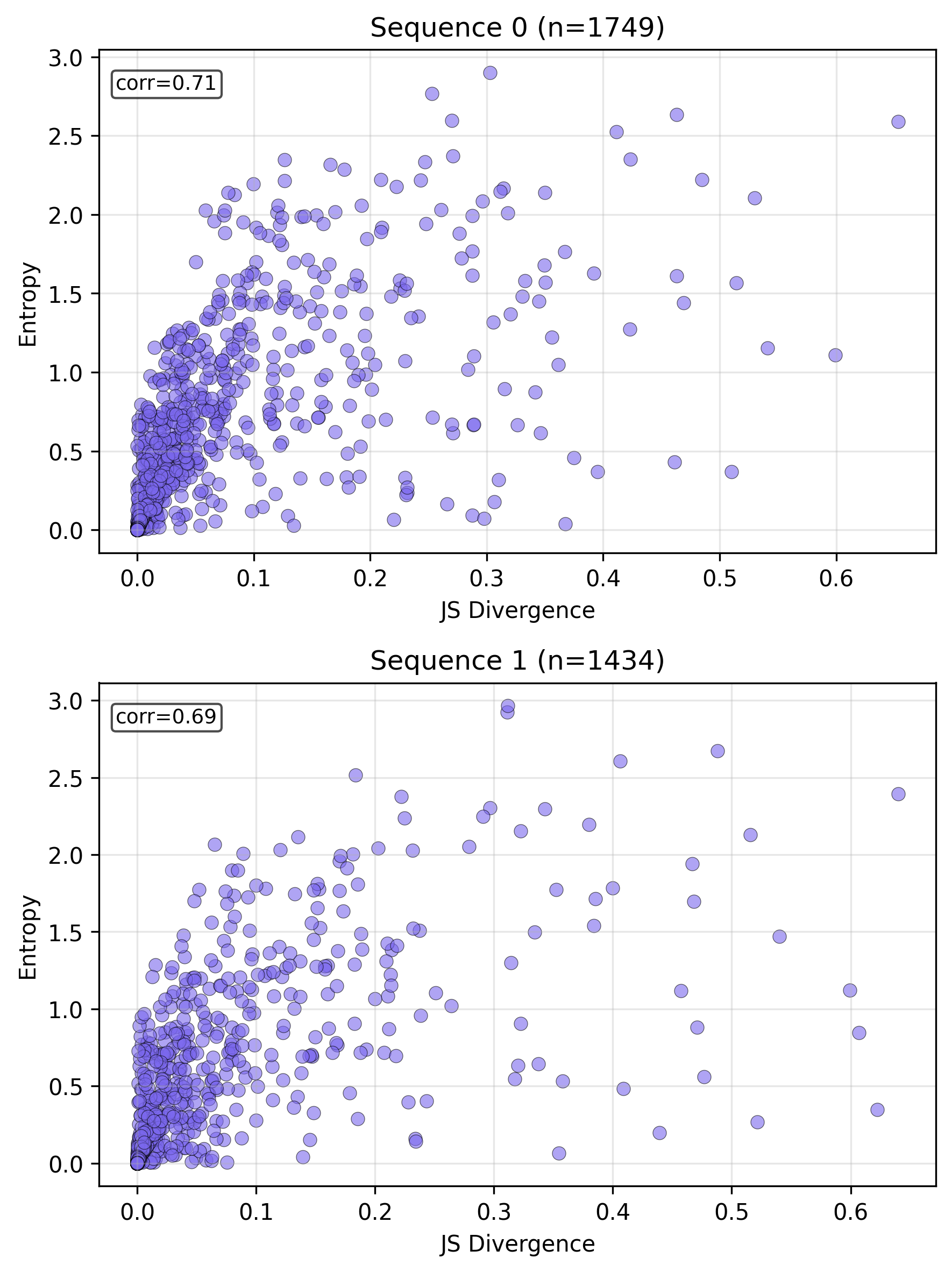}
        \caption{Qwen2.5-32B DAPO}
    \end{subfigure}
    \hspace{1cm}
    \begin{subfigure}{0.43\linewidth}
        \includegraphics[width=\linewidth]{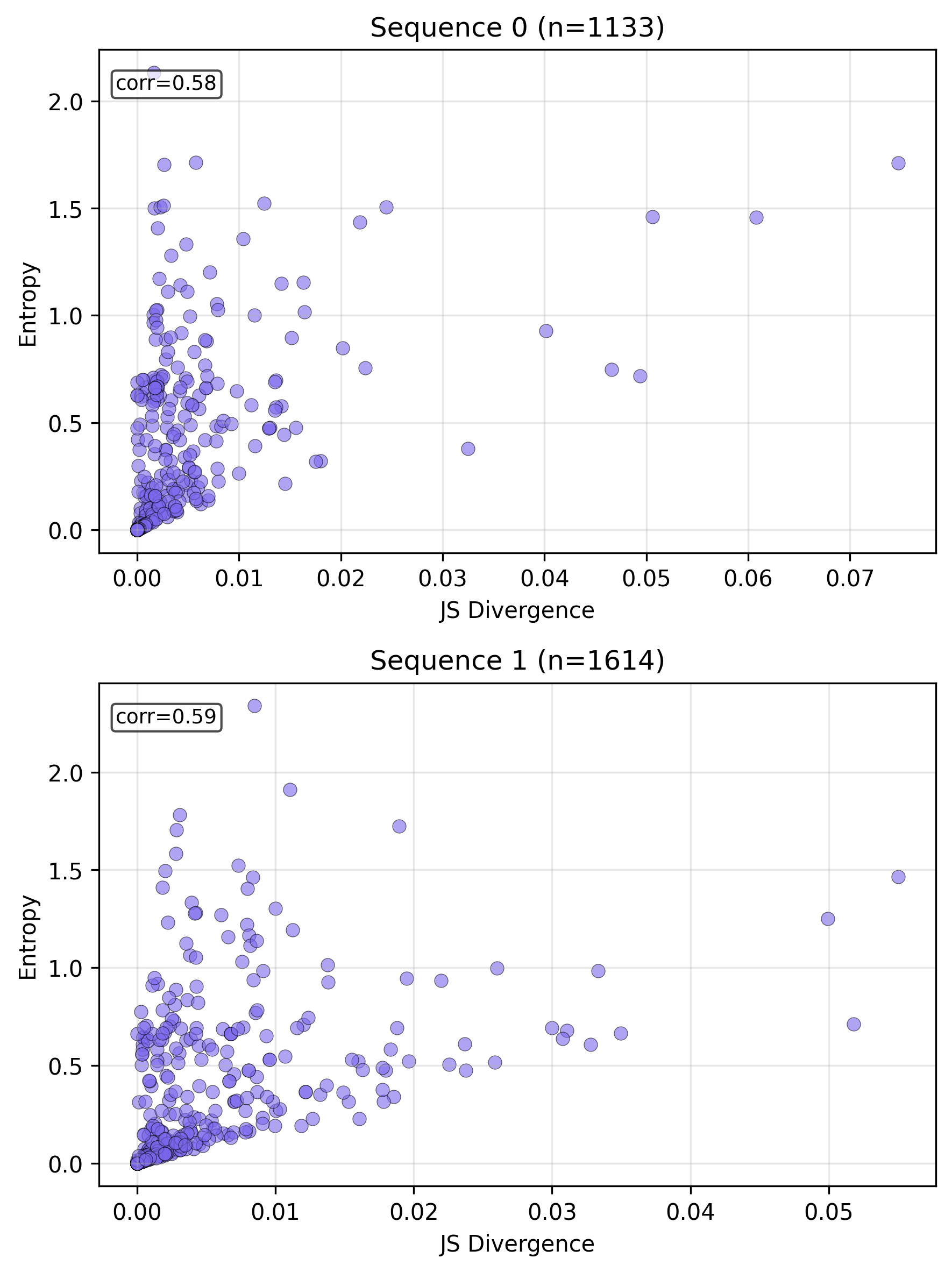}
        \caption{Qwen2.5-32B SimpleRL}
    \end{subfigure}
    \caption{
        Per-sequence scatter plots relating entropy to JS divergence for Qwen2.5-32B DAPO and SimpleRL on AIME 2024. DAPO exhibits a broader entropy spread among divergent tokens, whereas SimpleRL concentrates divergence in higher-entropy regions.
    }
    \label{fig:entropy_vs_js_scatter}
\end{figure}

\paragraph{Results on GPQA-Diamond.}
We extend our analysis to GPQA-Diamond to demonstrate the generalizability of our findings across different reasoning benchmarks. Figure~\ref{fig:gpqa_js_percentiles} shows JS divergence percentile curves and positional concentration for Qwen2.5-32B with DAPO on GPQA-Diamond, revealing consistent sparsity patterns. Figure~\ref{fig:gpqa_entropy} shows entropy distributions across divergence bins.

\begin{figure}[!htbp]
    \centering
    \begin{subfigure}{0.48\linewidth}
        \includegraphics[width=\linewidth]{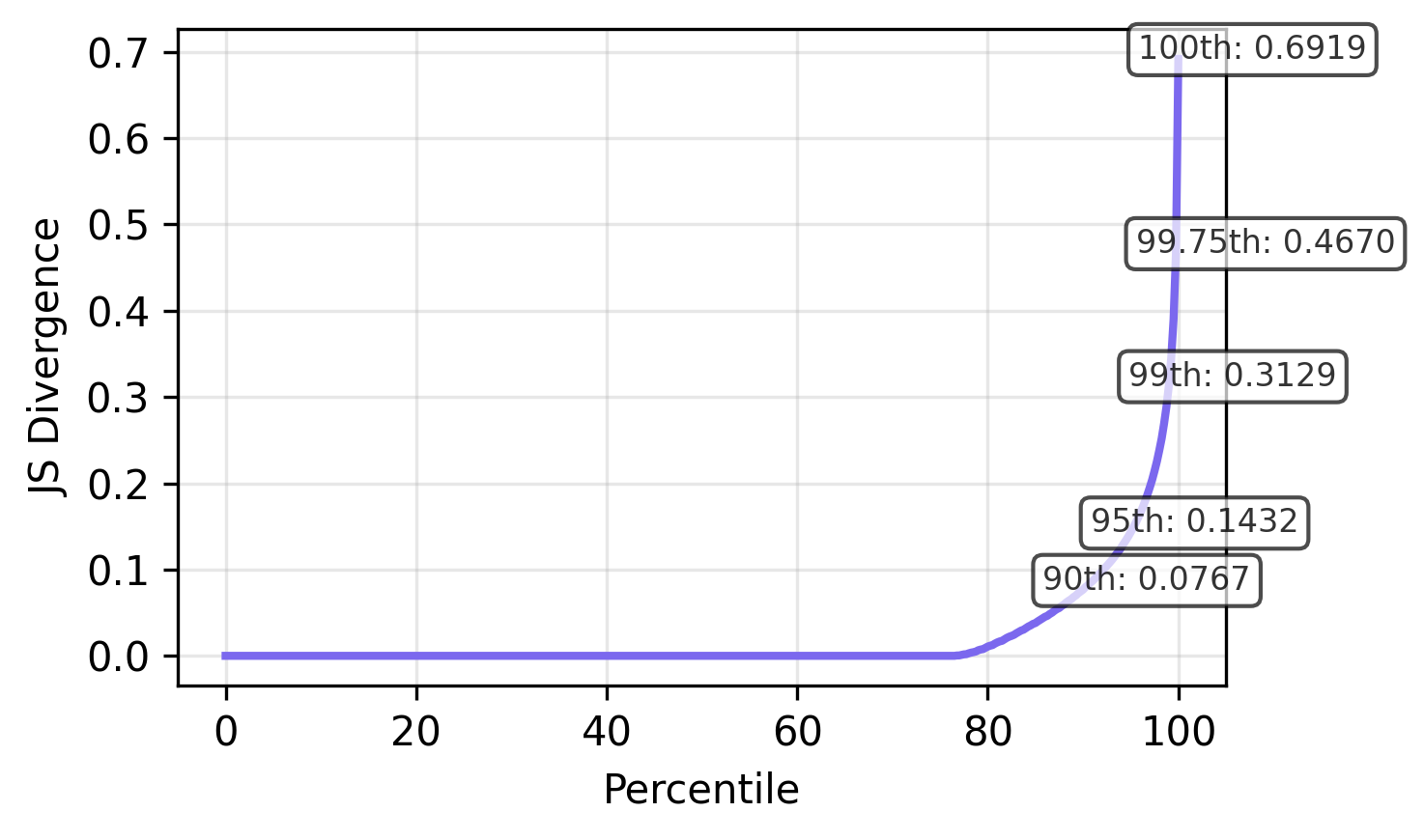}
        \caption{JS divergence percentiles}
    \end{subfigure}
    \hfill
    \begin{subfigure}{0.48\linewidth}
        \includegraphics[width=\linewidth]{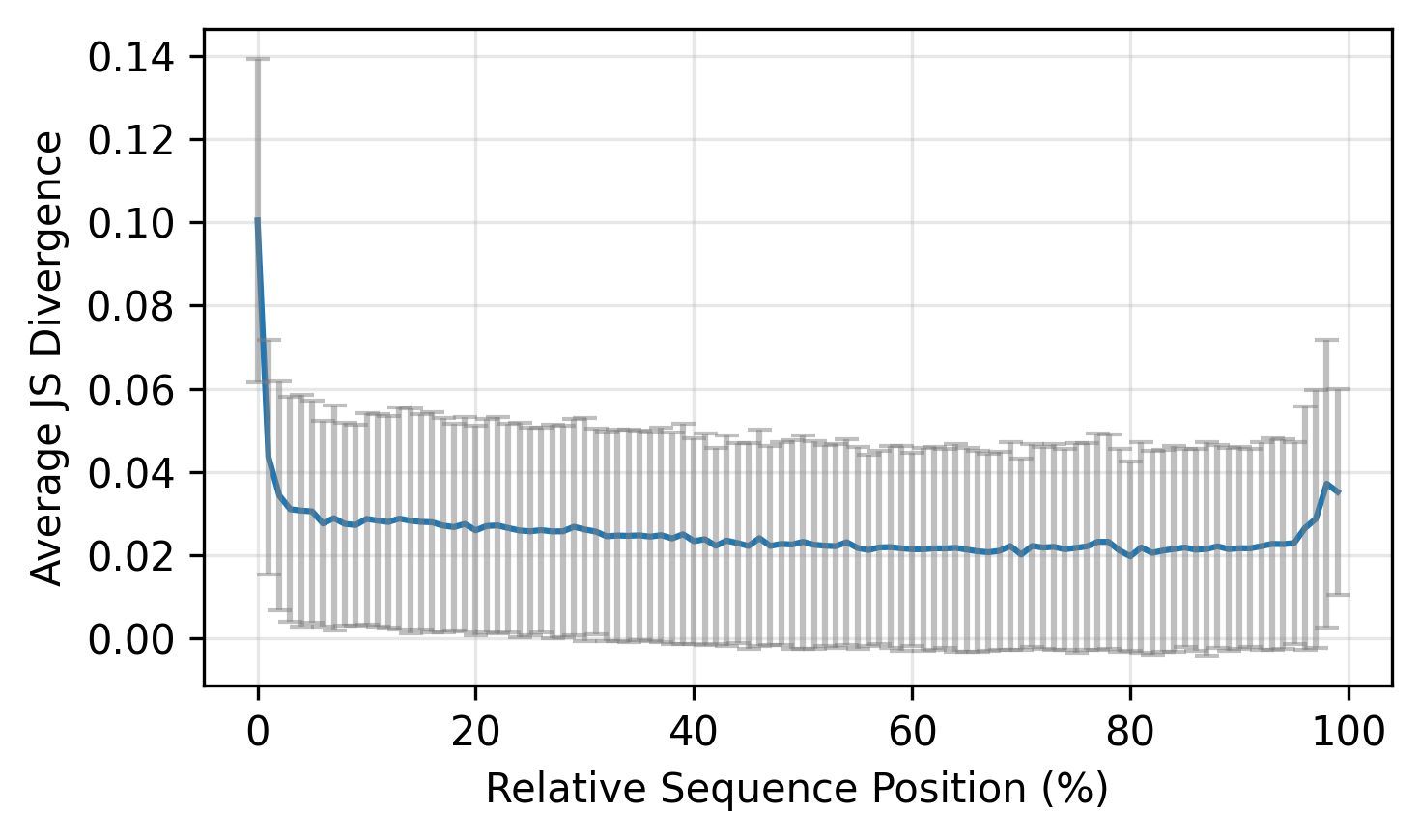}
        \caption{Positional concentration}
    \end{subfigure}
    \caption{
        JS divergence analysis for Qwen2.5-32B with DAPO on GPQA-Diamond.
        The sparsity patterns and positional concentration are consistent with findings on AIME datasets.
    }
    \label{fig:gpqa_js_percentiles}
\end{figure}

\begin{figure}[!htbp]
    \centering
    \begin{subfigure}{0.42\linewidth}
        \includegraphics[width=\linewidth]{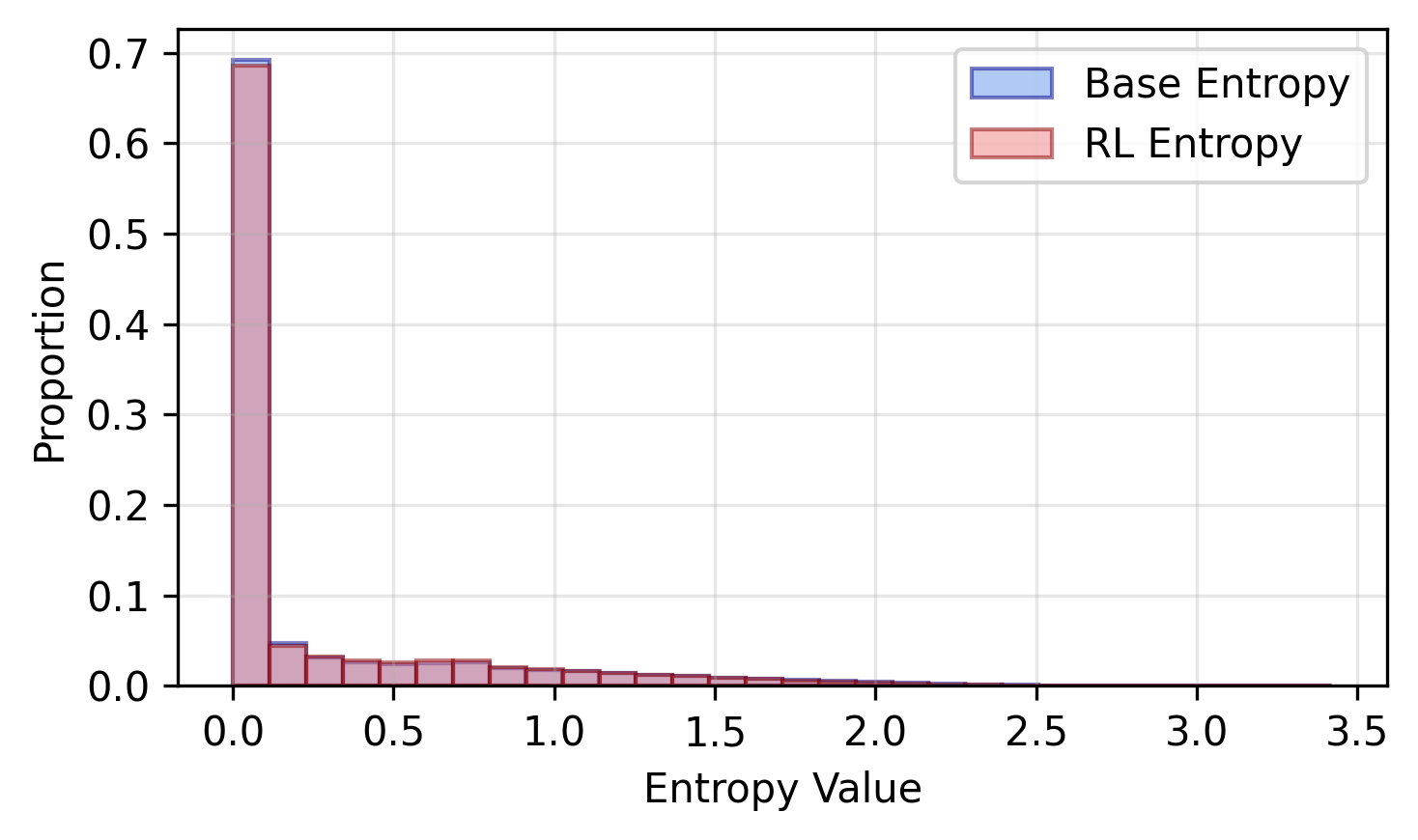}
        \caption{Low JS bin ($<0.1$).}
    \end{subfigure}
    \hspace{1cm}
    \begin{subfigure}{0.42\linewidth}
        \includegraphics[width=\linewidth]{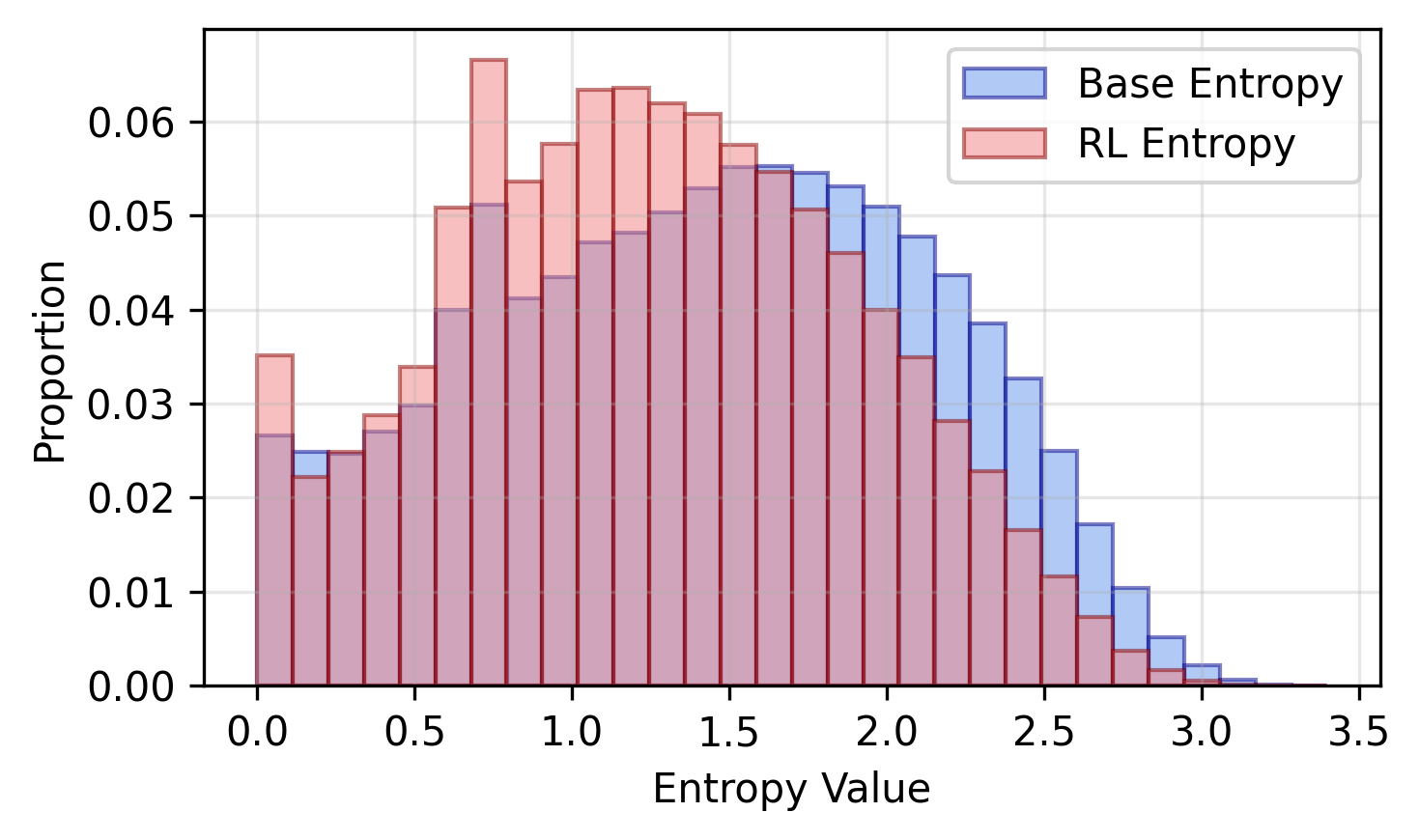}
        \caption{High JS bin ($>0.1$).}
    \end{subfigure}
    \caption{
        Entropy distributions across divergence bins for Qwen2.5-32B with DAPO on GPQA-Diamond.
        Patterns are consistent with those observed on AIME datasets.
    }
    \label{fig:gpqa_entropy}
\end{figure}

\FloatBarrier
\paragraph{Effect of Top-$p$ Sampling on JS Divergence.}
To verify that our findings are robust to different top-$p$ sampling settings, we compare JS divergence distributions across different sampling configurations. The default setting uses top-$p=0.7$ for sampling. We also evaluate configurations where sampling is performed with top-$p=0.8$ and top-$p=0.9$. Figure~\ref{fig:js_topp_comparison} shows that the sparsity patterns remain consistent across different sampling top-$p$ values, confirming that our main claims are not sensitive to the specific sampling top-$p$ value used.

\begin{figure}[!htbp]
    \centering
    \begin{subfigure}{0.4\textwidth}
        \includegraphics[width=\linewidth]{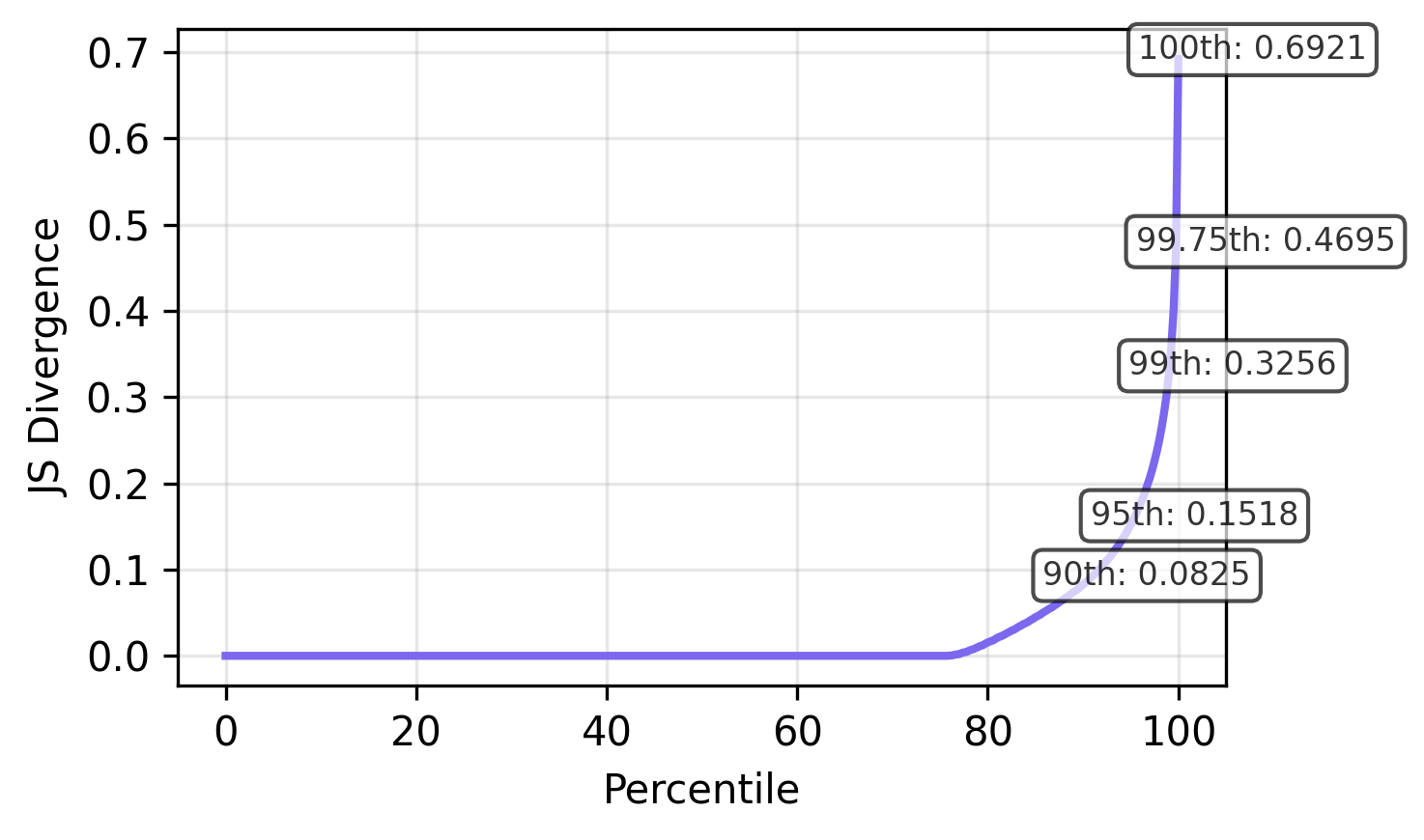}
        \caption{Sampling top-$p=0.8$}
    \end{subfigure}
    \hspace{1cm}
    \begin{subfigure}{0.4\textwidth}
        \includegraphics[width=\linewidth]{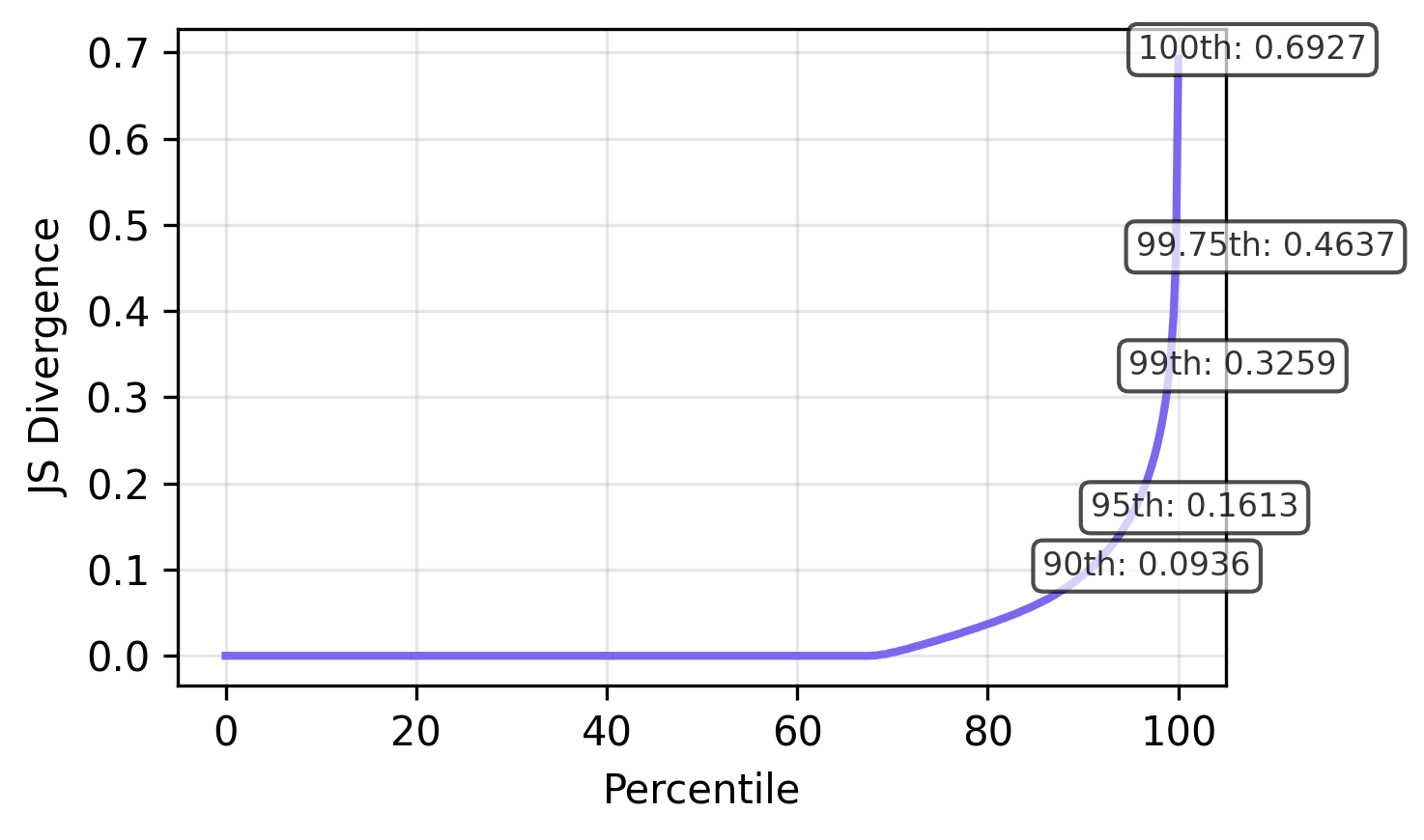}
        \caption{Sampling top-$p=0.9$}
    \end{subfigure}
    \caption{
        JS divergence percentile curves for Qwen2.5-32B with DAPO on AIME 2024 under different top-$p$ sampling settings.
        The sparsity patterns remain consistent across different sampling top-$p$ values, indicating robustness to the specific sampling configuration.
    }
    \label{fig:js_topp_comparison}
\end{figure}

\paragraph{JS Divergence on AIME 2025.}
Figure~\ref{fig:js_32b_aime25} shows JS divergence percentile curves for Qwen2.5-32B with DAPO and SimpleRL on AIME 2025, demonstrating consistent sparsity patterns across datasets.

\begin{figure}[!htbp]
    \centering
    \begin{subfigure}{0.40\textwidth}
        \includegraphics[width=\linewidth]{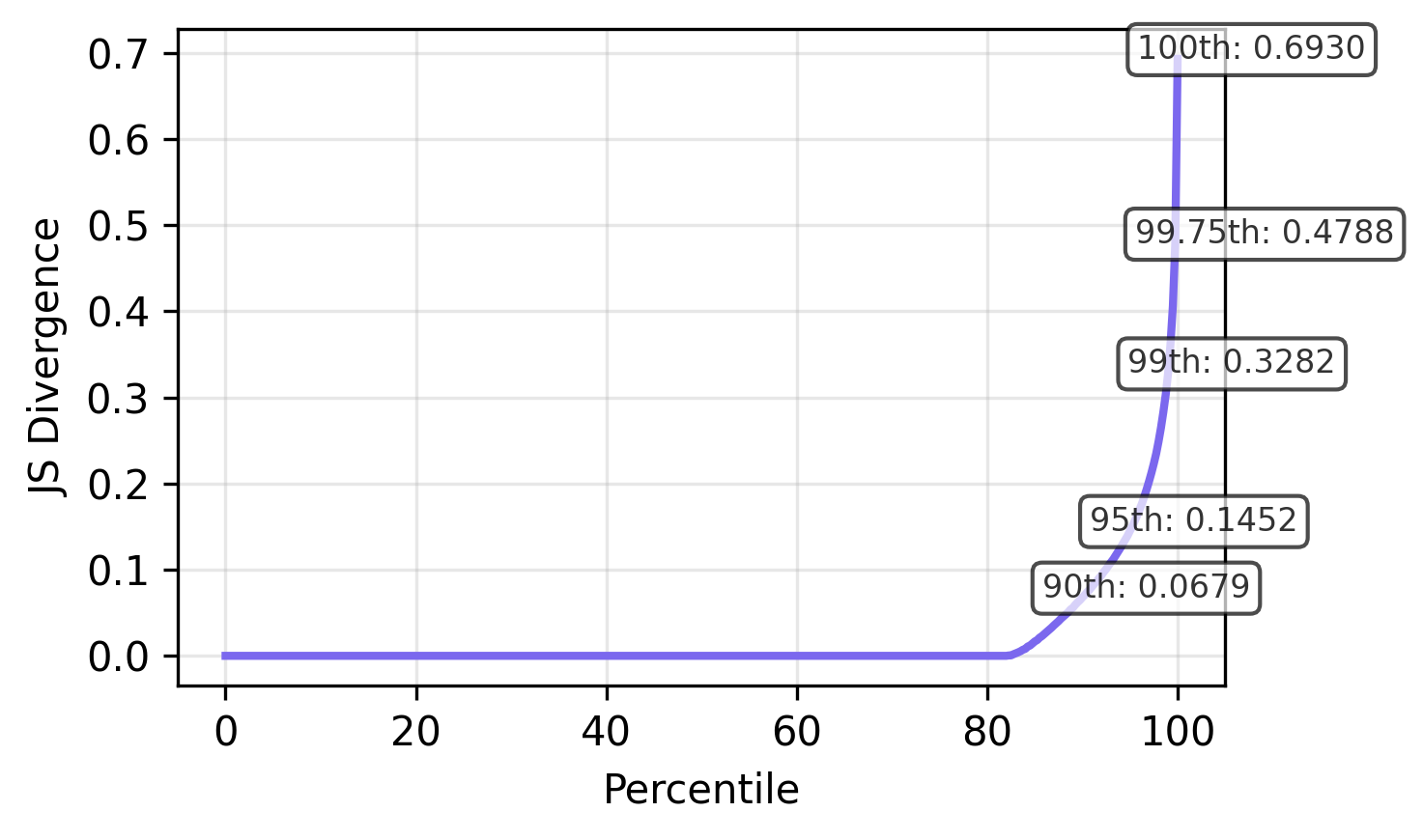}
        \caption{DAPO: Percentile curve}
    \end{subfigure}
    \hspace{1cm}
    \begin{subfigure}{0.40\textwidth}
        \includegraphics[width=\linewidth]{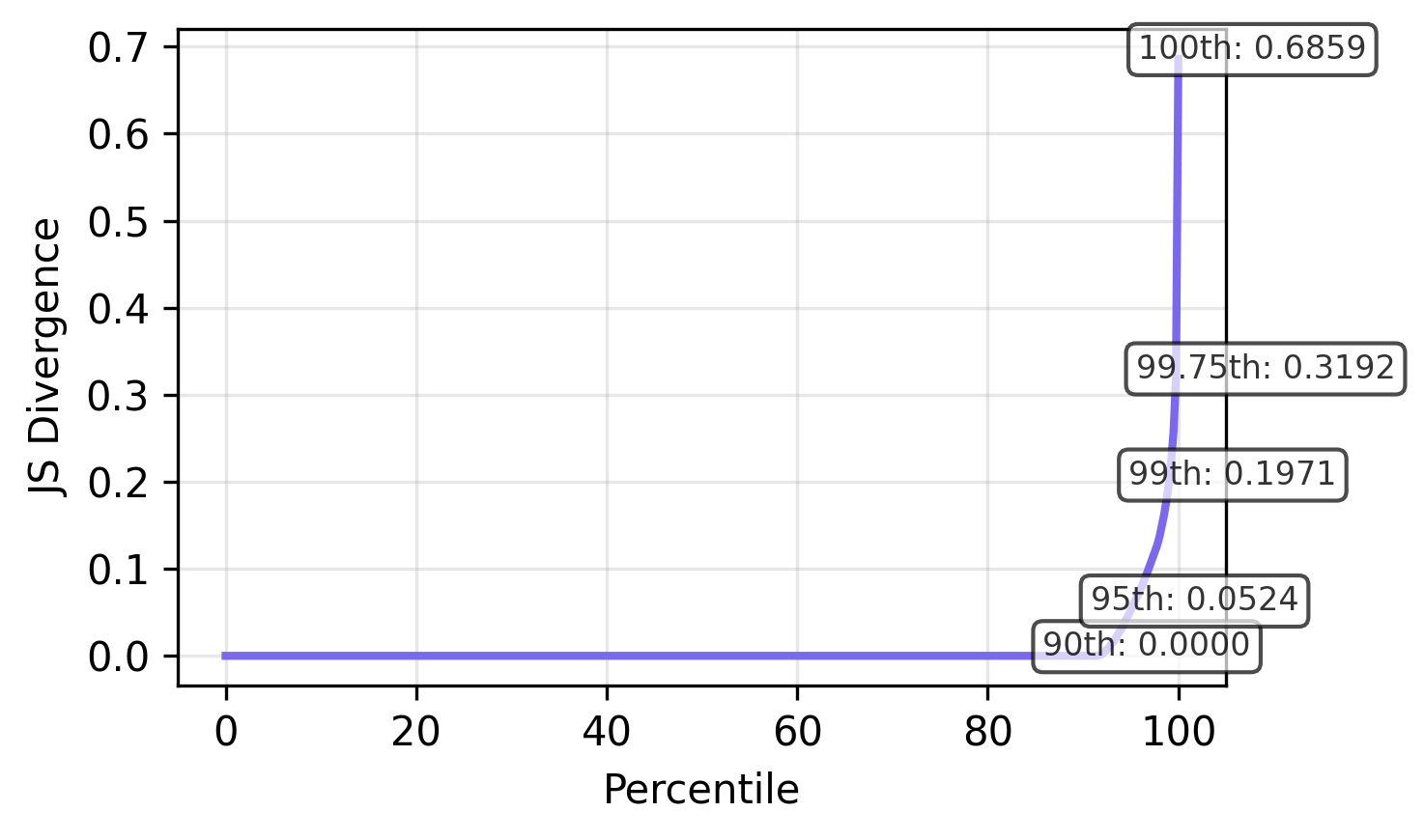}
        \caption{SimpleRL: Percentile curve}
    \end{subfigure}
    
    \caption{
        JS divergence distributions for Qwen2.5-32B with DAPO and SimpleRL on AIME 2025.
        The sparsity patterns are consistent with those observed on AIME 2024, confirming the robustness of our findings across datasets.
    }
    \label{fig:js_32b_aime25}
\end{figure}

\FloatBarrier
\paragraph{Effect of Top-$p$ Truncation on JS Divergence.}
To verify that our use of top-$p$ truncated distributions (with $\text{topp}=0.7$) does not significantly impact our findings, we compare JS divergence distributions computed using the estimated full distribution (top-$p=1$) under the original sampling setting (top-$p=0.7$) with those using truncated distributions. Figure~\ref{fig:js_topp1_comparison} shows that the patterns remain consistent: distributional shifts are highly sparse regardless of truncation, with the vast majority of tokens showing near-zero divergence.

\begin{figure}[!htbp]
    \centering
    \begin{subfigure}{0.42\textwidth}
        \includegraphics[width=\linewidth]{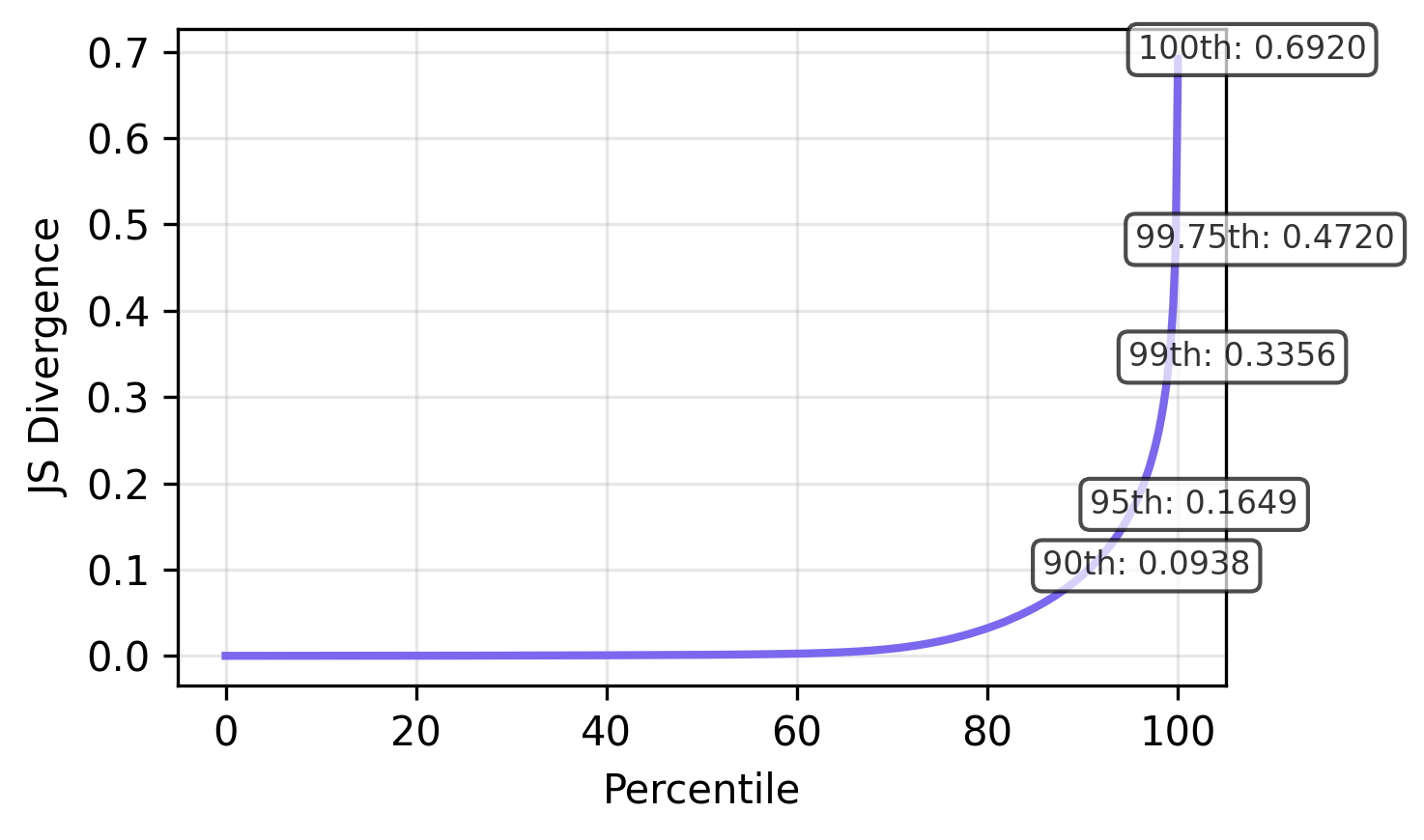}
        \caption{DAPO: Percentile curve (topp1)}
    \end{subfigure}
    \hspace{1cm}
    \begin{subfigure}{0.42\textwidth}
        \includegraphics[width=\linewidth]{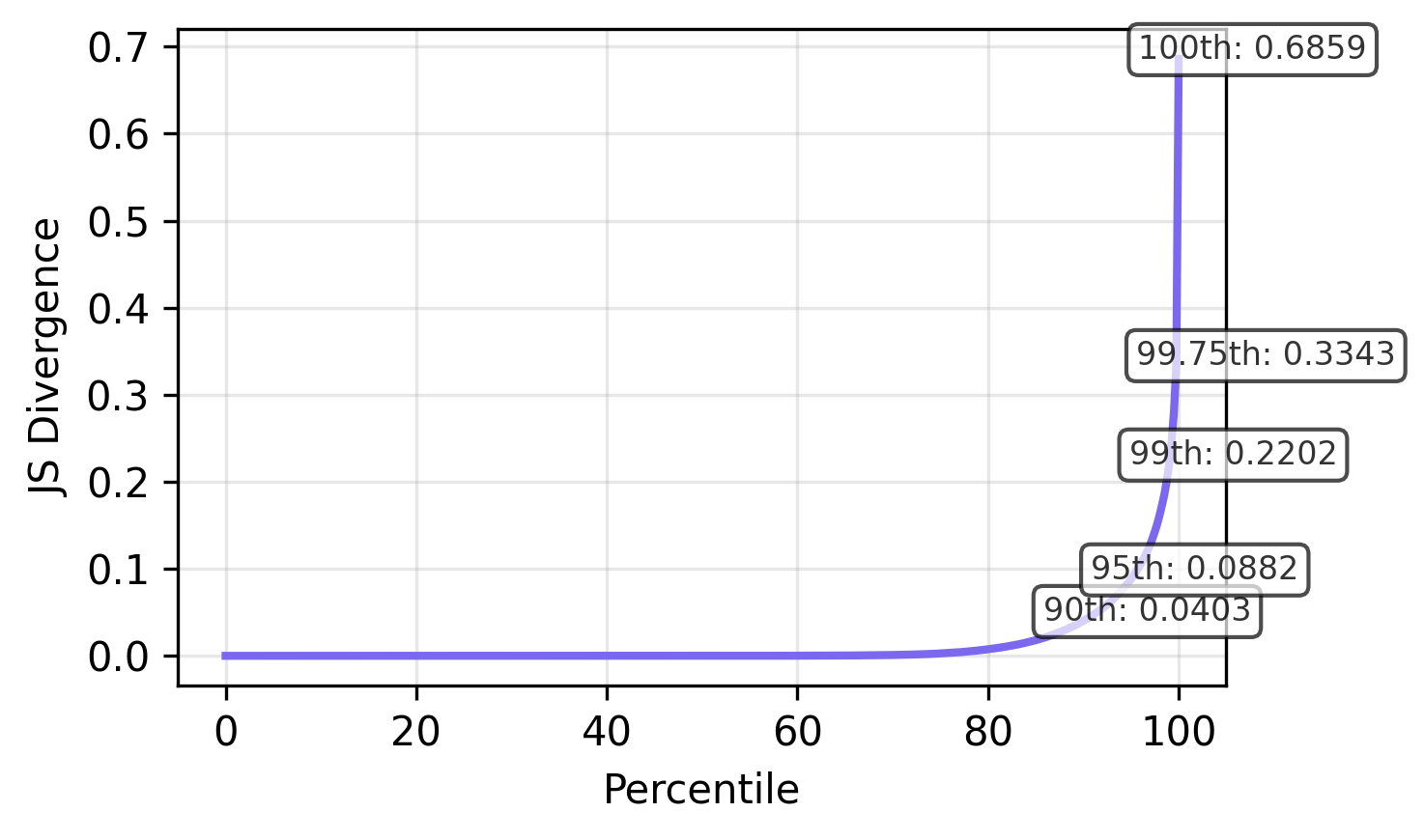}
        \caption{SimpleRL: Percentile curve (topp1)}
    \end{subfigure}
    
    \caption{
        JS divergence distributions computed using top-$p=1$ for Qwen2.5-32B with DAPO and SimpleRL on AIME 2025.
        The sparsity patterns are consistent with those observed using top-$p$ truncated distributions, confirming that truncation does not significantly impact our findings.
    }
    \label{fig:js_topp1_comparison}
\end{figure}

\FloatBarrier
\subsubsection{Comparison of DAPO Variants: Clip-Higher Settings}
\label{subsec:dapo_clip_comparison}

DAPO's clip-higher mechanism controls the degree of upper clip in the PPO updates. We compare two Qwen2.5-Math-7B models trained with DAPO: one with the default clip-higher setting (0.28) and another with a more restrictive setting (0.2). Figure~\ref{fig:js_dapo_variants} shows their JS divergence distributions on AIME 2024 and AIME 2025, revealing how the clip-higher parameter affects distributional shifts across datasets.

\begin{figure}[!htbp]
    \centering
    \begin{subfigure}{0.42\textwidth}
        \includegraphics[width=\linewidth]{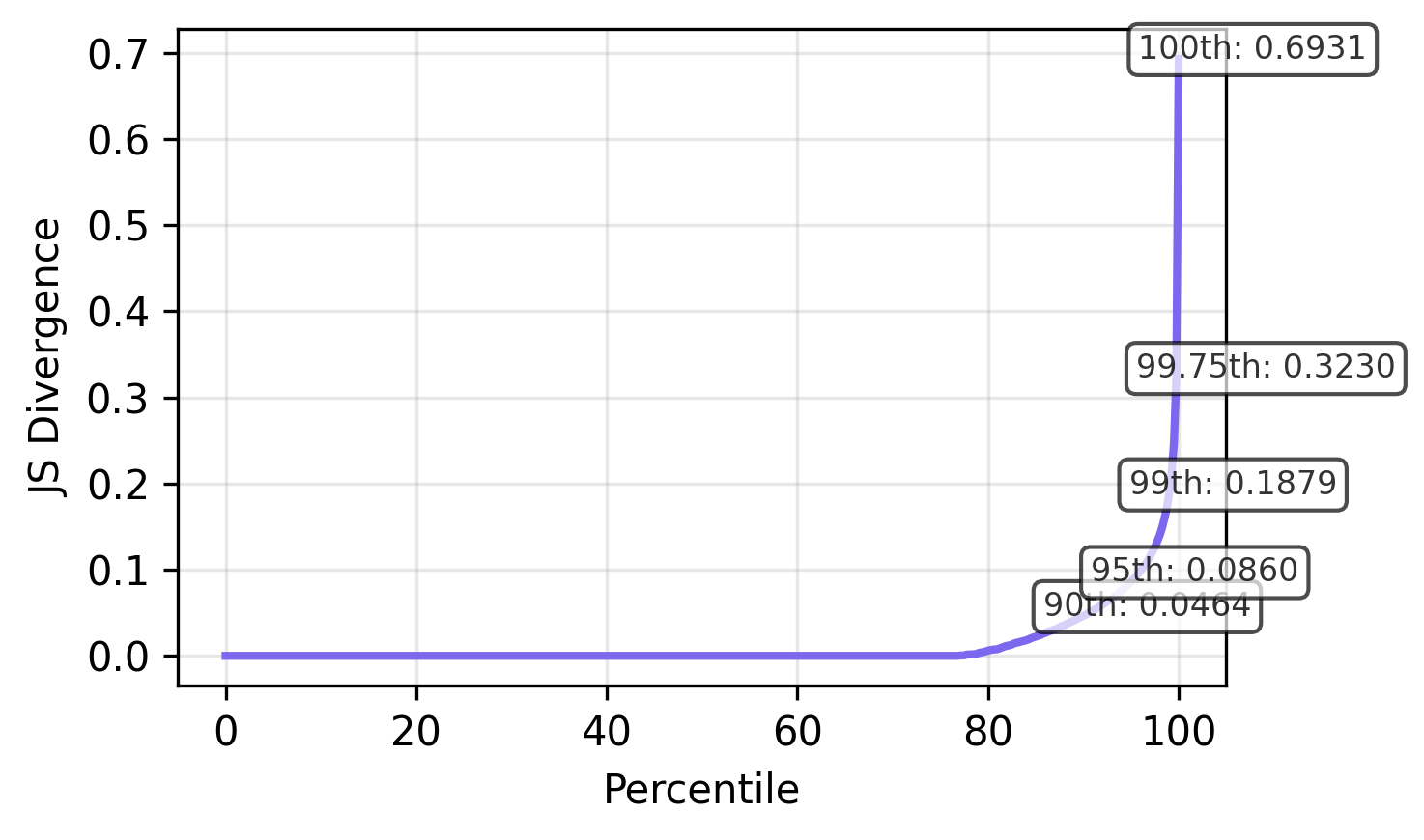}
        \caption{DAPO (0.28) AIME 2024: Percentiles}
    \end{subfigure}
    \hspace{1cm}
    \begin{subfigure}{0.42\textwidth}
        \includegraphics[width=\linewidth]{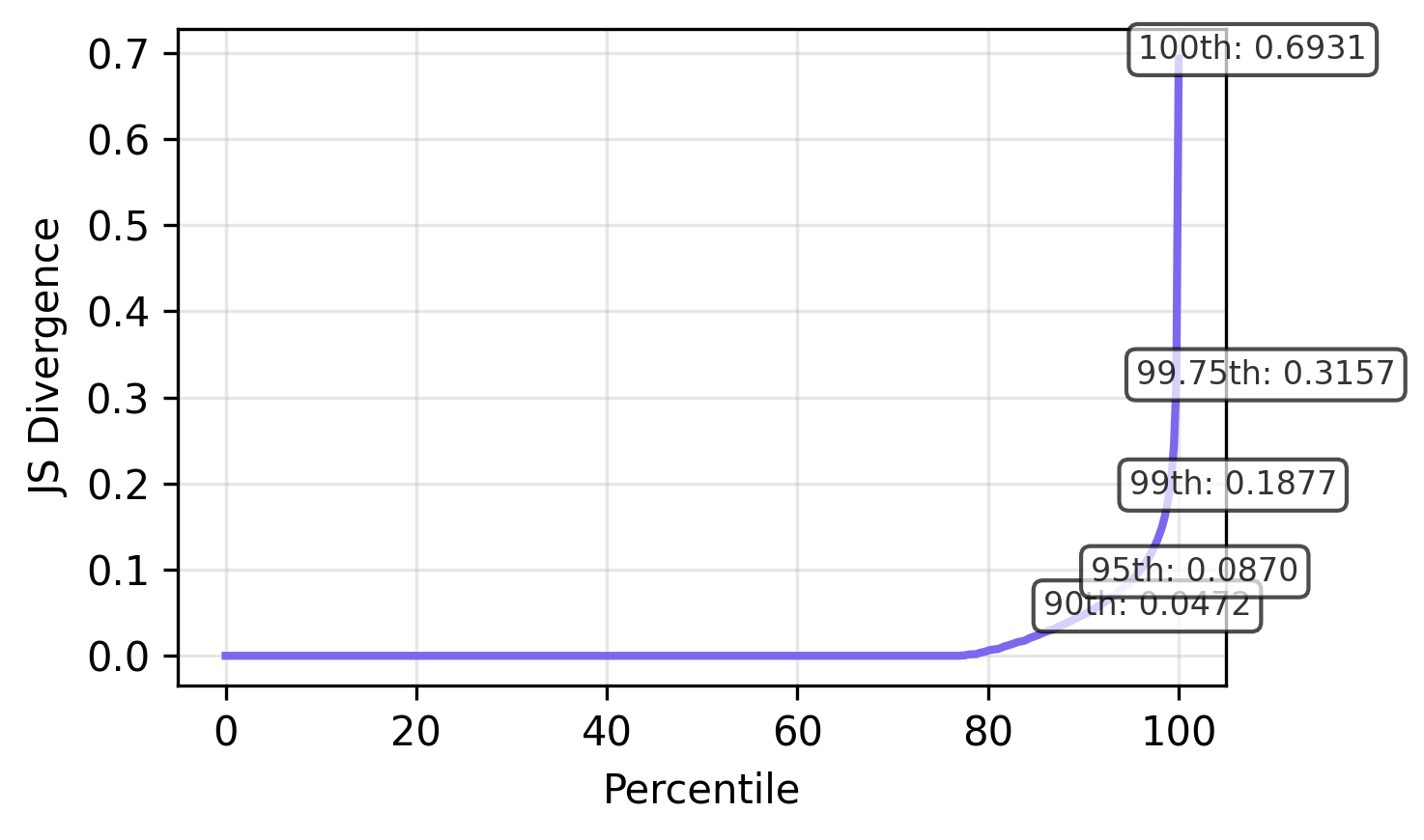}
        \caption{DAPO (0.28) AIME 2025: Percentiles}
    \end{subfigure}
    
    \vspace{1em}
    \begin{subfigure}{0.42\textwidth}
        \includegraphics[width=\linewidth]{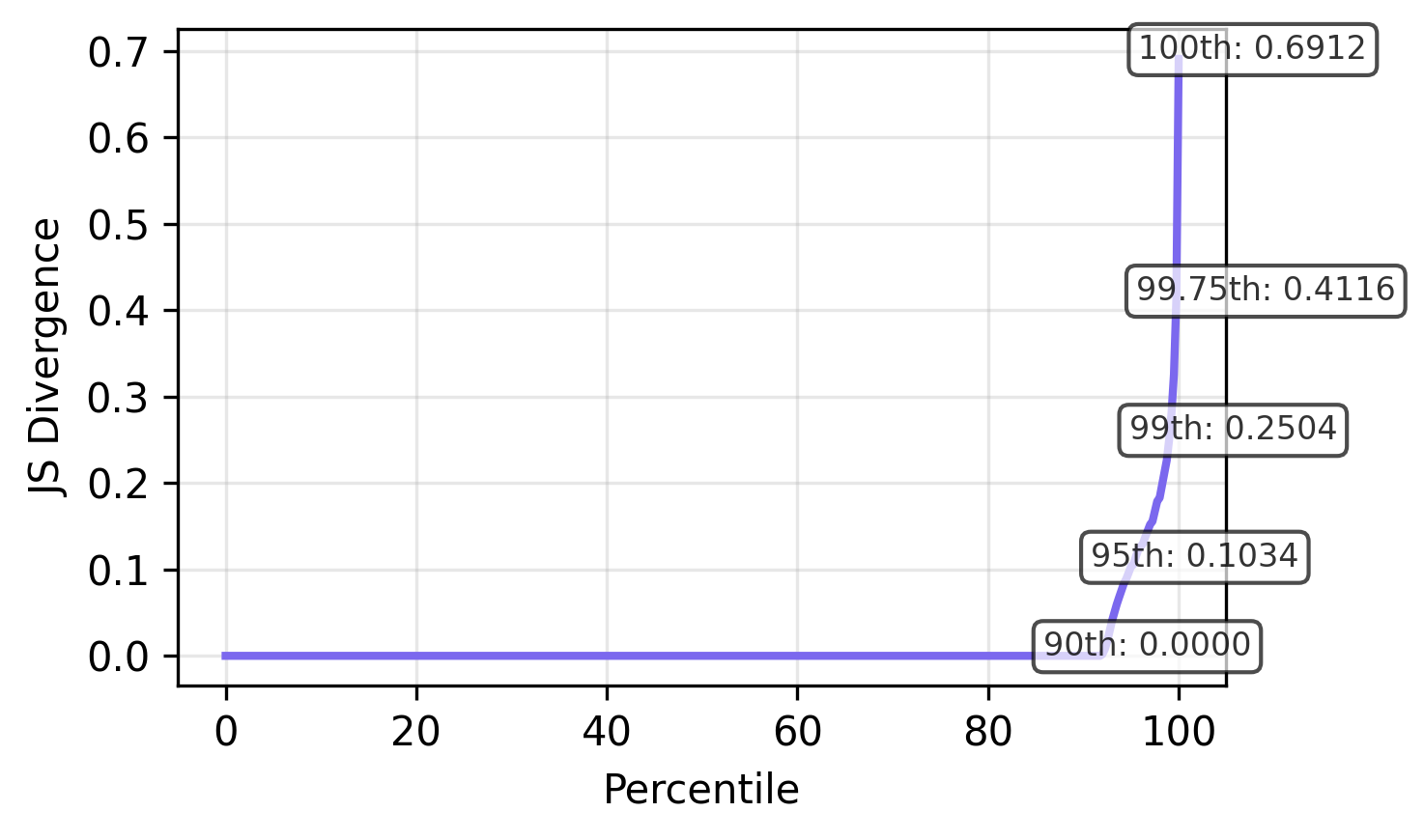}
        \caption{DAPO (0.2) AIME 2024: Percentiles}
    \end{subfigure}
    \hspace{1cm}
    \begin{subfigure}{0.42\textwidth}
        \includegraphics[width=\linewidth]{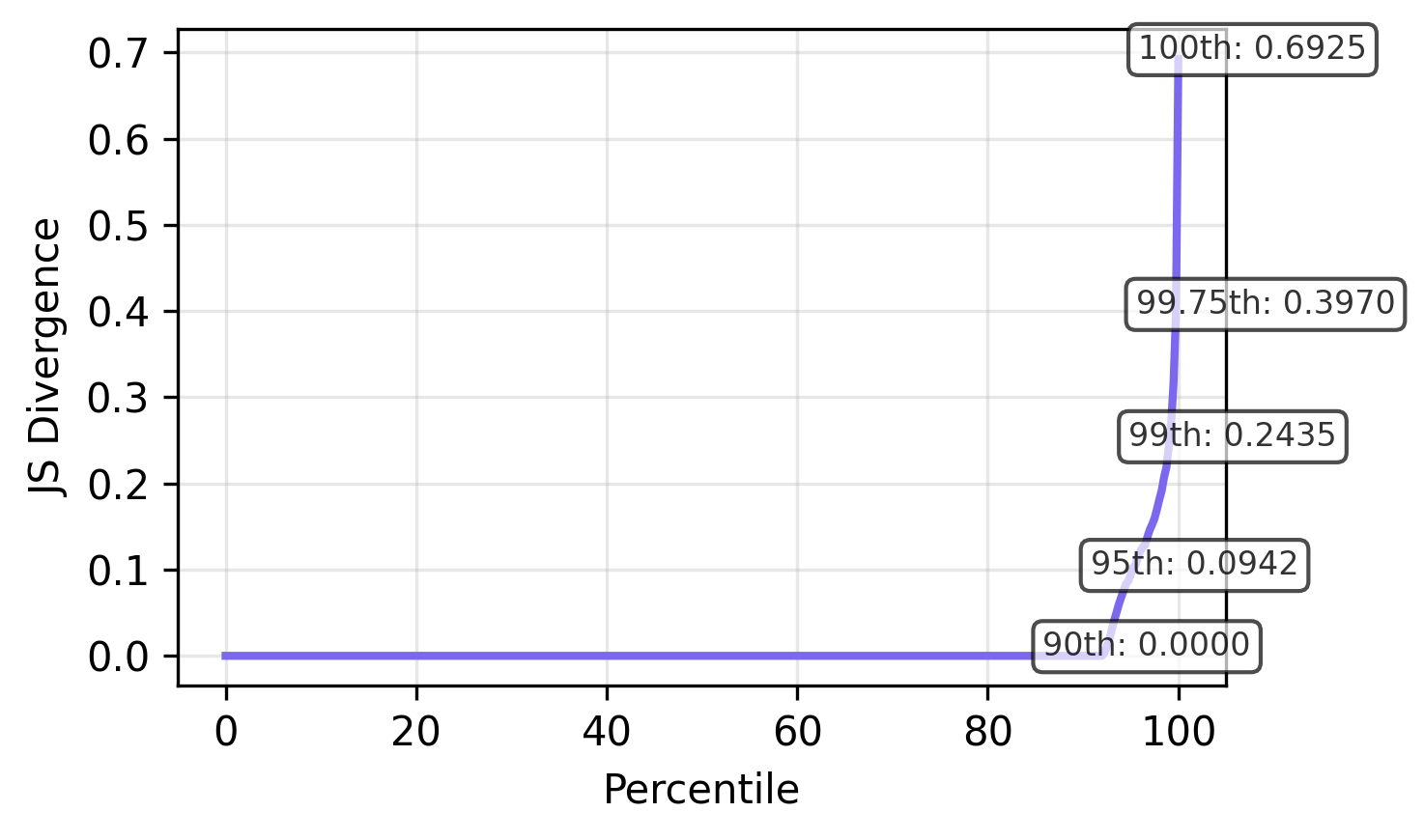}
        \caption{DAPO (0.2) AIME 2025: Percentiles}
    \end{subfigure}
    
    \caption{
        JS divergence distributions for Qwen2.5-Math-7B trained with DAPO under different clip-higher settings on AIME 2024 and AIME 2025.
        The more restrictive clip-high=0.2 setting leads to sparser distributional shifts compared to the default 0.28 setting across both datasets, with a smaller proportion of tokens exhibiting nonnegligible divergence. However, on its divergent token set, the JS values are higher as indicated by the higher upper percentiles.
    }
    \label{fig:js_dapo_variants}
\end{figure}

Figure~\ref{fig:positional_dapo_variants} compares positional concentration patterns on AIME 2024 and AIME 2025, while Figure~\ref{fig:topk_dapo_variants} and Figure~\ref{fig:ranks_dapo_variants} examine top-$k$ overlap and rank reordering, respectively. Figure~\ref{fig:tolerance_7b_dapo} shows the percentage of divergent tokens whose RL top-1 choice had base probability below a given threshold for both DAPO variants across different datasets. Figure~\ref{fig:entropy_7b_dapo} shows entropy distributions across divergence bins for both DAPO variants.

\begin{figure}[!htbp]
    \centering
    \begin{subfigure}{0.38\linewidth}
        \includegraphics[width=\linewidth]{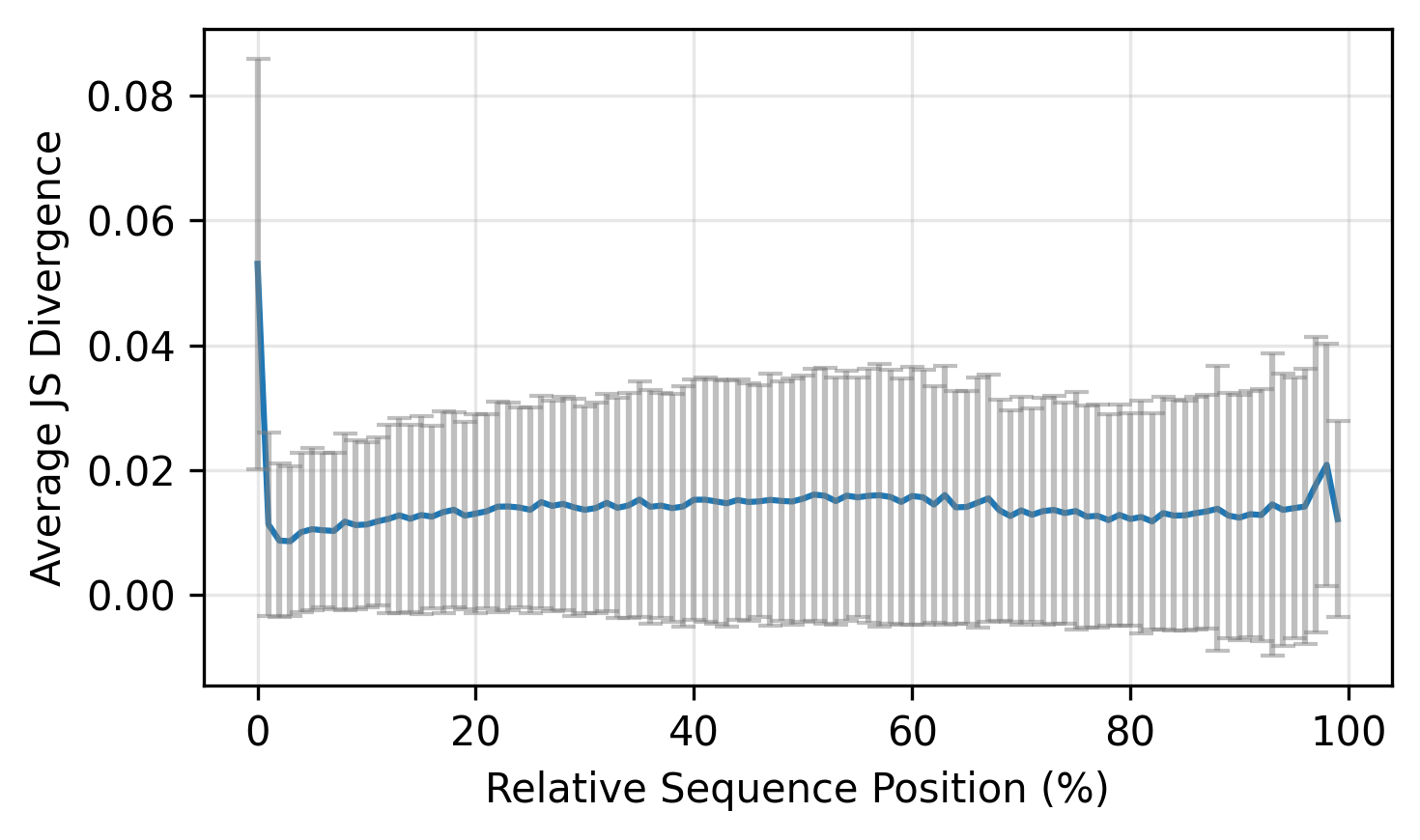}
        \caption{DAPO (0.28) AIME 2024}
    \end{subfigure}
    \hspace{1cm}
    \begin{subfigure}{0.38\linewidth}
        \includegraphics[width=\linewidth]{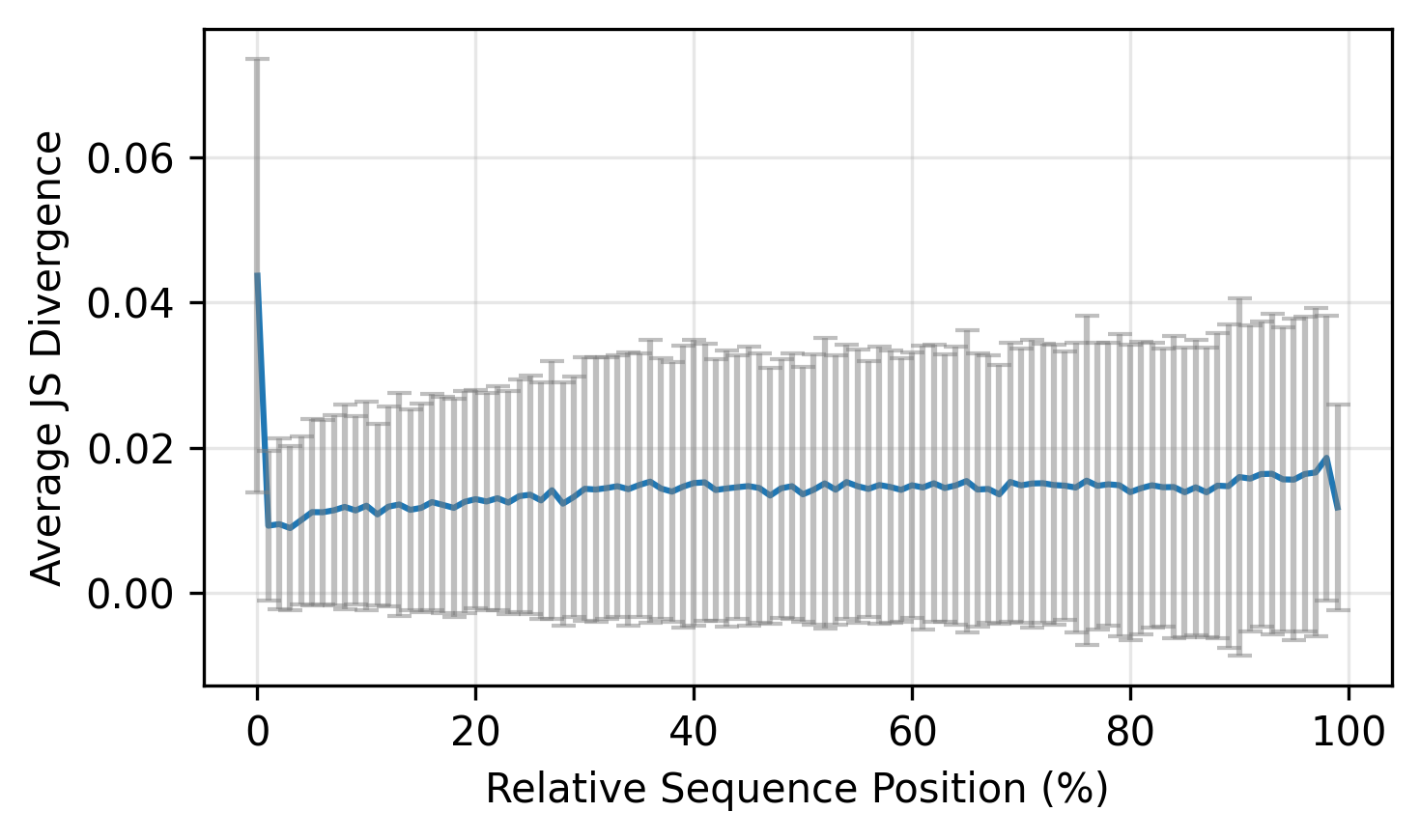}
        \caption{DAPO (0.28) AIME 2025}
    \end{subfigure}
    
    \begin{subfigure}{0.38\linewidth}
        \includegraphics[width=\linewidth]{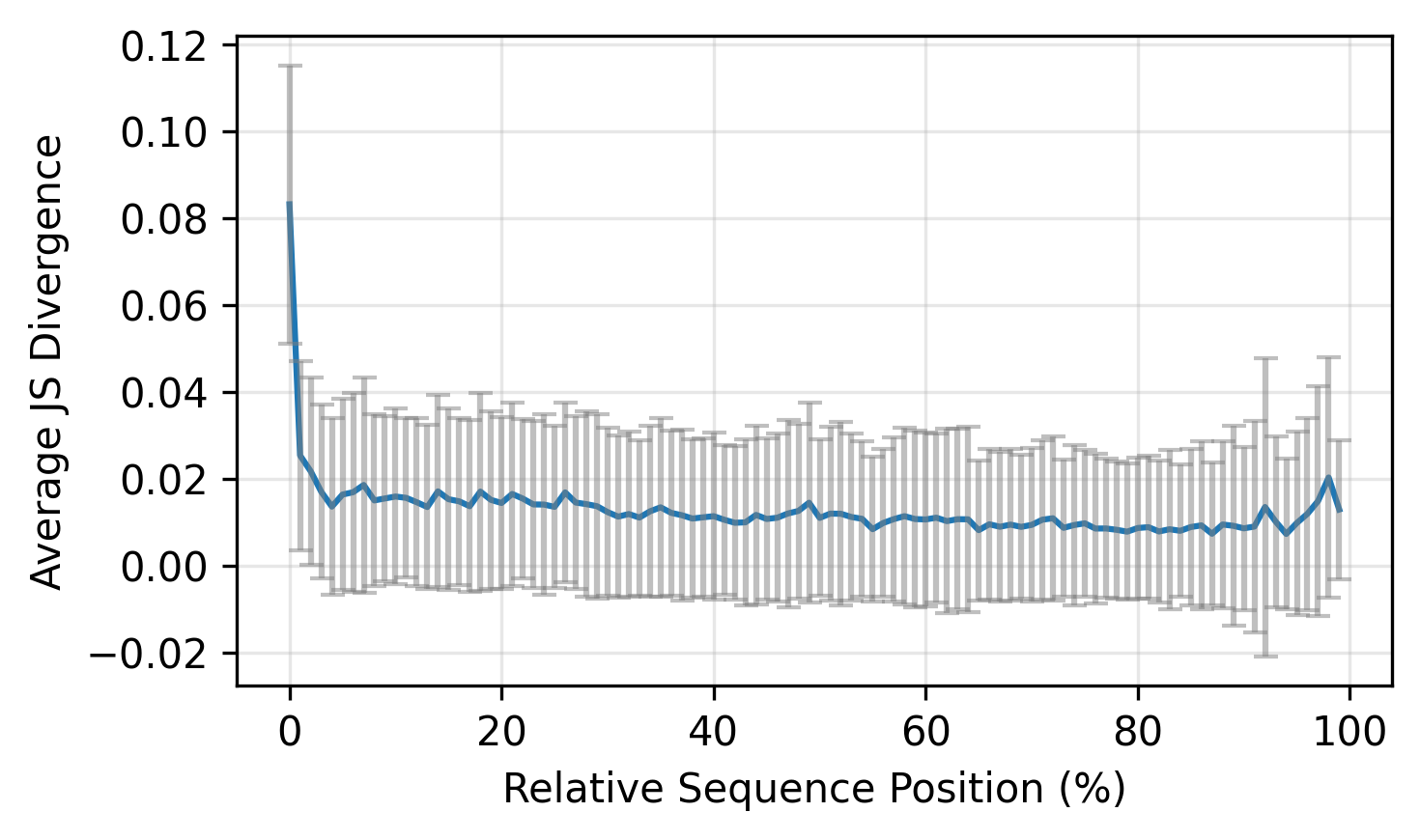}
        \caption{DAPO (0.2) AIME 2024}
    \end{subfigure}
    \hspace{1cm}
    \begin{subfigure}{0.38\linewidth}
        \includegraphics[width=\linewidth]{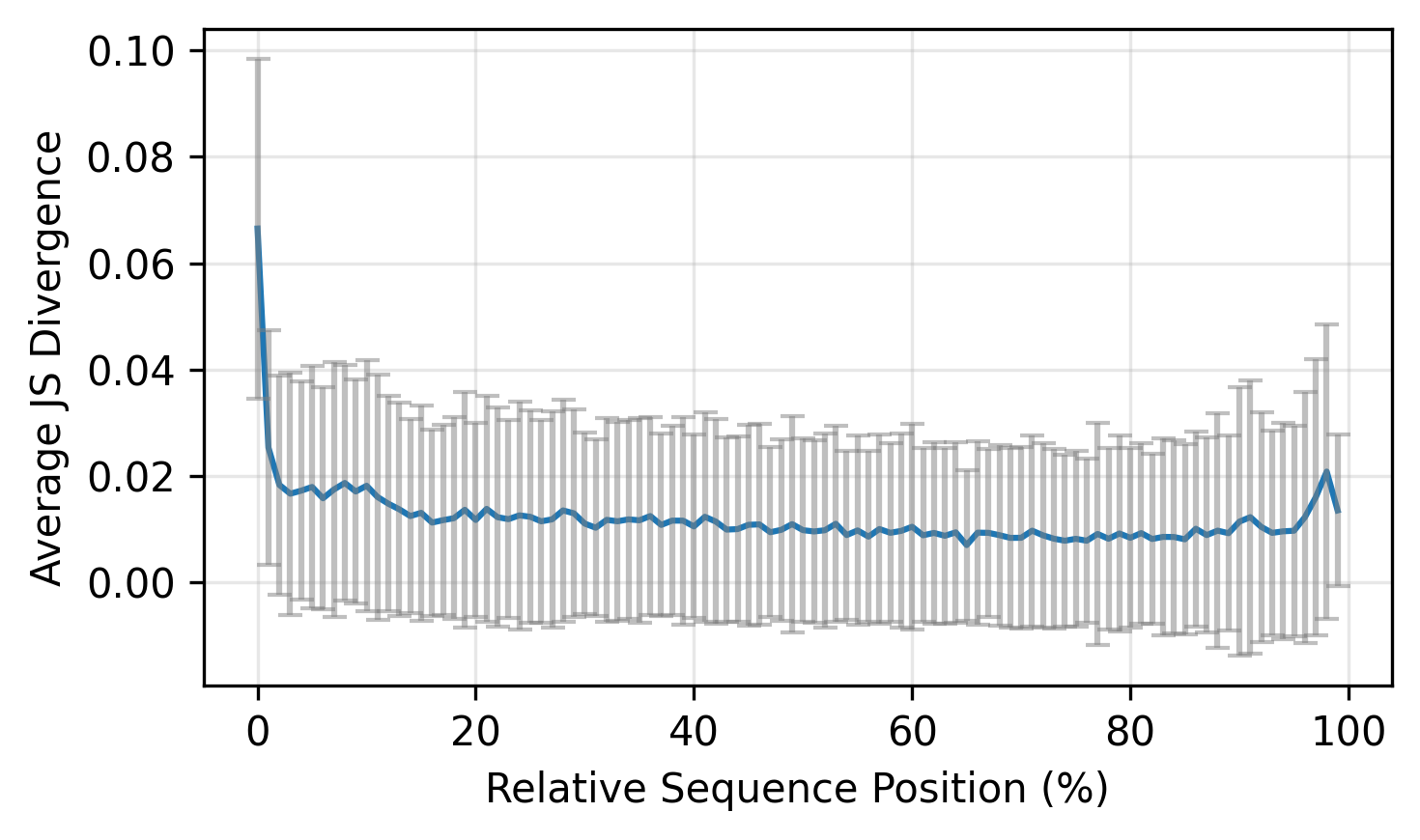}
        \caption{DAPO (0.2) AIME 2025}
    \end{subfigure}
    \caption{
        Mean JS divergence by normalized token position for DAPO variants with different clip-higher settings on AIME 2024 and AIME 2025.
    }
    \label{fig:positional_dapo_variants}
\end{figure}

\begin{figure}[!htbp]
    \centering
    \begin{subfigure}{0.42\linewidth}
        \includegraphics[width=\linewidth]{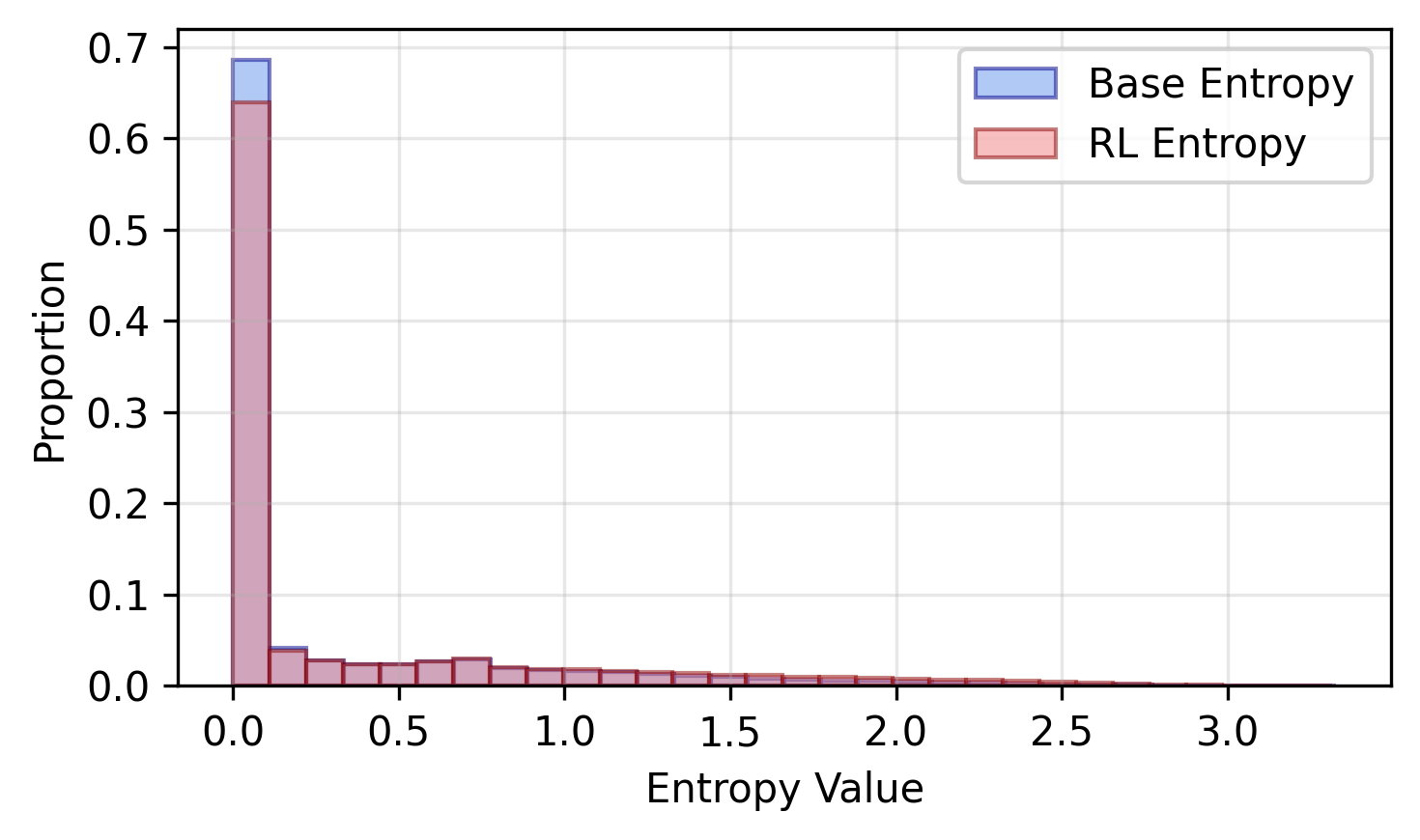}
        \caption{Low JS bin ($<0.1$)}
    \end{subfigure}
    \hspace{1cm}
    \begin{subfigure}{0.42\linewidth}
        \includegraphics[width=\linewidth]{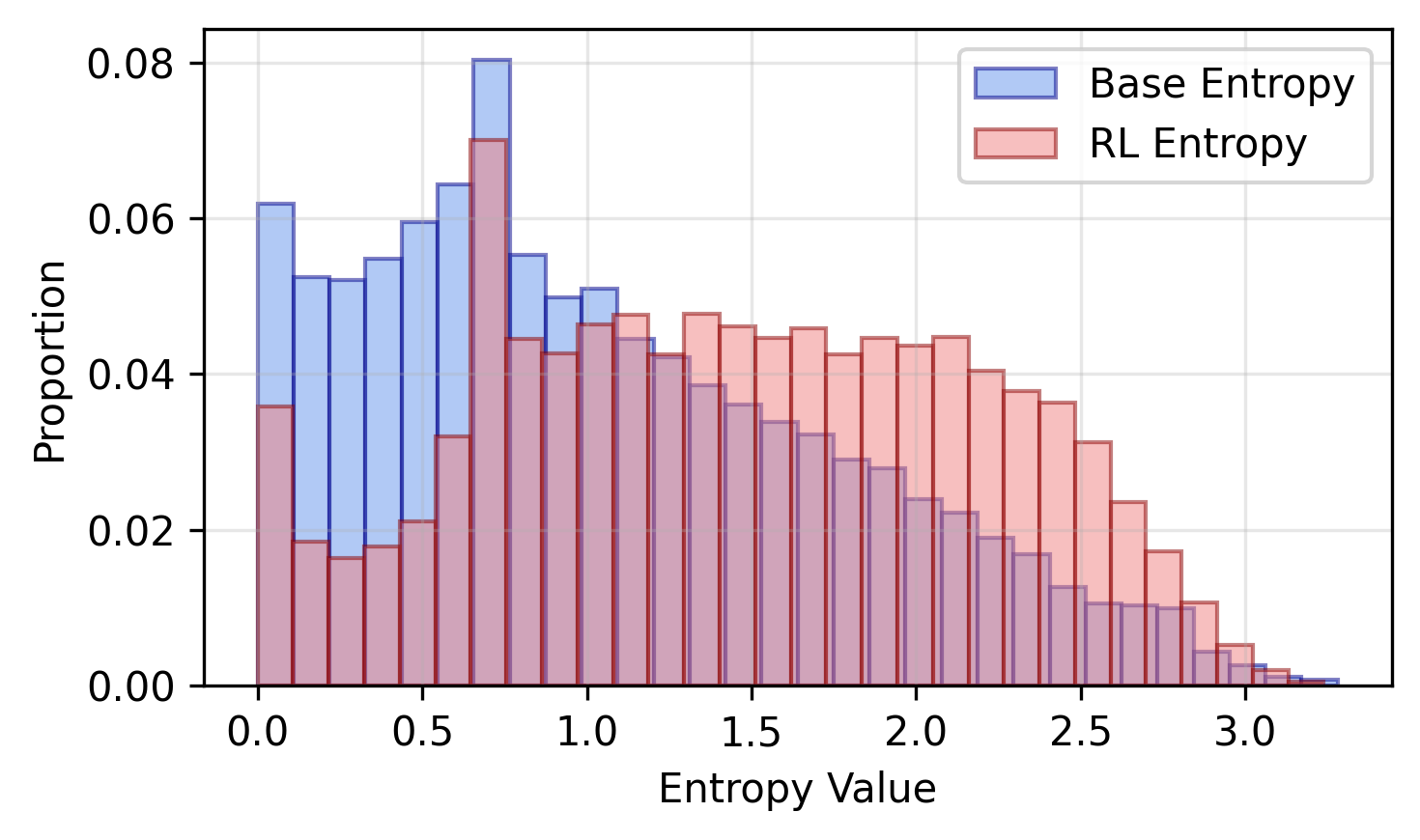}
        \caption{High JS bin ($>0.1$)}
    \end{subfigure}
    
    \vspace{1em}
    \begin{subfigure}{0.42\linewidth}
        \includegraphics[width=\linewidth]{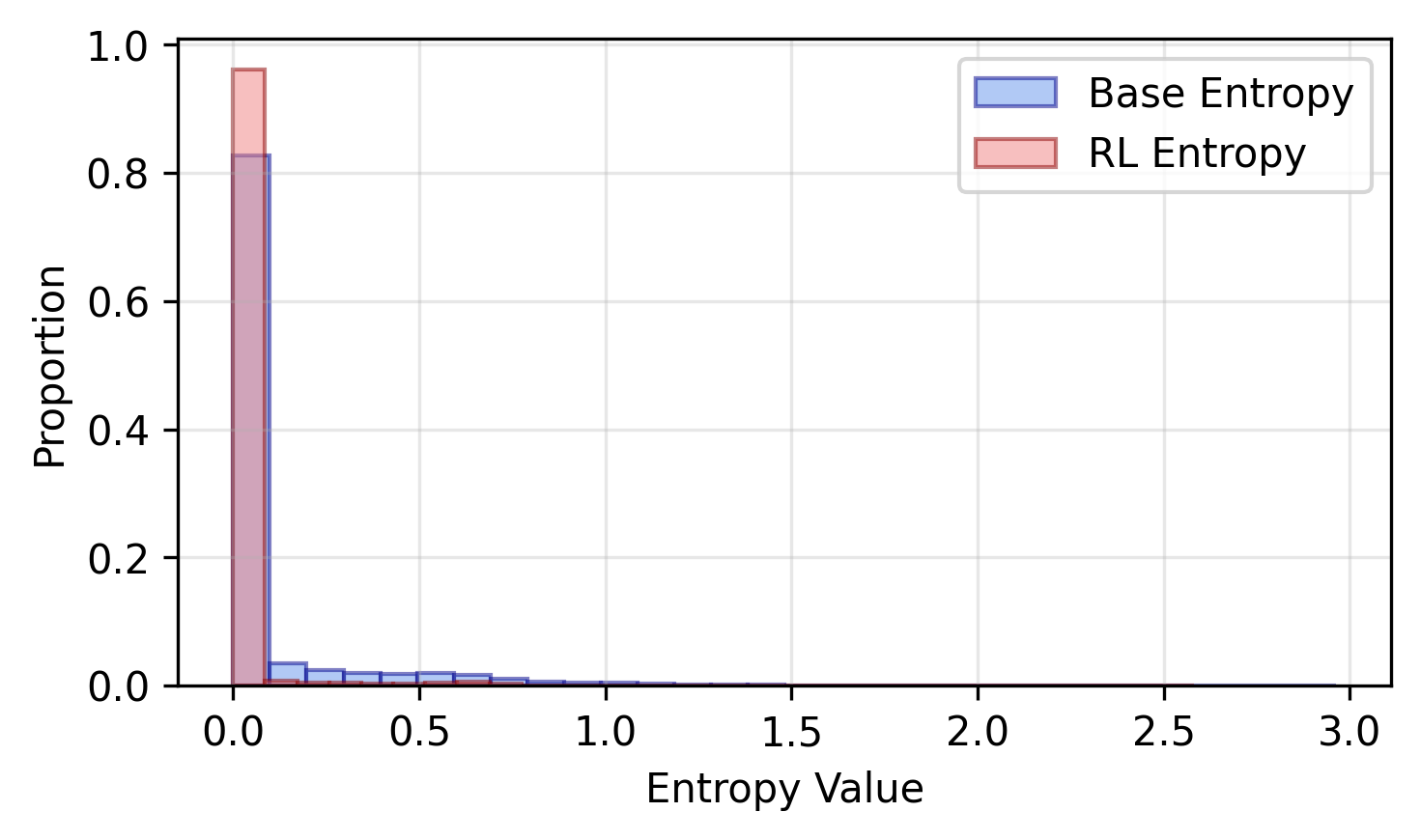}
        \caption{Low JS bin ($<0.1$)}
    \end{subfigure}
    \hspace{1cm}
    \begin{subfigure}{0.42\linewidth}
        \includegraphics[width=\linewidth]{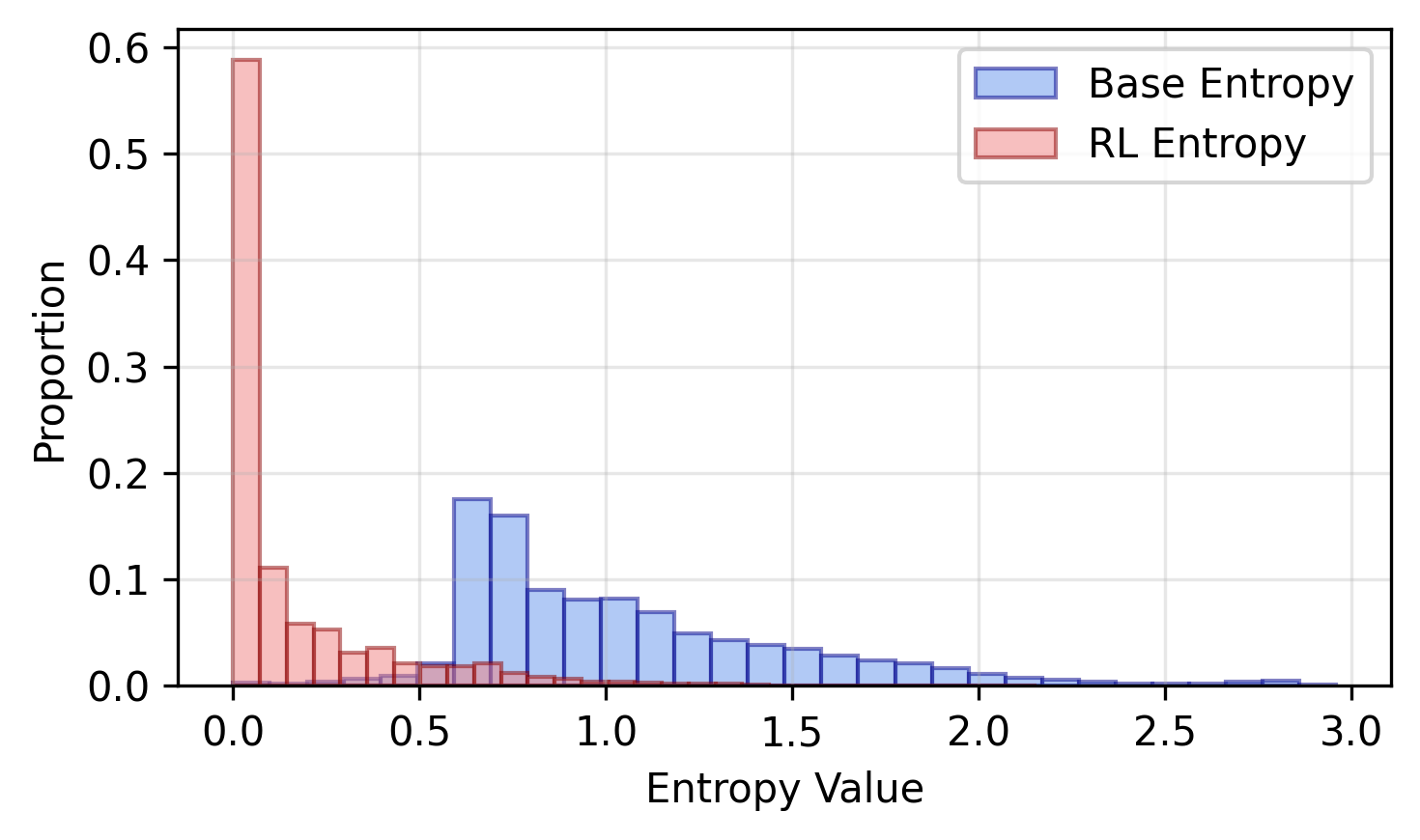}
        \caption{High JS bin ($>0.1$)}
    \end{subfigure}
    
    \caption{
        Entropy distributions across divergence bins for Qwen2.5-Math-7B with DAPO variants on AIME 2024. Top row: DAPO (clip-higher=0.28); bottom row: DAPO (clip-higher=0.2). Patterns are consistent with those observed in the main text, confirming the relationship between entropy and divergence across different clip-higher settings.
    }
    \label{fig:entropy_7b_dapo}
\end{figure}

\begin{figure}[!htbp]
    \centering
    \begin{subfigure}{0.42\linewidth}
        \includegraphics[width=\linewidth]{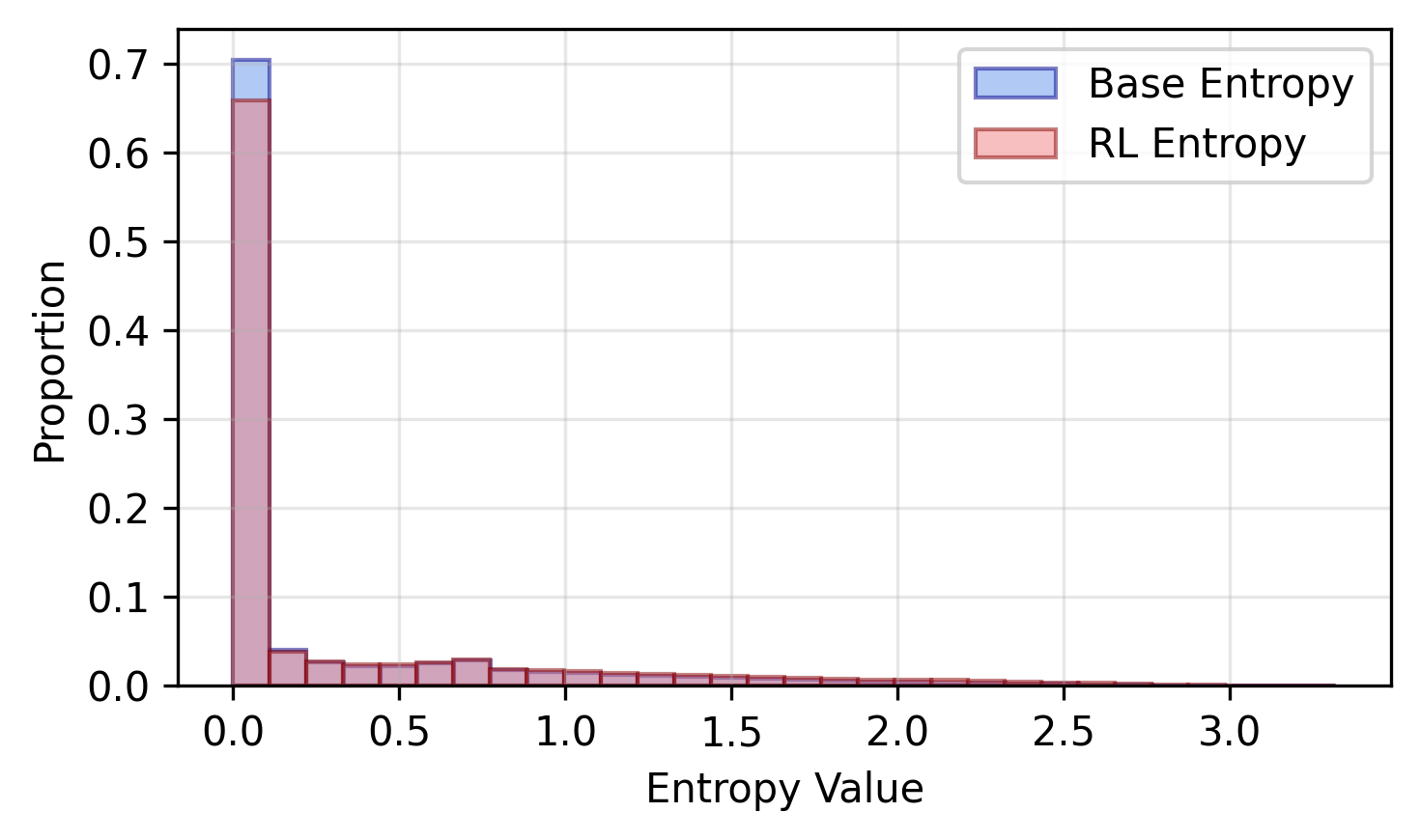}
        \caption{Low JS bin ($<0.1$)}
    \end{subfigure}
    \hspace{1cm}
    \begin{subfigure}{0.42\linewidth}
        \includegraphics[width=\linewidth]{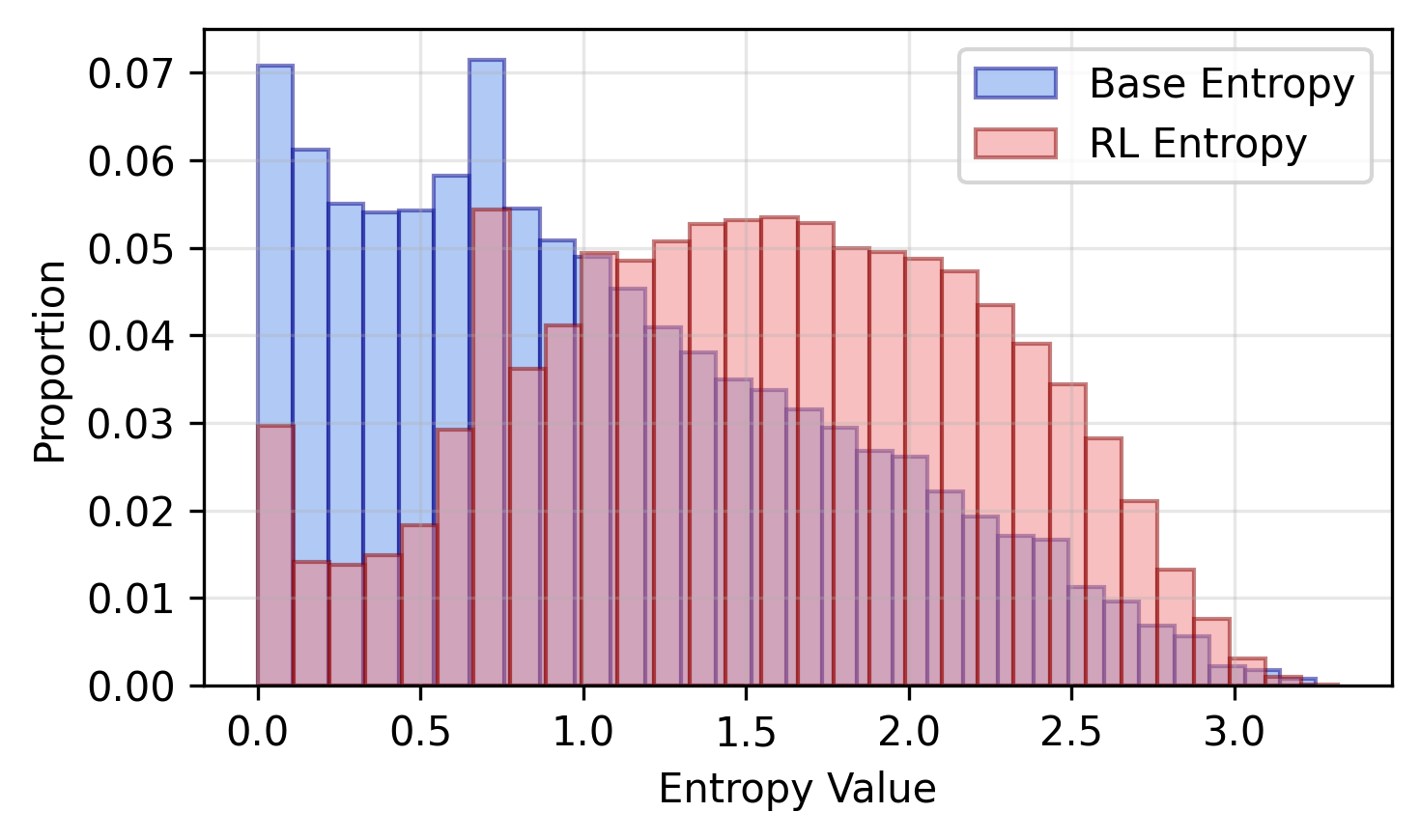}
        \caption{High JS bin ($>0.1$)}
    \end{subfigure}
    
    \vspace{1em}
    \begin{subfigure}{0.42\linewidth}
        \includegraphics[width=\linewidth]{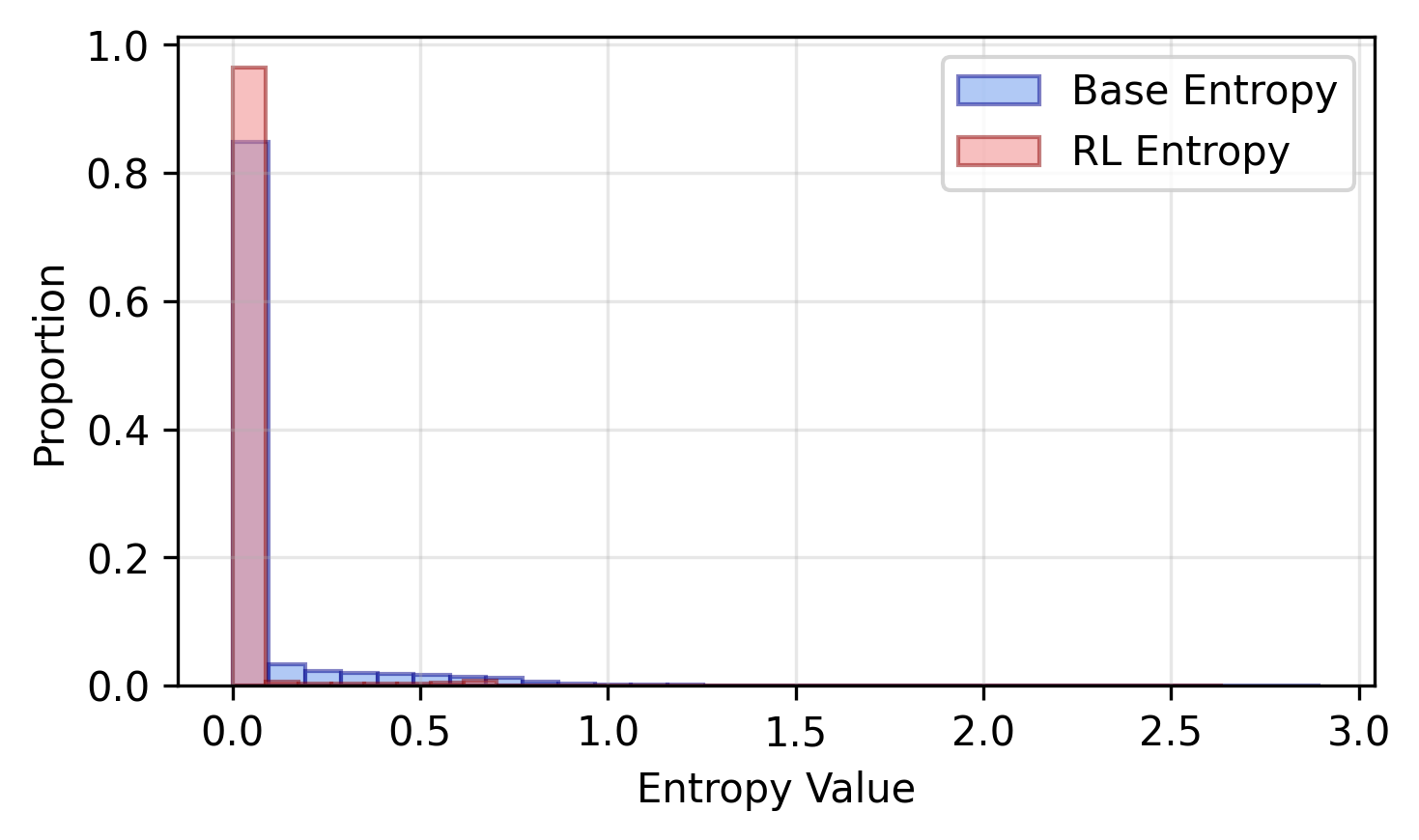}
        \caption{Low JS bin ($<0.1$)}
    \end{subfigure}
    \hspace{1cm}
    \begin{subfigure}{0.42\linewidth}
        \includegraphics[width=\linewidth]{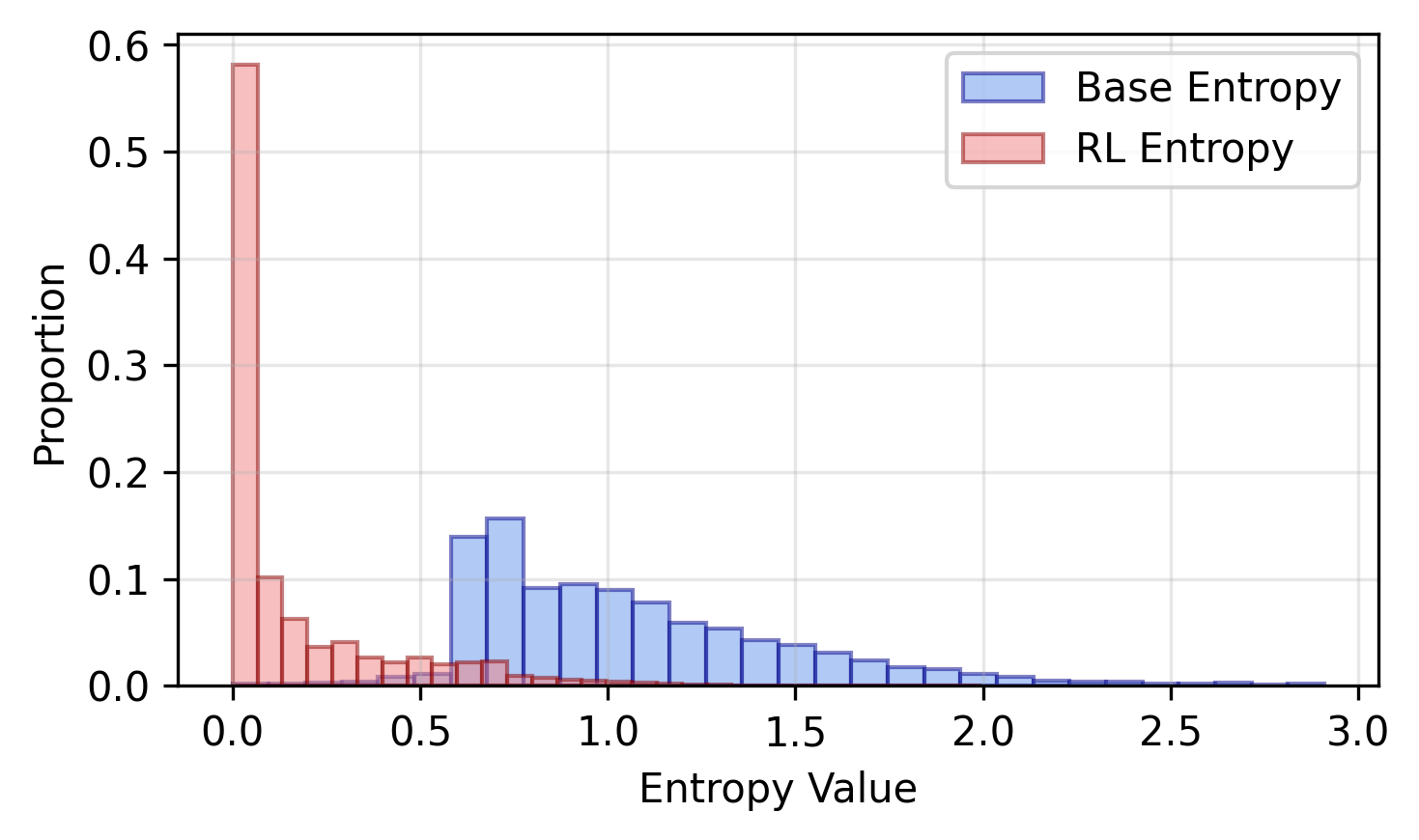}
        \caption{High JS bin ($>0.1$)}
    \end{subfigure}
    
    \caption{
        Entropy distributions across divergence bins for Qwen2.5-Math-7B with DAPO variants on AIME 2025. Top row: DAPO (clip-higher=0.28); bottom row: DAPO (clip-higher=0.2).
    }
    \label{fig:entropy_7b_dapo_aime25}
\end{figure}

\begin{figure}[!htbp]
    \centering
    \begin{subfigure}{0.31\linewidth}
        \includegraphics[width=\linewidth]{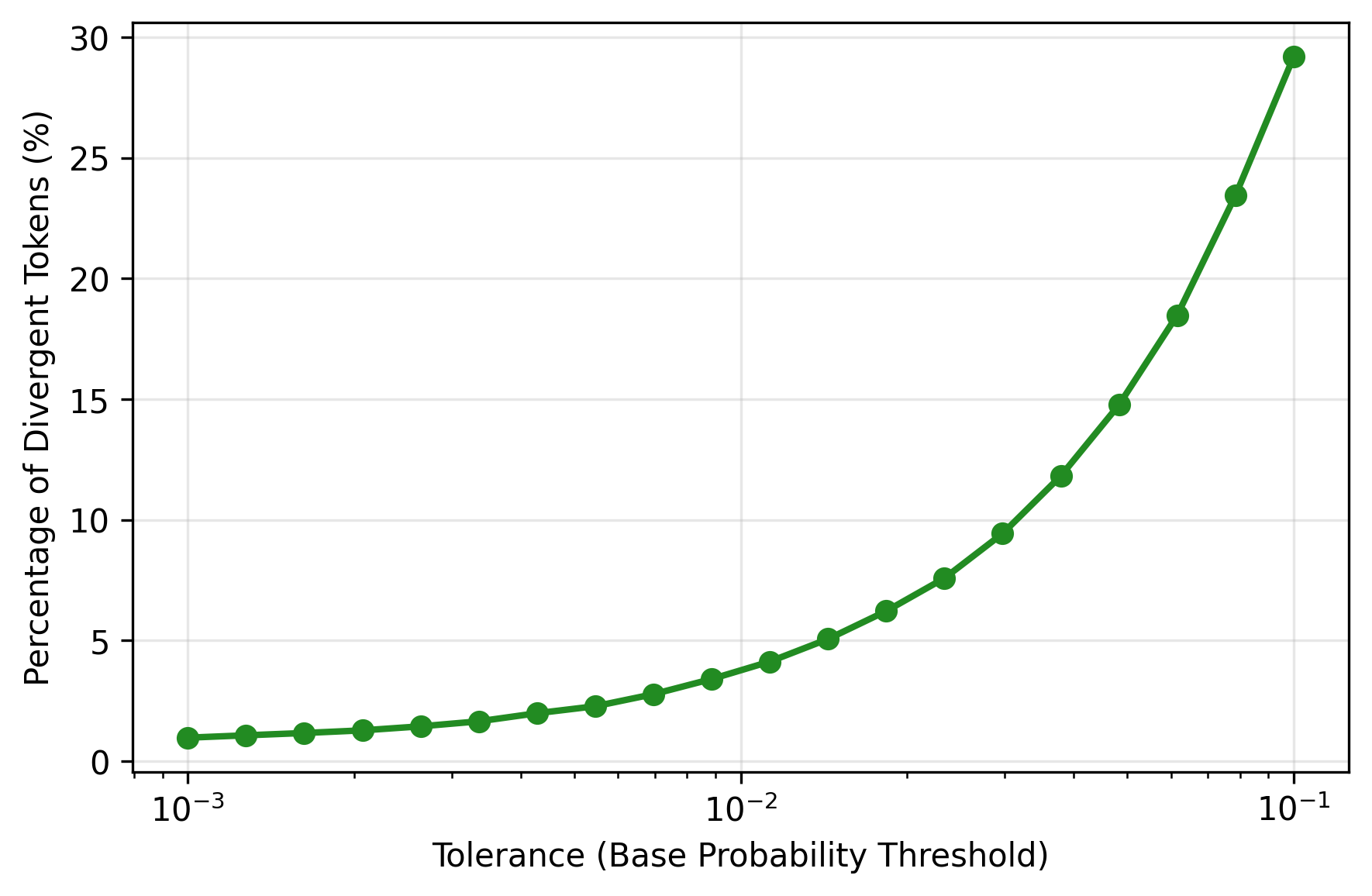}
        \caption{AIME 2024}
    \end{subfigure}
    \hfill
    \begin{subfigure}{0.31\linewidth}
        \includegraphics[width=\linewidth]{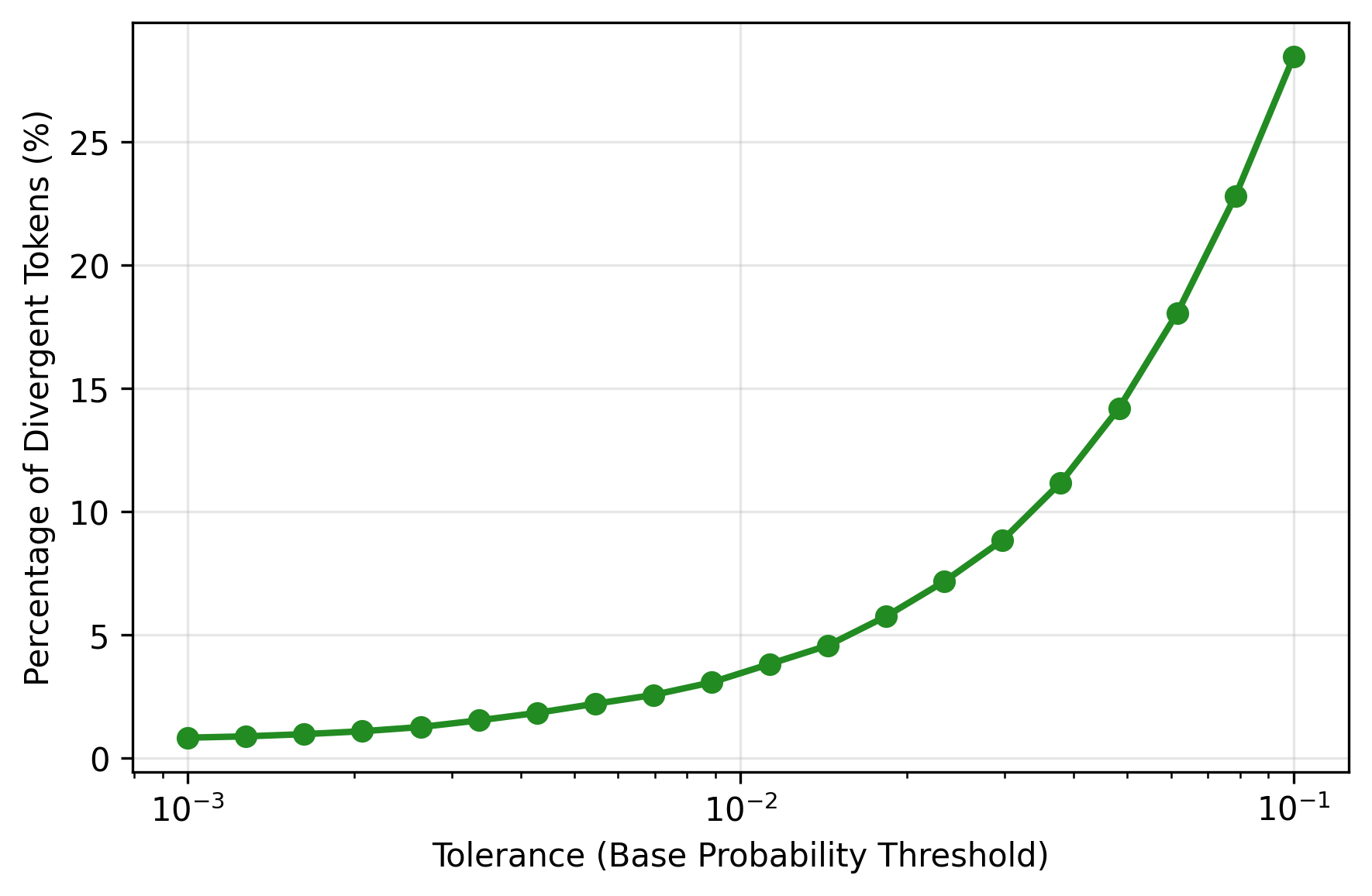}
        \caption{AIME 2025}
    \end{subfigure}
    \hfill
    \begin{subfigure}{0.31\linewidth}
        \includegraphics[width=\linewidth]{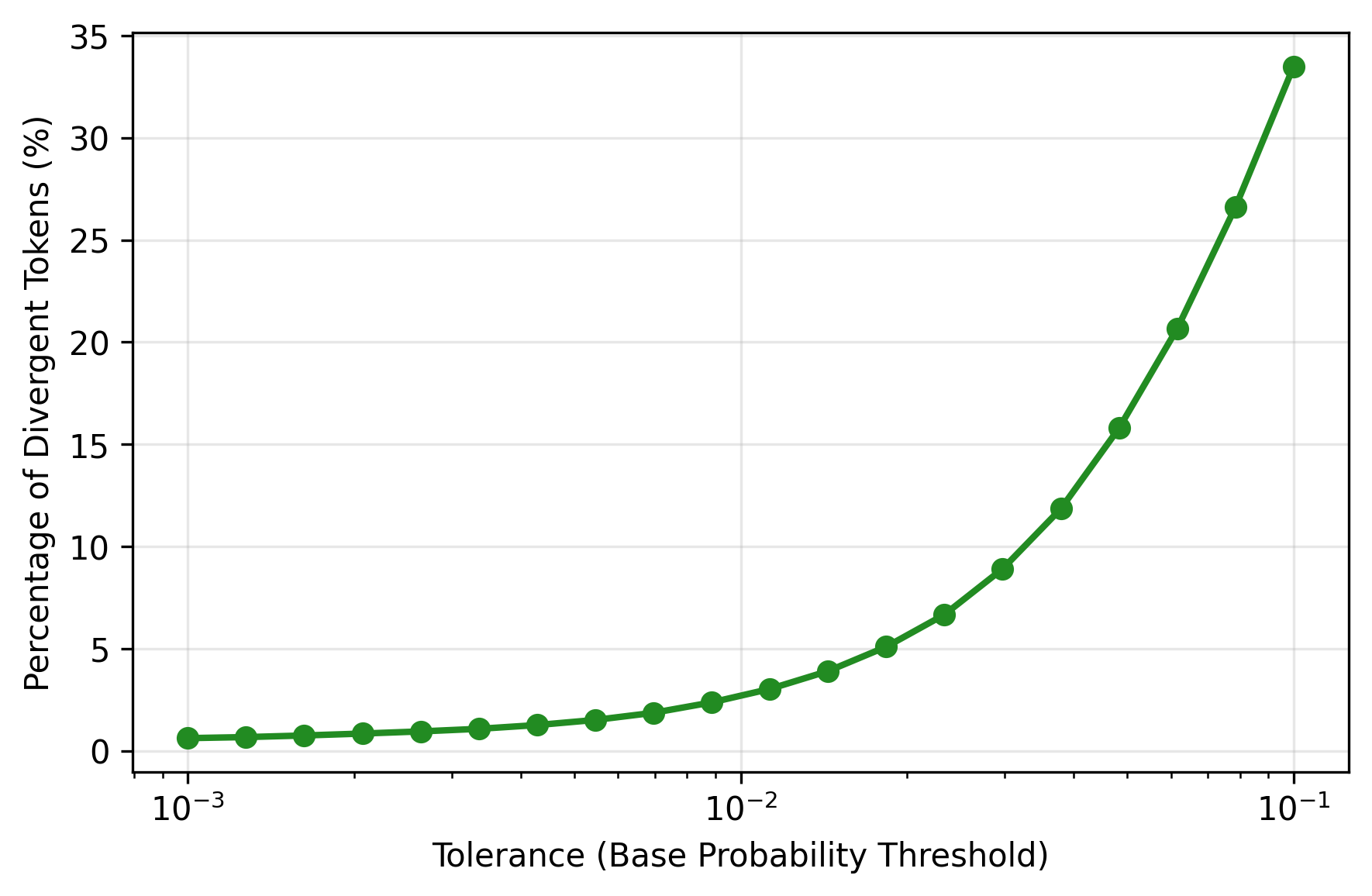}
        \caption{Fine-tune Data}
    \end{subfigure}
    
    \vspace{1em}
    \begin{subfigure}{0.31\linewidth}
        \includegraphics[width=\linewidth]{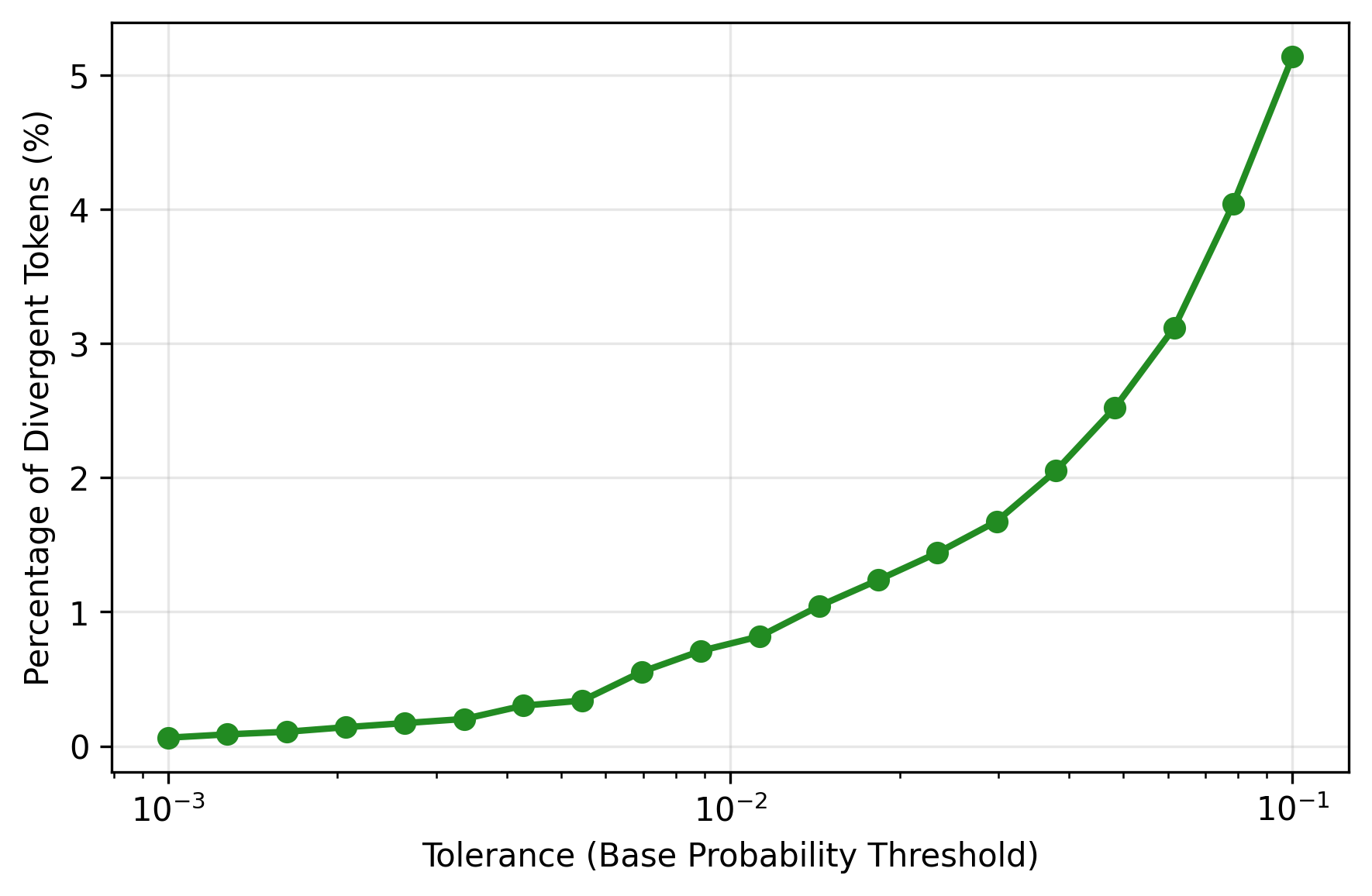}
        \caption{AIME 2024}
    \end{subfigure}
    \hfill
    \begin{subfigure}{0.31\linewidth}
        \includegraphics[width=\linewidth]{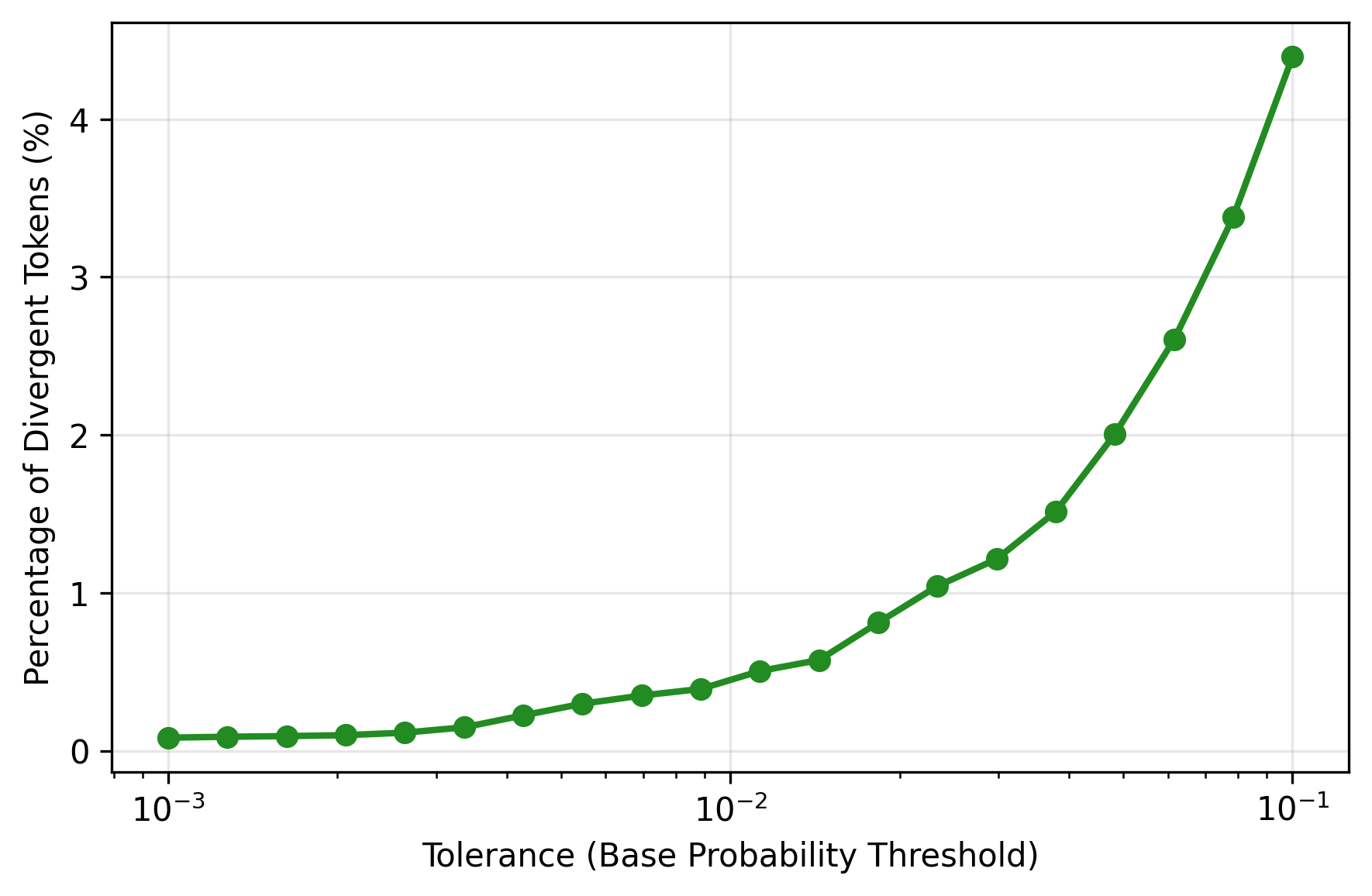}
        \caption{AIME 2025}
    \end{subfigure}
    \hfill
    \begin{subfigure}{0.31\linewidth}
        \includegraphics[width=\linewidth]{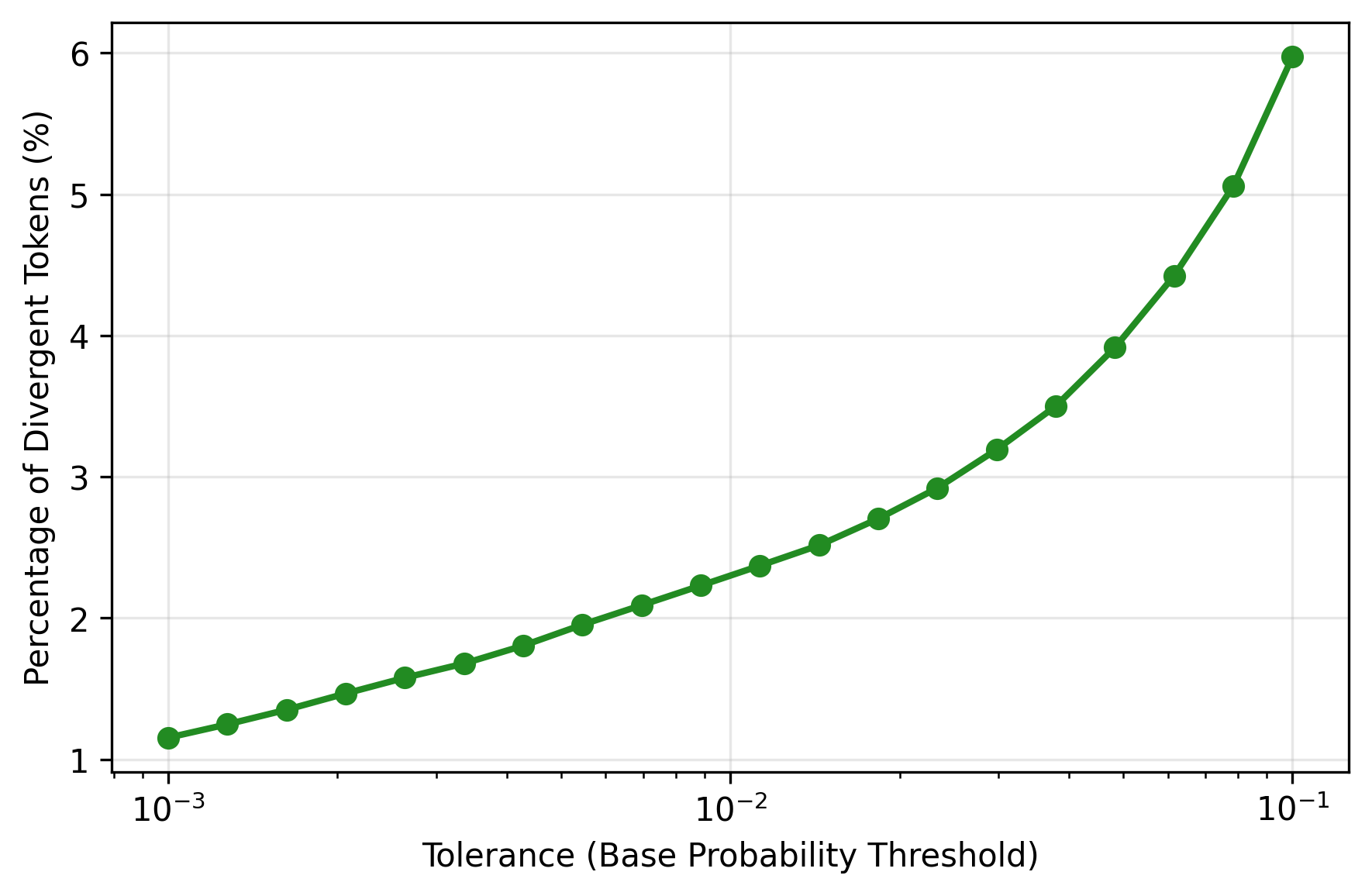}
        \caption{Fine-tune Data}
    \end{subfigure}
    
    \caption{
        Percentage of divergent tokens whose RL top-1 choice had base probability below a given threshold for Qwen2.5-Math-7B with DAPO variants. Top row: DAPO (clip-higher=0.28); bottom row: DAPO (clip-higher=0.2). Consistent with findings in the main text, RL rarely promotes tokens with very low base probability, even under more exploratory settings with clip-higher. We further observe a distinction between the two clip-high settings, with the more restrictive setting (0.2) promoting fewer tokens with low base probability.
    }
    \label{fig:tolerance_7b_dapo}
\end{figure}

\begin{figure}[!htbp]
    \centering
    \begin{subfigure}{0.42\linewidth}
        \includegraphics[width=\linewidth]{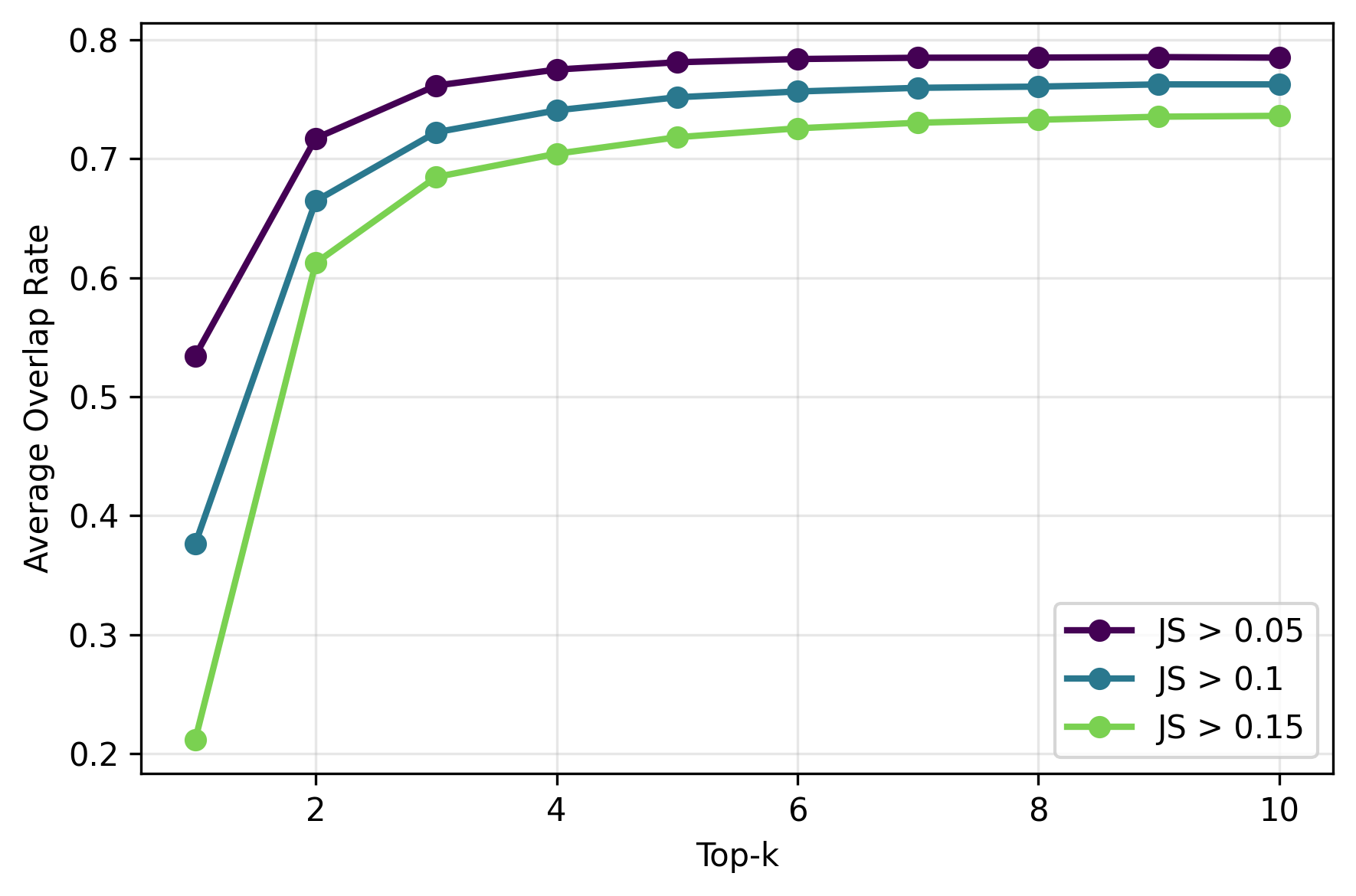}
        \caption{DAPO (0.28) AIME 2024}
    \end{subfigure}
    \hspace{1cm}
    \begin{subfigure}{0.42\linewidth}
        \includegraphics[width=\linewidth]{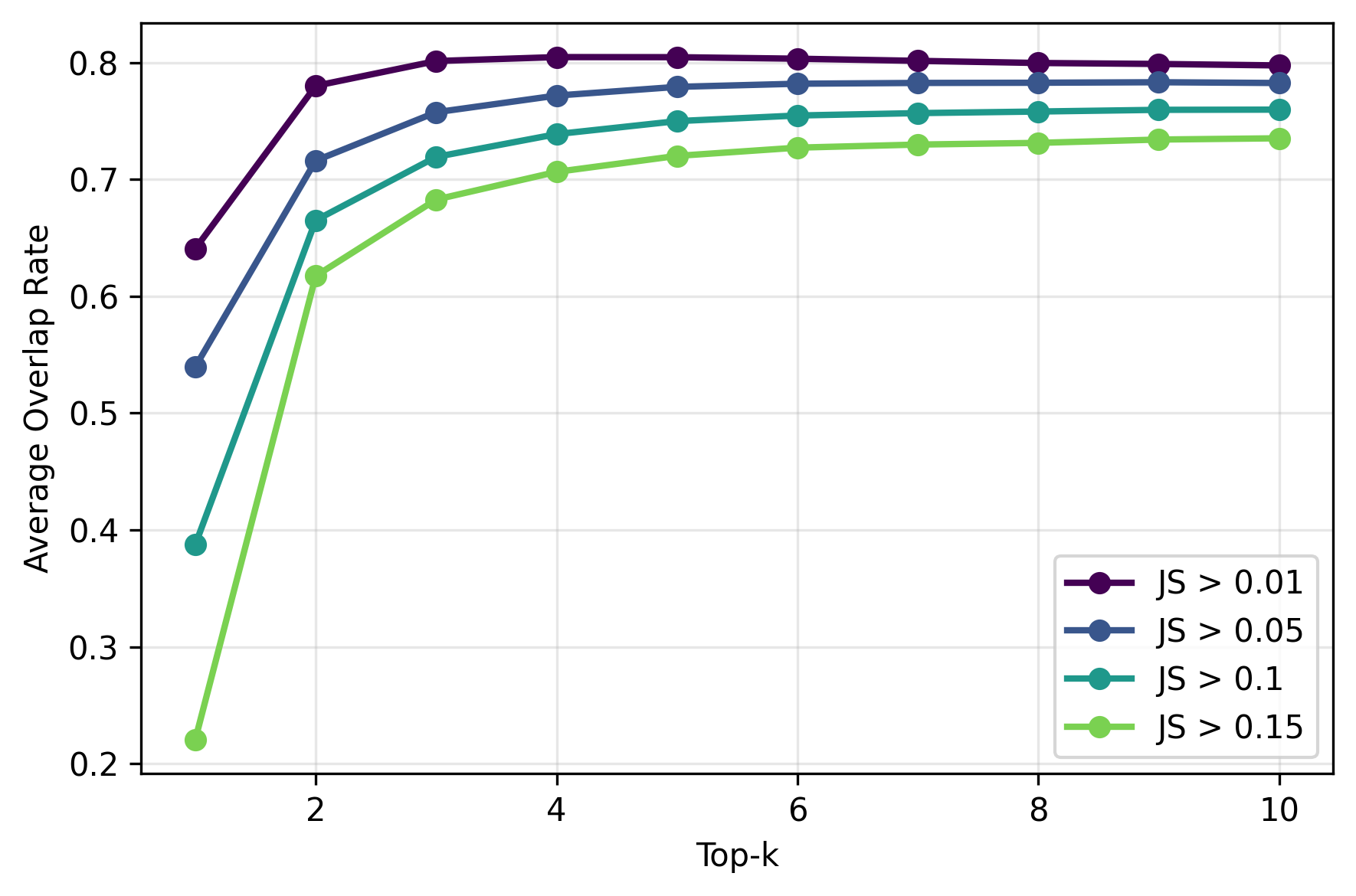}
        \caption{DAPO (0.28) AIME 2025}
    \end{subfigure}
    
    \vspace{1em}
    \begin{subfigure}{0.42\linewidth}
        \includegraphics[width=\linewidth]{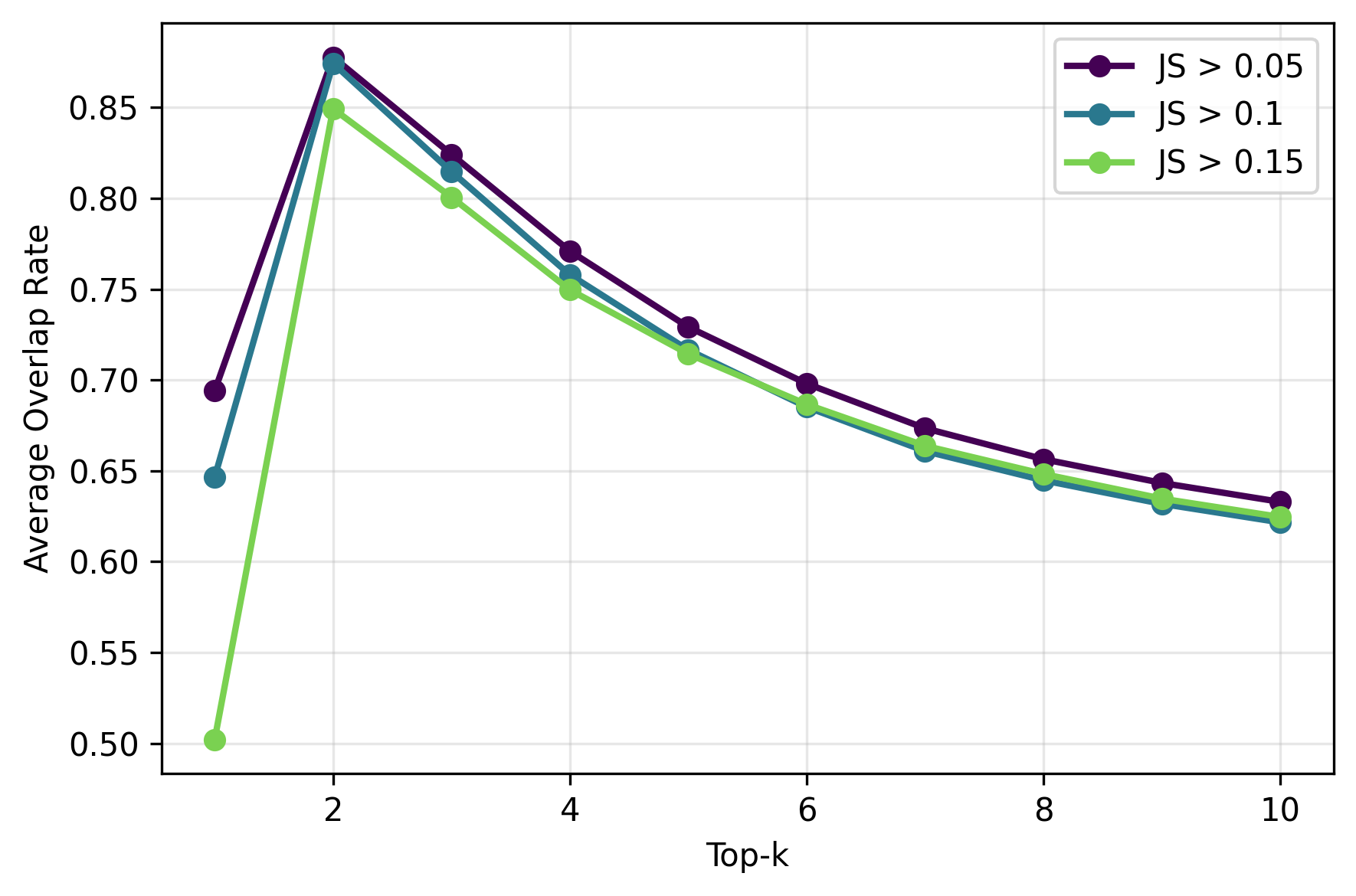}
        \caption{DAPO (0.2) AIME 2024}
    \end{subfigure}
    \hspace{1cm}
    \begin{subfigure}{0.42\linewidth}
        \includegraphics[width=\linewidth]{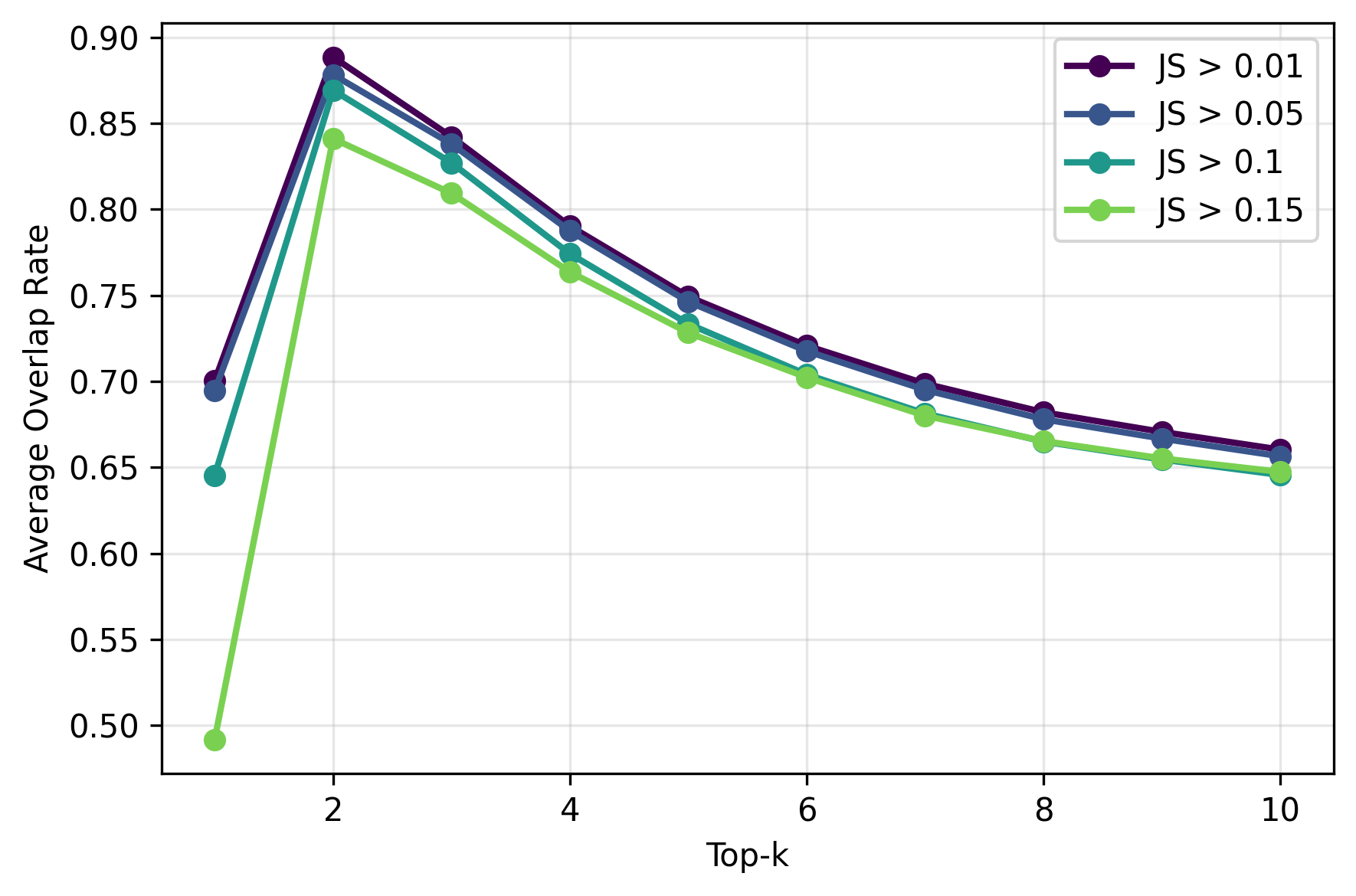}
        \caption{DAPO (0.2) AIME 2025}
    \end{subfigure}
    
    \caption{
        Top-$k$ token overlap between base and RL models at divergent positions ($\mathrm{JS}_t>0.1$) for DAPO variants on AIME 2024 and AIME 2025.
    }
    \label{fig:topk_dapo_variants}
\end{figure}

\begin{figure}[!htbp]
    \centering
    \begin{subfigure}{0.48\linewidth}
        \includegraphics[width=\linewidth]{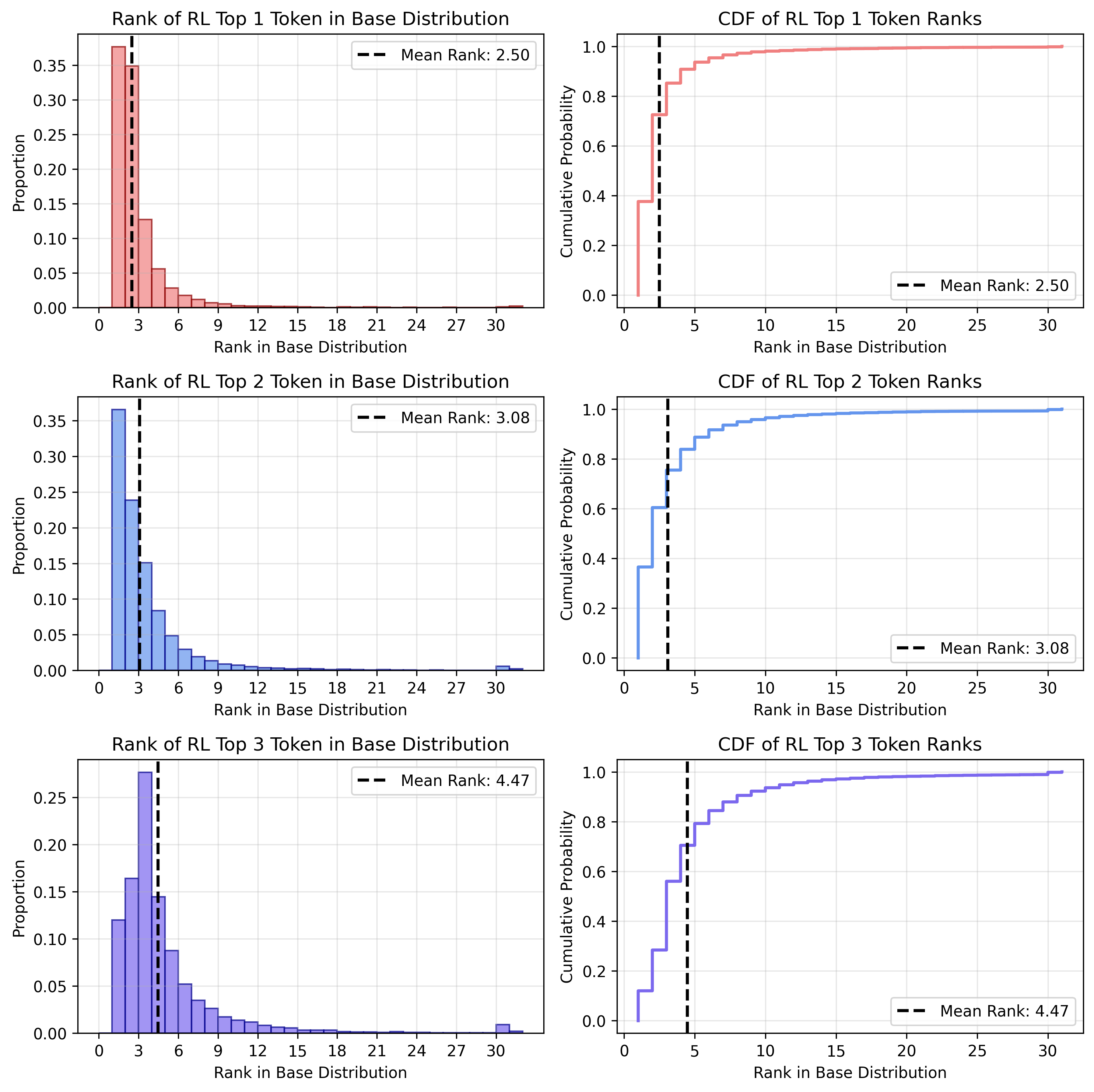}
        \caption{DAPO (0.28) AIME 2024}
    \end{subfigure}
    \hfill
    \begin{subfigure}{0.48\linewidth}
        \includegraphics[width=\linewidth]{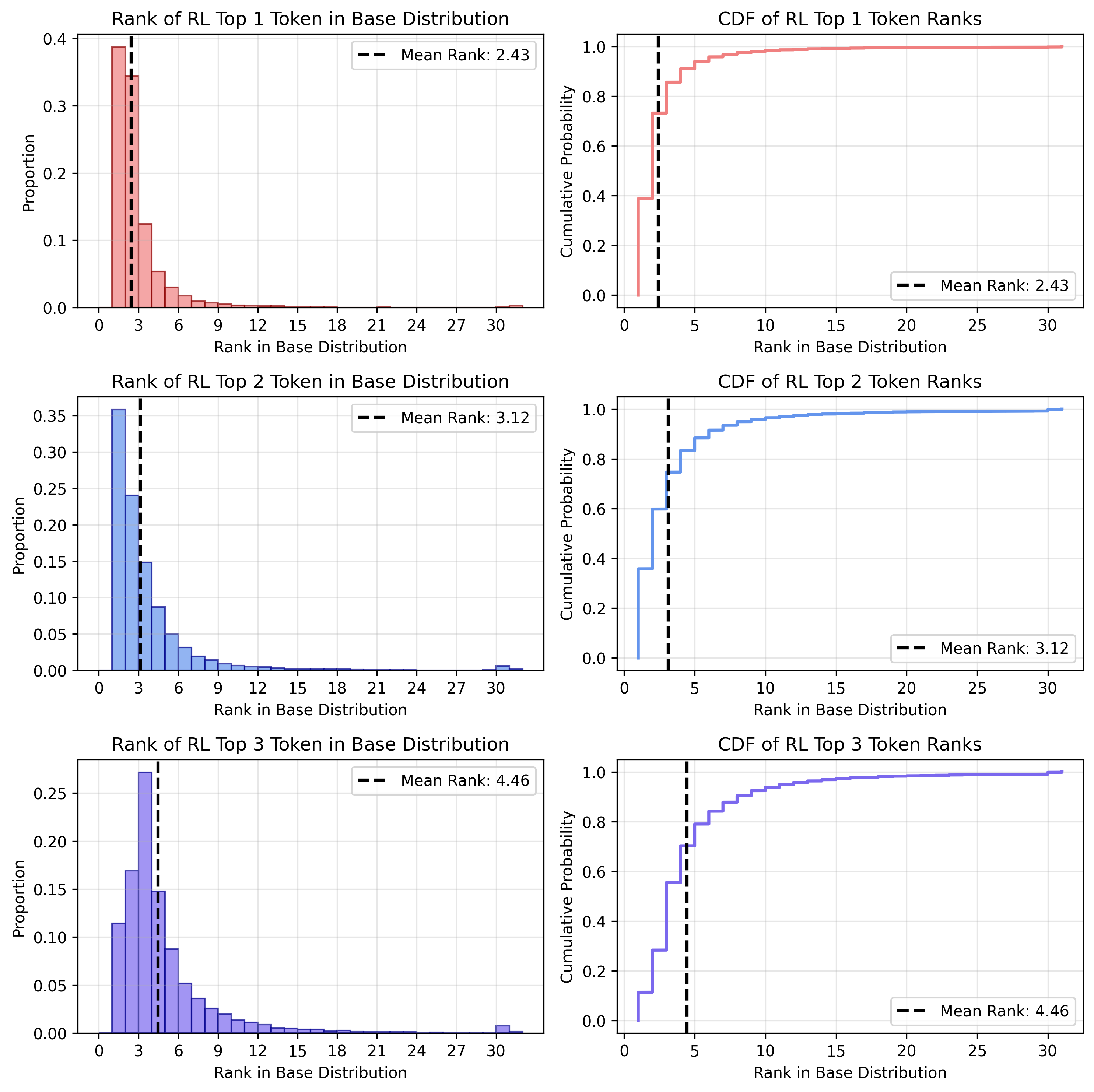}
        \caption{DAPO (0.28) AIME 2025}
    \end{subfigure}
    
    \vspace{1em}
    \begin{subfigure}{0.48\linewidth}
        \includegraphics[width=\linewidth]{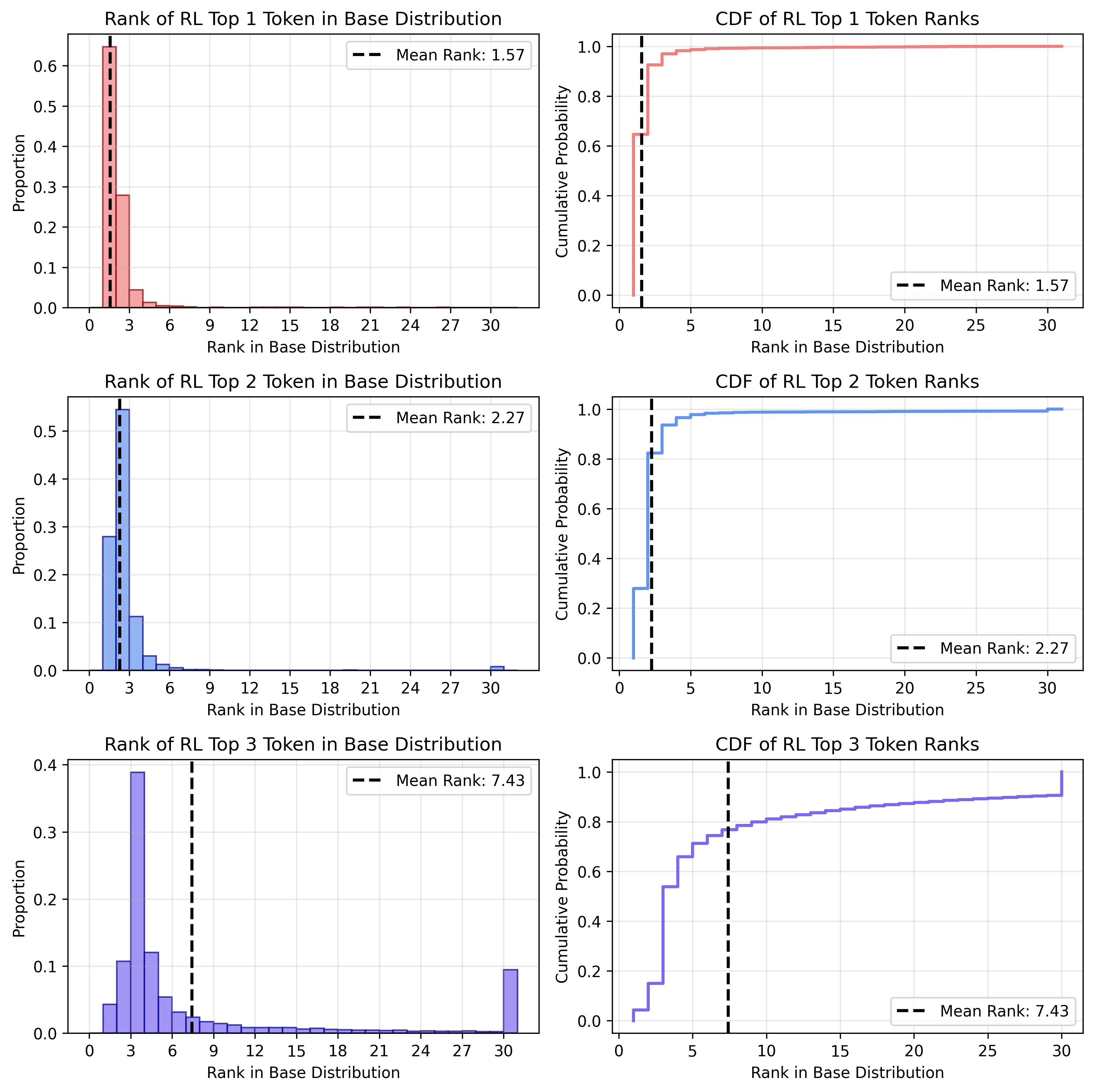}
        \caption{DAPO (0.2) AIME 2024}
    \end{subfigure}
    \hfill
    \begin{subfigure}{0.48\linewidth}
        \includegraphics[width=\linewidth]{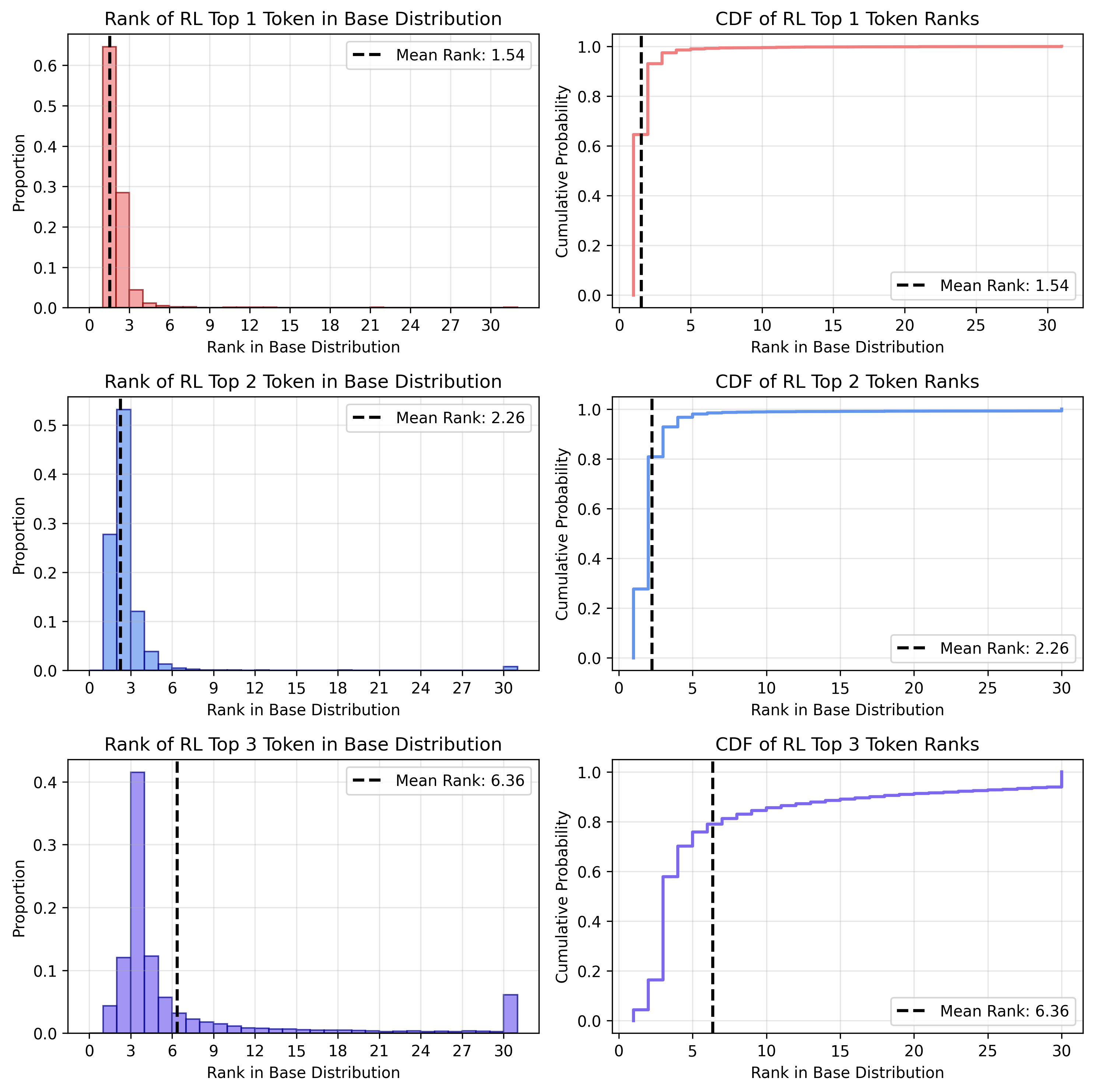}
        \caption{DAPO (0.2) AIME 2025}
    \end{subfigure}
    
    \caption{
        Distribution of base-model ranks for RL's top-3 tokens at high-divergence positions ($\js > 0.1$) for DAPO variants on AIME 2024 and AIME 2025.
    }
    \label{fig:ranks_dapo_variants}
\end{figure}

\FloatBarrier

\paragraph{Fine-tuning Data Results.}
We also analyze distributional shifts on the fine-tuning data to examine how models behave on data they were fine-tuned on. Figure~\ref{fig:js_dapo_variants_train} shows JS divergence distributions, while Figures~\ref{fig:positional_dapo_variants_train},~\ref{fig:topk_dapo_variants_train} show additional analyses.

\begin{figure}[!htbp]
    \centering
    \begin{subfigure}{0.40\textwidth}
        \includegraphics[width=\linewidth]{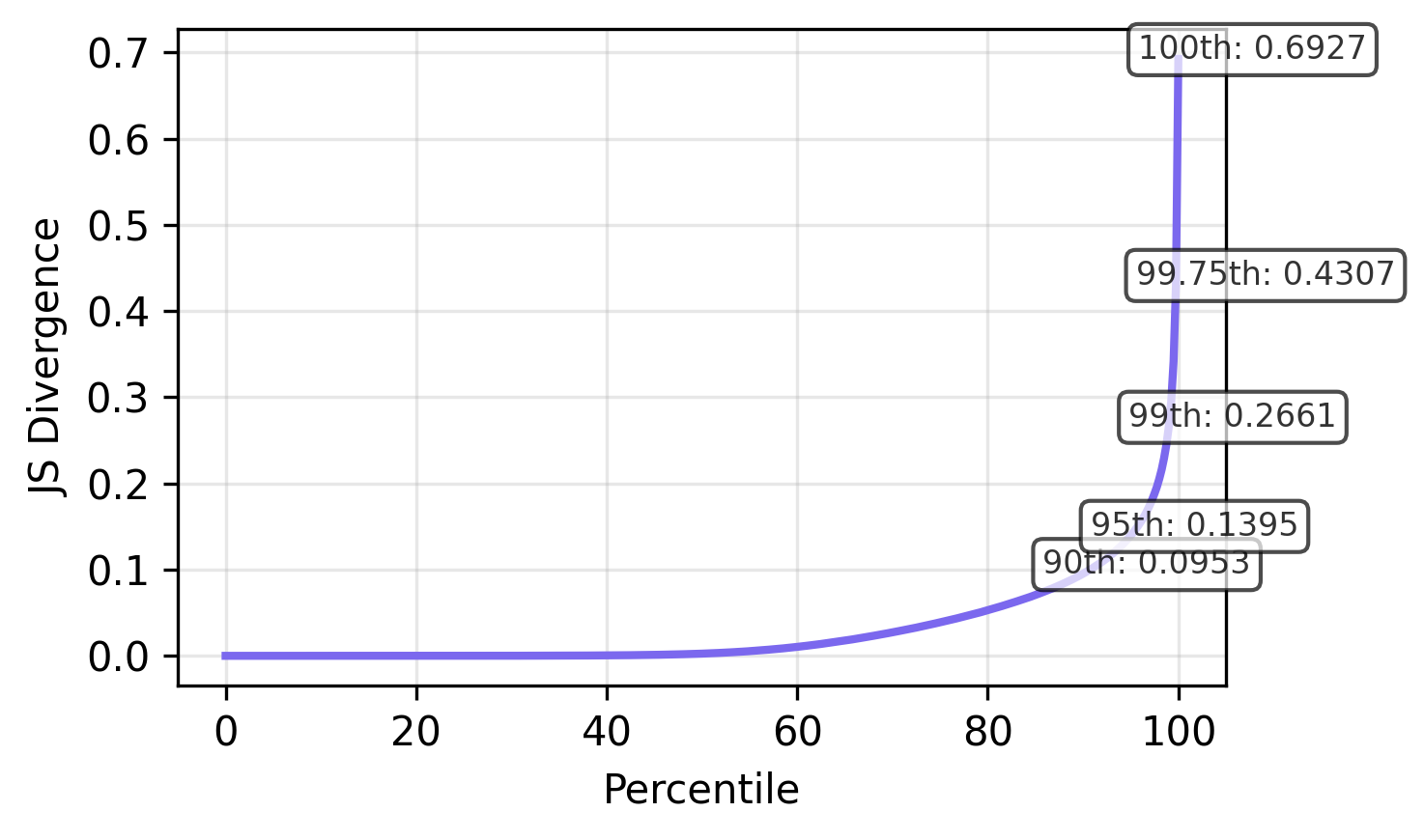}
        \caption{DAPO (clip-higher=0.28): Percentiles}
    \end{subfigure}
    \hspace{1cm}
    \begin{subfigure}{0.40\textwidth}
        \includegraphics[width=\linewidth]{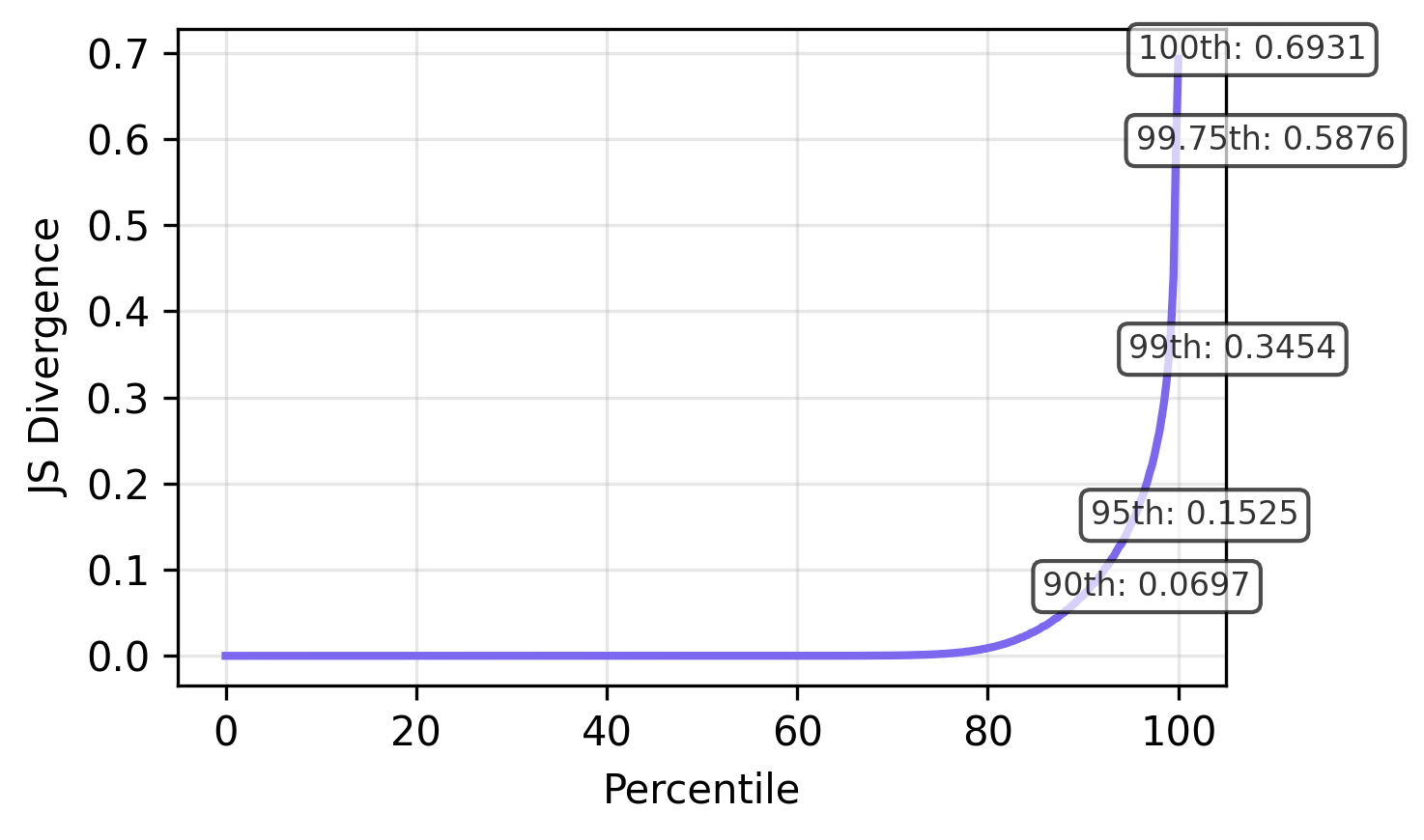}
        \caption{DAPO (clip-higher=0.2): Percentiles}
    \end{subfigure}
    
    \caption{
        JS divergence distributions for DAPO variants of Qwen2.5-Math-7B on fine-tuning data (computed using approximated full distributions instead of truncated ones)
    }
    \label{fig:js_dapo_variants_train}
\end{figure}

\begin{figure}[!htbp]
    \centering
    \begin{subfigure}{0.42\linewidth}
        \includegraphics[width=\linewidth]{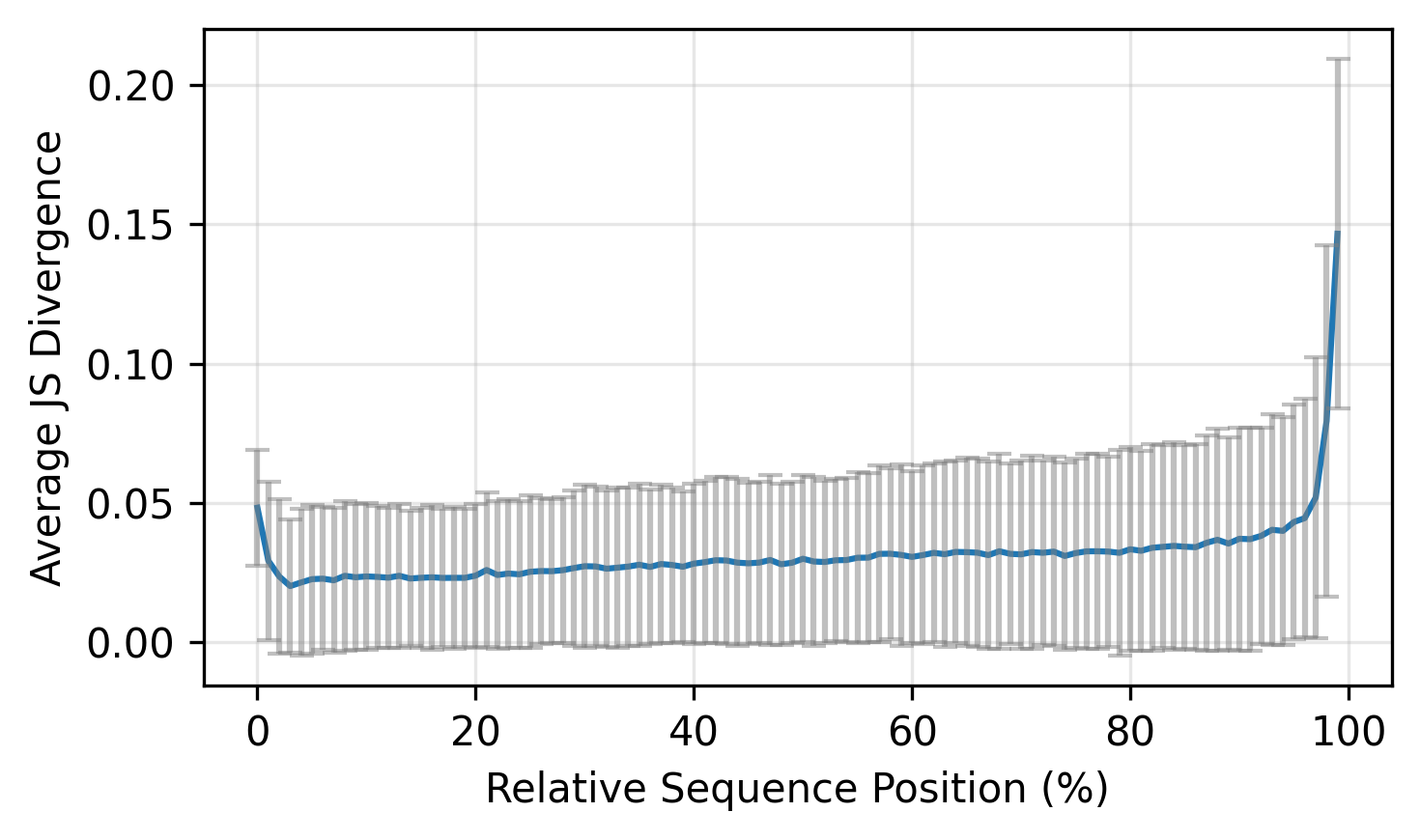}
        \caption{DAPO (clip-higher=0.28)}
    \end{subfigure}
    \hspace{1cm}
    \begin{subfigure}{0.42\linewidth}
        \includegraphics[width=\linewidth]{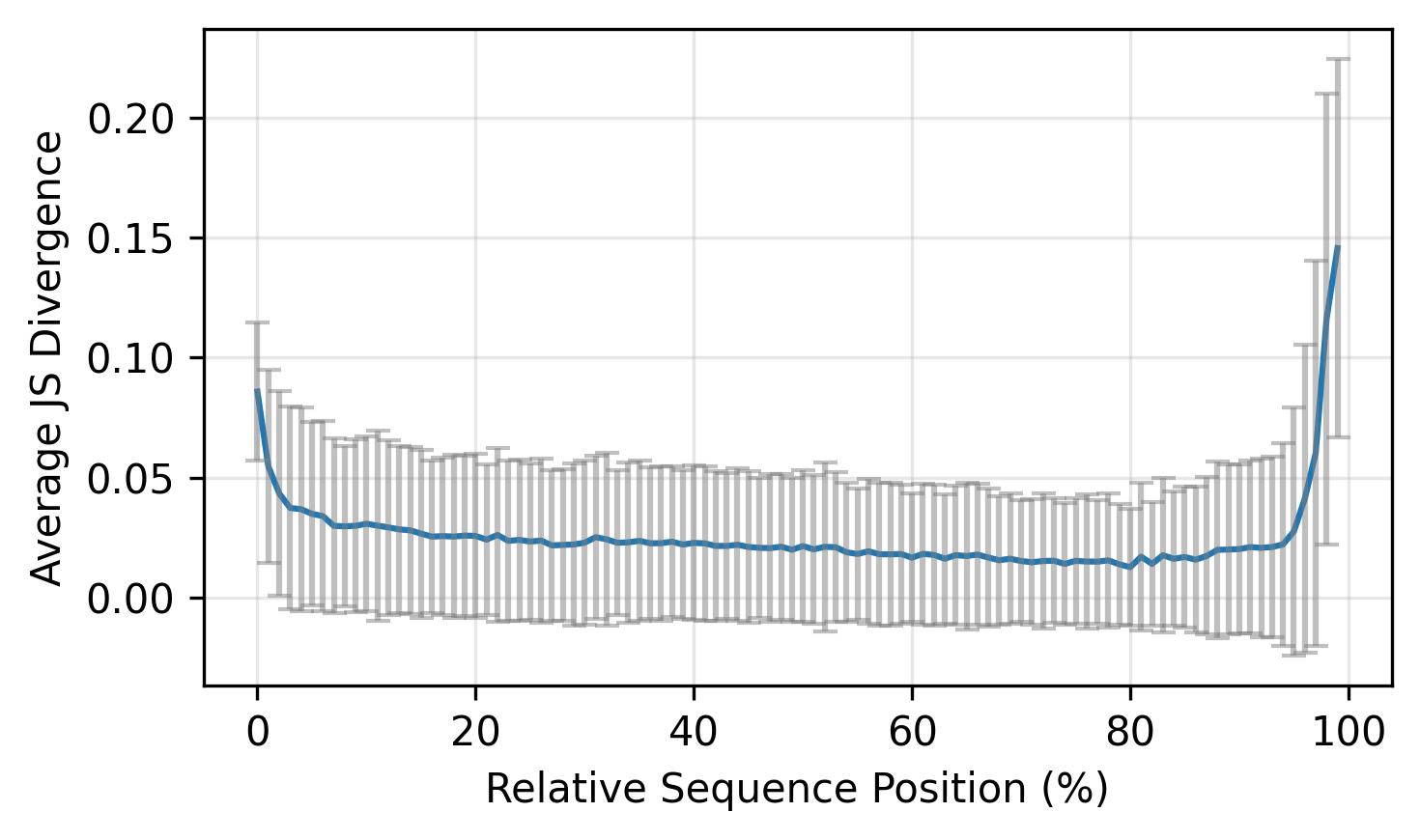}
        \caption{DAPO (clip-higher=0.2)}
    \end{subfigure}
    
    \caption{
        Mean JS divergence by normalized token position for DAPO variants on fine-tuning data.
    }
    \label{fig:positional_dapo_variants_train}
\end{figure}

\begin{figure}[!htbp]
    \centering
    \begin{subfigure}{0.42\linewidth}
        \includegraphics[width=\linewidth]{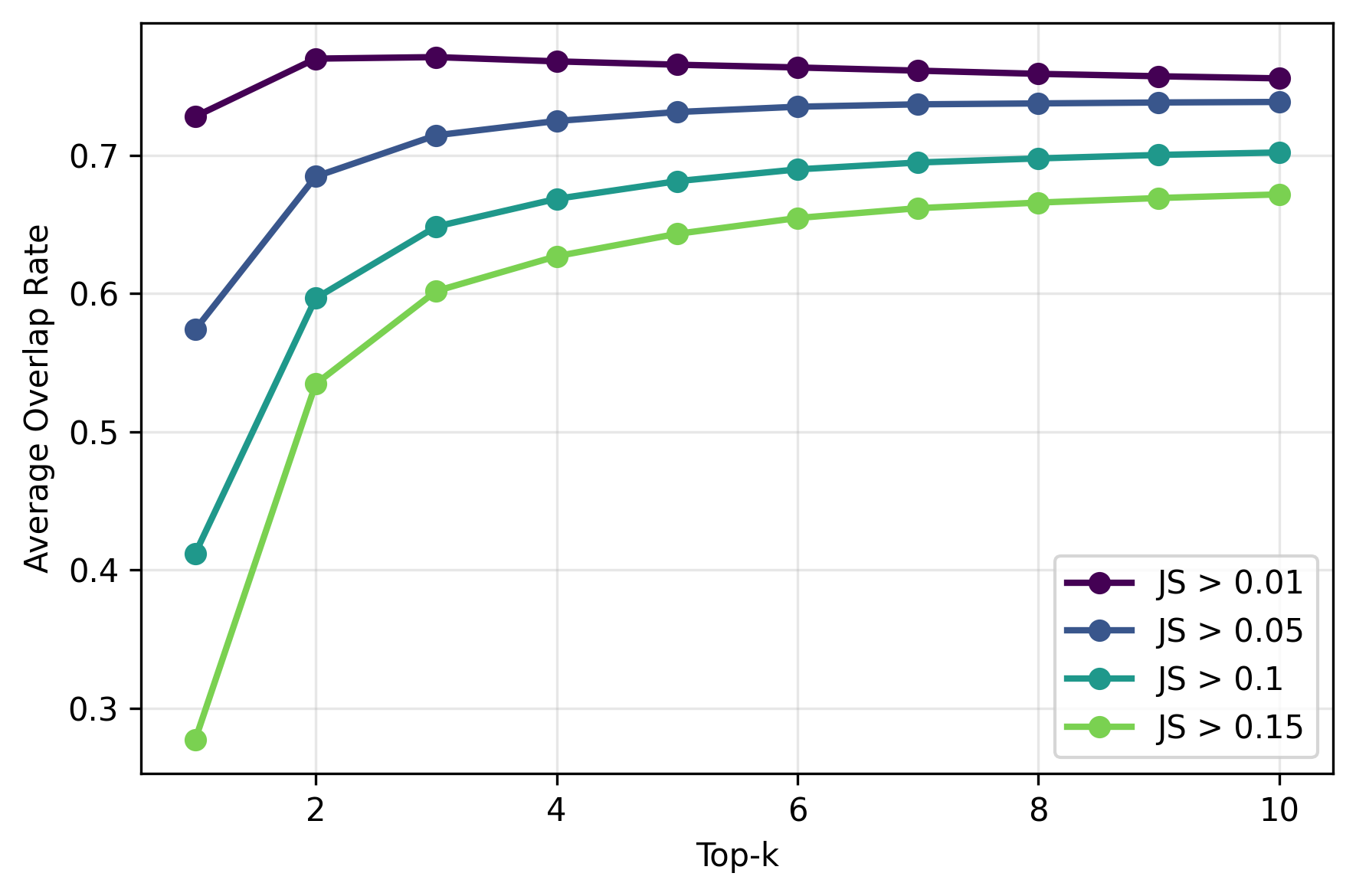}
        \caption{DAPO (clip-higher=0.28)}
    \end{subfigure}
    \hspace{1cm}
    \begin{subfigure}{0.42\linewidth}
        \includegraphics[width=\linewidth]{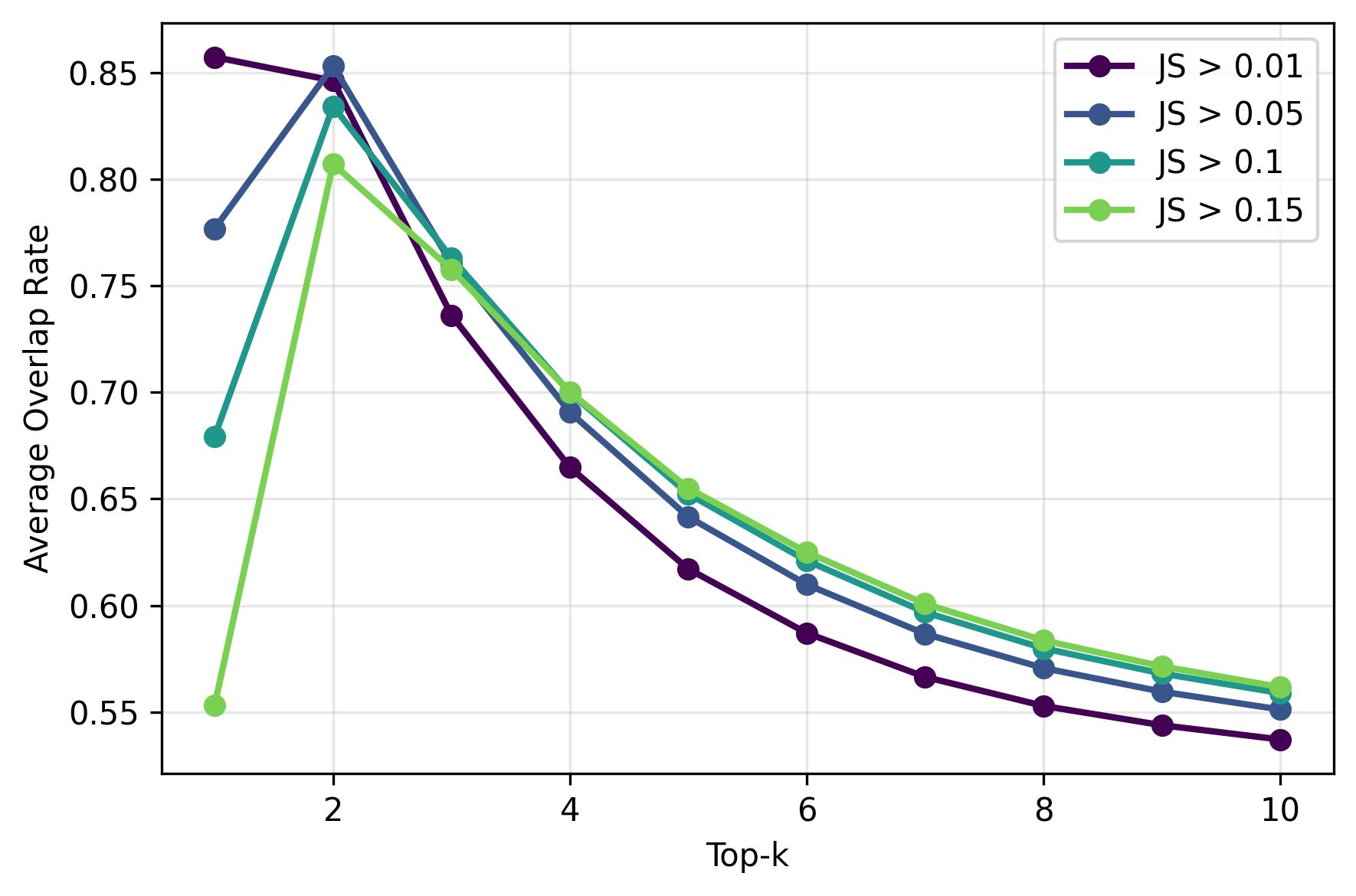}
        \caption{DAPO (clip-higher=0.2)}
    \end{subfigure}
    
    \caption{
        Top-$k$ token overlap between base and RL models at divergent positions ($\mathrm{JS}_t>0.1$) for DAPO variants on fine-tuning data.
    }
    \label{fig:topk_dapo_variants_train}
\end{figure}

\FloatBarrier
\subsubsection{Qwen3-8B with DAPO}

We analyze Qwen3-8B trained with DAPO on AIME 2024 and AIME 2025 to further demonstrate the robustness of our findings across model families and training configurations. Notably, this model was fine-tuned for nearly twice as many RLVR training steps as the Qwen2.5-Math-7B, providing a useful point of comparison for understanding the effect of extended RL training.

Figure~\ref{fig:js_qwen3_8b} shows JS divergence percentile curves, revealing similarly sparse distributional shifts despite the longer training horizon. Figure~\ref{fig:positional_qwen3_8b} shows positional concentration, Figure~\ref{fig:entropy_qwen3_8b} shows entropy distributions across divergence bins, and Figure~\ref{fig:tolerance_qwen3_8b} shows tail behavior analysis.

\begin{figure}[!htbp]
    \centering
    \begin{subfigure}{0.40\textwidth}
        \includegraphics[width=\linewidth]{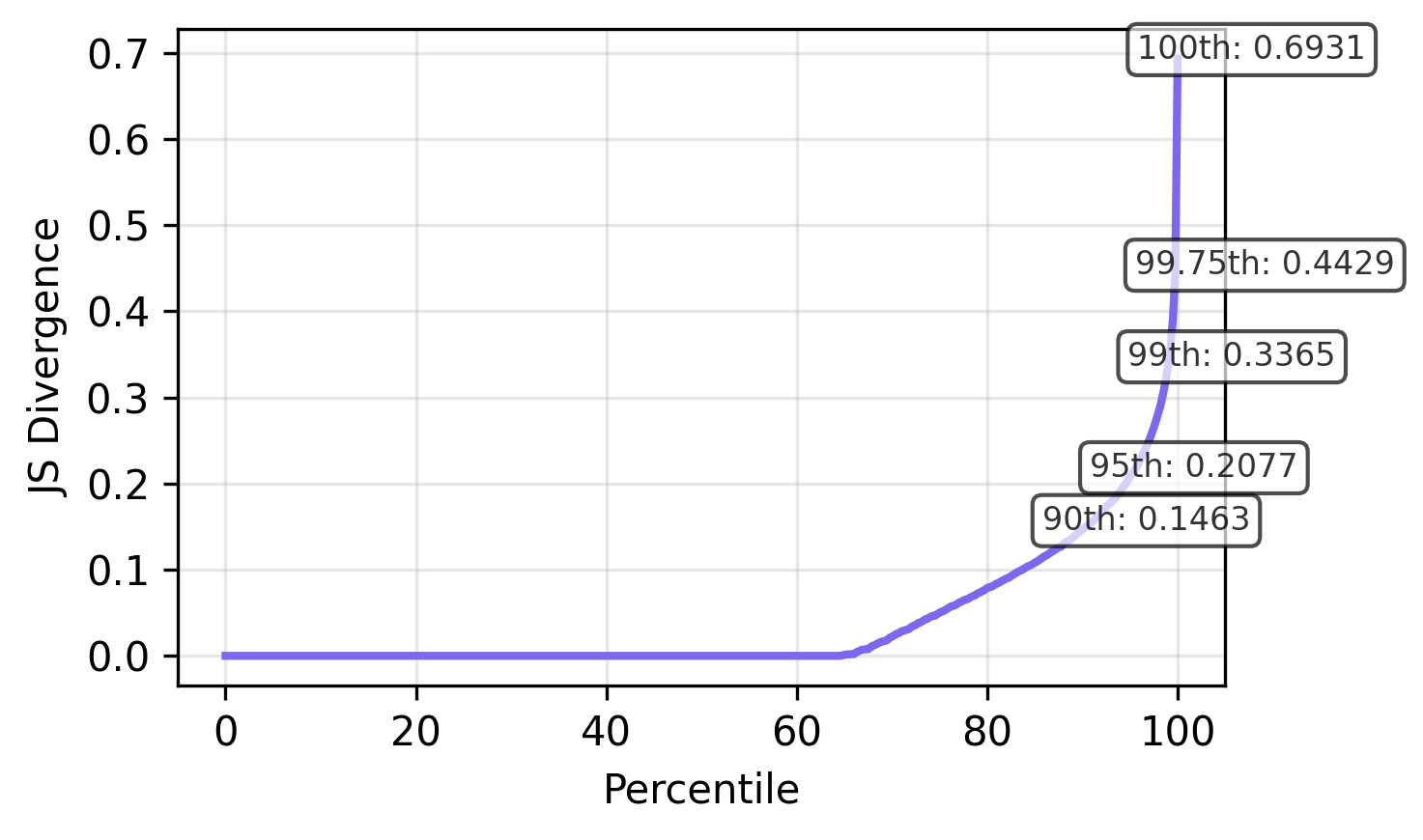}
        \caption{AIME 2024: Percentiles}
    \end{subfigure}
    \hspace{1cm}
    \begin{subfigure}{0.40\textwidth}
        \includegraphics[width=\linewidth]{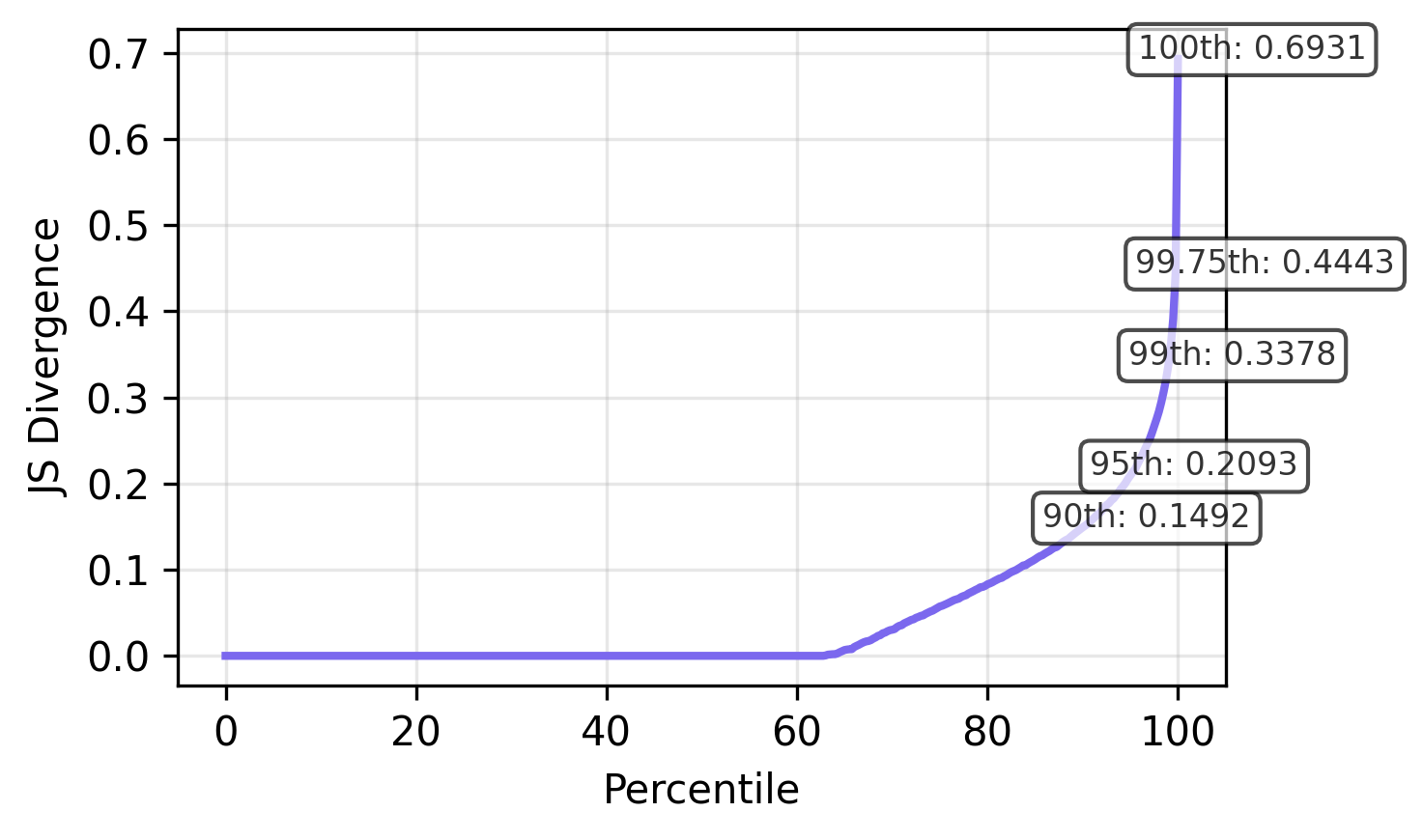}
        \caption{AIME 2025: Percentiles}
    \end{subfigure}
    
    \caption{
        JS divergence distributions for Qwen3-8B with DAPO on AIME 2024 and AIME 2025.
        Sparse distributional shifts persist even under extended training.
    }
    \label{fig:js_qwen3_8b}
\end{figure}

\begin{figure}[!htbp]
    \centering
    \begin{subfigure}{0.40\textwidth}
        \includegraphics[width=\linewidth]{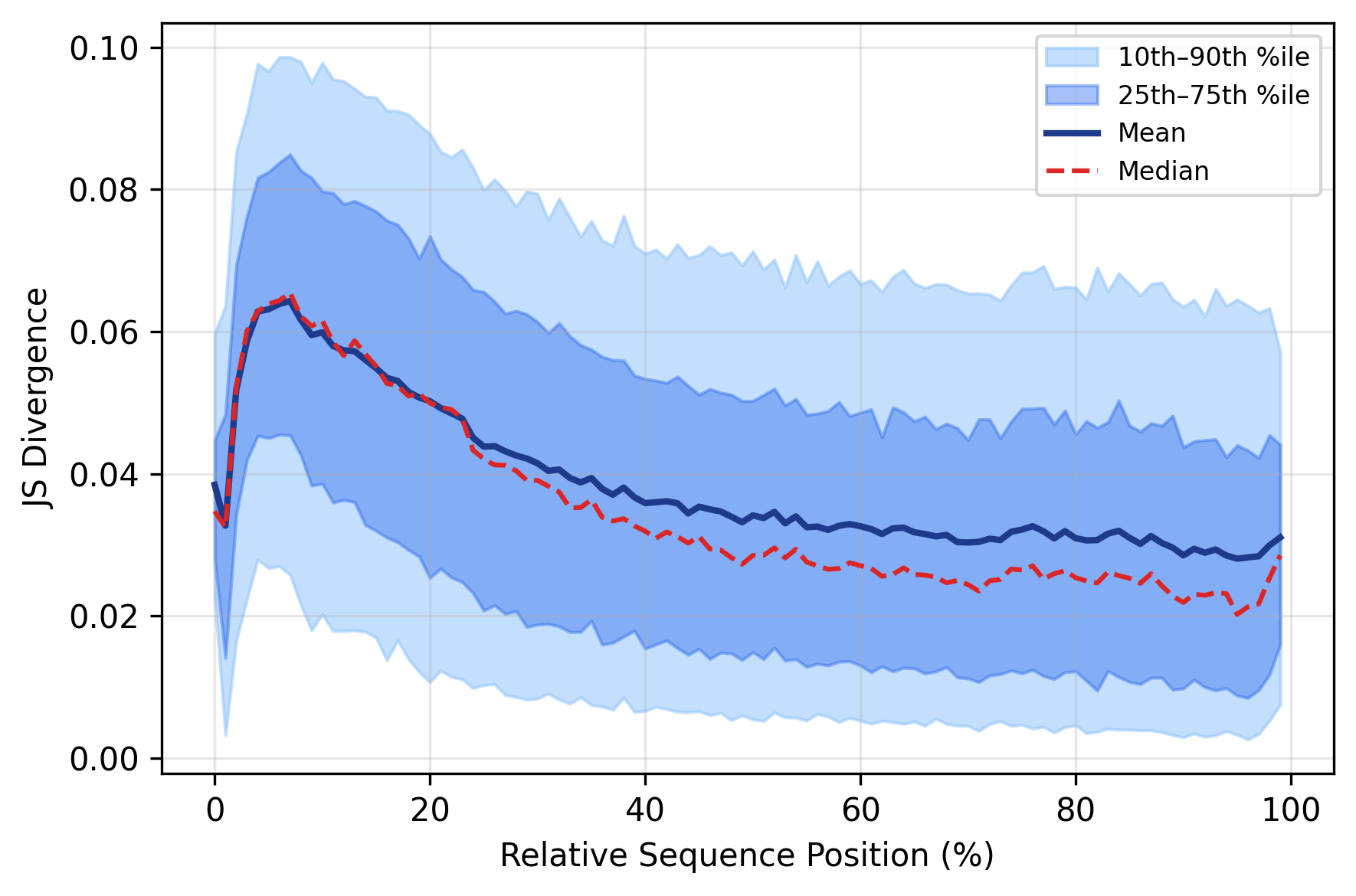}
        \caption{AIME 2024}
    \end{subfigure}
    \hspace{1cm}
    \begin{subfigure}{0.40\textwidth}
        \includegraphics[width=\linewidth]{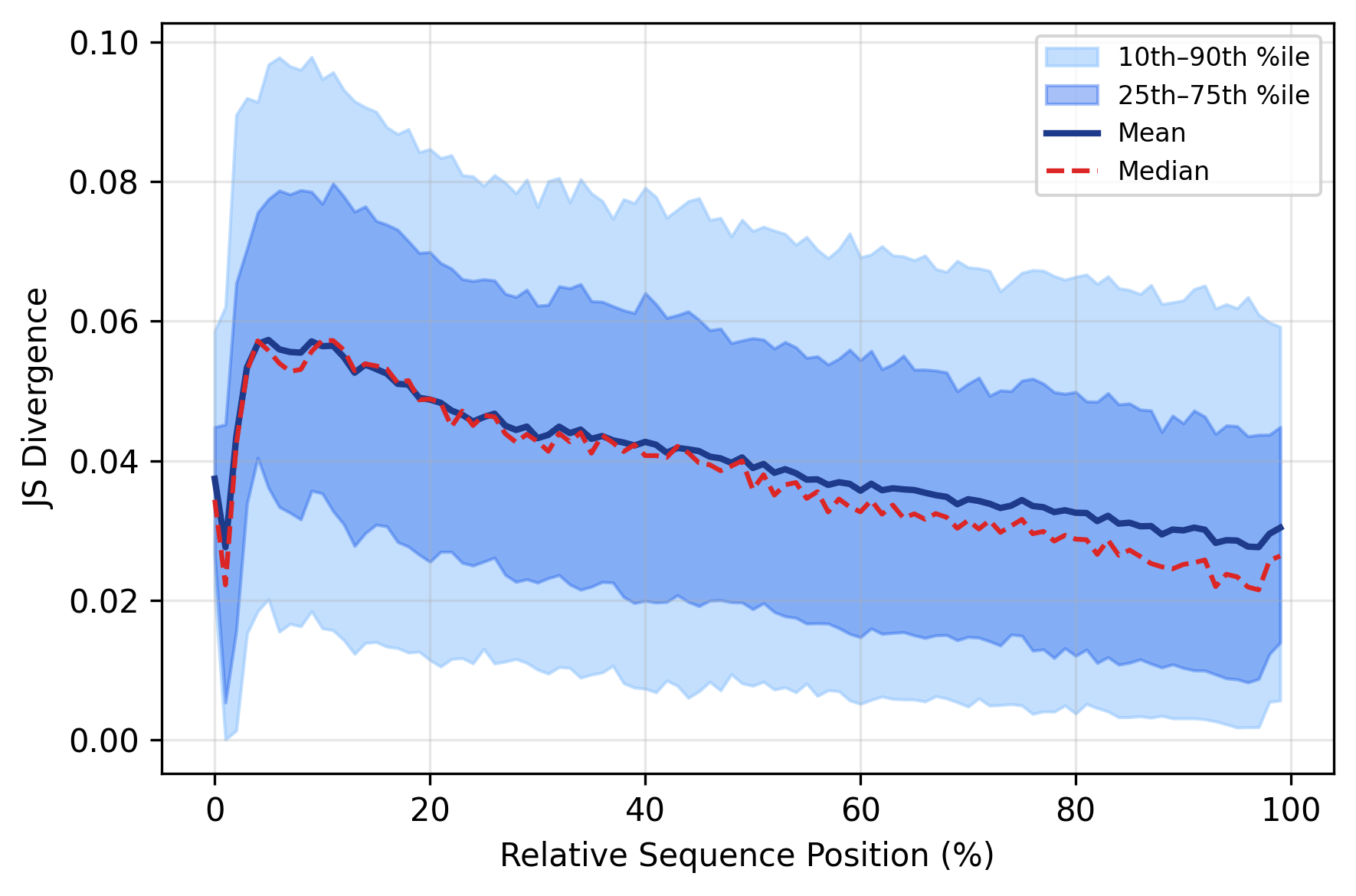}
        \caption{AIME 2025}
    \end{subfigure}
    
    \caption{
        Mean JS divergence by normalized token position for Qwen3-8B with DAPO on AIME 2024 and AIME 2025.
        Divergences remain concentrated at early and late positions, consistent with other models.
    }
    \label{fig:positional_qwen3_8b}
\end{figure}

\begin{figure}[!htbp]
    \centering
    \begin{subfigure}{0.42\linewidth}
        \includegraphics[width=\linewidth]{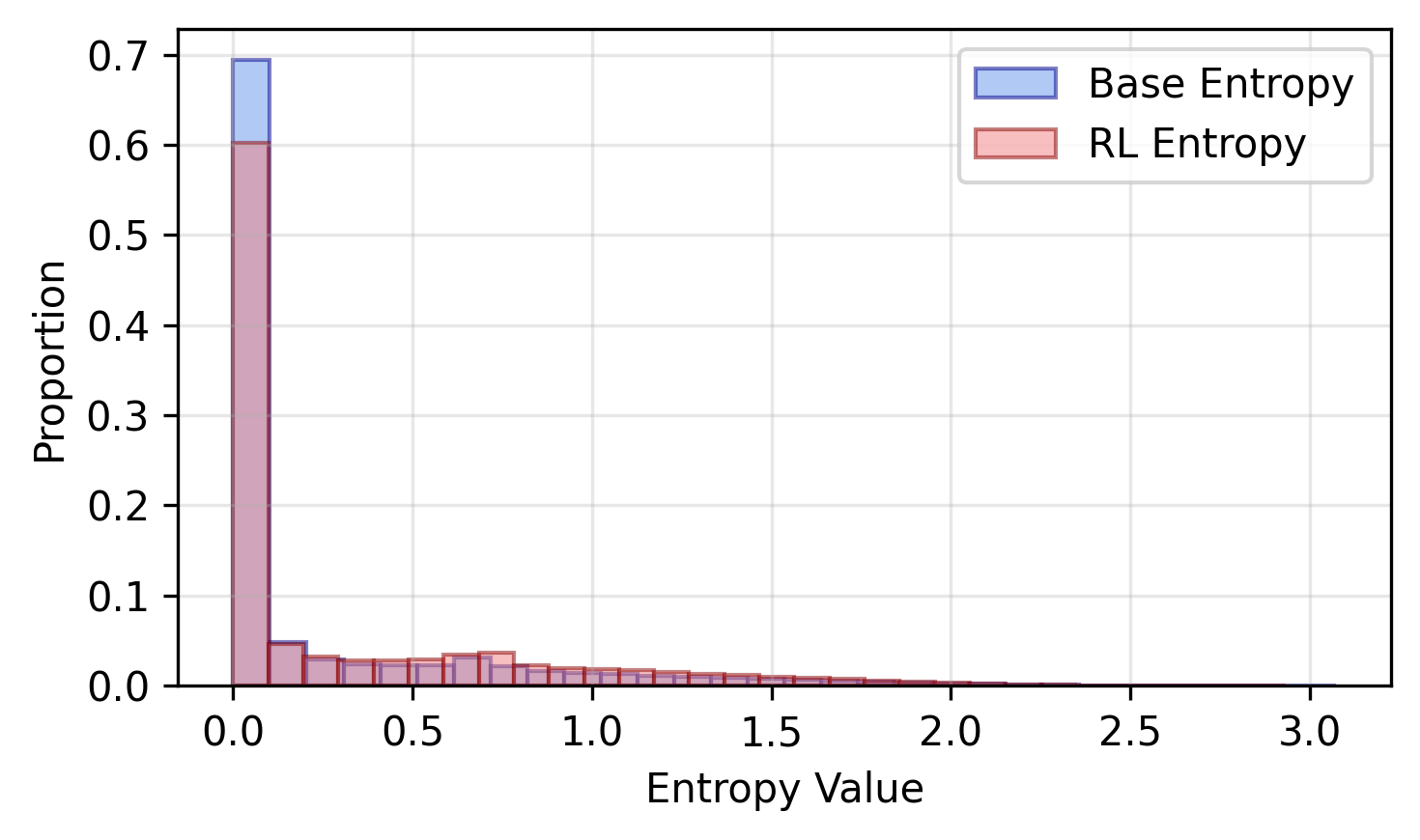}
        \caption{Qwen3-8B DAPO Low JS bin ($<0.1$)}
    \end{subfigure}
    \hspace{1cm}
    \begin{subfigure}{0.42\linewidth}
        \includegraphics[width=\linewidth]{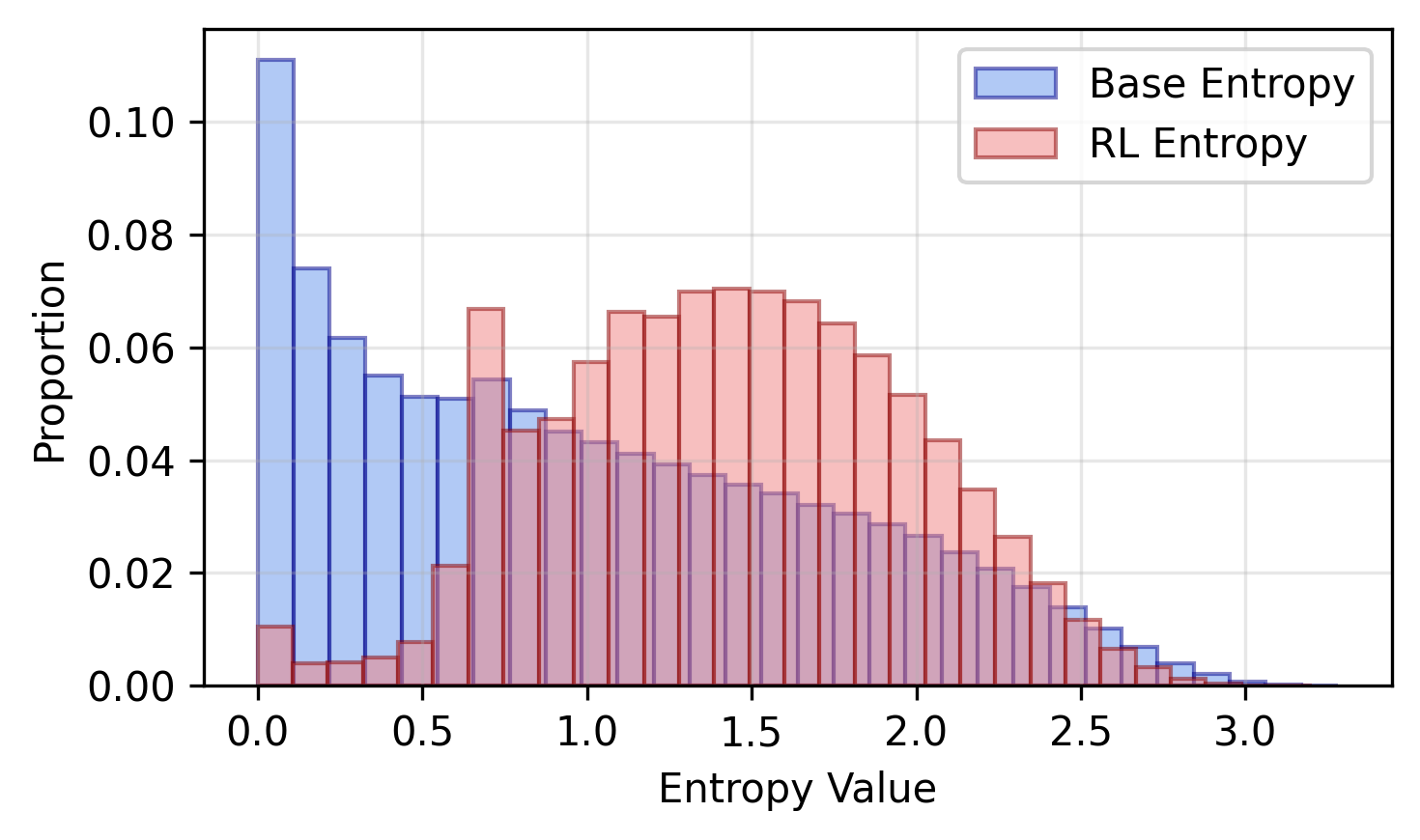}
        \caption{Qwen3-8B DAPO High JS bin ($>0.1$)}
    \end{subfigure}
    
    \caption{
        Entropy distributions across divergence bins for Qwen3-8B with DAPO on AIME 2024.
    }
    \label{fig:entropy_qwen3_8b}
\end{figure}

\begin{figure}[!htbp]
    \centering
    \begin{subfigure}{0.42\linewidth}
        \includegraphics[width=\linewidth]{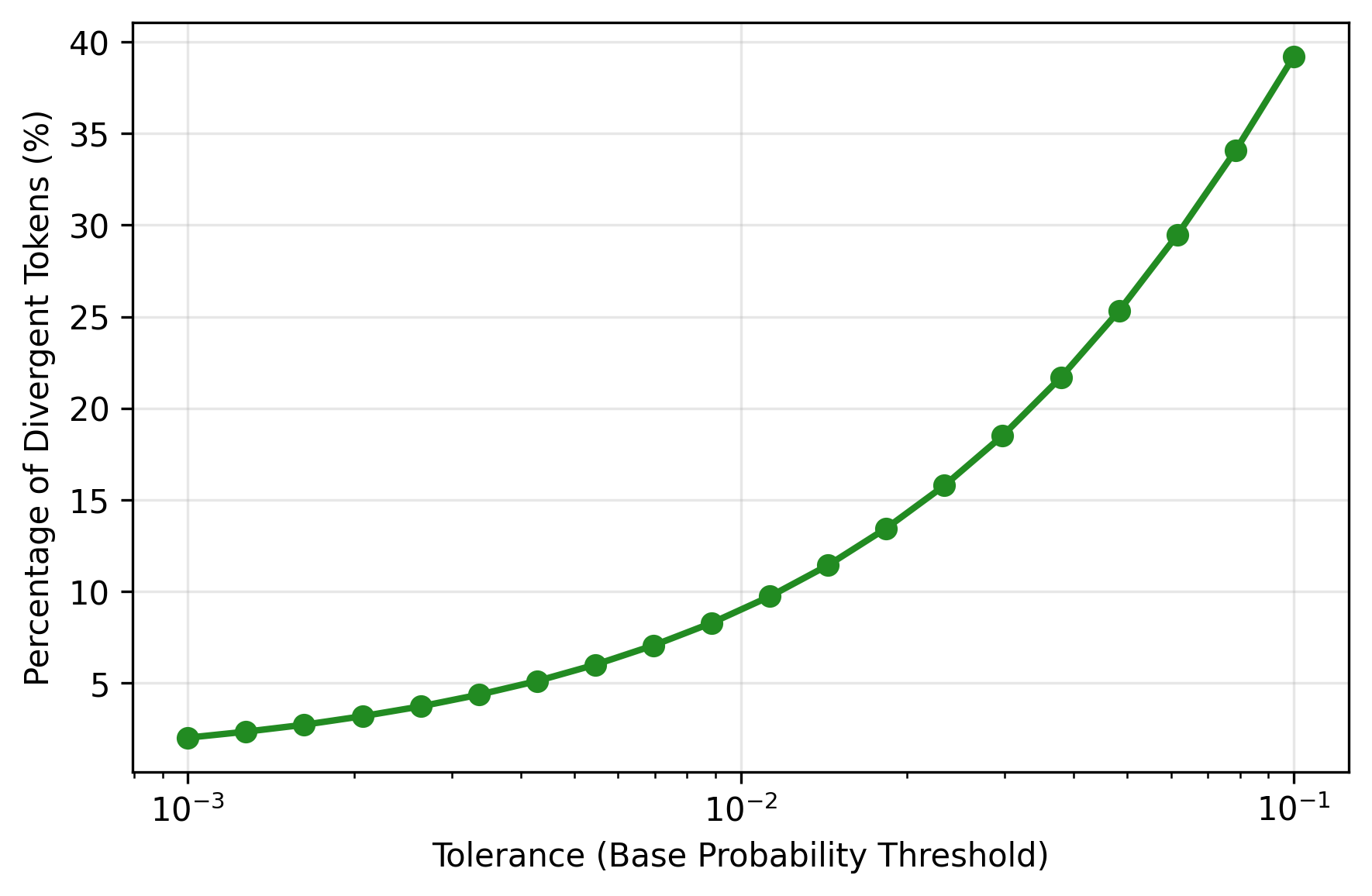}
        \caption{Qwen3-8B DAPO AIME 2024}
    \end{subfigure}
    \hspace{1cm}
    \begin{subfigure}{0.42\linewidth}
        \includegraphics[width=\linewidth]{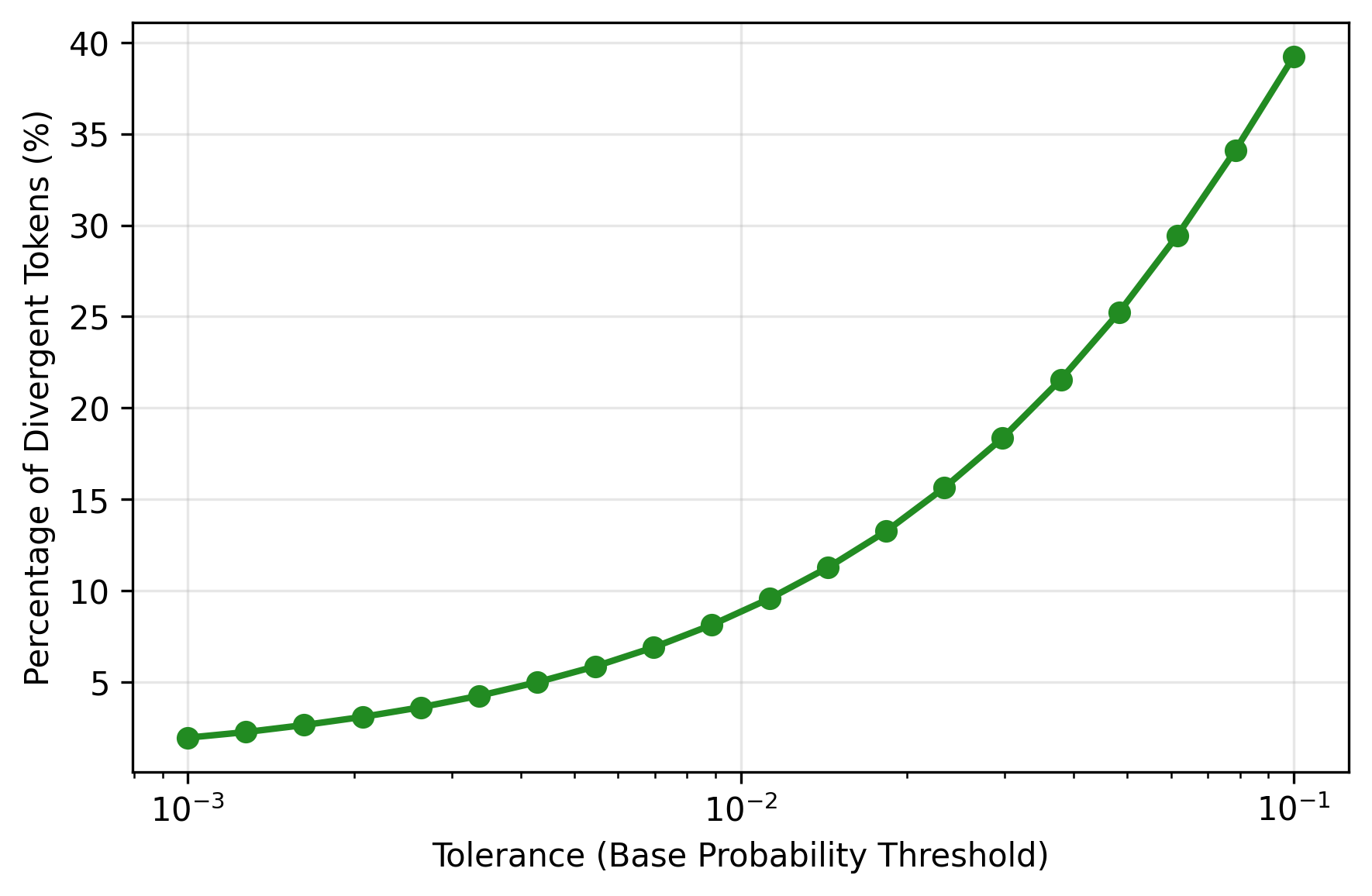}
        \caption{Qwen3-8B DAPO AIME 2025}
    \end{subfigure}
    
    \caption{
        Percentage of divergent tokens whose RL top-1 choice had base probability below a given threshold for Qwen3-8B with DAPO on AIME 2024 and AIME 2025.
    }
    \label{fig:tolerance_qwen3_8b}
\end{figure}
\begin{figure}[!htbp]
    \centering
    \begin{subfigure}{0.7\linewidth}
        \includegraphics[width=\linewidth]{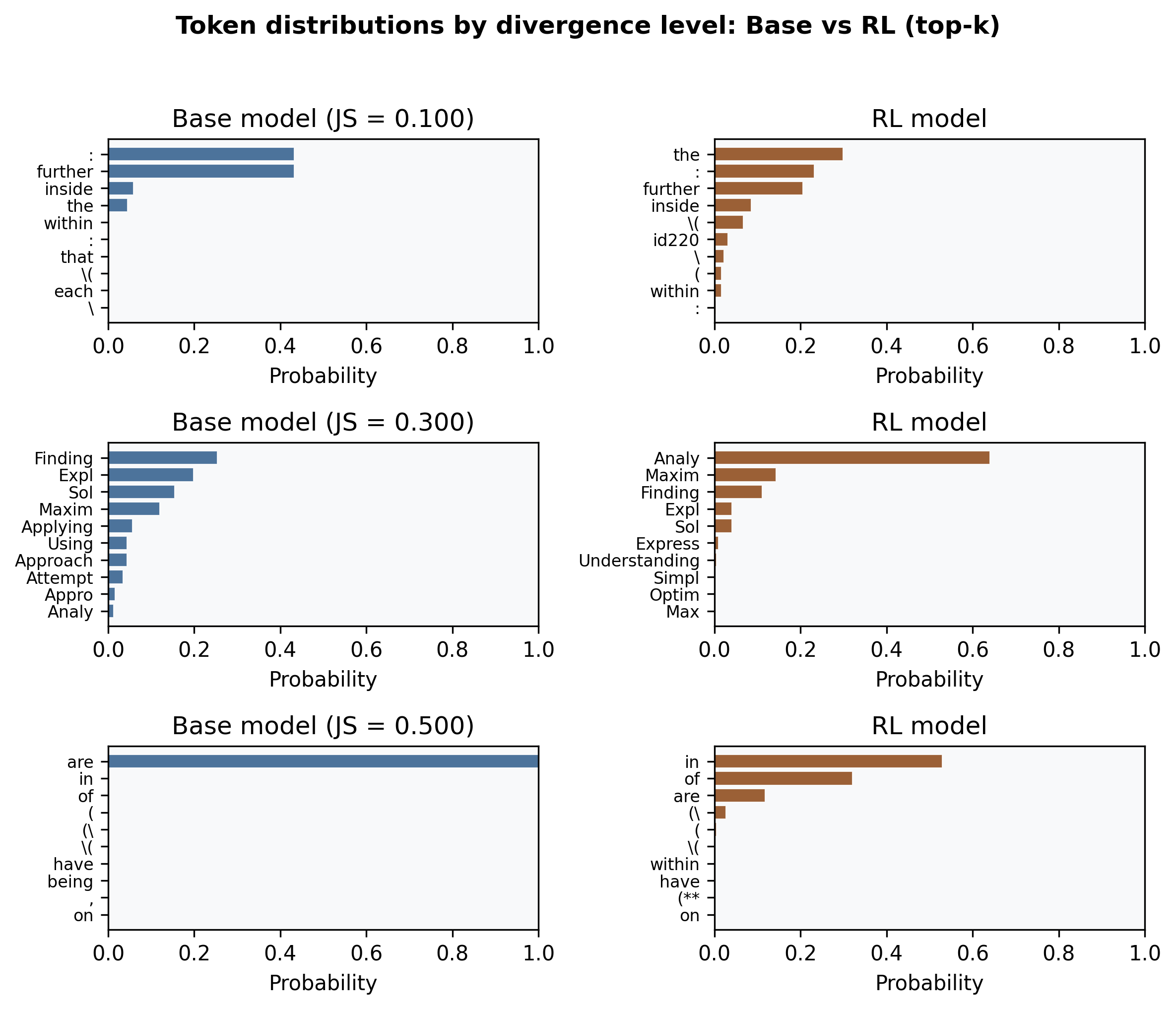}
        \caption{Qwen3-8B DAPO AIME 2024}
    \end{subfigure}
    
    \caption{
        Example of base vs RL distributions at differing divergence levels.
    }
    \label{fig:example_dist_diff_qwen3_8b}
\end{figure}

\FloatBarrier
\subsubsection{Mistral-Small-24B with SimpleRL}

We analyze Mistral-Small-24B trained with SimpleRL on AIME 2024 and AIME 2025 to demonstrate the generalizability of our findings across different model architectures. Figure~\ref{fig:js_mistral} shows JS divergence percentile curves, revealing consistent sparsity patterns. Figure~\ref{fig:positional_mistral} shows positional concentration, Figure~\ref{fig:entropy_mistral} shows entropy distributions across divergence bins, and Figure~\ref{fig:tolerance_mistral} shows tail behavior analysis.

\begin{figure}[!htbp]
    \centering
    \begin{subfigure}{0.40\textwidth}
        \includegraphics[width=\linewidth]{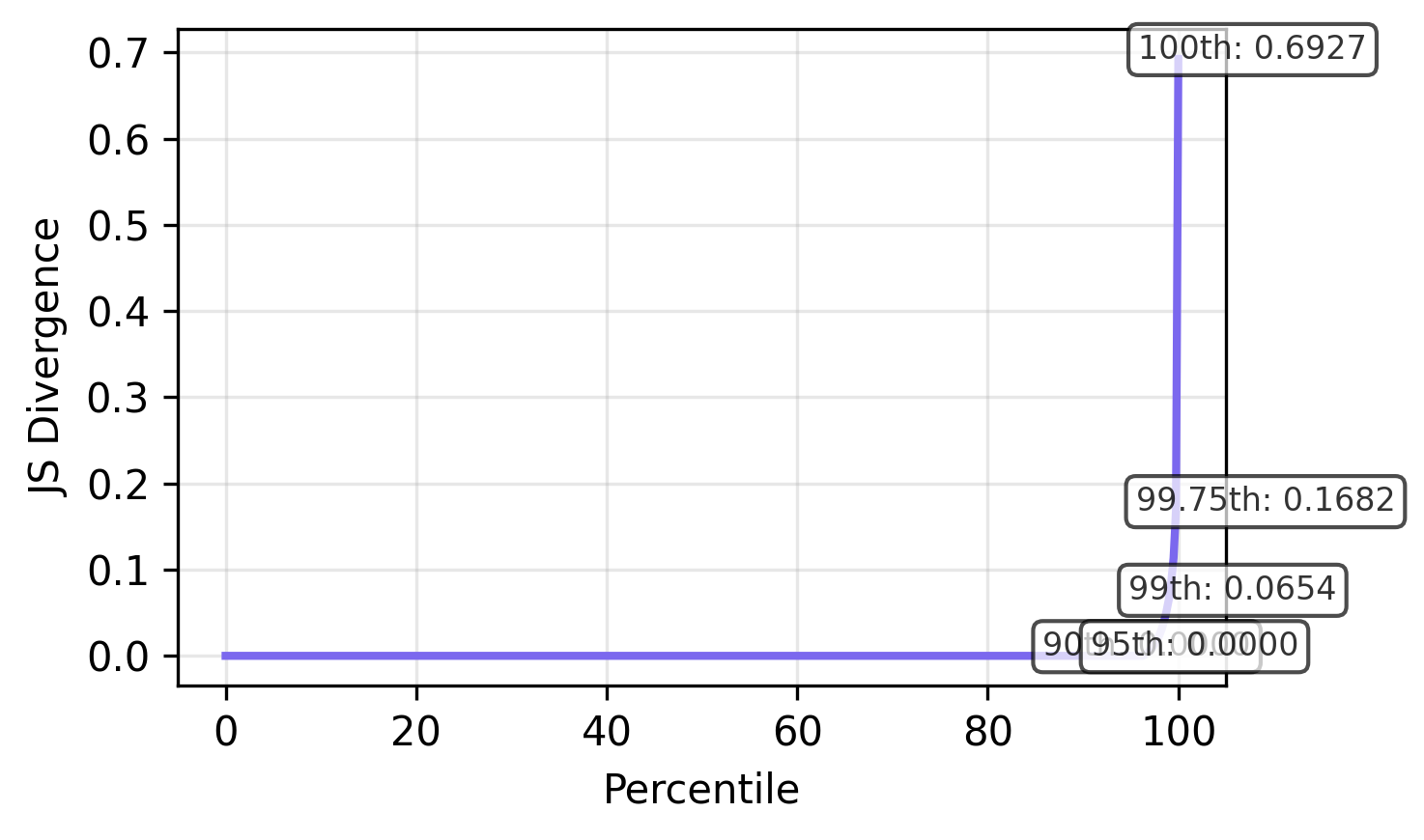}
        \caption{AIME 2024: Percentiles}
    \end{subfigure}
    \hspace{1cm}
    \begin{subfigure}{0.40\textwidth}
        \includegraphics[width=\linewidth]{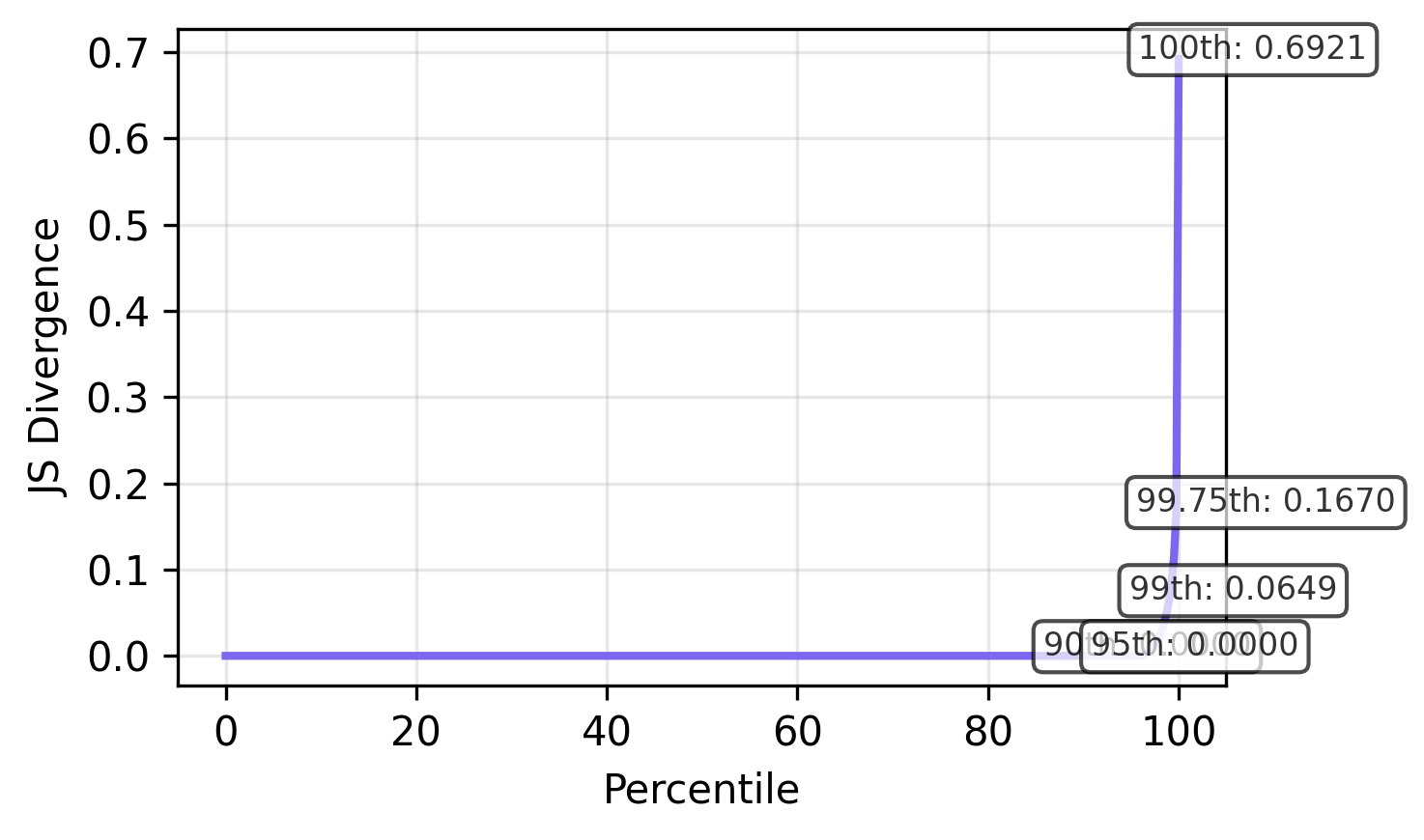}
        \caption{AIME 2025: Percentiles}
    \end{subfigure}
    
    \caption{
        JS divergence distributions for Mistral-Small-24B with SimpleRL on AIME 2024 and AIME 2025.
        Sparse distributional shifts are consistent with findings in the main text across both datasets.
    }
    \label{fig:js_mistral}
\end{figure}

\begin{figure}[!htbp]
    \centering
    \begin{subfigure}{0.40\textwidth}
        \includegraphics[width=\linewidth]{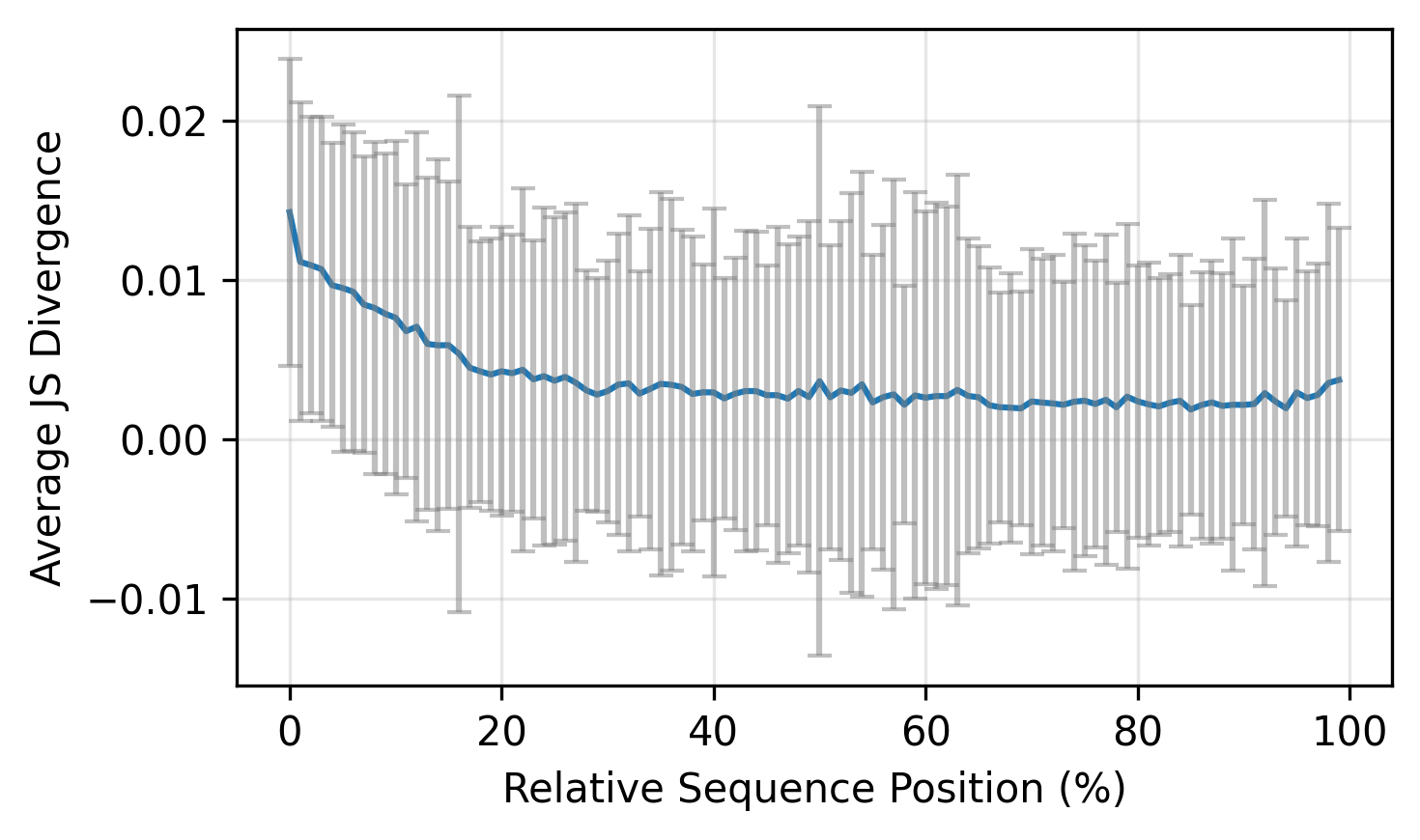}
        \caption{AIME 2024}
    \end{subfigure}
    \hspace{1cm}
    \begin{subfigure}{0.40\textwidth}
        \includegraphics[width=\linewidth]{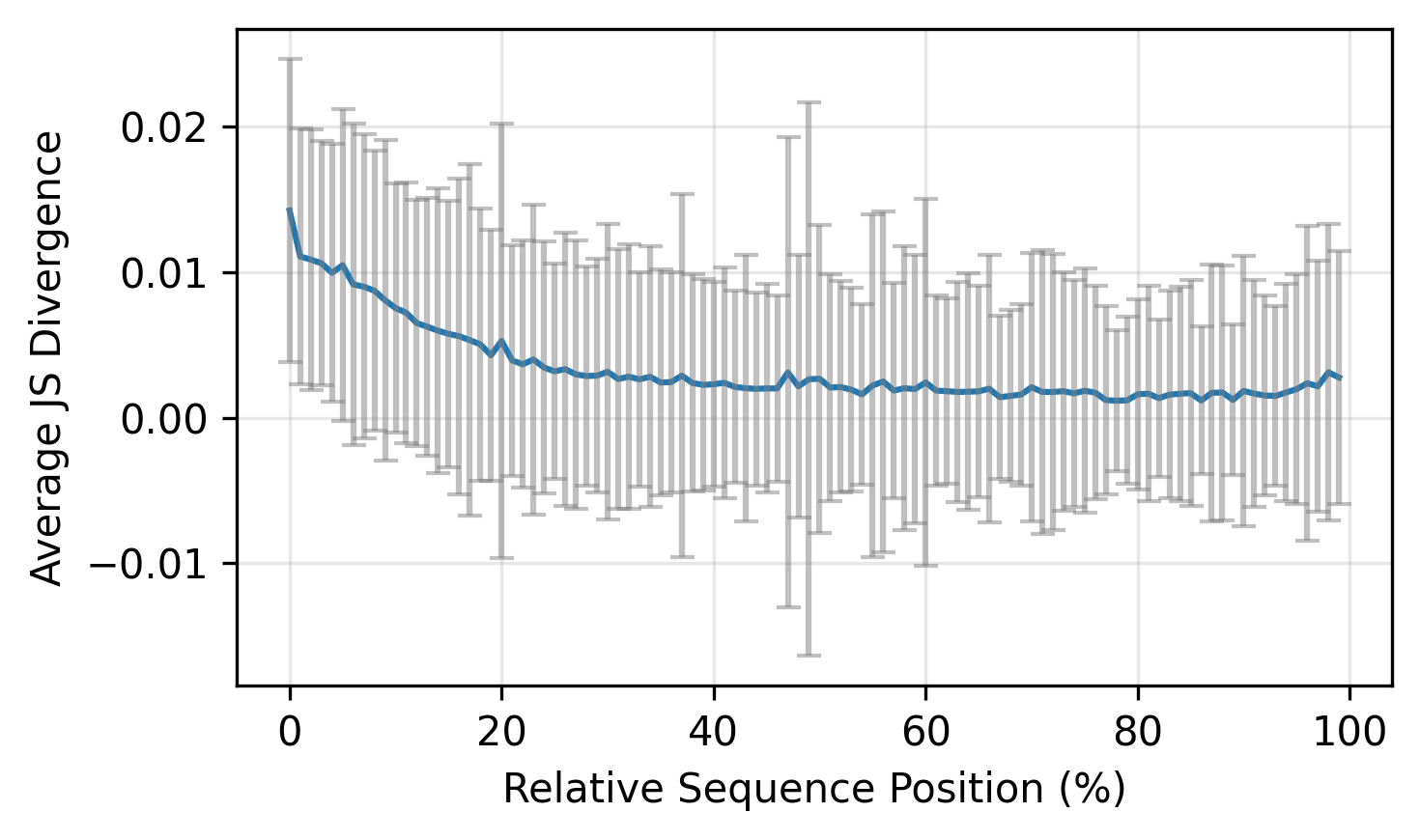}
        \caption{AIME 2025}
    \end{subfigure}
    
    \caption{
        Mean JS divergence by normalized token position for Mistral-Small-24B with SimpleRL on AIME 2024 and AIME 2025.
        Consistent with findings for other models, divergences are concentrated at the start and end of responses.
    }
    \label{fig:positional_mistral}
\end{figure}

\begin{figure}[!htbp]
    \centering
    \begin{subfigure}{0.42\linewidth}
        \includegraphics[width=\linewidth]{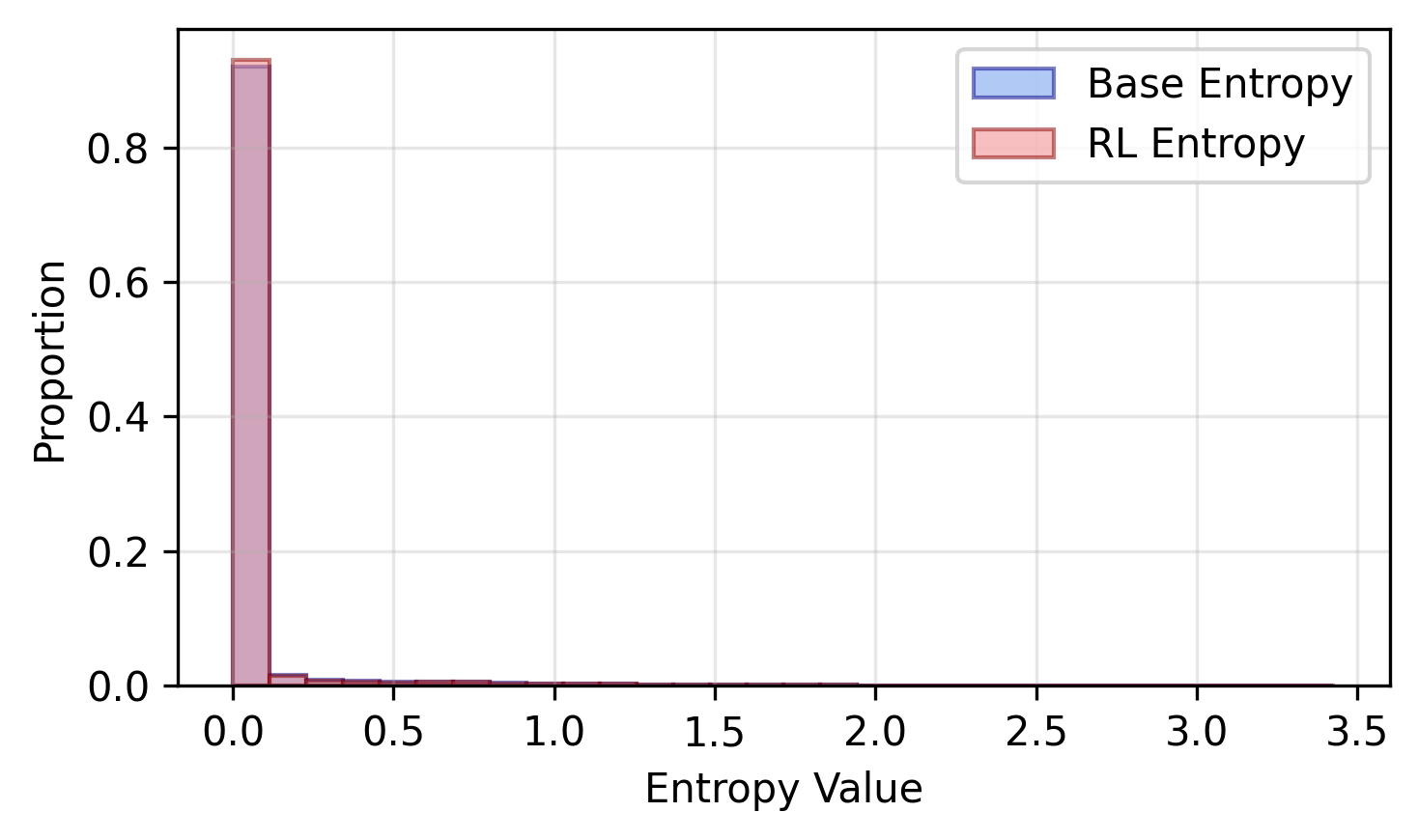}
        \caption{Mistral-24B SimpleRL Low JS bin ($<0.1$)}
    \end{subfigure}
    \hspace{1cm}
    \begin{subfigure}{0.42\linewidth}
        \includegraphics[width=\linewidth]{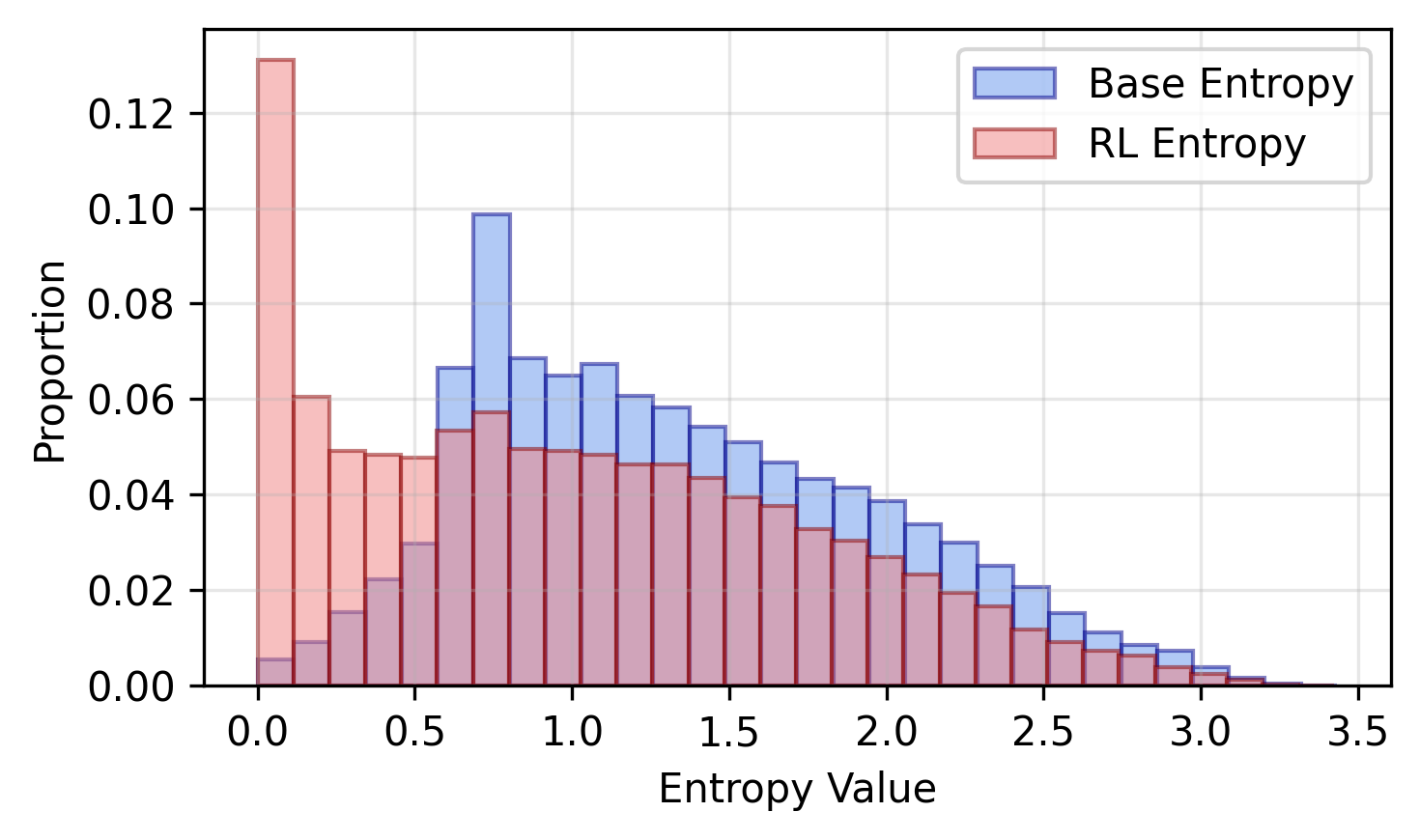}
        \caption{Mistral-24B SimpleRL High JS bin ($>0.1$)}
    \end{subfigure}
    
    \caption{
        Entropy distributions across divergence bins for Mistral-Small-24B with SimpleRL on AIME 2024.
    }
    \label{fig:entropy_mistral}
\end{figure}

\begin{figure}[!htbp]
    \centering
    \begin{subfigure}{0.42\linewidth}
        \includegraphics[width=\linewidth]{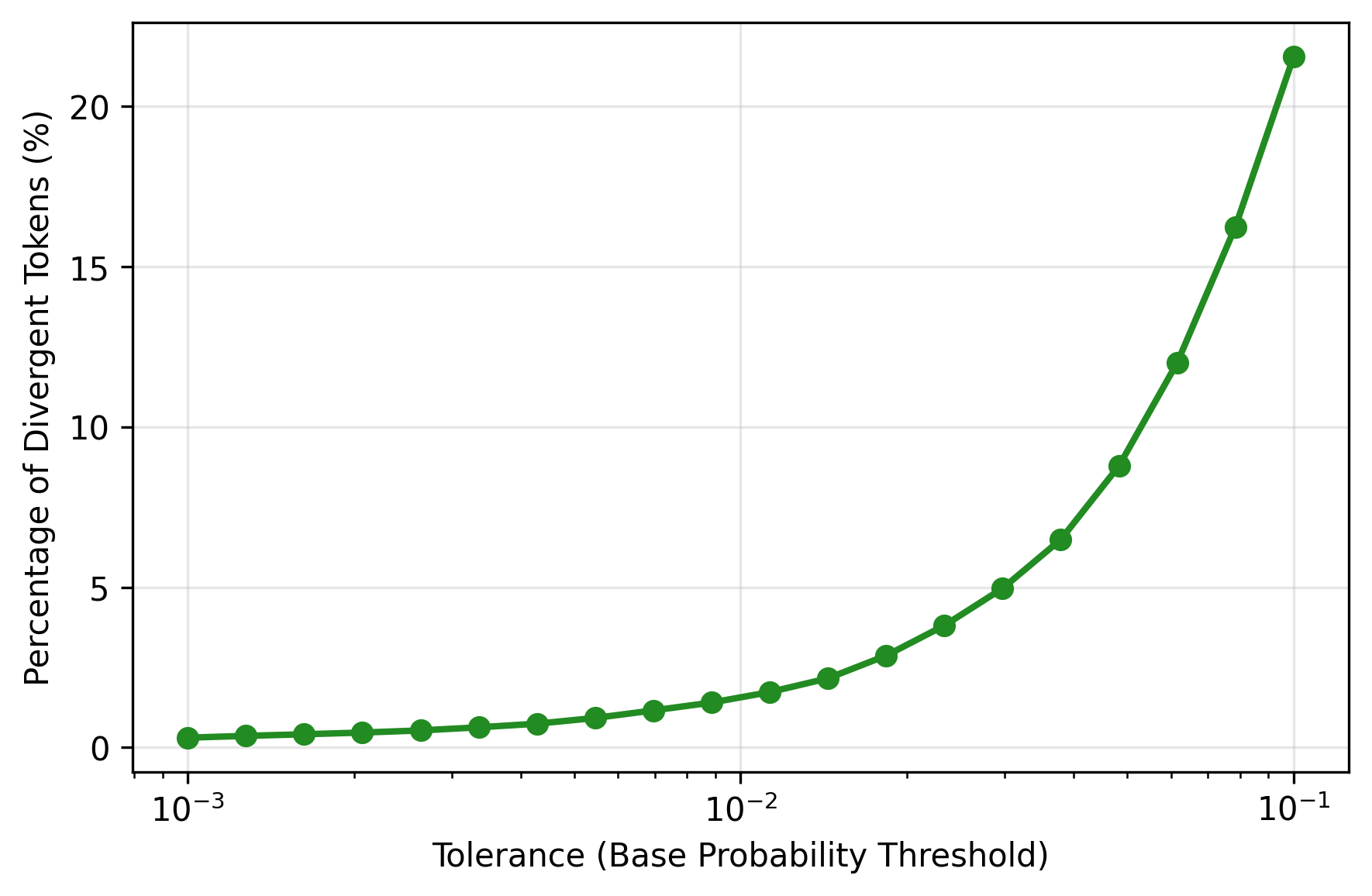}
        \caption{Mistral-24B SimpleRL AIME 2024}
    \end{subfigure}
    \hspace{1cm}
    \begin{subfigure}{0.42\linewidth}
        \includegraphics[width=\linewidth]{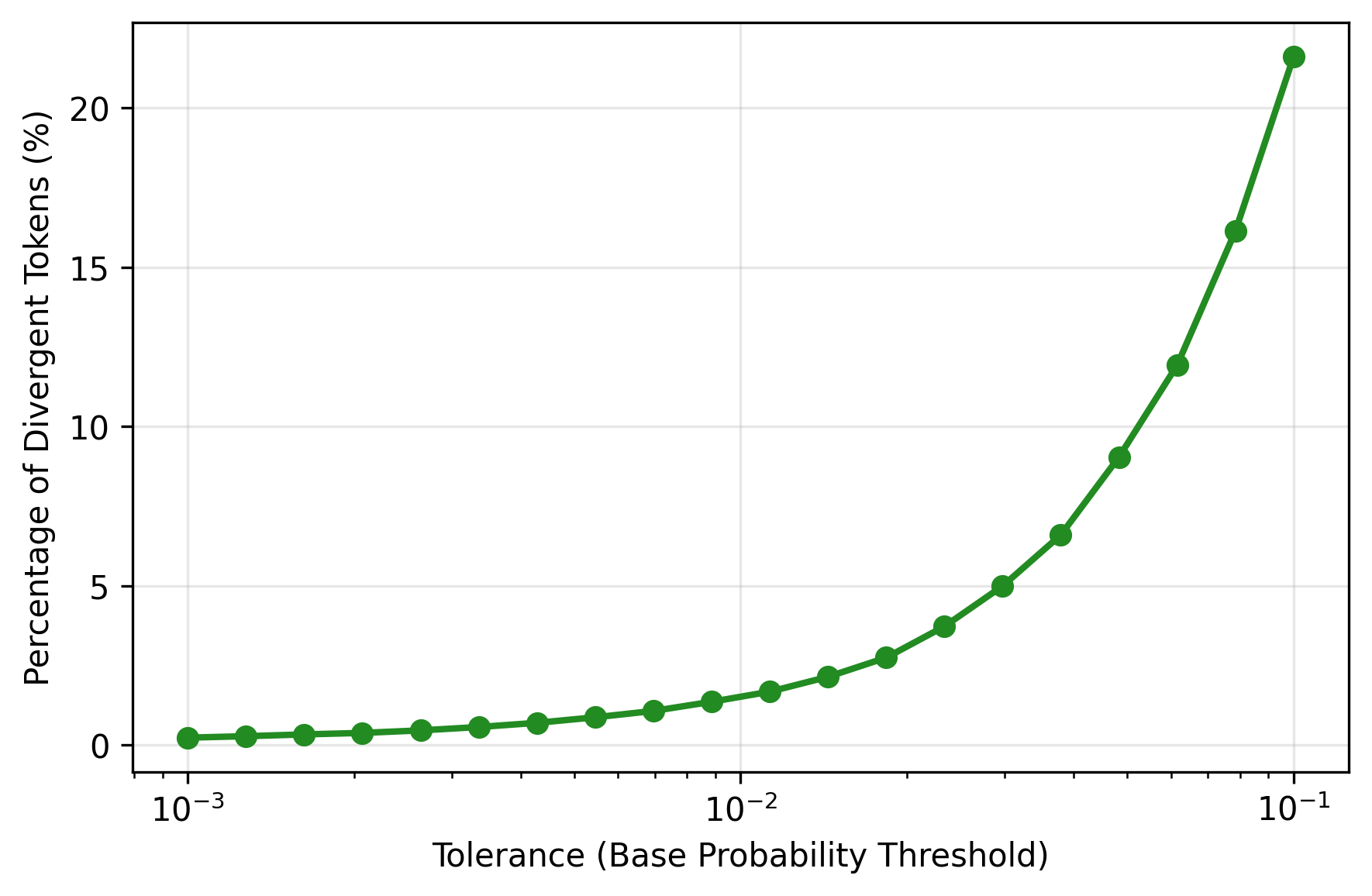}
        \caption{Mistral-24B SimpleRL AIME 2025}
    \end{subfigure}
    
    \caption{
        Percentage of divergent tokens whose RL top-1 choice had base probability below a given threshold for Mistral-Small-24B with SimpleRL on AIME 2024 and AIME 2025.
    }
    \label{fig:tolerance_mistral}
\end{figure}

\FloatBarrier
\newpage
\subsection{Additional Cross-Sampling Results}
This section provides supplementary cross-sampling results and a general description of the algorithm used for the cross-sampling experiments. 

Algorithm~\ref{alg:cross_sampling} describes the general procedure for a single prompt (in practice, we batch prompts for efficiency), which generates a response primarily under a \emph{primary policy} and selectively intervenes using an \emph{intervention policy} at positions where the
token-level divergence exceeds a fixed threshold. Note that to obtain the cross-sampling plots, for every $k$ cross-sampling interventions, we complete the response with $\pi_{\mathrm{prim}}$ and evaluate, then continue with cross-sampling using the prefix from prior to completing it with the primary model.

\label{subsec:additional_cross_sampling}
\begin{algorithm}[htbp]
\caption{Cross-Sampling for a Single Prompt}
\label{alg:cross_sampling}
\begin{algorithmic}[1]
\Require Prompt prefix $x_{<1}$, primary policy $\pi_{\mathrm{prim}}$, intervention policy $\pi_{\mathrm{int}}$, divergence threshold $\varepsilon$, maximum steps $T$
\Ensure Generated sequence $x_{1:t}$, number of intervention steps $k$

\State $k \gets 0$
\State Initialize prefix $x_{<1}$

\For{$t = 1, \dots, T$}
    \State Compute divergence 
    \(
        \mathrm{JS}_t = D_{\mathrm{JS}}\!\left(
        \pi_{\mathrm{prim}}(\cdot \mid x_{<t})\parallel 
        \pi_{\mathrm{int}}(\cdot \mid x_{<t})
        \right)
    \)
    \If{$\mathrm{JS}_t > \varepsilon$}
        \State Sample $x_t \sim \pi_{\mathrm{int}}(\cdot \mid x_{<t})$
        \State $k \gets k + 1$
    \Else
        \State Sample $x_t \sim \pi_{\mathrm{prim}}(\cdot \mid x_{<t})$
    \EndIf

    \State Append $x_t$ to prefix $x_{<t+1}$
    \If{$x_t = \mathrm{EOS}$}
        \State \textbf{break}
    \EndIf
\EndFor

\State \Return generated sequence $x_{1:t}$ and intervention count $k$
\end{algorithmic}
\end{algorithm}

\begin{table}[htbp]
\centering
\small
\caption{Summary of cross-sampled tokens required to reach approximate RL-level performance (forward) or base-level performance (reverse) for Qwen2.5-32B on AIME 2024 and AIME 2025 with a token budget of 8000.
Effective token counts/percentages exclude identity swaps during cross-sampling. Token percentages are computed at the sequence level.}
\label{tab:token_stats_app}
\begin{tabular}{l@{\hspace{0.5em}}l@{\hspace{0.2em}}c@{\hspace{0.2em}}c@{\hspace{0.2em}}c@{\hspace{0.2em}}c@{\hspace{0.2em}}|c@{\hspace{0.2em}}|c}
\toprule
\textbf{Dataset} & \textbf{Method} & \textbf{Eff. \%} & \textbf{\%} & \textbf{Eff. \#} & \textbf{\#} & \textbf{Initial} & \textbf{Final} \\
 &  & \textbf{Tokens} & \textbf{Tokens} & \textbf{Tokens} & \textbf{Tokens} & \textbf{Acc.} & \textbf{Acc.} \\
\midrule
\multirow{4}{*}{AIME24}
 & SimpleRL   & $3.86$\% & $7.58$\% & $38$ & $75$ & $8.23$ & $>25$ \\
 & SimpleRL Rev.  & $5$\% & $8.3$\% & $29$ & $51$ & $25.52$ & $<8.3$ \\
 & DAPO       & $7.8$\% & $11.9$\% & $280$ & $410$ & $8.23$ & $>44$ \\
 & DAPO Rev.       & $10.1$\% & $14.9$\% & $173$ & $258$ & $44.8$ & $<8.5$ \\
\midrule
\multirow{4}{*}{AIME25}
 & SimpleRL   & $1.53$\% & $2.97$\% & $13$ & $26$ & $5.3$ & $>14$ \\
 & SimpleRL Rev.  & $4.73$\% & $7.87$\% & $31$ & $53$ & $12.71$ & $<4$ \\
 & DAPO       & $6.47$\% & $9.18$\% & $230$ & $326$ & $4.8$ & $>33$ \\
 & DAPO Rev.       & $9.89$\% & $14.19$\% & $181$ & $261$ & $32$ & $<4.5$ \\
\bottomrule
\end{tabular}
\end{table}

\begin{figure}[!htbp]
    \centering
    \begin{subfigure}{0.48\textwidth}
        \includegraphics[width=\linewidth]{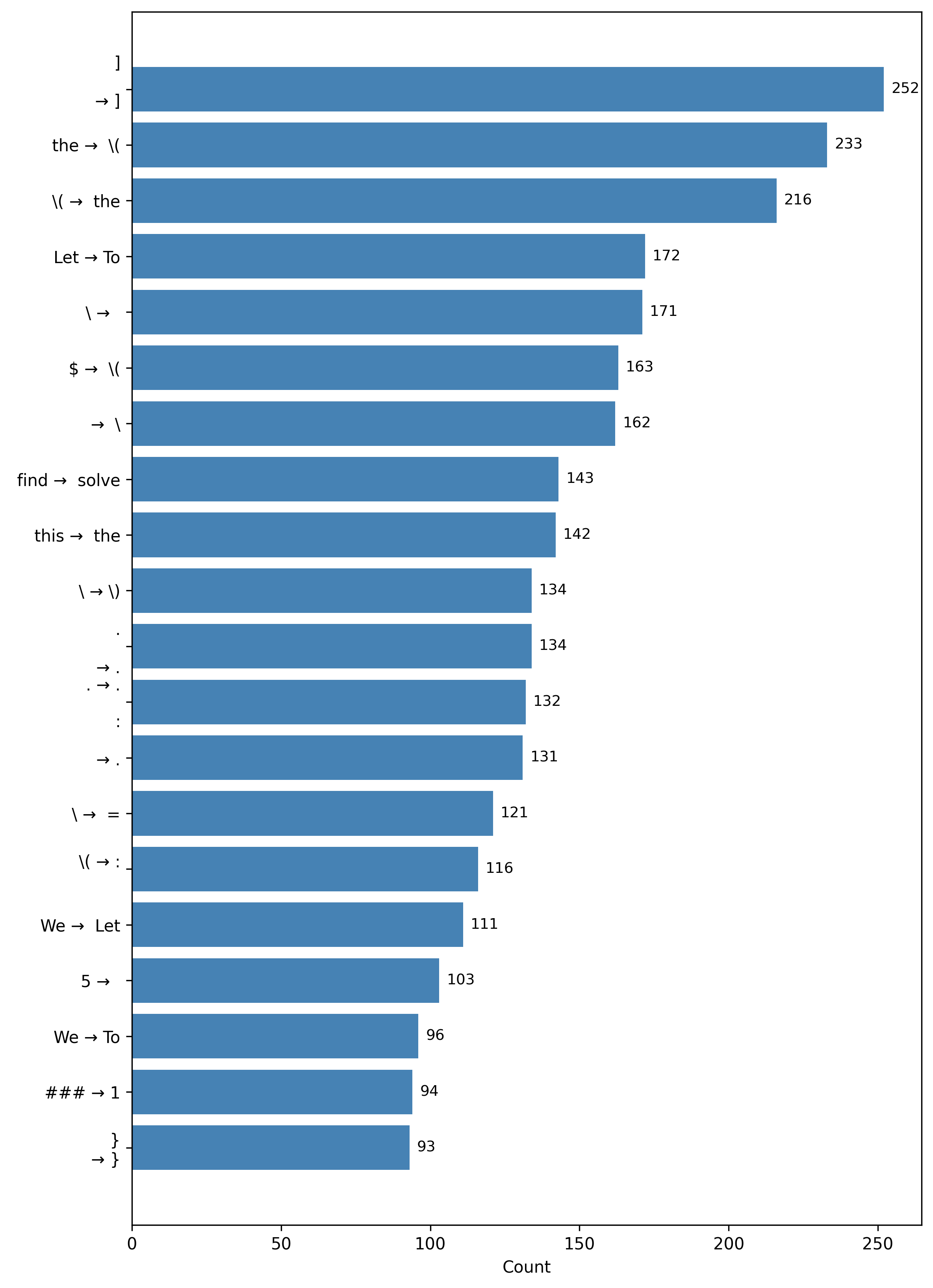}
        \caption{Forward cross-sampling Qwen2.5-32B SimpleRL.}
        
    \end{subfigure}
    \hfill
    \begin{subfigure}{0.48\textwidth}
        \includegraphics[width=\linewidth]{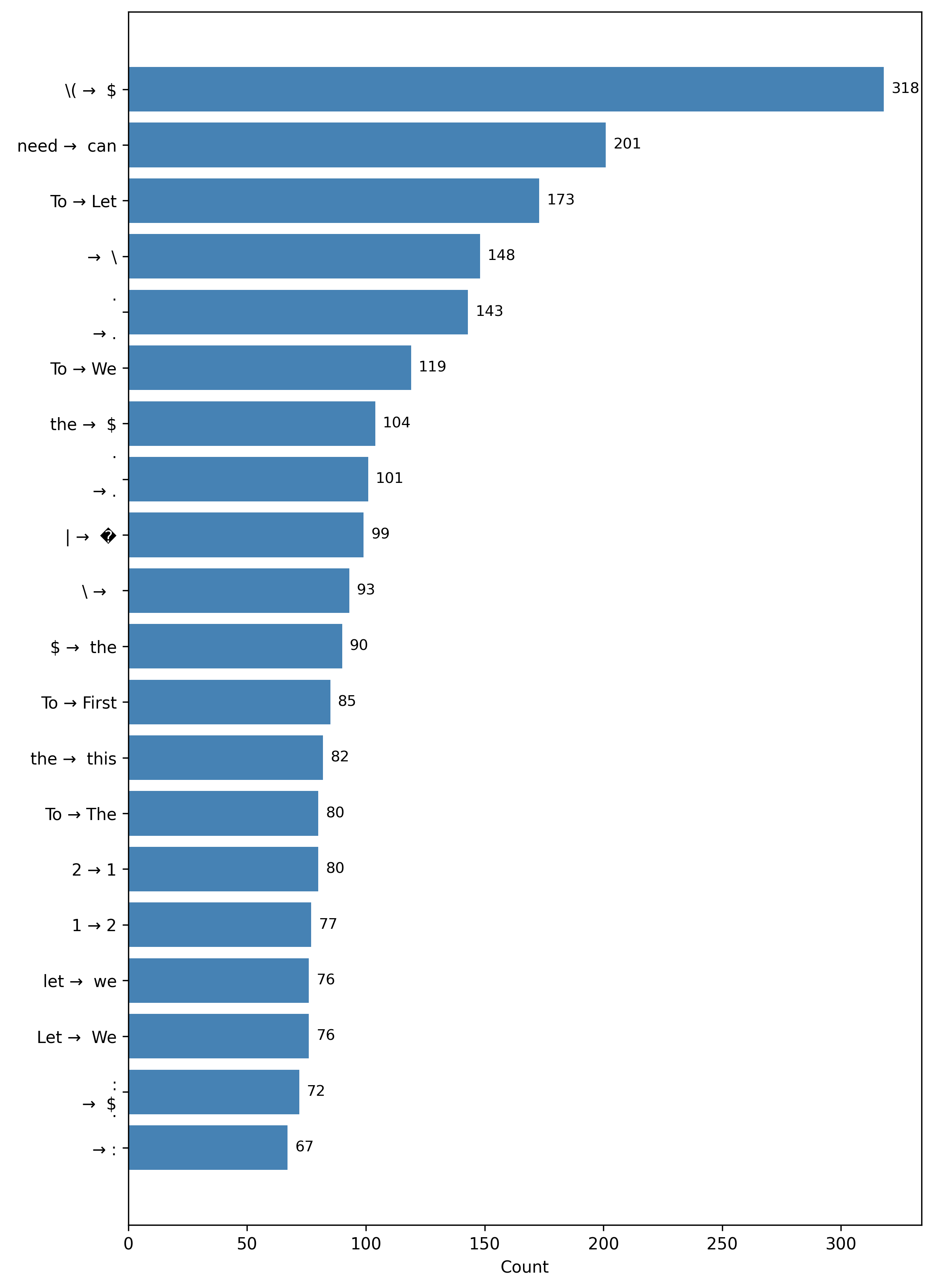}
        \caption{Reverse cross-sampling Qwen2.5-32B SimpleRL}
        
    \end{subfigure}
    \begin{subfigure}{0.48\textwidth}
        \includegraphics[width=\linewidth]{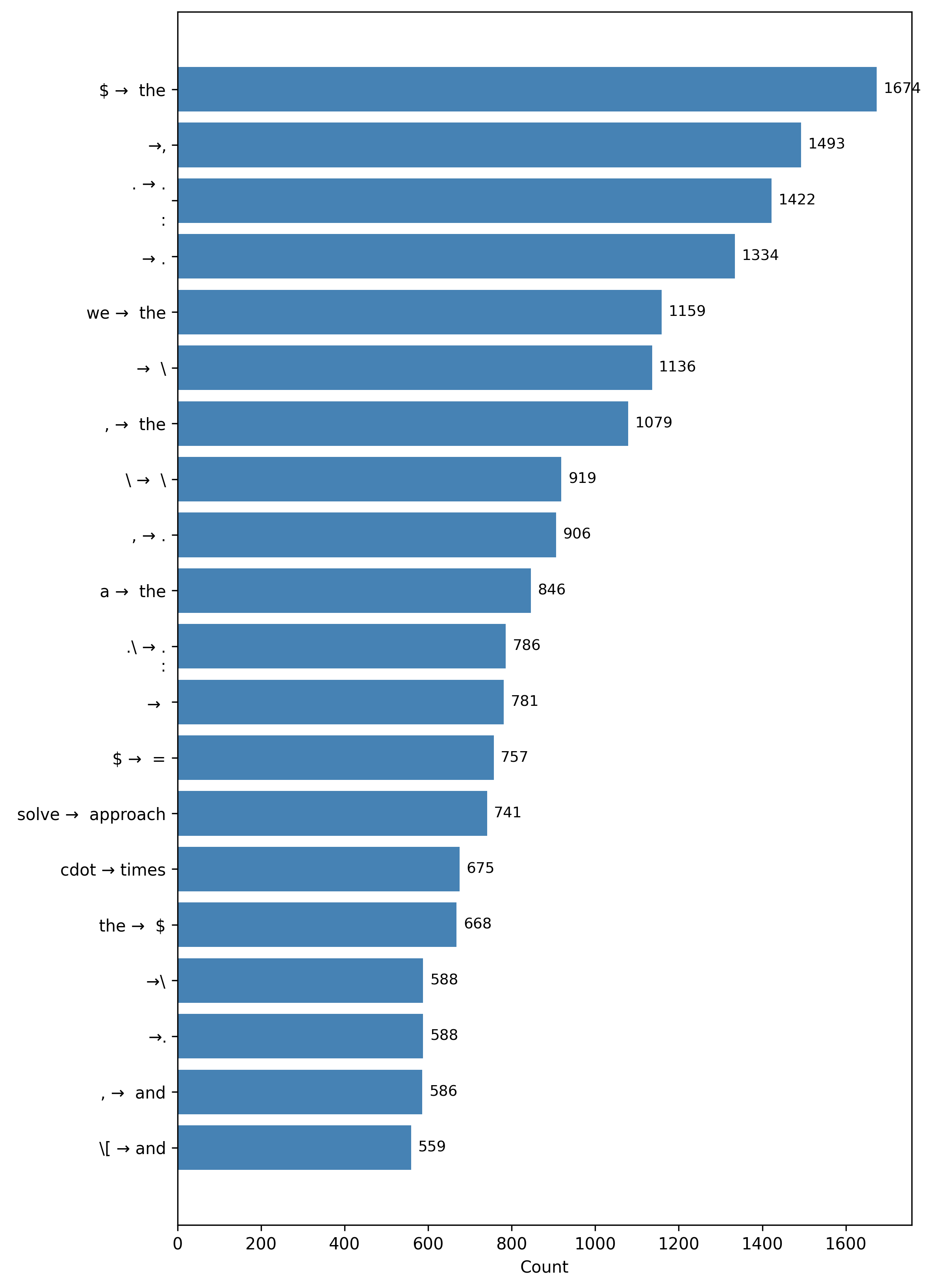}
        \caption{Forward cross-sampling Qwen2.5-32B DAPO}
        
    \end{subfigure}
    \hfill
    \begin{subfigure}{0.48\textwidth}
        \includegraphics[width=\linewidth]{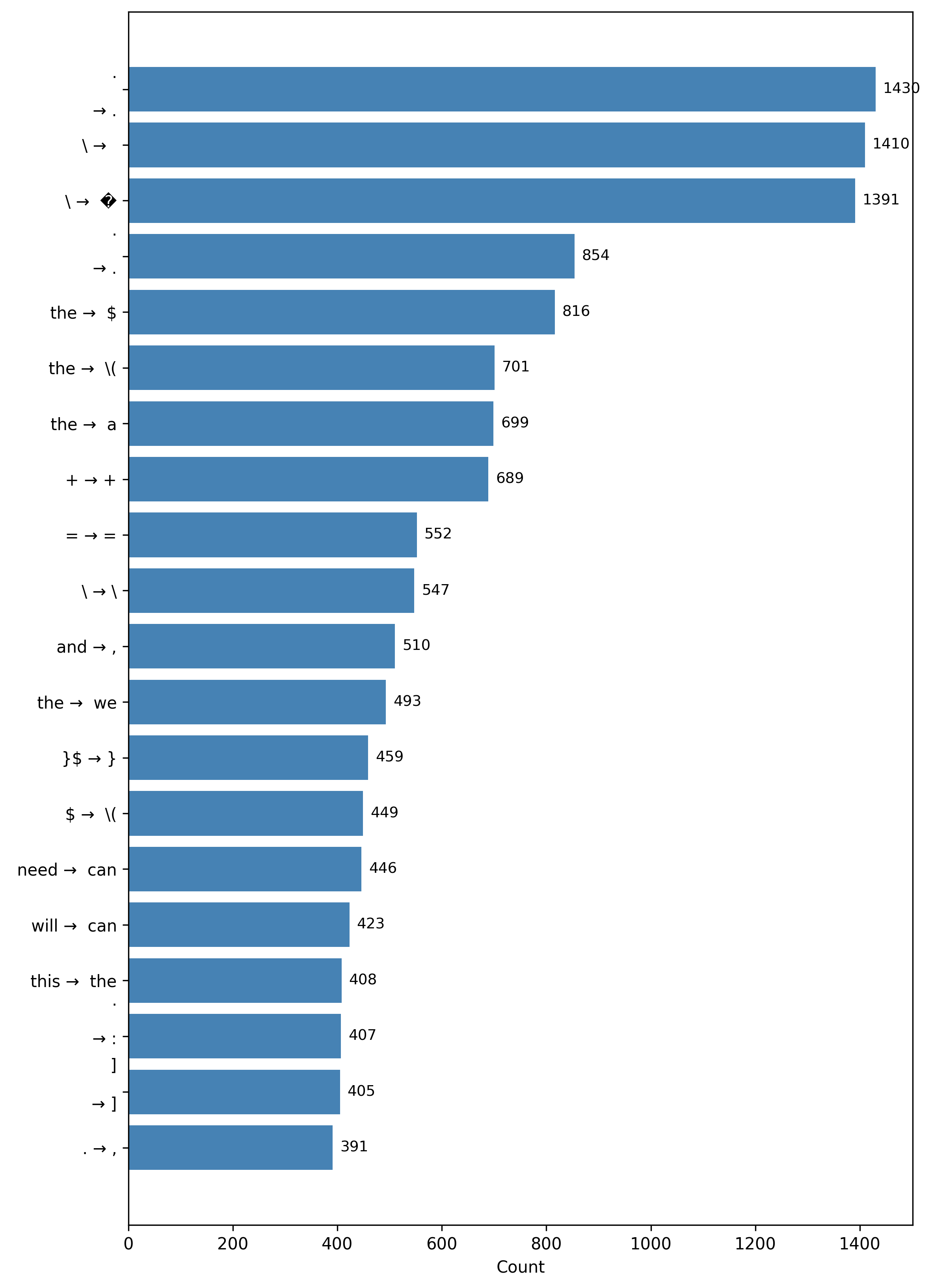}
        \caption{Reverse cross-sampling Qwen2.5-32B DAPO}
        
    \end{subfigure}
    \caption{Cross-sampling token pair histograms of the form (primary sampled token) -> (intervention sampled token) at cross-sampling intervention points (excluding identity token swaps).}
    \label{fig:resample_token_pairs}
\end{figure}

\begin{figure}[!htbp]
    \centering
    \begin{subfigure}{0.48\textwidth}
        \includegraphics[width=\linewidth]{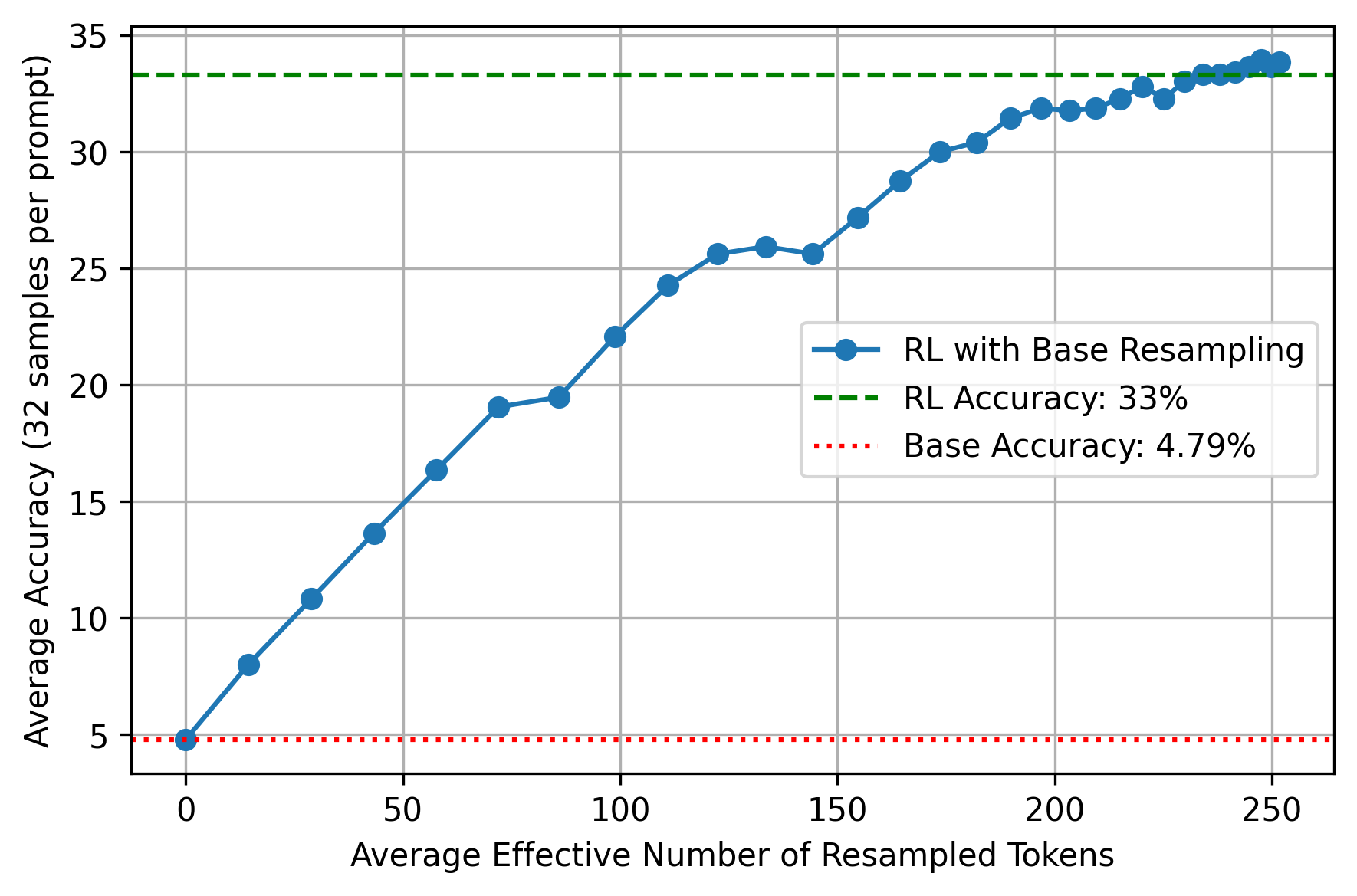}
        \caption{Forward cross-sampling}
        \label{fig:forward_resample_dapo_aime25}
    \end{subfigure}
    \hfill
    \begin{subfigure}{0.48\textwidth}
        \includegraphics[width=\linewidth]{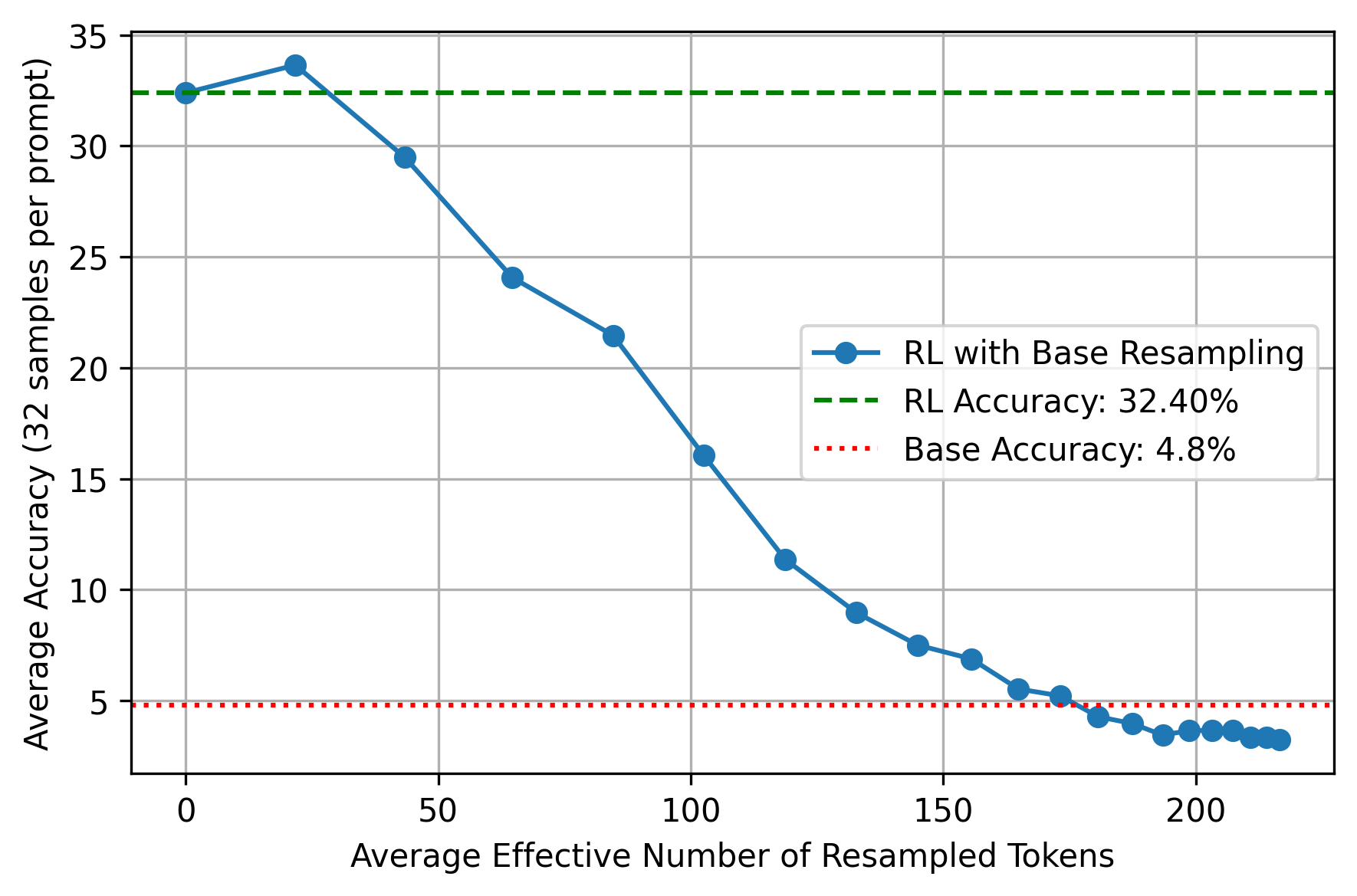}
        \caption{Reverse cross-sampling}
        \label{fig:reverse_resample_dapo_aime25}
    \end{subfigure}
    \caption{Cross-sampling results (DAPO on AIME 2025): injecting RL tokens into base generations progressively recovers RL accuracy, while reverting RL tokens with base tokens causes near-monotonic degradation toward base performance.}
    \label{fig:resample_dapo_aime25}
\end{figure}

\begin{figure}[!htbp]
    \centering
    \begin{subfigure}{0.48\textwidth}
        \includegraphics[width=\linewidth]{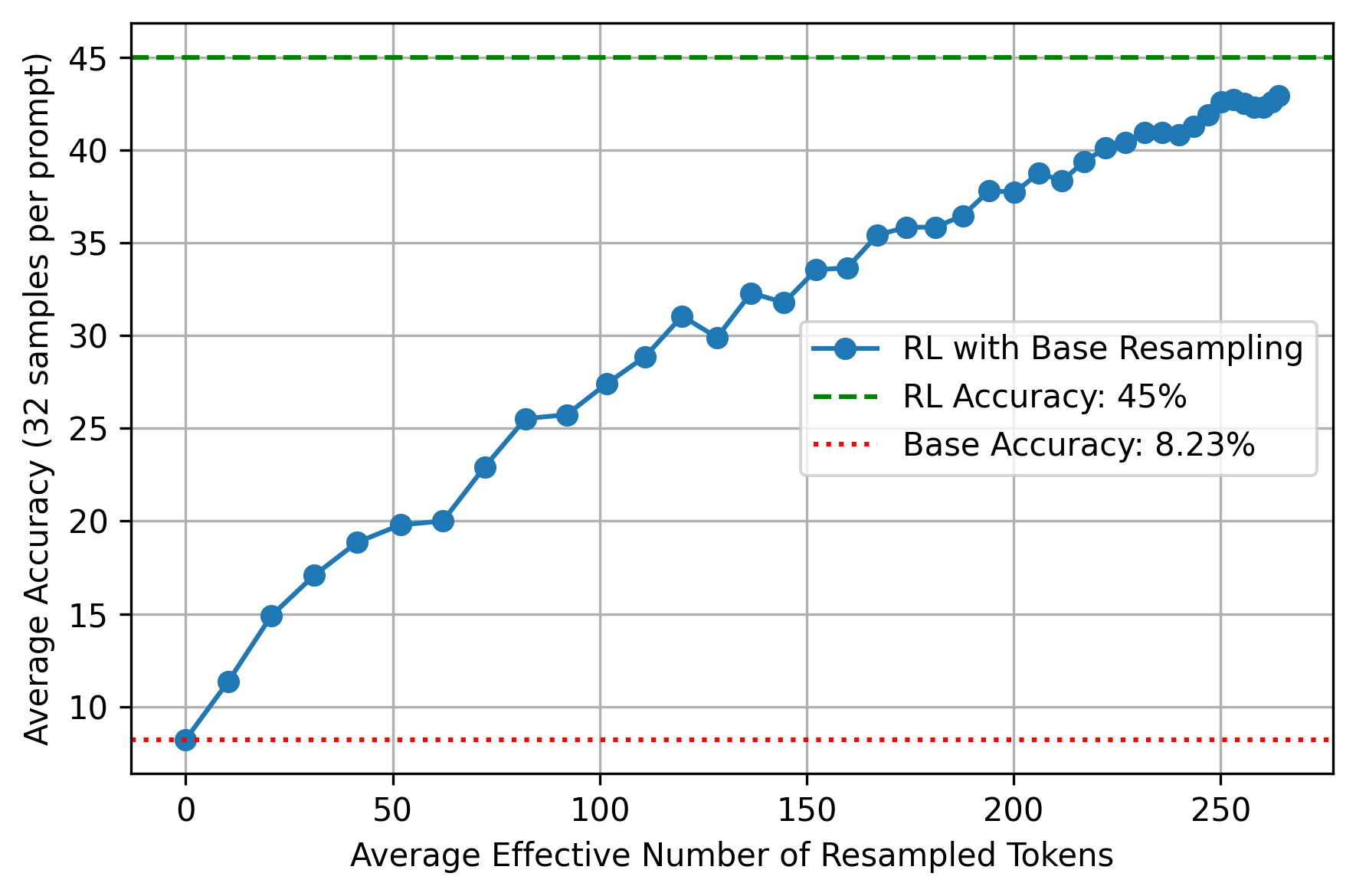}
        \caption{Forward cross-sampling}
        \label{fig:forward_resample_dapo}
    \end{subfigure}
    \hfill
    \begin{subfigure}{0.48\textwidth}
        \includegraphics[width=\linewidth]{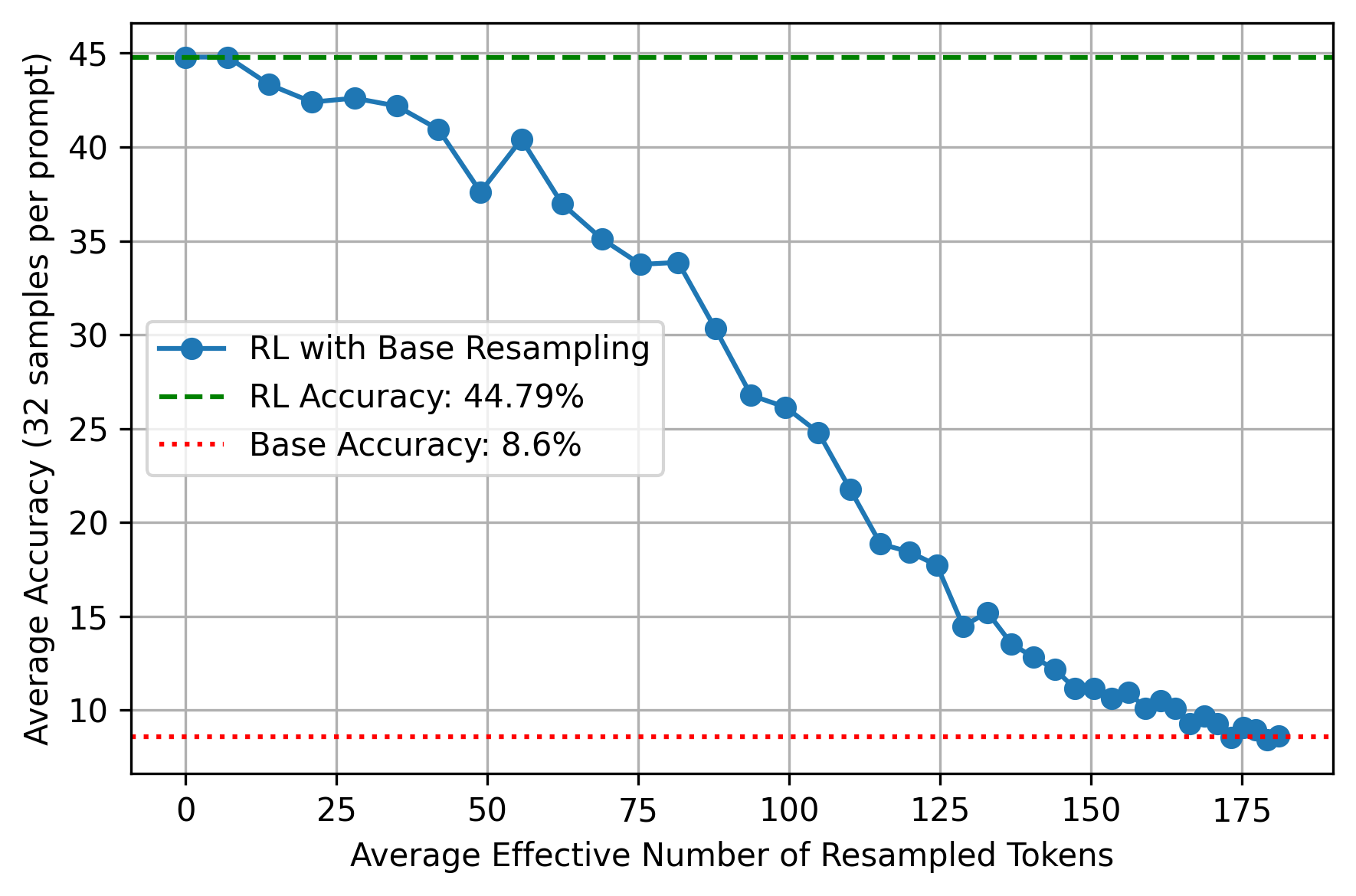}
        \caption{Reverse cross-sampling}
        \label{fig:reverse_resample_dapo}
    \end{subfigure}
    \caption{Cross-sampling results (DAPO on AIME 2024): injecting RL tokens into base generations progressively recovers RL accuracy, while reverting RL tokens with base tokens causes near-monotonic degradation toward base performance.}
    \label{fig:resample_dapo}
\end{figure}

\begin{figure}[!htbp]
    \centering
    \begin{subfigure}{0.48\textwidth}
        \includegraphics[width=\linewidth]{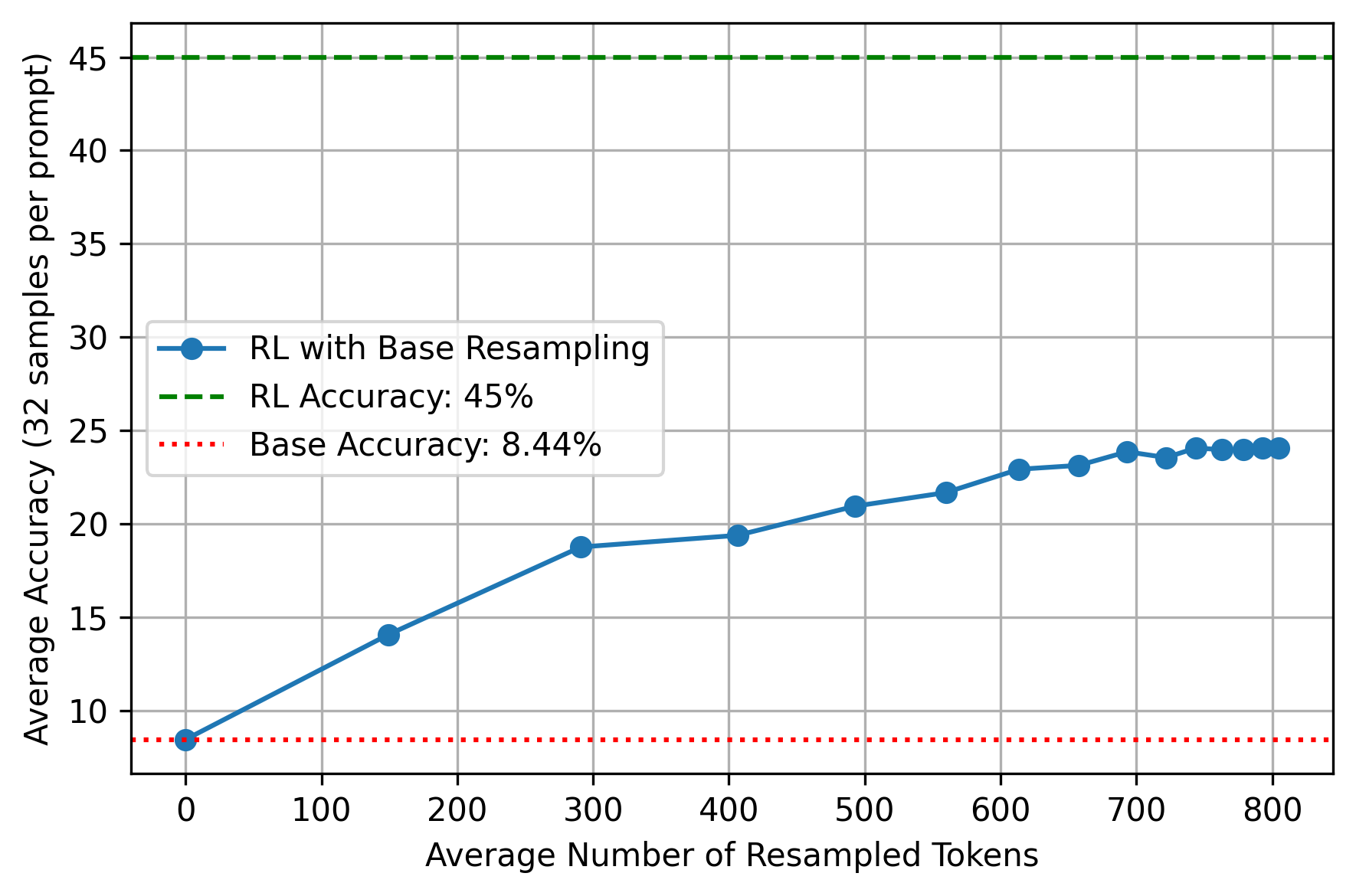}
        \caption{Random baseline}
        \label{fig:random_baseline_aime24}
    \end{subfigure}
    \hfill
    \begin{subfigure}{0.48\textwidth}
        \includegraphics[width=\linewidth]{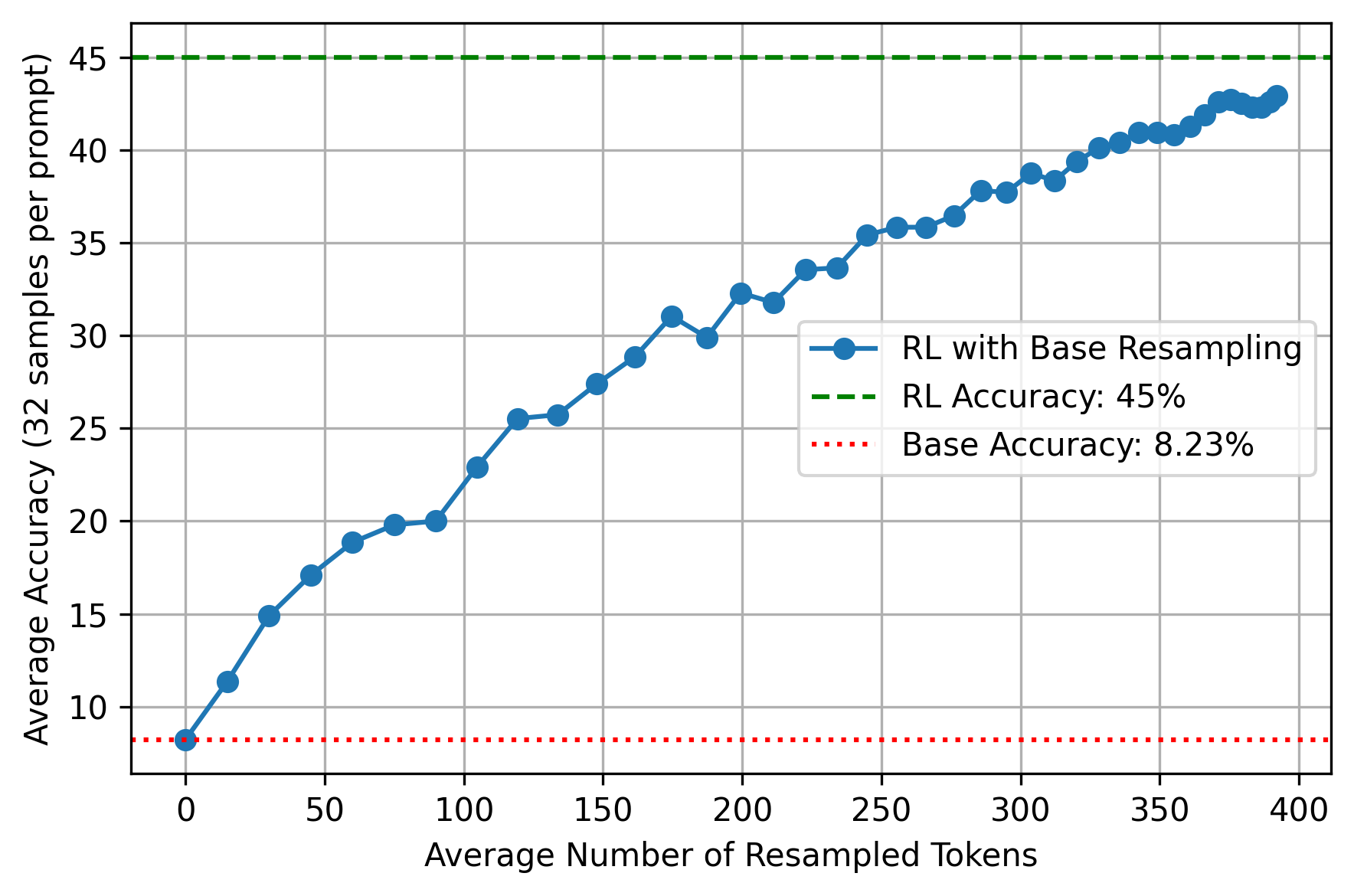}
        \caption{DAPO}
        \label{fig:dapo_avg_token_aime24}
    \end{subfigure}
    \caption{Comparison of random baseline and DAPO cross-sampling on AIME 2024: average number of tokens (including identity swaps) replaced versus accuracy. The random baseline shows slow performance improvement, demonstrating that targeted RL token selection is critical for performance gains. Performing random replacement may skip critical positions for the reasoning trajectories.}
    \label{fig:random_baseline_dapo_aime24}
\end{figure}

\begin{figure}[!htbp]
    \centering
    \begin{subfigure}{0.48\textwidth}
        \includegraphics[width=\linewidth]{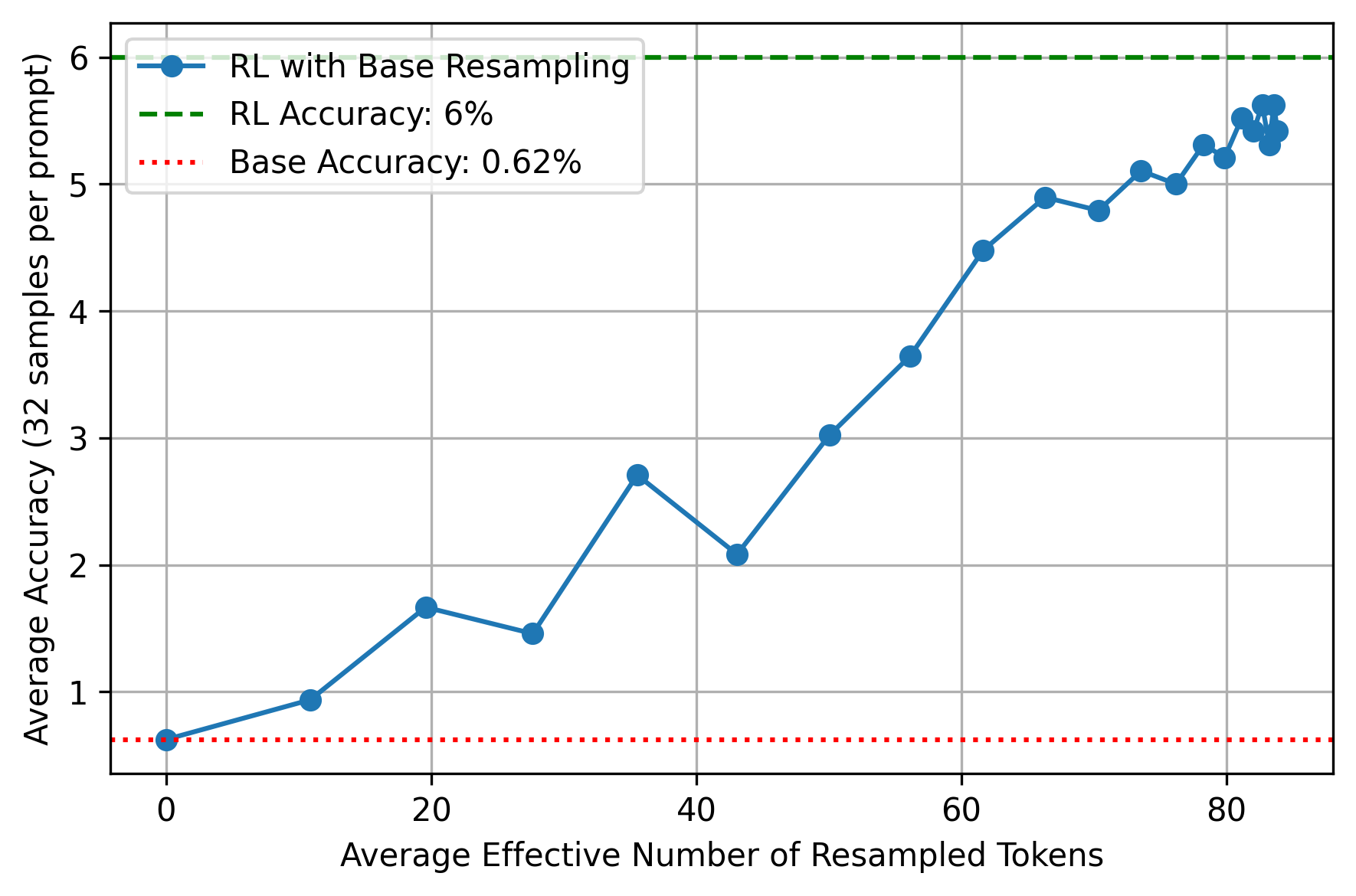}
        \caption{Forward cross-sampling}
        \label{fig:forward_resample_mistral_aime24}
    \end{subfigure}
    \hfill
    \begin{subfigure}{0.48\textwidth}
        \includegraphics[width=\linewidth]{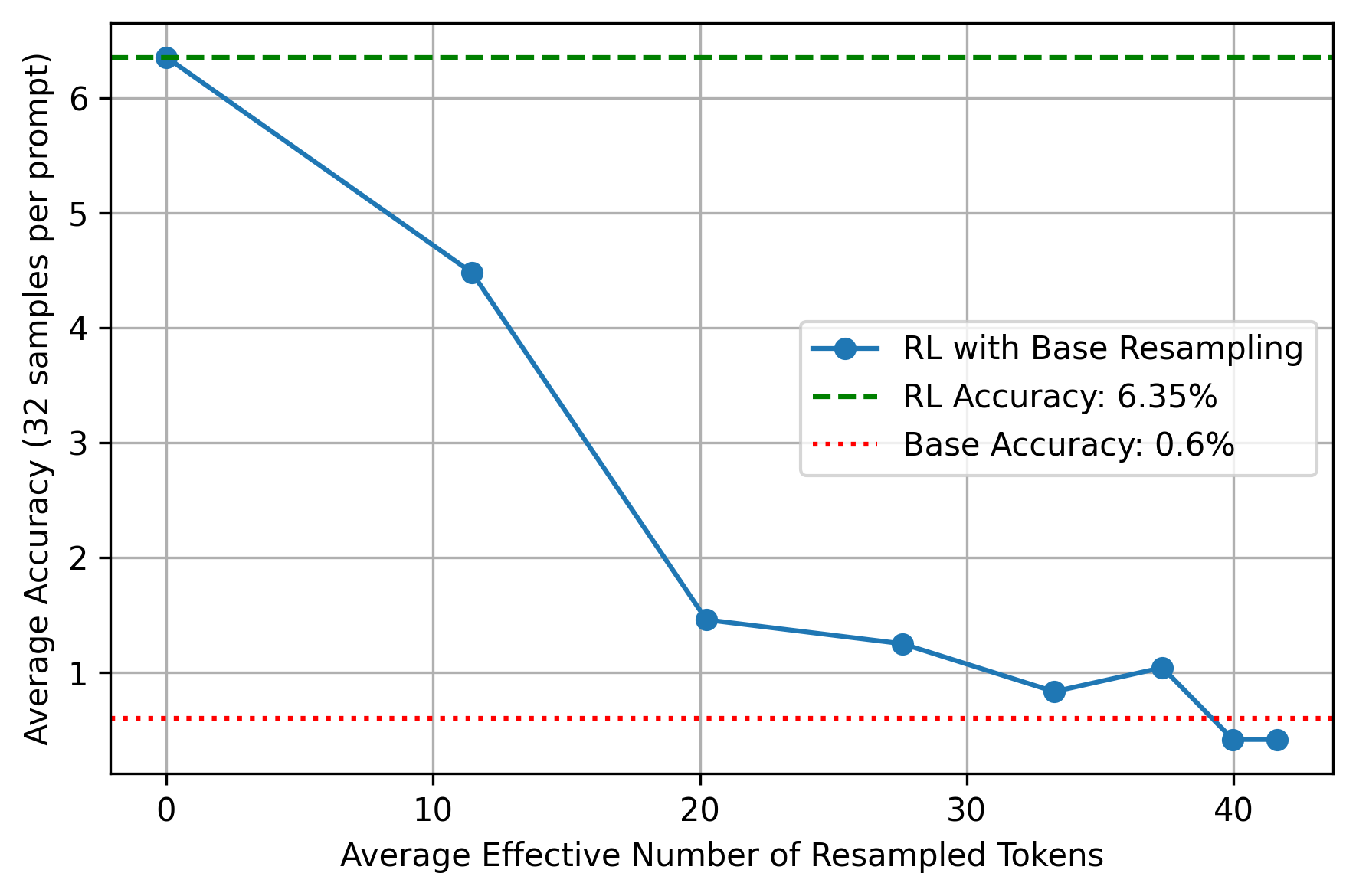}
        \caption{Reverse cross-sampling}
        \label{fig:reverse_resample_mistral_aime24}
    \end{subfigure}
    \caption{Cross-sampling results (Mistral-Small-24B SimpleRL on AIME 2024): injecting RL tokens into base generations progressively recovers RL accuracy, while reverting RL tokens with base tokens causes near-monotonic degradation toward base performance.}
    \label{fig:resample_mistral_aime24}
\end{figure}
\FloatBarrier

\subsection{Additional Results}

For completeness, Figure~\ref{fig:qwen3_32b_dapo_run} shows a training run on Qwen3-32B-Base \citep{yang2025qwen3technicalreport} using the standard DAPO recipe and dataset. We observe that AIME24 performance appears to plateau between approximately steps 80 and 180 (around mean@32 of 57–60\%), before improving steadily beyond step 180 to exceed 70\%. This delayed improvement suggests that additional performance gains may manifest only after extended training.

Given the substantial computational cost of such runs (exceeding 140{,}000 GPU hours to reach roughly step 500 for a single training run), it is plausible that shorter training horizons could lead to prematurely terminated runs and consequently undertrained baselines in prior work. Due to these resource constraints, we do not investigate this phenomenon further here, but we highlight it as a potentially important consideration for future work.

\begin{figure}[htbp]
    \centering
    \includegraphics[width=0.7\linewidth]{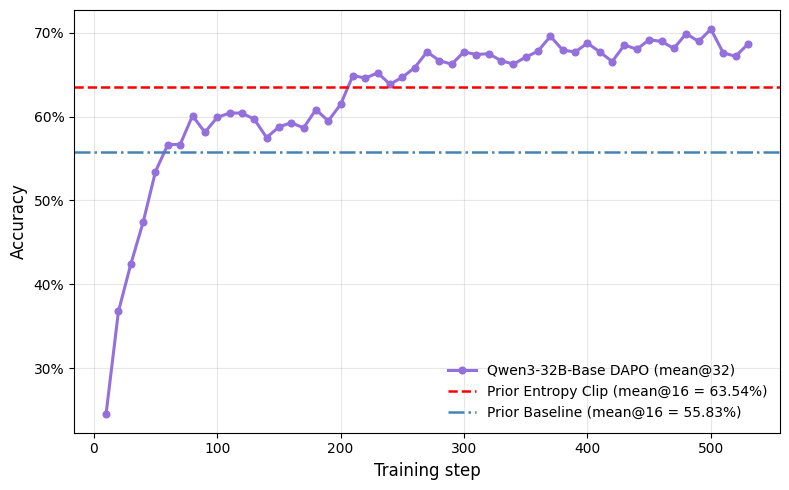}
    \caption{AIME24 performance of Qwen3-32B-Base trained with DAPO.}
    \label{fig:qwen3_32b_dapo_run}
\end{figure}

\end{document}